\documentclass[twoside,letterpaper]{article}

\usepackage[preprint]{aistats2027}

\usepackage{amssymb,amsthm,mathtools,fixmath,mleftright,bm}
\let\originalleft\mleft
\let\originalright\mright
\renewcommand{\mleft}{\mathopen{}\mathclose\bgroup\originalleft}
\renewcommand{\mright}{\aftergroup\egroup\originalright}

\usepackage[activate={true,nocompatibility},final, shrink=78]{microtype}
\usepackage[numbers,sort&compress]{natbib}

\usepackage{xurl,siunitx,dirtytalk,subcaption,booktabs,multirow,enumitem,soul}
\setlist[enumerate,itemize]{topsep=3pt,itemsep=2pt,parsep=0pt,partopsep=0pt}
\usepackage[table,dvipsnames]{xcolor}
\usepackage{graphicx}

\usepackage{tikz}
\usetikzlibrary{arrows.meta,backgrounds,calc,positioning}
\usepackage[hidelinks]{hyperref}
\hypersetup{
  pdftitle={The Expressive Power of Random-Feature MPNNs Across Graph Sizes},
  pdfauthor={Robert R. Nerem, Mitchell Black, Antonis Vasileiou, Christopher Morris, Yusu Wang}
}
\usepackage{aliascnt}
\usepackage[capitalise,noabbrev]{cleveref}
\definecolor{BetterBlue}{rgb}{0.38,0.44,1.0}
\definecolor{ForestGreen}{rgb}{0.13,0.55,0.13}

\newtheorem{definition}{Definition}
\newtheorem{theorem}{Theorem}
\newaliascnt{corollary}{theorem}
\newtheorem{corollary}[corollary]{Corollary}
\aliascntresetthe{corollary}
\newaliascnt{proposition}{theorem}
\newtheorem{proposition}[proposition]{Proposition}
\aliascntresetthe{proposition}
\newaliascnt{lemma}{theorem}
\newtheorem{lemma}[lemma]{Lemma}
\aliascntresetthe{lemma}
\newaliascnt{remark}{theorem}

\aliascntresetthe{remark}
\crefname{corollary}{Corollary}{Corollaries}
\Crefname{corollary}{Corollary}{Corollaries}
\crefname{proposition}{Proposition}{Propositions}
\Crefname{proposition}{Proposition}{Propositions}
\crefname{lemma}{Lemma}{Lemmas}
\Crefname{lemma}{Lemma}{Lemmas}
\crefname{remark}{Remark}{Remarks}
\Crefname{remark}{Remark}{Remarks}

\newcommand{\powerset}{\mathcal{P}}
\newcommand{\E}{\mathbb{E}}
\newcommand{\R}{\mathbb{R}}
\newcommand{\Law}{\operatorname{Law}}
\newcommand{\gap}{\operatorname{gap}}
\DeclareMathOperator{\im}{im}
\DeclareMathOperator{\True}{True}
\DeclareMathOperator{\False}{False}
\DeclareMathOperator{\relu}{ReLU}

\renewcommand{\vec}[1]{\bm{#1}}

\begin{document}

\runningtitle{Random-Feature MPNNs Across Graph Sizes}
\runningauthor{Nerem, Black, Vasileiou, Morris, Wang}

\twocolumn[
\aistatstitle{The Expressive Power of Random-Feature MPNNs Across Graph Sizes}
\aistatsauthor{Robert R.~Nerem\textsuperscript{1}\quad
Mitchell Black{\renewcommand{\thefootnote}{\fnsymbol{footnote}}\footnotemark[1]}\quad
Antonis Vasileiou\textsuperscript{2}\quad
Christopher Morris\textsuperscript{2}\quad Yusu Wang\textsuperscript{1}}
\aistatsaddress{\textsuperscript{1}University of California, San Diego\\
\textsuperscript{2}Rheinisch Westf\"alische Technische Hochschule Aachen}
]
\begingroup
\renewcommand{\thefootnote}{\fnsymbol{footnote}}
\footnotetext[1]{Work initiated while at University of California San Diego.}
\endgroup
\thispagestyle{plain}

\begin{abstract}

 Continuous random node features increase the expressivity of message-passing neural networks (MPNNs), but existing theoretical analyses are not fine-grained enough to capture their limitations, particularly in approximating functions across graph sizes or under a fixed number of random bits. We develop a population-level framework for size-uniform approximation. For MPNNs with a graph-size-independent Lipschitz bound in their initial features, including standard mean, max, and order-normalized sum architectures, we show that continuous random features can be quantized to finitely many colors with arbitrarily small uniform error in expected output. This reduction lets us transfer limitations on finite-color distinguishability to limitations on size-uniform approximation with continuous random features. We use this framework to prove that these MPNNs are not size-uniformly universal, analyze the expressivity gains from colors and independent trials, and establish incomparability between random-color MPNNs and higher-order Weisfeiler--Leman methods. We also give finite-color MPNN constructions for tree subgraph isomorphism via color coding and a local MST task.

\end{abstract}

\section{Introduction}

Message-passing graph neural networks (MPNNs)~\citep{Sca+2009,Gil+2017} are widely used to learn functions on graphs, but their expressive power is limited by symmetries that message passing cannot break~\citep{xu2019how,morris2019weisfeiler}. Random node features provide a powerful way to break these symmetries~\citep{dasoulas2020coloring,sato2021random,abboud2021surprising,isaev2026canonical}. Specifically, continuous random features almost surely give every vertex a unique identifier, enabling universality at any fixed graph size~\citep{abboud2021surprising}. However, these constructions require unbounded precision as graphs grow and do not guarantee that a fixed model can approximate a target uniformly across graph sizes. In practice, we want a trained model to handle larger graphs with a fixed architecture, parameters, and precision. This motivates studying size-uniform approximation, as well as how expressive power depends on the number of random bits per vertex.

Representing the same function across graph sizes is a prerequisite for size generalization, which aims to maintain performance on graphs larger than those seen during training~\citep{yehudai2021local,xu2021extrapolate,Ner+2025}. This is essential for learned graph algorithms, particularly in combinatorial optimization, where larger instances may be too expensive to label for training. Neural algorithmic reasoning pursues this goal through inductive biases that favor algorithmic behavior~\citep{velickovic2020neural,velickovic2022clrs}. Such biases, however, cannot yield size generalization if the model class cannot represent a computation across graph sizes. Recent work gives conditions for size-generalization guarantees~\citep{Ner+2025,wittig2026which}. Related work studies uniform expressivity for graph transformers and virtual-node MPNNs~\citep{rosenbluth2024distinguished}, universality for transformers over arbitrarily many context tokens~\citep{furuya2024transformers}, and any-dimensional invariant universality~\citep{Lev+2025,yao2026any}.

We ask which invariant graph functions a random-feature MPNN can approximate uniformly across graph sizes. To obtain deterministic predictions invariant under vertex relabeling, we consider the expected output over random initializations. Let $T\colon\mathcal G\to\mathbb R$ be an invariant target function, write $\overline F(G)\coloneq\E[\widehat F_s(G)]$ for the population predictor, and set $\mathcal G_N\coloneq\{G\in\mathcal G\colon |V(G)|=N\}$. \emph{Fixed-size approximation} allows an $N$-dependent predictor $F_N$ with $\sup_{G\in\mathcal G_N}|\overline F_N(G)-T(G)|\leq\varepsilon$ while \emph{size-uniform approximation} instead requires one predictor $F$, independent of $N$, with $\sup_{G\in\mathcal G}|\overline F(G)-T(G)|\leq\varepsilon$.

Previous work establishes universality for random-feature MPNNs~\citep{abboud2021surprising} and gives specific algorithmic constructions~\citep{sato2021random,kanatsoulis2024counting}, but does not characterize how expressive power depends on the number of random bits per node feature and the number of independent evaluations. We develop a framework that quantifies these dependencies and identifies graph pairs that cannot be reliably distinguished with limited resources. Building on expectation-based analyses of randomized GNNs~\citep{kanatsoulis2024graph,kanatsoulis2024counting,kanatsoulis2025learning}, we study the functions these models represent in expectation and estimate them by averaging independent evaluations. This lets us use the same notions of approximation and distinguishability as for deterministic architectures~\citep{xu2019how,morris2019weisfeiler}, while accounting for their statistical nature. In particular, we develop WL-based combinatorial bounds on distinguishability that translate into limitations on expected approximation.

We study size-uniform approximation for MPNNs with a size-independent Lipschitz constant in the initial random features. This covers mean, max, and order-normalized sum aggregation, as well as unnormalized sums on bounded-degree graphs, and matches the regime of the size-generalization results of \citet{wittig2026which}. It also prevents models from extracting increasingly detailed node identities from vanishing numerical differences.
\definecolor{lqPeriwinkle}{HTML}{6670C9}
\definecolor{lqGold}{HTML}{C58C2B}
\definecolor{lqGoldInk}{HTML}{91621E}
\definecolor{lqInk}{HTML}{243442}
\definecolor{lqMuted}{HTML}{7A8592}
\begin{figure*}[t]
\centering
\resizebox{0.8\linewidth}{!}{
\begin{tikzpicture}[
  x=1cm,y=1cm,font=\small,text=lqInk,>=latex,
  vnode/.style={circle,draw=lqInk!35,line width=0.3pt,
    minimum size=3.8mm,inner sep=0pt},
  vedge/.style={draw=lqMuted,line width=0.7pt},
  process/.style={->,draw=lqPeriwinkle,line width=0.8pt},
  annotation/.style={font=\footnotesize,text=lqInk,
    align=center,inner sep=1pt}
]

\node[font=\bfseries] at (2.70,5.98) {Random features};
\node[font=\bfseries] at (12.10,5.98) {Finite random colors};
\node[font=\footnotesize] at (2.70,5.43)
  {$\vec{\xi}_v\overset{\mathrm{i.i.d.}}{\sim}\rho_K$};
\node[font=\footnotesize] at (12.10,5.43)
  {$Q_S\vec{\xi}^G\in S^{V(G)}$};

\foreach \dx/\prefix in {1.765/a,11.165/b} {
  \begin{scope}[shift={(\dx,3.93)}]
    \coordinate (\prefix1) at (0.082,0.574);
    \coordinate (\prefix2) at (0.590,0.968);
    \coordinate (\prefix3) at (1.328,0.853);
    \coordinate (\prefix4) at (1.788,0.344);
    \coordinate (\prefix5) at (1.066,0.066);
    \coordinate (\prefix6) at (0.451,0.041);
    \foreach \u/\v in {1/2,2/3,3/4,4/5,5/6,6/1,2/5,3/5}
      \draw[vedge] (\prefix\u)--(\prefix\v);
  \end{scope}
}
\node[vnode,fill=lqPeriwinkle!28!white] at (a1) {};
\node[vnode,fill=lqPeriwinkle!60!white] at (a2) {};
\node[vnode,fill=lqPeriwinkle!88!black] at (a3) {};
\node[vnode,fill=lqPeriwinkle!72!black] at (a4) {};
\node[vnode,fill=lqPeriwinkle!80!white] at (a5) {};
\node[vnode,fill=lqPeriwinkle!42!white] at (a6) {};
\node[vnode,fill=lqGold!45!white] at (b1) {};
\node[vnode,fill=lqGold!75!white] at (b2) {};
\node[vnode,fill=lqGold!80!black] at (b3) {};
\node[vnode,fill=lqGold!80!black] at (b4) {};
\node[vnode,fill=lqGold!75!white] at (b5) {};
\node[vnode,fill=lqGold!45!white] at (b6) {};

\shade[left color=lqPeriwinkle!22!white,
       right color=lqPeriwinkle!70!black]
  (1.75,3.48) rectangle (3.65,3.60);
\fill[lqGold!45!white] (11.15,3.48) rectangle (11.75,3.60);
\fill[lqGold!75!white] (11.80,3.48) rectangle (12.40,3.60);
\fill[lqGold!80!black] (12.45,3.48) rectangle (13.05,3.60);
\node[annotation] at (2.70,3.20) {feature space $K$};
\node[annotation] at (12.10,3.20) {finite palette $S$};

\draw[process,draw=lqGold] (4.65,4.44)--(10.15,4.44);
\node[annotation,text=lqGoldInk] at (7.40,4.90)
  {nodewise quantization $Q_S$};
\node[font=\footnotesize,inner sep=0pt] at (7.40,3.98)
  {$\|\vec{\xi}^G-Q_S\vec{\xi}^G\|_{\mathrm{in}}\leq\delta(S)$};

\node[annotation] at (7.40,3.20)
  {same uniformly $L$-Lipschitz MPNN $F$};
\node[inner sep=0pt] (pointwise) at (7.40,2.52)
  {$\displaystyle
      \bigl|F(G,\vec{\xi}^G)-F(G,Q_S\vec{\xi}^G)\bigr|
      \leq L\delta(S)$};
\draw[process,rounded corners=1.5mm,shorten >=3mm]
  (2.70,2.94)--(2.70,2.52)--(pointwise.west);
\draw[process,draw=lqGold,rounded corners=1.5mm,shorten >=3mm]
  (12.10,2.94)--(12.10,2.52)--(pointwise.east);

\draw[process] (7.40,2.15)--(7.40,1.55);
\node[annotation,anchor=west] at (7.64,1.85)
  {take expectations};
\path[rounded corners=1.5mm,draw=lqPeriwinkle!75,
  line width=0.6pt,fill=lqPeriwinkle!4]
  (0,0.10) rectangle (14.80,1.30);
\node[font=\footnotesize\bfseries,align=left] at (1.75,0.71)
  {Size-uniform\\population bound};
\node[inner sep=0pt] at (7.40,0.74)
  {$\displaystyle
      \sup_{G\in\mathcal G}
      \bigl|\overline F(G)-\overline F_S(G)\bigr|
      \leq L\delta(S)$};
\node[annotation,align=left] at (12.20,0.71)
  {$|S|$ colors,\\independent of graph size};
\end{tikzpicture}}
\caption{\textbf{Overview of our random-feature MPNN framework.} Size-uniform quantization with a finite palette $S\subset K$.
Here $(Q_S\vec{x})_v=q_S(\vec{x}_v)$,
$\delta(S)=\sup_{\vec{z}\in K}\|\vec{z}-q_S(\vec{z})\|$, and
\mbox{$\overline F_S(G)=\mathbb E[F(G,Q_S\vec{\xi}^G)]$}.
Both branches use the same MPNN and coupled initializations.}
\label{fig:lipschitz-quantization}
\end{figure*}
\paragraph{Our work}
Our central reduction shows that, for uniformly Lipschitz MPNNs, continuous random features can be replaced by random features taking finitely many values (which we call random colors drawn from a palette), with arbitrarily small uniform error in expected output. Hence, every invariant function approximable size-uniformly by a finite-dimensional continuous-random-feature MPNN is also size-uniformly
approximable by a random-color MPNN. The palette size may depend on the target, accuracy, and model, but not on graph size. Continuous precision is thereby
reduced to two discrete resources, the number of colors $q$ and the number of independent trials $s$; see~\cref{fig:lipschitz-quantization} for an overview.

Using CFI graph families~\citep{cai1992optimal}, we construct an invariant target that no uniformly Lipschitz random-feature MPNN can approximate uniformly across
sizes, even on graphs of bounded diameter. The corresponding $n$-vertex pairs are separated by \emph{(folklore) $2$-dimensional Weisfeiler--Leman} ($2$-WL). A random-color MPNN using $q$ colors requires at least $s$ independent trials per graph to distinguish them with success probability at least $2/3$, where
$q^{10}\log(3s)=\Omega(n)$. Thus, additional colors can be exponentially more effective than additional trials on these instances. In addition, random-color MPNNs and \emph{higher-order $k$-dimensional Weisfeiler--Leman} ($k$-WL) are also incomparable, i.e., for every fixed $k$, a two-color, one-trial MPNN separates
a pair that $k$-WL does not, while for every fixed palette and trial budget, $2$-WL separates a pair that the random-color model does not.

Finite colors nevertheless support useful size-uniform computations, including color coding for fixed-tree subgraph detection~\citep{alon1995color} and a local
MST edge task. Building on~\citet{wittig2026which}, a randomized finite-cover argument then links the represented function class to size-generalization guarantees.

\paragraph{Contributions}
In summary, our main contributions are as follows.
\begin{enumerate}
  \item \emph{Framework for randomized expressivity.} We develop a population-level framework for random-feature MPNNs, allowing their expressive power to be studied through invariant function approximation, graph distinguishability, and comparisons with the Weisfeiler--Leman hierarchy (Section~\ref{sec:formalism-draft}).
  \item \emph{Size-uniform characterization.} We reduce continuous random features to discrete random features (Theorem~\ref{thm:quantization-main} and Corollary~\ref{cor:finite-colors-main}) and prove that random-feature MPNNs are not size-uniformly universal (Corollary~\ref{cor:cfi-nonuniversality-main}).
  \item \emph{Expressivity results.} We establish strict color and trial hierarchies for statistical distinguishability by random-color WL (Theorem~\ref{thm:color-trial-hierarchies-main}), quantify their tradeoff using CFI graphs (Theorem~\ref{thm:cfi-trials-main}), and prove incomparability with higher-order Weisfeiler--Leman (Theorem~\ref{thm:kwl-incomparability-main}).
  \item \emph{Scalable computation and learning.} We give finite-color constructions for structural algorithms (Theorem~\ref{thm:tree-subgraph-main} and Proposition~\ref{prop:local-kruskal-main}), and give sufficient conditions for size-uniform generalization from a suitable finite training set (Theorem~\ref{thm:randomized_generalization}).
\end{enumerate}

\paragraph{Related work}

Loukas \citep{loukas2020what} showed that MPNNs equipped with unique node identifiers, sufficiently expressive message and update functions, and sufficient depth and width can simulate arbitrary distributed computations and, in particular, are Turing complete. Vignac et al.~\citep{vignac2020building} incorporated one-hot node identifiers into an equivariant structural message-passing architecture, making the resulting representation independent of the particular identifier assignment.

Randomization offers another route to individualization. Dasoulas et al. \citep{dasoulas2020coloring} showed that random node colorings can break symmetries and substantially strengthen MPNNs, while Sato et al.~\citep{sato2021random} demonstrated that random features enable MPNNs to simulate randomized local algorithms and improve approximation guarantees for combinatorial optimization problems. Rosenbluth et al.~\citep{Ros+2025} showed that a recurrent GNN with random vertex features can simulate any polynomial-time graph algorithm on connected graphs in polynomial time. GNNs with random node initialization are universal approximators for invariant graph functions on fixed-size graph domains~\citep{abboud2021surprising,gonon2026universality}. Pellizzoni et al. \citep{pellizzoni2024expressivity} subsequently analyzed both the expressivity and statistical sample complexity of node-individualized GNNs, establishing more favorable sample-complexity bounds for permutation-equivariant individualization schemes. \citet{kanatsoulis2024graph,kanatsoulis2024counting} proved that MPNNs with random features could compute graph properties that standard MPNNs could not, showing that random features can overcome the expressivity limitations of anonymous message passing \citep{xu2019how} by acting as probabilistic node identifiers. Their guarantees, however, primarily concern fixed-size expressivity or approximation on bounded graph domains and do not characterize when symmetry breaking yields a scalable algorithmic representation that transfers across graph sizes.

\section{Formal Framework}
\label{sec:formalism-draft}

Here, we outline our framework for analyzing MPNNs with random node features. We study the invariant function defined by a model’s expected output and ask whether a fixed model can approximate a target uniformly over graph sizes. To control the precision required of the random features, we impose a graph-size-independent Lipschitz bound on the output with respect to the initial features. We then connect population approximation to statistical graph distinguishability using finite color palettes and independent trials.

\subsection{Randomized MPNNs}

Let $\mathcal G$ be a class of finite graphs and let $K\subset\R^d$ be a compact node-feature space equipped with a probability distribution $\rho_K$.

A depth-$D$ MPNN initializes $\vec{h}_G^{(0)}(v) \coloneq \vec{x}_v \in \mathbb{R}^d$, $d>0$, and updates its
representations for $\ell \in \{0,\ldots,D-1\}$ according to
\begin{equation*}
\begin{aligned}
\vec{m}_G^{(\ell)}(v)
&\coloneq\operatorname{\mathsf{AGG}}_{\ell}\!\left(
\{\!\{\vec{h}_G^{(\ell)}(u) \colon u\in N_G(v)\}\!\}\right),\\
\vec{h}_G^{(\ell+1)}(v)
&\coloneq\sigma\!\left(
\vec{W}_{\mathrm{self}}^{(\ell)}\vec{h}_G^{(\ell)}(v)
\right.
\left.{}+\vec{W}_{\mathrm{agg}}^{(\ell)}\vec{m}_G^{(\ell)}(v)
+\vec b^{(\ell)}\right),
\end{aligned}
\end{equation*}
where $\sigma$ is coordinate-wise ReLU and $\vec b^{(\ell)}$ is a bias
shared by all vertices. We consider mean, max, min, and order-normalized
sum aggregation; the detailed hypothesis classes are specified in
\cref{subsec:hypothesis_classes}.
A graph-level MPNN applies a permutation-invariant readout function $\operatorname{\mathsf{READOUT}}$ to the final node representations to produce a scalar output $F(G,\vec{x})$, where
$\vec{x}=(\vec{x}_v)_{v\in V(G)}$ denotes the collection of initial node features.

For $G\in\mathcal G$, a random initialization assigns each vertex an
independent random feature drawn from $\rho_K$,
\begin{equation*}
    \vec{\xi}^G=(\vec{\xi}_v)_{v\in V(G)},
    \qquad
    \vec{\xi}_v\overset{\mathrm{i.i.d.}}{\sim}\rho_K.
\end{equation*}
The resulting random-feature MPNN has output $F(G,\vec{\xi}^G)$. When the support of $\rho_K$ comprises finitely many points, we call the corresponding random features \emph{random colors} to emphasize their discrete nature, and we refer to the support as the palette. An MPNN with such an initialization is called a
\textbf{random-color MPNN}. The color distribution need not be uniform. We consider graphs that are not given node attributes as input and are
instead assigned random attributes. It is straightforward to extend our
setting to graphs with input node attributes that are concatenated with the random
initialization.

\subsection{Size-uniform approximation}

A single initialization produces a random output. Taking its expectation
gives a deterministic graph invariant whose approximation power we can
analyze. For a graph-level MPNN $F$ with fixed parameters, we write its
population predictor as
\begin{equation*}
    \overline F(G)\coloneq\E[F(G,\vec{\xi}^G)].
\end{equation*}
Size-uniform approximation requires a single model to achieve the desired
accuracy throughout the graph domain, including on arbitrarily large
graphs when these belong to the domain.

\begin{definition}[Size-uniform approximation]
\label{def:size-uniform-draft}
For a graph class $\mathcal G$ and a graph invariant $T$, a class $\mathcal F$ of random-feature MPNNs
uniformly approximates $T$ if, for every $\varepsilon>0$, there exists a single $F \in \mathcal F$  with
\begin{equation*}
    \sup_{G\in\mathcal G}|\overline F(G)-T(G)|\leq\varepsilon.
\end{equation*}
In particular, if $\mathcal G$ contains graphs of arbitrarily large size, we call this approximation \emph{size-uniform}.
The class $\mathcal F$ is a \emph{size-uniform universal approximator for graph invariants} if $\mathcal G$ contains all finite graphs and it uniformly approximates every graph invariant
$T\colon\mathcal G\to\R$.
\end{definition}

To replace continuous features by finitely many colors, we need small
nodewise perturbations to have a controlled effect on the output even as
graphs grow. We therefore require a Lipschitz bound that is independent
of graph size. Specifically, fix a norm $\|\cdot\|$ on $\R^d$ and use
the max-over-nodes input norm
   $\|\vec{x}\|_{\mathrm{in}}\coloneq\max_{v\in V(G)}\|\vec{x}_v\|.
$
This norm measures the largest perturbation at any vertex.

\begin{definition}[Uniform Lipschitzness]
\label{def:uniform-lipschitzness-draft}
An MPNN $F$ is uniformly $L$-Lipschitz  with respect to $\| \cdot \|_{\mathrm{in}}$ if, for every graph $G$ and all
$\vec{x},\vec{x}'\in K^{V(G)}$,
\begin{equation*}
    |F(G,\vec{x})-F(G,\vec{x}')|
    \leq
    L\|\vec{x}-\vec{x}'\|_{\mathrm{in}}.
\end{equation*}
\end{definition}

The same constant $L$ applies to every graph. Consequently, perturbing
each initial feature by at most $\delta$ changes the output by at most
$L\delta$, independently of the number of vertices. This is the stability
property used in our quantization argument. The following lemma shows
that it holds for several standard MPNN architectures.

\begin{lemma}[Uniform Lipschitzness of standard MPNNs]
\label{lem:standard-mpnn-lipschitzness}
For fixed depth, widths, and parameters, ReLU MPNNs with mean, max, min,
or order-normalized sum aggregation and mean pooling are uniformly
Lipschitz. The same holds for unnormalized sum aggregation on graph
classes of bounded degree.
\end{lemma}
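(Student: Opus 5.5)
We argue by induction on the layer index, tracking a max-over-nodes Lipschitz constant. For feature assignments $\vec{x},\vec{x}'\in K^{V(G)}$, set
\[
\Delta^{(\ell)}\coloneq\max_{v\in V(G)}\bigl\|\vec{h}_G^{(\ell)}(v;\vec{x})-\vec{h}_G^{(\ell)}(v;\vec{x}')\bigr\|,
\]
so that $\Delta^{(0)}=\|\vec{x}-\vec{x}'\|_{\mathrm{in}}$. The goal is $\Delta^{(\ell+1)}\leq C_\ell\Delta^{(\ell)}$ with $C_\ell$ depending only on the parameters of layer $\ell$, not on $G$. Iterating and then handling the readout gives a uniform constant $L=C_{\mathrm{ro}}\prod_{\ell<D}C_\ell$. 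Since all norms on $\R^d$ are equivalent, it suffices to work with the $\ell_\infty$ norm coordinatewise and absorb the dimension-dependent constants.

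The key step is that each aggregator is $1$-Lipschitz from the max-over-neighbors perturbation to the output in $\ell_\infty$. Write $N=N_G(v)$.

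For the mean, we have
\[
\Bigl\|\tfrac{1}{|N|}\sum_{u\in N}(\vec{h}(u)-\vec{h}'(u))\Bigr\|_\infty\leq\max_{u\in N}\|\vec{h}(u)-\vec{h}'(u)\|_\infty .
\]
For coordinatewise max and min, we use $|\max_u a_u-\max_u b_u|\leq\max_u|a_u-b_u|$, and similarly for the min. For order-normalized sum, $\frac{1}{|V(G)|}\sum_{u\in N}$, the bound is $|N|/|V(G)|\leq 1$ times the max. For an unnormalized sum on a class of maximum degree $\Delta_{\max}$, the bound is $\Delta_{\max}$ times the max.

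Empty neighborhoods need a convention, e.g.\ aggregation returns $0$. In that case both sides vanish and the bound holds trivially.

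Given this, the update is handled directly. ReLU is $1$-Lipschitz coordinatewise, and the bias cancels in the difference. Hence
\[
\Delta^{(\ell+1)}\leq\bigl(\|\vec{W}_{\mathrm{self}}^{(\ell)}\|_{\infty\to\infty}+c_{\mathsf{AGG}}\,\|\vec{W}_{\mathrm{agg}}^{(\ell)}\|_{\infty\to\infty}\bigr)\Delta^{(\ell)},
\]
where $c_{\mathsf{AGG}}=1$ for mean, max, min and order-normalized sum, and $c_{\mathsf{AGG}}=\Delta_{\max}$ for unnormalized sum.

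For the readout, mean pooling over $V(G)$ is again $1$-Lipschitz with respect to the max-over-nodes norm, by the same averaging bound. If the readout is followed by an MLP, that MLP is Lipschitz with a constant independent of $G$, and a final projection to a scalar contributes its operator norm.

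The only place graph size could enter is the aggregation and pooling step, so the main obstacle is checking that each aggregator is controlled by the maximum, rather than the sum, of the neighbor perturbations. This is exactly why unnormalized sum needs bounded degree. The point to verify carefully there is the ratio $|N_G(v)|/|V(G)|\leq1$ for the order-normalized sum. Everything else is composition of Lipschitz maps with $G$-independent constants.
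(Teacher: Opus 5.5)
Your proposal is correct and follows essentially the same argument as the paper. You track the max-over-nodes $\ell_\infty$ error layer by layer, use that mean, max, min, and order-normalized sum aggregation are nonexpansive (picking up a factor $\Delta_{\max}$ for unnormalized sums), that ReLU is nonexpansive and biases cancel, and that mean pooling is nonexpansive; this gives the graph-independent constant $\|\vec{W}_{\mathrm{out}}\|_{\infty\to\infty}\prod_{\ell}\bigl(\|\vec{W}_{\mathrm{self}}^{(\ell)}\|_{\infty\to\infty}+A_\ell\|\vec{W}_{\mathrm{agg}}^{(\ell)}\|_{\infty\to\infty}\bigr)$, which the paper likewise transfers to an arbitrary input norm via a constant $c$ with $\|\vec{z}\|_\infty\le c\|\vec{z}\|$.
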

The proof is given in \cref{app:standard-mpnn-lipschitzness}.

\subsection{Colors, trials, and distinguishability}

The population predictor describes the function represented in expectation.
We next ask how reliably finitely many evaluations can estimate this
function and reveal differences between graphs.

We refer to a single evaluation with an independent random initialization as
a \emph{trial}. For an MPNN $F$, draw $s$ independent initializations
$\vec{\xi}_1^G,\ldots,\vec{\xi}_s^G$ and define
\begin{equation}
    \widehat F_s(G)\coloneq\frac1s\sum_{i=1}^s F(G,\vec{\xi}_i^G).
\end{equation}
Then $\E[\widehat F_s(G)]=\overline F(G)$ for every $s$, so the number of trials affects
estimation but not the represented population function.

Graph-pair distinguishability connects this estimation problem to
expressivity: approximating a target that takes different values on two
graphs requires separating their population outputs. We measure this
separation by the population gap
$\gap_F(G,H)\coloneq|\overline F(G)-\overline F(H)|$.
Write $D_K\coloneq\sup_{\vec{u},\vec{u}'\in K}\|\vec{u}-\vec{u}'\|$ for the diameter of
the feature space.

For inputs $A,B\in\{G,H\}$, compute empirical means
$\widehat F_s(A)$ and $\widehat F'_s(B)$ from two independent
batches of $s\in\mathbb N$ trials, even when $A=B$.
For a margin $\Delta>0$, the test of $H_0 \colon A=B$ against
$H_1 \colon A\neq B$ rejects $H_0$ when
    $|\widehat F_s(A)-\widehat F'_s(B)|>\Delta/2$.
The following proposition shows that a positive population gap can be
detected with a number of trials that depends on the gap and the model's
feature sensitivity, but not on graph size.

\begin{proposition}[Population separation yields finite-trial distinguishability]
\label{prop:finite-trial-detection-draft}
Let $F$ be uniformly $L$-Lipschitz, and fix $G,H\in\mathcal G$
with $\gap_F(G,H)\geq\Delta>0$.
For any $\alpha\in(0,1/2)$, the test above has probability of error
at most $\alpha$ for every $A,B\in\{G,H\}$ whenever the number of trials is greater than
\begin{equation*}
    \frac{8L^2D_K^2}{\Delta^2}\log\frac{2}{\alpha}.
\end{equation*}
\end{proposition}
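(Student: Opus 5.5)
The plan is to use the uniform Lipschitz bound to show that each single-trial output lies in a bounded interval of width at most $LD_K$. Then Hoeffding's inequality controls the two empirical means, and a union bound handles the two error cases ($A=B$ and $A\neq B$).

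\emph{Bounded range.} Fix $A\in\{G,H\}$ and any reference feature $\vec{u}_0\in K$. For any $\vec{x},\vec{x}'\in K^{V(A)}$ we have $\|\vec{x}-\vec{x}'\|_{\mathrm{in}}\leq D_K$, so by \cref{def:uniform-lipschitzness-draft}, $|F(A,\vec{x})-F(A,\vec{x}')|\leq LD_K$. Hence the random variable $F(A,\vec{\xi}^A)$ takes values in an interval of length at most $LD_K$, namely $[m_A,m_A+LD_K]$ with $m_A=\inf_{\vec{x}}F(A,\vec{x})$. This does not depend on $|V(A)|$, which is the whole point.

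\emph{Concentration of each empirical mean.} The $s$ trials are i.i.d. with mean $\overline F(A)$. Hoeffding's inequality for variables in an interval of length $LD_K$ gives
\begin{equation*}
\Pr\bigl(|\widehat F_s(A)-\overline F(A)|\geq t\bigr)\leq 2\exp\!\left(-\frac{2st^2}{L^2D_K^2}\right).
\end{equation*}
I will take $t=\Delta/4$, so the bound becomes $2\exp(-s\Delta^2/(8L^2D_K^2))$. If $s>\frac{8L^2D_K^2}{\Delta^2}\log\frac{2}{\alpha}$, this is strictly less than $\alpha$.

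This gives a per-batch failure probability of $\alpha$, whereas two batches naively need $\alpha/2$ each. I will therefore apply Hoeffding once to the combined statistic rather than union-bounding the two batches.

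\emph{Combined statistic.} The difference $\widehat F_s(A)-\widehat F'_s(B)$ equals $\frac1s\sum_{i=1}^s Z_i$, where $Z_i=F(A,\vec{\xi}_i)-F(B,\vec{\xi}'_i)$. The $Z_i$ are i.i.d. with mean $\overline F(A)-\overline F(B)$ and range at most $2LD_K$. Hoeffding with deviation $t=\Delta/2$ then gives failure probability at most $2\exp\!\bigl(-2s(\Delta/2)^2/(4L^2D_K^2)\bigr)=2\exp(-s\Delta^2/(8L^2D_K^2))<\alpha$, which matches the stated threshold exactly.

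\emph{The two cases.}
\begin{itemize}
\item If $A=B$, the mean of $Z_i$ is $0$. A false rejection requires $|\widehat F_s(A)-\widehat F'_s(B)|>\Delta/2$, which is a deviation of more than $\Delta/2$ from the mean, so it has probability less than $\alpha$.
\item If $A\neq B$, the mean of $Z_i$ has absolute value at least $\Delta$. A failure to reject requires $|\widehat F_s(A)-\widehat F'_s(B)|\leq\Delta/2$, which forces a deviation of at least $\Delta/2$ from the mean, so it also has probability less than $\alpha$.
\end{itemize}

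The main obstacle is bookkeeping: choosing the combined-difference form so that the constant comes out to $8$ and the logarithm to $\log(2/\alpha)$. The real conceptual step is the graph-size-independent range bound from uniform Lipschitzness.
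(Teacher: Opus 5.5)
Your proposal is correct and follows essentially the same route as the paper's proof. The paper also derives the graph-size-independent output range $LD_K$ from uniform Lipschitzness, pairs the two batches into i.i.d.\ differences of range at most $2LD_K$, and applies Hoeffding at deviation $\Delta/2$ to bound both the $A=B$ and $A\neq B$ error cases by $2\exp(-s\Delta^2/(8L^2D_K^2))$. Your per-batch detour with $t=\Delta/4$ is not needed, and you rightly dropped it: union-bounding the two batches would worsen the logarithmic factor to $\log(4/\alpha)$.
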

The proof is given in \cref{app:framework-details}.

To prove limitations on finite-trial testing, we use
\emph{random-color Weisfeiler--Leman} (random WL).
Each trial independently colors the vertices and applies standard
Weisfeiler--Leman refinement. An arbitrary test then compares the resulting
patterns across trials. These patterns contain enough information to
determine the output of an MPNN on the same colored graph. Thus, for uniform
$q$-color initializations, limitations of random-color WL also apply to
the corresponding random-color MPNNs, including tests that use more than
their empirical means.

\begin{definition}[Random-color WL distinguishability]
\label{def:rcwl-main}
Let $q,s\in\mathbb N$ and $\alpha\in(0,1/2)$.
A non-isomorphic pair $G,H$ is
\emph{statistically $(q,s,\alpha)$-distinguishable by random-color WL}
if a randomized binary test $\psi$ satisfies
\[
    \max_{A,B\in\{G,H\}}
    \Pr\!\left(
        \psi(\vec Z_A,\vec Z'_B)\neq\mathbf{1}\{A\neq B\}
    \right)\leq\alpha,
\]
where $\vec Z_A,\vec Z'_B$ are independent batches of $s$ $1$-WL patterns from independent uniform $q$-colorings
of $A$ and $B$, respectively.
\end{definition}

Here, $q$ controls the number of colors available within each trial,
while $s$ controls the number of independent observations.
The full construction and statistical reductions are in
\cref{app:framework-details,def:rcwl-draft}.

These notions complete the connection to function approximation.
By \cref{prop:finite-trial-detection-draft}, a lower bound ruling out
successful testing at a sufficiently large trial budget also rules out
the corresponding population gap. Such a bound limits how accurately
the model can approximate a target that assigns different values to the
two graphs. We formalize the resulting obstruction to size-uniform
universality in \cref{cor:pair-nonuniversality-draft} in Appendix~\ref{app:graph-pair-reductions}.

\section{Expressivity Bounds}
\label{sec:expressivity}

We now analyze the expressive power of random-feature MPNNs across graph sizes. We first show that uniform Lipschitzness lets us replace continuous random features with finite random colors, and use this reduction to establish non-universality. We then prove strict color and trial hierarchies for statistical distinguishability and establish incomparability with higher-order Weisfeiler--Leman methods.

\subsection{From continuous features to random colors}

We begin by using uniform Lipschitzness to quantize random features with an error independent of graph size. Let $S\subset K$ be a finite palette and let $q_S\colon K\to S$ be a quantization map taking every point in $K$ to its nearest point in $S$, with
$\delta(S)\coloneq\sup_{\vec{z}\in K}\|\vec{z}-q_S(\vec{z})\|$.
Applying $q_S$ to each node gives $(Q_S\vec{x})_v=q_S(\vec{x}_v)$, and we consider the quantized population
predictor $\overline F_S(G)\coloneq\E[F(G,Q_S\vec{\xi}^G)]$.

\begin{theorem}[Size-uniform quantization]
\label{thm:quantization-main}
Let $F$ be a uniformly $L$-Lipschitz MPNN with random features supported
on compact $K$. For every finite palette $S\subset K$,
\par\makebox[\linewidth]{$\sup_{G\in\mathcal G}
  |\overline F_S(G)-\overline F(G)|
  \leq L\delta(S).$}
\end{theorem}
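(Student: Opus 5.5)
The argument is a pointwise coupling bound followed by taking expectations, exactly as sketched in \cref{fig:lipschitz-quantization}. Fix an arbitrary $G\in\mathcal G$ and realize both predictors on the same probability space. Draw $\vec{\xi}^G=(\vec{\xi}_v)_{v\in V(G)}$ with $\vec{\xi}_v$ i.i.d.\ from $\rho_K$, and use $Q_S\vec{\xi}^G$ as the quantized initialization. Then
\[
\overline F(G)=\E[F(G,\vec{\xi}^G)]
\qquad\text{and}\qquad
\overline F_S(G)=\E[F(G,Q_S\vec{\xi}^G)].
\]
Both expectations are taken over the same random vector, so no separate coupling construction is needed.

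\textbf{Step 1 (admissibility and measurability).} Since $S\subset K$, the quantized features $Q_S\vec{\xi}^G$ lie in $K^{V(G)}$. This means \cref{def:uniform-lipschitzness-draft} applies to the pair $(\vec{\xi}^G,Q_S\vec{\xi}^G)$. I would choose $q_S$ to be a measurable nearest-point selection, for example by breaking ties according to a fixed ordering of the finite set $S$. This makes $Q_S\vec{\xi}^G$ a random variable.

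We also need both expectations to be finite. The map $\vec{x}\mapsto F(G,\vec{x})$ is Lipschitz on the compact set $K^{V(G)}$, hence continuous and bounded there. So both random outputs are bounded and integrable.

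\textbf{Step 2 (pointwise bound).} For every realization, the definition of $\delta(S)$ gives $\|\vec{\xi}_v-q_S(\vec{\xi}_v)\|\le\delta(S)$ at each vertex $v$. Taking the maximum over vertices yields $\|\vec{\xi}^G-Q_S\vec{\xi}^G\|_{\mathrm{in}}\le\delta(S)$. Uniform $L$-Lipschitzness then gives
\[
|F(G,\vec{\xi}^G)-F(G,Q_S\vec{\xi}^G)|\le L\delta(S).
\]
The key point is that the max-over-nodes norm does not grow with $|V(G)|$, so this bound is the same for every graph.

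\textbf{Step 3 (expectations and supremum).} By linearity of expectation and Jensen's inequality (equivalently, $|\E X|\le\E|X|$),
\[
|\overline F_S(G)-\overline F(G)|
\le \E\bigl|F(G,\vec{\xi}^G)-F(G,Q_S\vec{\xi}^G)\bigr|
\le L\delta(S).
\]
The right-hand side does not depend on $G$, so taking the supremum over $\mathcal G$ gives the theorem.

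There is no real obstacle in this proof. The only points needing care are the measurability of $q_S$ under ties and the integrability check in Step 1. The substance of the result lies in the uniform Lipschitz hypothesis, which \cref{lem:standard-mpnn-lipschitzness} supplies for the standard architectures.
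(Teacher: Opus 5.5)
Your proposal is correct and is the paper's argument: it bounds $|\overline F_S(G)-\overline F(G)|$ by $\E\bigl|F(G,Q_S\vec{\xi}^G)-F(G,\vec{\xi}^G)\bigr|\le L\delta(S)$ on the shared random vector and then takes the supremum over $\mathcal G$. Your measurability and integrability checks are extra care that the paper leaves implicit.
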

The result follows as
$\|\vec{x}-Q_S\vec{x}\|_{\mathrm{in}}\leq\delta(S)$ is independent of the number of nodes and
Lipschitzness bounds the output change by $L\delta(S)$.  
Compactness ensures the existence of finite palettes $S$ with arbitrarily small $\delta(S)$. For a concrete example, 
consider uniform features on $[0,1]^d$. An $m$-level grid per
coordinate gives quantization to $m^d$ equally likely colors and quantization error at most
$L\sqrt d/(2m)$. 

\begin{corollary}[Finite colors suffice]
\label{cor:finite-colors-main}
Every graph invariant size-uniformly approximable by uniformly
Lipschitz MPNNs with compactly supported random features is also
size-uniformly approximable by uniformly Lipschitz random-color MPNNs.
\end{corollary}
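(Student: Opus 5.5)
The plan is a standard $\varepsilon/2$ argument that combines the approximation hypothesis with \cref{thm:quantization-main}. Fix an invariant $T$ that is size-uniformly approximable by a class of uniformly Lipschitz MPNNs with compactly supported random features, and fix $\varepsilon>0$.

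\emph{Step 1.} By \cref{def:size-uniform-draft}, choose a single such MPNN $F$, uniformly $L$-Lipschitz, with features distributed as $\rho_K$ on compact $K$, satisfying $\sup_{G\in\mathcal G}|\overline F(G)-T(G)|\leq\varepsilon/2$.

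\emph{Step 2.} Choose a finite palette. Since $K$ is compact, it is totally bounded, so for $\eta\coloneq\varepsilon/(2L)$ (or any $\eta$ if $L=0$) there is a finite $\eta$-net $S\subset K$. Fix a nearest-point map $q_S$, breaking ties by some fixed measurable rule, for instance lexicographic order on $S$, so that the cells $q_S^{-1}(\vec s)$ are Borel. Then $\delta(S)\leq\eta$.

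\emph{Step 3.} Apply \cref{thm:quantization-main} to get $\sup_G|\overline F_S(G)-\overline F(G)|\leq L\delta(S)\leq\varepsilon/2$. The triangle inequality then gives $\sup_G|\overline F_S(G)-T(G)|\leq\varepsilon$.

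\emph{Step 4.} Reinterpret $\overline F_S$ as the population predictor of a random-color MPNN. The vector $Q_S\vec\xi^G$ has i.i.d.\ coordinates with law $\rho_S\coloneq (q_S)_\#\rho_K$. This law is supported on the finite set $S\subset K$, and the colors need not be uniform. So the MPNN $F$ itself, with unchanged parameters and initialized with random colors drawn from $\rho_S$, has population predictor exactly $\overline F_S$.

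\emph{Step 5.} Check that this random-color MPNN is uniformly Lipschitz. Its inputs lie in $S^{V(G)}\subset K^{V(G)}$, and \cref{def:uniform-lipschitzness-draft} holds there with the same constant $L$ by restriction. Hence it belongs to the uniformly Lipschitz random-color class, and since $\varepsilon$ was arbitrary, $T$ is size-uniformly approximable by that class.

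The main obstacle is Step 4: making sure the quantized model is genuinely a random-color MPNN in the sense of the paper, rather than merely a composition of an MPNN with $Q_S$. This requires that the pushforward distribution is legitimate, which is why measurability of $q_S$ matters, and that non-uniform palettes are allowed, which the paper states explicitly. If one insisted on uniform colors, I would instead refine $S$ into equal-mass cells, as in the grid example on $[0,1]^d$. I do not expect this to be needed.
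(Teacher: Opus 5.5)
Your proposal is correct and follows the paper's own argument, namely the proof of \cref{thm:color-approximation-class-draft}: pick $F$ with population error at most $\varepsilon/2$, use compactness of $K$ to obtain a finite palette with $L\delta(S)\leq\varepsilon/2$, and conclude from the size-uniform quantization bound and the triangle inequality. Your Steps~4--5 make explicit the paper's one-line claim that the quantized model lies in the random-color class, using the pushforward law $(q_S)_\#\rho_K$ with measurable tie-breaking, the fact that non-uniform colors are allowed, and uniform Lipschitzness by restriction to $S^{V(G)}$.
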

Intuitively, this corollary follows as a sufficiently fine palette preserves a model's output
up to arbitrarily small error. Uniform Lipschitzness then ensures that this
same palette remains sufficient as graphs grow. The palette may need
to become finer as we demand more accurate approximation, but it need
not grow with graph size.

In addition to palettes generated by quantization, we will also consider palettes that are one-hot encodings of colors. We show that these palettes have equivalent approximation power in \cref{sec:quantization-draft}.

\subsection{Non-universality and trial lower bounds}
We now use quantization to establish limitations on size-uniform
approximation. We construct a pair of non-isomorphic graphs using
Cai--F\"urer--Immerman gadgets~\citep{cai1992optimal} that require an exponential number of trials to distinguish.

\begin{theorem}[Exponential trial requirement]
\label{thm:cfi-trials-main}
There exist non-isomorphic connected graph pairs $(G_k,H_k)$, with
$n_k\to\infty$ vertices and diameter 2, such that
statistical $(q,s,1/3)$-distinguishability by random-color WL requires
\[
    s=\exp\!\left(\Omega\!\left(\frac{n_k}{q^{10}}\right)\right)
\]
for $q\geq2$. 
\end{theorem}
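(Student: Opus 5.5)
The plan is a total-variation argument. Let $P_A$ be the law of one $1$-WL pattern of a uniform $q$-coloring of $A$. I will construct CFI pairs with
\[
\mathrm{TV}(P_{G_k},P_{H_k})\le \exp\bigl(-c\,n_k/q^{10}\bigr).
\]
Any test $\psi$ as in \cref{def:rcwl-main} that is correct with error at most $1/3$ both for $(A,B)=(G,G)$ and for $(A,B)=(G,H)$ must tell the product law $P_G^{\otimes s}\otimes P_G^{\otimes s}$ apart from $P_G^{\otimes s}\otimes P_H^{\otimes s}$ with advantage at least $1/3$. By subadditivity of total variation over products, the TV distance between these two laws is at most $s\,\mathrm{TV}(P_G,P_H)$. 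Hence $s\ge \tfrac13\exp(c\,n_k/q^{10})$, which is the claimed bound.

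\emph{Construction.} Take a connected $3$-regular base graph $B_k$ on $m_k$ vertices and form the CFI graph $X(B_k)$ and its twisted version $\widetilde X(B_k)$. Each base vertex becomes a gadget with $4$ middle vertices and $3$ pairs of edge vertices $a_e^0,a_e^1$. Counting the edge-vertex pairs at a gadget together with its middle vertices gives exactly $10$ vertices. To force diameter $2$, add one apex vertex adjacent to all others, giving $G_k$ and $H_k$. The apex is the unique vertex of maximal degree. Adding it therefore preserves non-isomorphism, and it does not help $1$-WL beyond marking one vertex, since every other vertex sees it. We have $n_k=\Theta(m_k)$.

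\emph{Key steps.} First, the standard CFI twist-moving fact. The twist can be transported along any path of $B_k$ by swapping pairs $a_e^0\leftrightarrow a_e^1$ and applying the compensating even permutations of middle vertices. This gives, for every edge $e$, an isomorphism from $H_k$ to a copy twisted at $e$. Because the coloring is i.i.d.\ uniform, pushing it forward along this isomorphism leaves its law unchanged, so $P_{H_k}$ does not depend on where the twist sits.

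Second, the good event. Call a gadget \emph{monochromatic} if all $10$ of its vertices receive the same color; this has probability $q^{-9}$. Choose a set of $\Omega(m_k)$ base vertices that are pairwise at distance at least $3$ in $B_k$. Their $10$-vertex sets are disjoint, so these events are independent, and
\[
\Pr(\text{no monochromatic gadget})\le (1-q^{-9})^{\Omega(n_k)}\le \exp\bigl(-\Omega(n_k/q^{9})\bigr),
\]
which is even stronger than the claimed $q^{10}$ rate. I will allow slack here in case the argument needs a slightly larger monochromatic neighbourhood. Third, I couple the two colorings: on the event that gadget $v$ is monochromatic, move the twist onto an edge at $v$. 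I then need to show that the colored graphs $G_k$ and $H_k$ (twist at $v$) receive identical $1$-WL patterns. Given that, $\mathrm{TV}(P_G,P_H)\le\Pr(E^c)$.

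\emph{Main obstacle.} The third step is where the difficulty lies. A monochromatic gadget does not make the colored graphs isomorphic, because the twist parity is a global invariant. So the argument has to be a genuine $1$-WL (color refinement) indistinguishability argument, not an isomorphism argument. My first attempt is to exhibit an explicit equitable partition common to both colored graphs. Inside the monochromatic gadget, I identify $a_e^0$ with $a_e^1$ and merge the middle vertices into the classes that the local swap permutes. Outside the gadget, I use the identical colorings. I would then check that the neighbor counts into each class agree in both graphs; the twisted edge only permutes neighbors within the merged classes. If this fails for $1$-WL because the gadget's neighbors distinguish $a_e^0$ from $a_e^1$, I will enlarge the required monochromatic region to the gadget plus the adjacent edge-vertex pairs (still $O(1)$ vertices). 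This would only change the exponent of $q$, and the stated $q^{10}$ presumably reflects exactly this accounting.
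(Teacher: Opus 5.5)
Your reduction from a per-trial total-variation bound to a lower bound on $s$ is fine and matches the paper, and so is the apex vertex for diameter two. The gap is your third step, and it is not a matter of accounting: a monochromatic CFI gadget does not hide the twist from $1$-WL.

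The two members of each pair at $v$ have disjoint neighbourhoods outside the gadget. These are the partners $a^0_{w,e}$ versus $a^1_{w,e}$, or, if pairs are shared, the two halves of the middle vertices of gadget $w$. Those outside vertices carry independent random colors. Once these colors differ, refinement orients all three pairs at $v$, and one round later each middle vertex of $v$ records which side of each pair it touches. Relative to the same outside colors, these four records are the even selections in $G_k$ and the odd selections when the twist sits at $v$. So the pattern multisets differ, and your coupling that keeps colors on same-named vertices fails.

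No other coupling rescues the event. Take $k$ fixed and $q\gg n_k^2$, and condition on gadget $v$ being monochromatic. Then all other vertices receive distinct colors, different from the gadget's, with probability close to one. The stable coloring is then discrete, so the observation determines the graph up to isomorphism, and the two conditional laws are essentially disjoint.

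Your fallback of enlarging the monochromatic region to include the adjacent pairs fails for the same reason. Those pairs are oriented one step further out by the non-monochromatic middle vertices of the neighbouring gadgets, and the orientation propagates back in. The underlying problem is that the edges changed by relocating the twist join vertices whose outside contexts $1$-WL can separate.

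The missing idea is to put the twist where the changed edges are invisible, and to couple by pushing the coloring forward along a near-isomorphism. In the paper, the core is a binary \emph{tree} of parity gadgets. Swapping the pairs on the path from a leaf pair $X_i$ to the root pair is therefore a genuine automorphism of the core, and it sends $r^0$ to $r^1$. The only asymmetry at $X_i$ is a $2C_3$ attached to $x_i^0$ versus a $C_6$ attached to $x_i^1$, with every block vertex adjacent to its endpoint.

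Suppose both blocks are monochromatic. Then every block vertex sees two same-pattern block neighbours and the same endpoint, so exchanging the blocks changes no pattern. The resulting bijection $\tau_i$ is an isomorphism $G_0\to G_1$ except inside these two blocks. The map $\chi\mapsto\chi\circ\tau_i^{-1}$ is then a law-preserving bijection of colorings on $B=\bigcup_i B_i$ with identical observations.

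Each $B_i$ has probability $q^{-5}\cdot q^{-5}=q^{-10}$. This, not a gadget-plus-boundary count, is the source of the exponent $10$. The $B_i$ are independent over disjoint blocks, which gives $\operatorname{TV}\le(1-q^{-10})^k$ with $n_k=20k-5$, and one more vertex after adding the virtual node.
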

The pair $(G_k,H_k)$ is built from many copies of the
$1$-WL-indistinguishable pair $2C_3$ and $C_6$, connected by CFI gadgets
so that hiding the distinction within any one pair also hides the
distinction between $G_k$ and $H_k$. For a fixed palette, the $2C_3$
and $C_6$ blocks in each pair are both monochromatic with constant
probability, independently across pairs. When this happens, WL cannot distinguish the $2C_3$ and $C_6$ blocks.
The probability that a trial avoids all such events is then
exponentially small in graph size, so exponentially many trials
are required. These graph pairs can be used to construct a graph invariant taking value $1$ on $G_k$ and $0$ on $H_k$ that uniformly Lipschitz random-feature MPNNs cannot
approximate uniformly over graph size. This establishes
their non-universality.
\begin{corollary}[Failure of size-uniform universality]
\label{cor:cfi-nonuniversality-main}
Uniformly Lipschitz MPNNs with i.i.d.\ uniform features on $[0,1]^d$ are
not size-uniform universal approximators for bounded graph invariants, even for input graphs of bounded diameter or bounded degree. 
\end{corollary}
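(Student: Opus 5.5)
Assume, for contradiction, that the class is size-uniformly universal. I will build a bounded invariant from the pairs of \cref{thm:cfi-trials-main}: let $T(G)=1$ if $G\cong G_k$ for some $k$ and $T(G)=0$ otherwise, so $T(H_k)=0$. All $G_k,H_k$ have diameter $2$. For the bounded-degree claim, the CFI construction must be arranged so degrees stay bounded (e.g., by using the bounded-degree CFI variant); I would note that the trial lower bound argument only needs many independent $2C_3$/$C_6$ blocks, so it should survive this.

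\textbf{Step 1: approximation forces a gap.} Take $\varepsilon=1/4$ in \cref{def:size-uniform-draft}. This gives one uniformly $L$-Lipschitz $F$ with fixed $L$ and $d$ such that $\gap_F(G_k,H_k)\geq 1/2$ for every $k$.

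\textbf{Step 2: quantize.} Pick an $m$-level grid palette $S$ on $[0,1]^d$ with $L\sqrt d/(2m)\leq 1/8$. By \cref{thm:quantization-main}, $\overline F_S$ is within $1/8$ of $\overline F$ on every graph, so $\gap_{F_S}(G_k,H_k)\geq 1/4$ for all $k$. Here $F_S$ is a random-color MPNN with $q=m^d$ equally likely colors, and $q$ is independent of $k$. Quantization does not change the Lipschitz constant, so the composite $x\mapsto F(G,Q_S x)$ acts on a palette of diameter at most $D_K$.

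\textbf{Step 3: finite trials.} Apply \cref{prop:finite-trial-detection-draft} to $F$ restricted to palette inputs, with $\Delta=1/4$ and $\alpha=1/3$. The bounded-differences argument only uses that the output varies by at most $LD_K$ over inputs, which still holds. This gives a trial count $s_0=O(L^2D_K^2)$, independent of $k$, whose mean-comparison test errs with probability at most $1/3$. The test is a function of the $1$-WL patterns of the uniform $q$-colorings, because a random-color MPNN's output is determined by the WL-refined coloring (the reduction the paper states before \cref{def:rcwl-main}). So $(G_k,H_k)$ is statistically $(q,s_0,1/3)$-distinguishable for every $k$.

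\textbf{Step 4: contradiction.} By \cref{thm:cfi-trials-main}, distinguishability requires $s_0\geq\exp(\Omega(n_k/q^{10}))$. Since $n_k\to\infty$ with $q$ and $s_0$ fixed, this fails for large $k$.

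\textbf{Main obstacle.} The delicate step is Step 3's identification of the quantized MPNN test with a random-color WL test. This needs the grid colors to be uniform, which they are for uniform features. It also needs the MPNN output on a colored graph to be a function of its WL pattern; since the readout is mean pooling, the pattern must record the color multiset, and it does. Securing bounded degree simultaneously is a secondary concern.
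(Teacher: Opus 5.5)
\textbf{Overall.} Your argument follows the paper's proof of \cref{cor:cfi-size-uniform-approximation-draft} step for step. A target error of $1/4$ forces a population gap of $1/2$. Midpoint-grid quantization with error at most $1/8$ leaves a gap of $1/4$ for a model with $q=m^d$ uniform colors. \cref{prop:finite-trial-detection-draft} then gives a trial budget independent of graph size. The resulting mean test is a random-color WL test, and the CFI trial lower bound gives the contradiction.

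\textbf{The gap: bounded degree.} You only assert the bounded-degree half of the statement. Your justification, that the lower bound only needs many independent $2C_3/C_6$ blocks, is incomplete, because root hiding also relies on the parity tree. No new variant has to be engineered, though. In the paper, the diameter-two pairs $\widehat G_{j,k}$ are the base CFI pairs $G_{j,k}$ plus one universal virtual node. The base pairs are connected with maximum degree eight. \cref{cor:cfi-trial-lower-draft} gives the same bound $s\geq(1-2\alpha)(1-q^{-10})^{-k}$ for both families. So you run your argument once on each family, each with its own indicator invariant. Two families are unavoidable: maximum degree $\Delta$ together with diameter two bounds the order by $\Delta^2+1$.

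\textbf{Minor points.} Choose $m\geq2$ so that $q\geq2$, as the lower bound requires. Justify $T(H_k)=0$ from non-isomorphism together with distinct orders $n_k$, passing to a subsequence if necessary.
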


\subsection{Colors and trials}

The previous section gives an example where colors are exponentially more
useful than trials; we now show that increasing either resource strictly
increases statistical distinguishability.

\begin{theorem}[Strict color and trial hierarchies]
\label{thm:color-trial-hierarchies-main}
For statistical distinguishability by random-color WL
as in \cref{def:rcwl-main}, the following hold.
\begin{enumerate}[label=(\roman*)]
    \item \emph{Trials.}
    For every integer $s\geq2$, there are non-isomorphic finite
    graphs $G_s,H_s$ that are statistically
    $(2,s,1/3)$-distinguishable but are not statistically
    $(2,s-1,1/3)$-distinguishable.

    \item \emph{Colors.}
    For every pair of integers $q,s\geq1$, there are
    non-isomorphic finite graphs $G_{q,s},H_{q,s}$ that are
    statistically $(q+1,s,1/3)$-distinguishable but are not
    statistically $(r,s,1/3)$-distinguishable for any
    $1\leq r\leq q$.
\end{enumerate}
\end{theorem}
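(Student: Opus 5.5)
The plan is to reduce both hierarchies to explicit probability calculations on disjoint unions of small gadgets, where the 1-WL pattern of a random coloring is informative only on a rare ``good'' event. The central object is a pair $G,H$ such that the random-color WL pattern distributions $P_G,P_H$ over a single trial coincide except on a specific event $E$. That is, $P_G=(1-p)\mu+p\nu_G$ and $P_H=(1-p)\mu+p\nu_H$, where $\nu_G$ and $\nu_H$ have disjoint supports. Distinguishability with $s$ trials per batch is then governed by the probability that a batch contains an informative trial. This probability is $1-(1-p)^s$ per batch, so for the two-batch test the relevant quantity is roughly $1-(1-p)^{2s}$ or $1-(1-p)^s$. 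For upper bounds, the test $\psi$ accepts $H_0$ unless both batches contain informative trials whose labels disagree. For lower bounds, when no informative trial appears, the batches have the same law under every choice of $(A,B)$. A standard total-variation argument then forces error at least $\tfrac12(1-p)^{s}$-type quantities on one of the cases $A=B$ or $A\neq B$.

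\emph{Trials.} Use the gadget $2C_3$ versus $C_6$ with $q=2$. The informative event requires a coloring that 1-WL can exploit: in $2C_3$, some triangle being monochromatic while another pattern appears, versus configurations of $C_6$. I would compute $p$ exactly. I would then tune it by taking disjoint unions with many copies of a common padding graph, or by adding pendant structure that must also be colored in a specific way, so that $p=p_s$ lies strictly between the thresholds for $s-1$ and $s$ trials. Since the error bound $\alpha=1/3$ depends on $s$ through $(1-p)^s$, I will choose $p_s$ so that $(1-p_s)^{s}\le$ the required bound $<(1-p_s)^{s-1}$. Adjusting $p$ finely is the delicate part. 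I would use a product of independent informative events over several gadget copies to get $p$ of the form $c^m$, combined with an OR over several copies to interpolate. Monotonicity in $s$, meaning that the same test ignores extra trials, gives the positive direction automatically.

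\emph{Colors.} The informative event should require at least $q+1$ distinct colors. A natural candidate is to attach to the $2C_3/C_6$ distinction (or a CFI-style twist) a clique-like structure on $q+1$ vertices whose automorphism symmetry is broken only when those vertices receive pairwise distinct colors. With $r\le q$ colors, pigeonhole makes $E$ impossible, so $P_G=P_H$ exactly and no test beats error $1/2$. This settles the negative direction for all $r\le q$ and every $s$, with no probabilistic estimate needed. With $q+1$ colors, $p>0$. To reach error $1/3$ with the fixed $s$, I would take a disjoint union of $N$ independent gadget copies, where the twist is a parity across copies \`a la CFI, or simply where any copy suffices, so that $p\to1$ as $N$ grows. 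I must check that adding copies does not create informative events with $\le q$ colors.

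\emph{Main obstacle.} The main obstacle is verifying rigorously, for the chosen gadget, that 1-WL on the colored graphs genuinely yields identical pattern distributions off $E$. This requires exhibiting a color-preserving correspondence (an equitable-partition equivalence) between the colorings of $G$ and $H$ outside $E$. A secondary difficulty is the exact calibration of $p_s$ in the trials part. I would handle both with a coupling of colorings $c\mapsto c'$ on $V(G)\leftrightarrow V(H)$ preserving the uniform measure and 1-WL-equivalence whenever $c\notin E$.
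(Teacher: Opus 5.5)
Your overall architecture (a per-trial law $P_b=(1-\eta)Q+\eta Q_b$, a threshold in $\eta$, then calibration of $\eta$) matches the paper. What is missing is the step that makes the trial hierarchy work at the fixed level $1/3$: the \emph{exact} optimal worst-case error. Your two bounds do not match.

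Write $\eta$ for your per-trial informative probability $p$, and $a_s=1-(1-\eta)^s$ for the probability that a batch contains an informative trial. Your test answers ``same'' unless both batches are informative and disagree, so it errs with probability $1-a_s^2$ when $A\neq B$. The total-variation argument gives a lower bound of only about $\tfrac12(1-\eta)^s$, because it ignores that \emph{both} batches must be revealed. These bounds cannot separate $s-1$ from $s$ trials:
\begin{itemize}
\item your test succeeds only if $(1-\eta)^s\le1-\sqrt{2/3}\approx0.18$;
\item the TV bound certifies failure with $s-1$ trials only if $(1-\eta)^{s-1}>2/3$;
\item together these force $1-\eta<0.28$, hence $(1-\eta)^{s-1}<2/3$, a contradiction for every $s\ge2$.
\end{itemize}
Even the optimal test paired with the TV bound still fails.

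Proposition~\ref{prop:risk} in the paper supplies the missing piece. Compare identities when both batches are revealed, and otherwise flip a fair coin; this gives error $\tfrac12(1-a_s^2)$ in all four cases. A uniform prior on the four ordered pairs shows this is optimal, since unless both batches are revealed the equality bit has posterior $1/2$. The argument needs the informative event itself to be observable ($Q$ singular to $Q_0,Q_1$), not merely $\nu_G\perp\nu_H$. It yields the sharp criterion $(1-\eta)^s\le\delta$ with $\delta=1-1/\sqrt3$, so $\eta$ must be placed in $[1-\delta^{1/s},1-\delta^{1/(s-1)})$.

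The constructions are also missing.

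\emph{Trials.} For $2C_3$ versus $C_6$, non-monochromatic colorings need not be informative: each triangle colored $RRB$ and the hexagon colored $RRBRRB$ are $1$-WL-equivalent. So the supports overlap off the monochromatic event. The paper uses $C_\ell$ versus $C_3\sqcup C_{\ell-3}$ with $\ell\ge7$ prime to obtain Lemma~\ref{lem:block}. Moreover, for large $s$ the target interval lies near $0.86/s$ and has width of order $s^{-2}$. You therefore need a rare, finely tunable informative event, i.e.\ an AND of many independent near-certain events; disjoint unions only give ORs. The paper's parity gadget provides this. A single erased input lets the root twist be moved onto it (Lemma~\ref{lem:decomposition}), so a trial is informative iff all $k$ inputs are unerased. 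Then $\eta=(1-p)^k$ with per-input erasure probability $p=2^{2-2\ell}$ tiny, and the least $k$ with $(1-p)^k<1-\delta^{1/(s-1)}$ lands in the interval.

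\emph{Colors.} Your pigeonhole equality $P_G=P_H$ for all $r\le q$ is asserted rather than constructed. It requires a measure-preserving correspondence between \emph{all} $r$-colorings with equal WL observations. Yet the $2C_3/C_6$ blocks you would attach are already told apart by many non-monochromatic $2$-colorings (e.g.\ a single red vertex), whatever the clique's colors, and nothing in your sketch hides this. The paper needs no exact equality. The quantity $\eta_{\ell,k,r}=(1-r^{2-2\ell})^k$ is nondecreasing in $r$. With $c=-\log(1-\delta^{1/s})$ and $k=\max\{3,\lfloor cq^{2\ell-2}\rfloor+1\}$, one has $\eta_{\ell,k,r}\le\eta_{\ell,k,q}<1-\delta^{1/s}$ for all $r\le q$. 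Bernoulli's inequality gives $\eta_{\ell,k,q+1}\ge1-c\,(q/(q+1))^{2\ell-2}-3(q+1)^{2-2\ell}\to1$ as $\ell\to\infty$. What separates the two palettes is that the erasure probabilities for $q$ and $q+1$ colors differ by the factor $((q+1)/q)^{2\ell-2}$, which grows without bound.
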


The trial hierarchy is already strict with two colors. Both statements follow from a common graph construction. That is, a parity gadget combines pairs of cycle blocks so that a trial
reveals the graph identity when every input pair has at least
one nonmonochromatic block. Otherwise, the conditional WL
observation laws agree for the two graphs. This yields an explicit revelation probability and an exact
expression for the optimal testing error. We then choose the graph
parameters so that this error crosses $1/3$ upon adding one
trial or one color. The full construction and proofs are given
in \cref{sec:random-color-hierarchies}.

\subsection{Comparison with Weisfeiler--Leman}

We now show that with fixed trials and palette size, random-color MPNNs
are incomparable with $k$-WL for $k\geq2$.\footnote{We use $k$-WL to
denote folklore WL ($k$-FWL), which is equivalent to oblivious
$(k+1)$-WL for $k\geq2$; see
\citet[Sections~3.1--3.2]{morris2023weisfeiler}.}

\begin{theorem}[Incomparability with higher-order WL]
\label{thm:kwl-incomparability-main}
For every $k\geq2$:
\begin{enumerate}[label=(\roman*)]
    \item There exists a $k$-WL-indistinguishable graph pair that is distinguished with success
    probability greater than $2/3$ by a uniformly Lipschitz MPNN with
    order-normalized sum aggregation, using two colors and one trial.
    \item For every fixed $q,s\geq1$, there exists a graph pair distinguished by $2$-WL,
    and hence by $k$-WL, that is not statistically
    $(q,s,1/3)$-distinguishable by random-color WL.
\end{enumerate}
\end{theorem}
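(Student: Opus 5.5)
Part (ii) follows directly from \cref{thm:cfi-trials-main}. Fix $q,s\geq1$. The CFI-based pairs $(G_k,H_k)$ there are built so that $2$-WL separates them; we will have to check this explicitly. The CFI gadgets have bounded treewidth, and within each block, $2C_3$ versus $C_6$ is separated by $2$-WL through its triangle and cycle counts. By \cref{thm:cfi-trials-main}, statistical $(q,s,1/3)$-distinguishability by random-color WL forces $s\geq\exp(\Omega(n_k/q^{\max(10,\cdot)}))$. For $q=1$ we apply the bound with palette size $2$ instead, since a one-color test is a special case. Because $n_k\to\infty$, we can choose $k$ large enough that this lower bound exceeds the given $s$, and then $(G_k,H_k)$ is the required pair. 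Finally, since $k$-WL refines $2$-WL for $k\geq2$, the pair is also $k$-WL-distinguished.

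For part (i), fix $k\geq2$ and take a CFI pair $(X_k,Y_k)=(\mathrm{CFI}(B),\widetilde{\mathrm{CFI}}(B))$ over a base graph $B$ of treewidth greater than $k$. By \citet{cai1992optimal}, such a pair is $k$-WL-indistinguishable. We first individualize with two colors.

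The main obstacle is that a two-color random initialization almost never individualizes the CFI structure enough to separate the pair in one trial with probability above $2/3$. To overcome this, we will attach to each graph a large gadget whose role is to make the random coloring informative in distribution. The concrete plan has three steps.

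First, modify the pair into graphs $G,H$ that differ only in a single twisted edge and are otherwise structured so that a target \emph{local} pattern appears in a constant fraction of vertices for $G$ but not for $H$. One candidate is to replace every vertex of the base with a path ``tag'' of length $m$. With high probability, a random 2-coloring of the tags provides locally unique identifiers that $1$-WL propagates, and these identifiers break the CFI symmetry.

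Second, ensure that the two graphs stay $k$-WL-equivalent after this modification. Here we use that attaching identical gadgets uniformly to corresponding vertices preserves $k$-WL equivalence, because $k$-WL equivalence is preserved under such uniform gadget substitutions.

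Third, design an order-normalized sum MPNN that averages, over the vertices, an indicator of whether the local colored neighborhood is consistent with the untwisted parity. The resulting graph-level means concentrate, since the local events are independent across far-apart gadgets. Then $\overline F(G)-\overline F(H)$ is bounded below by a constant while the variance vanishes, so by Chebyshev's inequality a single trial with threshold test succeeds with probability above $2/3$ once the graphs are large.

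Uniform Lipschitzness follows from \cref{lem:standard-mpnn-lipschitzness}. Because the colors are discrete, the required indicator functions can be realized exactly by ReLU layers on finite feature values. If the concentration argument fails because the twist is a single global parity, I would instead use many independent twisted copies: many disjoint CFI blocks, twisted in $G$ and untwisted in $H$, joined by a hub to keep them connected. Each block is then detected with constant probability, and mean readout aggregates these detections into a constant-size population gap.
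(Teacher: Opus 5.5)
\textbf{Part (i) has a genuine gap.} Your core mechanism needs a bounded-radius colored pattern whose frequency differs between $G$ and $H$, with concentration coming from independence across far-apart gadgets. No such pattern exists for a CFI pair.

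The twist can be moved to any base edge by an isomorphism that maps each gadget to itself and carries attached paths along. So whenever a vertex's radius-$D$ view misses some base edge, move the twist outside that view. The vertex's colored view then has the same law as that of a corresponding vertex in the other graph. Summing over vertices, any depth-$D$ network that averages a per-vertex quantity has population gap exactly zero.

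This leaves two cases. Either the views cover essentially the whole graph, and then there are no independent far-apart events to concentrate over. Or the gap vanishes. In particular, locally unique identifiers cannot break the CFI symmetry. Your fallback with many disjoint blocks only asserts that ``each block is detected with constant probability.'' That is exactly the step that needs an argument.

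\textbf{What is missing, and the smaller issues.} Part (i) is existential for one fixed pair, so you need no concentration or large-graph limit. What you need is \emph{globally} unique identifiers plus a whole-graph reconstruction. The paper proceeds as follows.
\begin{itemize}
\item Take uncolored, $k$-WL-equivalent CFI graphs on $n$ vertices and attach a path of length $t=\lceil\log_2(3n^2)\rceil$ to every vertex. By a union bound, all $n$ binary path identifiers are distinct with probability at least $5/6$.
\item An order-normalized-sum MPNN, recovering sums by multiplying by the fixed order $N$, reads these identifiers. Mean pooling of identifier-indexed adjacency rows then reconstructs the entire graph.
\item A ReLU MLP on this finite input set accepts exactly when the identifiers are distinct and the encoded graph is isomorphic to $C^0$.
\item The error is one-sided. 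Declaring ``same'' iff the two single-trial outputs agree succeeds with probability at least $\min\{p^2+(1-p)^2,p\}\ge 13/18$.
\end{itemize}
You also need to justify that the CFI pair can be taken uncolored. The original construction is vertex-colored, and the paper cites Chen et al.\ for removing the colors. You further need to prove, not just assert, that the path attachments preserve $k$-WL equivalence and non-isomorphism.

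Part (ii) follows the paper's route, but your $2$-WL check does not go through via bounded treewidth. These graphs have treewidth at least $3$, since a $C_6$ block together with its endpoint is a wheel. Instead, use that triangle counts at block vertices separate $2C_3$ from $C_6$. Then show this distinction propagates through each parity gadget: the output endpoints see superscript-count multisets $\{2,0\}$ versus $\{1,1\}$. Carry this up to the root pair and the marker.
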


For the first direction, start with an $n$-vertex graph pair that $k$-WL cannot distinguish. Attach a path of the same length to every vertex, preserving $k$-WL indistinguishability. The independent binary colors along each path, read in order from its attachment vertex, form an identifier for that original vertex. If the paths have length $O(\log n)$, then these identifiers are unique with high probability, and thus,  2 colors provide enough information to distinguish
every original vertex. An MPNN can collect these
identifiers and use them to recognize the graph. This example shows how random colors and graph structure can powerfully combine to distinguish vertices.

For the reverse direction, consider the pair $(G_k,H_k)$
from \cref{thm:cfi-trials-main}, which is constructed with CFI gadgets. The $2$-WL distinctions between the $2C_3$ and $C_6$ blocks propagate through the parity tree, revealing which root endpoint carries the marker. The exponential trial lower bound then rules out reliable
distinction with any fixed palette and trial count on sufficiently large
pairs.  The formal incomparability result appears in
\cref{app:kwl-incomparability}.

\section{Algorithmic Applications}
\label{sec:applications}

Finite random colors also support algorithmic computations across graph sizes. We give MPNN constructions for tree subgraph detection via color coding and a local MST edge task, with resource requirements independent of graph size.

\subsection{Tree subgraph detection}

We adapt the color-coding algorithm of \citet{alon1995color} to
random-color MPNNs. Consider a graph $G$ with $n$ vertices and a fixed rooted tree $T$ with $k$ vertices. Independently
assign each vertex in $G$ a uniform color from $[k]$. A subgraph of
$G$ isomorphic to $T$ is called colorful if its vertices have distinct colors; each fixed copy is colorful with probability $k!/k^k$. \citet{alon1995color} give a dynamic program that searches for colorful copies of $T$, using colors to distinguish their vertices. We show that an MPNN can implement this dynamic program with depth, width, and trial count independent of $n$.

\begin{theorem}[Rooted tree detection]
\label{thm:tree-subgraph-main}
Let $T$ be a rooted tree with $k\geq1$ vertices and let $\alpha\in(0,1)$.
There is a random-color MPNN using $k$ colors, $O(k)$ layers,
hidden dimension $\exp(O(k))$, max aggregation, and binary vertex outputs such that,
for every graph $G$ and vertex $v\in V(G)$,
\[
    s=\left\lceil\frac{k^k}{k!}\log\frac1\alpha\right\rceil
\]
independent trials suffice to detect a non-induced copy of $T$ rooted at $v$
with no false positives and false-negative probability at most $\alpha$.
The test at $v$ accepts when its average binary output is positive.
\end{theorem}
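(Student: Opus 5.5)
We implement the Alon--Yuster--Zwick color-coding dynamic program directly as message passing with max aggregation. Fix an ordering of the subtrees of $T$: for each vertex $t\in V(T)$ let $T_t$ denote the subtree rooted at $t$, and process the vertices of $T$ in order of nondecreasing height. For every $t\in V(T)$ and every color set $C\subseteq[k]$ with $|C|=|V(T_t)|$, we maintain a Boolean coordinate $b_{t,C}(v)\in\{0,1\}$. It should equal $1$ exactly when there is a colorful homomorphic copy of $T_t$ rooted at $v$ (injective because colorful) using precisely the colors $C$. This gives at most $k\cdot 2^k=\exp(O(k))$ coordinates.

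The recursion follows the standard color-coding scheme. Let $t$ have children $t_1,\dots,t_m$, added one at a time. Write $T_t^{(j)}$ for $t$ together with the subtrees of $t_1,\dots,t_j$, and keep intermediate indicators $b_{t,j,C}$. The base case is $b_{t,0,\{c\}}(v)=\mathbf 1\{\mathrm{col}(v)=c\}$. This is read off the one-hot color at layer $0$ by a ReLU layer. The step is
\[
b_{t,j,C}(v)=\max_{C'\sqcup C''=C} \min\Bigl(b_{t,j-1,C'}(v),\ \max_{u\in N(v)} b_{t_j,C''}(u)\Bigr).
\]
The inner maximum over neighbors is exactly one max-aggregation step applied to the already-computed child coordinates. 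For $\{0,1\}$ inputs, the min of two values equals $\relu(x+y-1)$, and the finite outer max over partitions can be computed by a constant-depth ReLU gadget. The disjointness of $C'$ and $C''$ is what guarantees that the partial embeddings glue together injectively. Each edge of $T$ costs one aggregation layer plus $O(1)$ local ReLU layers, so the total depth is $O(k)$. Every coordinate not being updated at a given layer is carried forward by the identity, using $\relu(x)=x$ for $x\ge0$. The vertex output is $\max_C b_{r,C}(v)$ with $C=[k]$, where $r$ is the root of $T$.

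We then prove correctness by induction over this schedule, showing that each $b_{t,j,C}(v)$ equals the intended indicator for every coloring. Two consequences follow.

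\textbf{No false positives.} An output of $1$ exhibits a colorful map $T\to G$ that is injective and preserves edges, so it is a non-induced copy of $T$ rooted at $v$. Hence if no copy exists, the output is $0$ in every trial.

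\textbf{Detection probability.} If a copy exists, fix one. It is colorful with probability $k!/k^k=:p$, and in that event the output is $1$. Across $s$ independent trials, the probability that every output is $0$ is at most
\[
(1-p)^s\le e^{-ps}\le\alpha
\]
for the stated $s$. The average output is positive exactly when some trial outputs $1$, which establishes the stated test.

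The main obstacle is bookkeeping. We must verify that the layer schedule respects dependencies, since the child coordinates have to be final before they are aggregated. We must also check that the Boolean gadgets stay exact, which they do because every stored value lies in $\{0,1\}$.
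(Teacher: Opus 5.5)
Your proposal is correct and follows essentially the same route as the paper's proof. The paper also implements the Alon--Yuster--Zwick color-coding dynamic program with Boolean coordinates indexed by (tree node, color set), adding one tree edge per iteration in post-order, which is equivalent to your height-ordered, child-by-child schedule. It likewise combines the self entry with the max-aggregated child entry via $\relu(x+y-1)$ over disjoint decompositions followed by a threshold, and bounds the miss probability by $(1-k!/k^k)^s\leq e^{-sk!/k^k}\leq\alpha$.
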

Taking the maximum over vertex outputs gives a graph-level detector for $T$ with the same resource bounds and error guarantees (\cref{cor:tree-subgraph-detection}).
The network and trial budgets depend only on $k$ and desired confidence, not on the input graph's size $n$. A subcase is detecting simple paths on $k$ vertices.
The construction and proof appear in \cref{thm:random-color-tree-subiso}. Note that without colors, MPNNs with identical initial vertex features {\bf cannot}
solve this task in general. For example, $C_6$ contains a simple four-vertex
path whereas $2C_3$ does not, but these graphs are indistinguishable
by $1$-WL and hence indistinguishable by any MPNN with identical initial vertex features
(\cref{prop:uncolored-tree-obstruction}).

\subsection{Local MST edge detection}

Next, we consider a task for detecting MST edges that requires only local information.  An edge belongs to every minimum spanning tree (MST) if and only if it is the unique minimum-weight edge across some cut. For a graph with uniform weights, this is exactly the set of bridge edges.  We analyze the set of such edges that unbalanced cuts can witness, so we need only local information to identify them. Specifically, let $G=(V,E)$ be a simple graph
with edge weights $w \colon E\to\Lambda$ in a fixed finite ordered set. Write
$A_k(G,w)$ for the edges that uniquely minimize a cut with fewer than $k$
vertices on one side of the cut.  We give an MPNN that detects such edges. As this is an edge prediction task, we compute the output for each edge
by applying a shared MLP, called the \emph{edge decoder}, to the sum of
its endpoint embeddings together with its edge weight.

\begin{proposition}[Size-independent local MST edge detection]
\label{prop:local-kruskal-main}
Fix an integer $k\geq2$, a finite weight set $\Lambda$, and
$\varepsilon,\alpha\in(0,1)$. There is a max-aggregation random-color MPNN
with an edge decoder such that, for every such graph $G$, its output
$\widehat A\subseteq E$ misses at most an $\varepsilon$
fraction of $A_k(G,w)$ and incorrectly includes at most an $\varepsilon$
fraction of $E\setminus A_k(G,w)$, with probability at least $1-\alpha$.
Its palette size, depth, and number of independent trials depend only on
$k,\Lambda,\varepsilon,\alpha$, not on graph size.
\end{proposition}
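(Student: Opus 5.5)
We reduce local MST edge detection to local subgraph exploration, which finitely many random colors can support, and then accept a small fraction of errors coming from color collisions. The first step is a local characterization of $A_k(G,w)$. An edge $e=uv$ belongs to $A_k(G,w)$ exactly when there is a connected vertex set $C$ with $u\in C$, $v\notin C$ (or the reverse) and $|C|<k$, such that $e$ is the unique minimum-weight edge of the cut $(C,V\setminus C)$. Restricting to connected $C$ loses nothing. If $C$ is disconnected, the component $C'$ of $C$ containing $u$ has a cut $(C',V\setminus C')$ whose edges are a subset of the edges of $(C,V\setminus C)$, and this subset still contains $e$. So $e$ remains the unique minimum of that smaller cut. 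Consequently, membership of $e$ is determined by the radius-$k$ ball around $u$ and $v$ together with its weights. This ball may be unbounded, however, because degrees are not bounded.

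The second step handles unbounded degree. We bound the fraction of edges, not the number of vertices, that lie in a region where local exploration is unreliable. With $p$ colors drawn from a palette of size $p$, the plan is to emulate a color-coding exploration in the style of \cref{thm:tree-subgraph-main}. Each vertex computes, by max aggregation over $k$ rounds, the set of pairs (color set $S\subseteq[p]$ with $|S|<k$, minimum boundary weight information) realized by colorful connected sets containing it. Colorful connected sets of size $<k$ form a finite family of types, so the hidden dimension is $\exp(O(p^k|\Lambda|))$. This is independent of $n$.

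The obstacle is that max aggregation cannot count boundary edges. To test whether $e$ is the unique minimum of the cut of $C$, we need that no other edge leaving $C$ has weight $\le w(e)$. Here we try a Kruskal-style reformulation. Edge $e$ is in $A_k$ iff, in the subgraph $G_{<w(e)}$ of edges lighter than $e$, the component of $u$ (or of $v$) has fewer than $k$ vertices, and no other edge of weight $w(e)$ leaves it. Deciding this requires only max-aggregated indicators of whether growth is possible, computed separately for each weight threshold in $\Lambda$. Component size below $k$ is also decidable by max aggregation: we explore colorful spanning trees and check that no extension exists. An extension can be seen through any neighbor whose color lies outside the current color set.

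False answers arise only when a small component is not colorful, or when an extension is hidden because all new neighbors reuse colors. For a component of size $<k$ the first event has probability at most $1-p!/((p-k)!p^k)$, which tends to $0$ as $p\to\infty$. The second event is handled as follows. The plan is to take many trials $s$ and accept by averaging, which drives the per-edge error to $\varepsilon\alpha$. Markov's inequality over edges then converts the expected error fraction in each of the classes $A_k$ and $E\setminus A_k$ into the stated fraction bound with probability $1-\alpha$. The main risk is the hidden-extension issue on high-degree vertices. If it resists, we would instead declare an edge outside $A_k$ whenever a growth witness is found in any trial, since such a witness never gives a false positive. This leaves only false negatives, and their probability is controlled by color coding.
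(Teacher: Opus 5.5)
Your proposal has a genuine gap at its central step: the Kruskal-style reformulation is false once $w(e)$ is tied with other edges.

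Take the path $x-u-v-y$ with all weights equal, $e=uv$, and $k\geq3$. The set $S=\{x,u\}$ has $\delta_G(S)=\{e\}$, so $e\in A_k(G,w)$. But $G_{<w(e)}$ has no edges. The component $\{u\}$ is left by the other weight-$w(e)$ edge $ux$, and $\{v\}$ is left by $vy$, so your test rejects $e$. With uniform weights your test accepts only pendant edges, not bridges with a small side.

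As written, the test also omits the requirement that $v$ lie outside the component. For $k\geq4$, a triangle with two weight-$0$ edges and $w(e)=1$ passes your test, although $e\notin A_k(G,w)$.

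The correct local statement uses $D_e=(V,\{f\in E\setminus\{e\}\colon w(f)\le w(e)\})$ and the component $C_e(x)$ of each endpoint: $e\in A_k(G,w)$ iff $C_e(u)\neq C_e(v)$ and $\min\{|C_e(u)|,|C_e(v)|\}<k$. Equal-weight edges must be allowed to enlarge the small side, and exactly the edge $e$ must be removed. Your plan has two missing mechanisms. First, nothing removes the single edge $e$, which anonymous message passing cannot address; in your version, nothing tells $e$ apart from the other weight-$w(e)$ boundary edges. Second, nothing lets a decoder that reads $\vec{h}_u+\vec{h}_v$ decide whether $u$ and $v$ share a component.

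The paper supplies both through color-pair channels. For each $\lambda\in\Lambda$ and unordered color pair $\{a,b\}$, it propagates reachable color sets for $k-1$ rounds. The channel graph keeps the edges of weight below $\lambda$ and the weight-$\lambda$ edges whose endpoint colors are not $\{a,b\}$. The edge $e$ selects the channel $(w(e),\{\chi(u),\chi(v)\})$, which is a subgraph of $D_e$.

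One then fixes, independently of the coloring, $W=W_u\cup W_v$. Here $W_x=C_e(x)$ if $|C_e(x)|<k$, and otherwise $W_x$ is a connected $k$-vertex subset of $C_e(x)$ containing $x$. The at most $2k$ vertices of $W$ receive distinct colors with probability at least $q(q-1)\cdots(q-2k+1)/q^{2k}$, and on that event the channel agrees with $D_e$ on $W$. Distinct small components then give disjoint color sets of size below $k$, and large components give at least $k$ colors. So the rule ``some endpoint sees fewer than $k$ colors and the two color sets differ'' is exact for that trial.

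This fixed witness also dissolves your unresolved hidden-extension issue. You never need to certify that no extension exists, only that a fixed $k$-vertex witness is colorful. Without such a per-trial correctness bound covering both error types, your averaging and Markov step has nothing to amplify. Once you have it, either a majority vote or the paper's any-trial-accepts rule (with a union bound and an enlarged $q$) completes the argument.
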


Our construction tests whether removing the candidate edge and all
heavier edges leaves its endpoints in different components, with at
least one component containing fewer than $k$ vertices. Each endpoint collects the
colors it can reach, allowing the network to recognize a small
component and compare the two endpoints' components. With sufficiently
many colors and trials, this controls both missed edges and false
detections. The construction and proof are given in
\cref{prop:local-kruskal}. In contrast, we also show that MPNNs with identical initial vertex features {\bf cannot} compute
$A_k$ in general
(\cref{thm:deterministic-kruskal-obstruction}).

\section{Learnability of Randomized GNNs}
\label{sec:Learnability}
We study the learnability of randomized MPNNs trained using averages
over finitely many random node initializations. We fix
the depth $D$, hidden dimensions, aggregation scheme, and initialization
law $\rho_K$ on a compact feature space $K$, subject to the weight and bias
bounds in \cref{subsec:hypothesis_classes}. We consider scalar
graph-level outputs with mean pooling and a linear readout.

The graph pseudometric $d_D^{\mathrm{rand}}$ compares the distributions
of depth-$D$ neighborhood representations under random node
initialization and uniform vertex sampling. It is the Wasserstein
distance between these distributions, using a recursive metric
reflecting the aggregation scheme; see
\cref{app:randomized_representations}. Distinct graphs may have
distance zero if their representation distributions coincide.
The graph domain $\mathcal G$ is totally bounded under this
pseudometric, and the mean predictor
$\overline F_\theta(G)\coloneq\mathbb E[F_\theta(G,\vec{\xi}^G)]$,
where $\vec{\xi}^G\sim\rho_K^{\otimes V(G)}$, satisfies
\[
|\overline F_\theta(G)-\overline F_\theta(G')|
\le B_\theta d_D^{\mathrm{rand}}(G,G').
\]
Here, $B_\theta$ is an explicit Lipschitz certificate computed
from the network parameters; see \cref{app:subsec_lipschitzness}.
These properties allow accuracy on a suitable finite training
set to control approximation error throughout $\mathcal G$. Let $f^\ast \colon \mathcal G\to\mathbb R$ be $B_\ast$-Lipschitz under
$d_D^{\mathrm{rand}}$. For training graphs $G_1,\ldots,G_n$
and $S$ independent initializations
$\vec{\xi}_{i,s}^{G_i}\sim\rho_K^{\otimes V(G_i)}$ of each graph, define
\[
\begin{gathered}
\widehat F_{\theta,S}(G_i)
\coloneq
\frac1S\sum_{s=1}^S F_\theta(G_i,\vec{\xi}_{i,s}^{G_i}),
\\
\begin{multlined}
\widehat{\mathcal L}_{\mathrm{reg}}(\theta)
\coloneq
\frac1n\sum_{i=1}^n
|\widehat F_{\theta,S}(G_i)-f^\ast(G_i)|
+\lambda(B_\theta-B_\ast)_+,
\end{multlined}
\end{gathered}
\]
where $\lambda>0$ and $(a)_+=\max\{a,0\}$.

\begin{theorem}[Uniform generalization from Monte Carlo training]
\label{thm:randomized_generalization}
For every $\varepsilon>0$ and $\delta\in(0,1)$, there exist a finite training set $\{G_1,\ldots,G_n\}\subseteq\mathcal G$ and a tolerance $\eta>0$ such that the following holds.
For any sufficiently large $S$, draw $S$ independent initializations of each training graph from
$\rho_K^{\otimes V(G_i)}$. Then, with probability at least $1-\delta$,
\[
\widehat{\mathcal L}_{\mathrm{reg}}(\theta)<\eta
\quad\Longrightarrow\quad
\sup_{G\in\mathcal G}
|\overline F_\theta(G)-f^\ast(G)|<\varepsilon
\]
simultaneously for all $\theta\in\Theta$.
\end{theorem}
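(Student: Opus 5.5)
The plan is a finite-cover argument combined with uniform concentration over parameters. The central identity is the triangle inequality
\[
|\overline F_\theta(G)-f^\ast(G)|\le |\overline F_\theta(G)-\overline F_\theta(G_i)|+|\overline F_\theta(G_i)-f^\ast(G_i)|+|f^\ast(G_i)-f^\ast(G)|,
\]
where $G_i$ is a training graph close to $G$ in $d_D^{\mathrm{rand}}$. The first and third terms are at most $(B_\theta+B_\ast)\,d_D^{\mathrm{rand}}(G,G_i)$ by the stated Lipschitz property of the mean predictor and the assumption on $f^\ast$. The middle term has to be controlled by the empirical loss.

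\emph{Step 1: choosing the training set.} Using total boundedness of $\mathcal G$ under $d_D^{\mathrm{rand}}$, fix a radius $r>0$, to be chosen later, and a finite $r$-net $\{G_1,\ldots,G_n\}\subseteq\mathcal G$.

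\emph{Step 2: from the loss to the network.} Suppose $\widehat{\mathcal L}_{\mathrm{reg}}(\theta)<\eta$. Each summand is nonnegative, so
\[
|\widehat F_{\theta,S}(G_i)-f^\ast(G_i)|<n\eta \quad\text{for every } i,
\qquad
(B_\theta-B_\ast)_+<\eta/\lambda,
\]
and the second bound gives $B_\theta<B_\ast+\eta/\lambda$.

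\emph{Step 3: uniform concentration (the main obstacle).} We need
\[
\sup_{\theta\in\Theta}\max_i|\widehat F_{\theta,S}(G_i)-\overline F_\theta(G_i)|\le\gamma
\]
with probability at least $1-\delta$. The difficulty is that this must hold simultaneously for all $\theta$, whereas Hoeffding's inequality only gives it for one fixed $\theta$. I would proceed as follows.
\begin{enumerate}
\item Use the bounded weights and biases of \cref{subsec:hypothesis_classes} to make $\Theta$ a compact finite-dimensional parameter set.
\item On the finite training set, with compact $K$, $F_\theta(G_i,\vec x)$ is bounded uniformly in $\theta$ and $\vec x$. Since ReLU networks with bounded weights are Lipschitz in their parameters, it is also $L_\Theta$-Lipschitz in $\theta$, with a constant depending only on the fixed architecture and the finitely many graphs $G_i$. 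Hence both $\widehat F_{\theta,S}$ and $\overline F_\theta$ are $L_\Theta$-Lipschitz in $\theta$.
\item Take a finite $\gamma/(3L_\Theta)$-net $\Theta_0$ of $\Theta$. Apply Hoeffding's inequality at each point of $\Theta_0$ and each $G_i$, and take a union bound over $|\Theta_0|\cdot n$ events.
\item Choose $S$ large enough that the union-bound failure probability is at most $\delta$, and transfer the estimate from $\Theta_0$ to all of $\Theta$ via the Lipschitz property.
\end{enumerate}
This is the step that requires "sufficiently large $S$." It is also the step where the hypothesis-class details matter; if the aggregator were discontinuous in $\theta$, the argument would instead need a VC or pseudo-dimension bound.

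\emph{Step 4: parameter choice.} For any $G$, pick $G_i$ with $d_D^{\mathrm{rand}}(G,G_i)\le r$. Combining the triangle inequality with Steps 2 and 3 gives
\[
|\overline F_\theta(G)-f^\ast(G)|\le (2B_\ast+\eta/\lambda)\,r+n\eta+\gamma.
\]
Choose the parameters in this order:
\begin{enumerate}
\item fix $r$ with $(2B_\ast+1)r<\varepsilon/3$;
\item this fixes the net, hence $n$;
\item then take $\eta<\min\{\lambda,\ \varepsilon/(3n)\}$, which makes $\eta/\lambda<1$ and $n\eta<\varepsilon/3$;
\item finally take $\gamma<\varepsilon/3$, which determines the required $S$.
\end{enumerate}
This yields the strict bound $<\varepsilon$ for all $G\in\mathcal G$, simultaneously for all $\theta$, on the probability-$(1-\delta)$ event from Step 3.
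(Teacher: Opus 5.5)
Your proof is correct. It has the same skeleton as the paper's: an $r$-net in $d_D^{\mathrm{rand}}$, the three-term triangle inequality, the bound $B_\theta<B_\ast+\eta/\lambda$ read off from the regularizer, and a uniform Monte Carlo bound. The genuine difference is in Step 3.

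The paper never covers $\Theta$. In \cref{prop:uniform_monte_carlo} it treats the realized readouts $\psi_\theta$ as functions on the compact representation space $\mathcal X_D$. It uses the uniform bounds $\|\psi_\theta\|_\infty\le M$ and $\operatorname{Lip}_{\mathsf d_D}(\psi_\theta)\le B$ from \cref{prop:mpnn_certificates}, applies Arzel\`a--Ascoli to obtain a finite sup-norm net of $\Psi=\{\psi_\theta\}$, and finishes with Hoeffding and a union bound.

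Your route instead needs $\theta\mapsto F_\theta(G_i,\vec x)$ to be Lipschitz uniformly in $\vec x$. This does hold here: bounded weights and biases keep hidden states bounded, and mean, normalized-sum, max and min aggregation are nonexpansive in the max-over-nodes $\ell_\infty$ norm. So $L_\Theta$ is even independent of the graph. You also only need total boundedness of the bounded set $\Theta\subset\mathbb R^p$, not compactness.

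Each route buys something different. Yours gives an explicit covering number, of order $(C/\gamma)^p$ with $p$ the parameter count. That yields an explicit sample size $S=O\bigl(M^2\gamma^{-2}(p\log(C/\gamma)+\log(n/\delta))\bigr)$, whereas the paper only shows $N_\Psi(a)<\infty$. The paper's route is parametrization-free and reuses the representation-space certificates already needed for the $d_D^{\mathrm{rand}}$-Lipschitz bound.

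Your bookkeeping is also slightly sharper. The paper controls $\mathcal L_{\mathrm{emp}}$ to within $\eta/2$ and then multiplies by $n$, which forces Monte Carlo accuracy $\eta/2\le\varepsilon/(8n)$. You bound each $|\widehat F_{\theta,S}(G_i)-\overline F_\theta(G_i)|\le\gamma$ before the factor $n$ appears. So $\gamma<\varepsilon/3$ suffices, and $n$ enters $S$ only through the union bound.
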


Thus, sufficiently small sampled regularized error guarantees uniform approximation by the mean predictor, regardless of graph order. The training-set size is controlled by the graph domain's covering number at a radius determined by the desired accuracy and the target's Lipschitz constant. Uniform Monte Carlo concentration makes the guarantee hold even when parameters are selected using the same sampled initializations. For any fixed test graph, Monte Carlo concentration controls the error from averaging $S'$ fresh independent initializations, without retraining. The proof and an explicit sufficient sample size are given in \cref{app:learning_proofs,app:monte_carlo}.

\section{Experiments}
\label{sec:experimental-study}

We perform a range of experiments to explore the role of random colors in expressivity and size generalization. 
\Cref{app:experimental-details} gives additional details.

\paragraph{Bridge prediction across graph sizes}
\label{sec:er-bridges}
We train random-color MPNNs to identify bridge edges in unweighted
Erd\H{o}s--R\'enyi graphs with 16 or 32 vertices and test on 512-vertex
graphs. Our local MST edge-detection
result (\cref{prop:local-kruskal-main}) motivates examining the size $b$
of the smaller endpoint component after deleting a bridge.
We train separate models to detect bridges with exactly $b\in\{4,5,6\}$
vertices on the smaller side, treating all other edges as negatives.
A standard MPNN with constant initialization and max aggregation assigns the same score
to every edge, yielding exactly $50\%$ balanced accuracy for all three
target sizes.
With 16 trials and a fixed training budget, increasing the number of
one-hot colors increases mean balanced test accuracy (\cref{fig:er-bridge-generalization}).
In particular, larger palettes improve detection, while bridges with larger smaller-side components remain harder to identify. Furthermore, detection generalizes to larger graphs using a fixed finite palette. 

\begin{figure}[t]
\centering
\includegraphics[width=.8\linewidth]{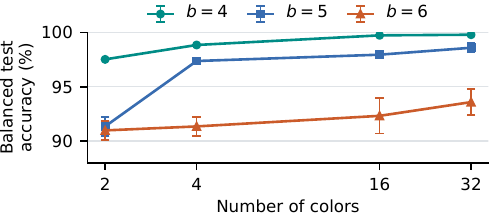}
\caption{Balanced test accuracy $1-\tfrac12(\operatorname{FNR}+\operatorname{FPR})$
for detecting bridges of smaller-side size $b$.
Error bars show SE over five training seeds.}
\label{fig:er-bridge-generalization}
\end{figure}

\paragraph{Tree substructure detection: colors and trials}
\label{sec:tree-detection}
Motivated by our color-coding construction
(\cref{thm:tree-subgraph-main}), we train models to detect the full
balanced degree-three tree of depth two---a root with three children,
each with two leaves---as a non-induced subgraph rooted at each vertex.
Training and test graphs are random 3-regular graphs on 24 vertices and
we vary the palette and trial count. On these graphs, the target is
equivalent to the vertex's two-hop neighborhood having the
maximum possible size of 10. We train using the maximum output across trials. At test time, we sum predictions across trials and apply a threshold chosen on validation graphs.

Increasing either the palette size or the number of trials increases mean test accuracy (\cref{fig:tree-detection-heatmap}). These results support our tree-detection construction (\cref{thm:tree-subgraph-main}): finite colors enable detection, whereas constant initial features give identical outputs at every vertex of these 3-regular graphs. The gains from both resources align with our color and trial hierarchies (\cref{thm:color-trial-hierarchies-main}).

\begin{figure}[t]
\centering
\includegraphics[width=0.8\linewidth]{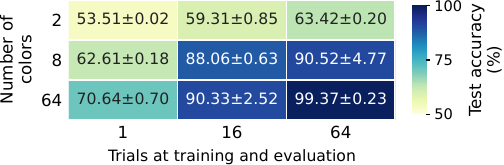}
\caption{Rooted tree detection with different numbers of colors and
trials. Cells show mean node test accuracy (\%) $\pm$ SE across five
training seeds.}
\label{fig:tree-detection-heatmap}
\end{figure}

\paragraph{Inference-time quantization}
\label{sec:exp-inference-quantization}
Motivated by our quantization result (\cref{thm:quantization-main}), we examine whether quantizing random features preserves predictive performance. We replicate GCN-RNI on EXP~\cite{abboud2021surprising} and rGIN on TRIANGLE~\cite{sato2021random}, then quantize their random features to two values per coordinate without retraining. EXP classifies graph-encoded Boolean formulas; we evaluate the 64-coordinate baseline and a scalar variant. On TRIANGLE, we predict triangle membership using 20-vertex 3-regular training graphs and 20- and 100-vertex test graphs. Both models retain most of their predictive performance (\cref{tab:exp-inference-quantization}), supporting our message that finite palettes can preserve the expressive benefits of continuous random features, including under size generalization.

\begin{table}[t]
\centering
\small
\setlength{\tabcolsep}{4pt}
\begin{tabular*}{\linewidth}{@{\extracolsep{\fill}}lrr@{}}
\toprule
\textbf{Setting} & \textbf{Continuous} & \textbf{2 colors/coordinate} \\
\midrule
\multicolumn{3}{@{}l}{\emph{EXP: graph accuracy (\%)}} \\
1 coordinate & $98.68\pm 0.17$ & $98.62\pm 0.19$ \\
64 coordinates & $97.90\pm 0.39$ & $97.83\pm 0.48$ \\
\midrule
\multicolumn{3}{@{}l}{\emph{TRIANGLE: node ROC-AUC (\%), $n=100$}} \\
1 coordinate & $93.98\pm 0.05$ & $93.35\pm 0.05$ \\
\bottomrule
\end{tabular*}
\caption{Inference-time performance under quantization. 
Mean $\pm$ SE across ten folds for EXP, and ten single-trial
feature draws of one trained model for TRIANGLE.}
\label{tab:exp-inference-quantization}
\vspace{-10pt}
\end{table}

\section{Conclusions}

We analyze random-feature MPNNs through the vertex distinguishability provided by finite random colors. This offers a natural intermediate between anonymous message passing and the use
of continuous random features as unique vertex identifiers. For uniformly Lipschitz MPNNs, finite colors retain the size-uniform
approximation power of continuous random features, while making
the capabilities and limitations of random-feature MPNNs explicit. Our framework also makes the roles of palette size and trial count clear, laying a theoretical foundation for understanding how to use random features in practice.

An important next step is to identify further algorithms that can be
implemented using graph-size-independent numbers of colors and trials.  Although our examples establish
limitations and incomparability with $k$-WL, a broader
characterization of which graph pairs these models can distinguish
remains open. Our statistical testing framework permits arbitrary
combinations of trials, but our approximation and learning analysis
centers on averaging. Studying other ways of combining trial information, beyond averaging and expectations, is another interesting direction, as aggregation across trials is an important architectural choice.

\section*{Acknowledgements}
We thank Sanjoy Dasgupta for insightful discussions during the early stages
of this work.

YW, RRN, and MB are supported by NSF grant CCF-2217058, and YW and RRN
are additionally supported by NSF grants CCF-2112665 and CCF-2310411.
CM is partially funded by a DFG Emmy Noether grant
(468502433), Israel ISF-DFG Grant (587125021), and a RWTH Junior
Principal Investigator Fellowship under Germany's Excellence Strategy.
AV is supported by the German Research Foundation (DFG) within
Research Training Group 2236/2 (UnRAVeL).

\section*{AI use statement}
We used generative AI tools to help refine and check our mathematical arguments, explore mathematical approaches, design and implement experiments, analyze results, and prepare figures. We also used these tools for literature searches and to draft and revise the manuscript. All material developed or modified with AI assistance was explicitly reviewed and verified by the authors.

\bibliographystyle{plainnat}
\bibliography{bib/main}

\clearpage
\appendix
\onecolumn

\section{Additional Framework Details}
\label{app:framework-details}

\subsection{Graphs with random node features and graph statistics}

The graph with random node features is
\begin{equation}
    \vec G^{\vec{\xi}}
    \coloneq
    \bigl(G,(\vec{\xi}_v)_{v\in V(G)}\bigr),
    \qquad
    P_G\coloneq\Law(\vec G^{\vec{\xi}}).
\end{equation}

Let $\mathcal G_K$ denote the resulting space of graphs with node features. A
\emph{graph invariant} is a function on graphs that is permutation invariant (unchanged by vertex
relabeling), and a \emph{graph statistic} is a scalar-valued graph invariant
whose input is a graph with (random) node features. For a graph statistic
$t$,\footnote{We assume that $t(\vec G^{\vec{\xi}})$ is integrable for every
$G\in\mathcal G$.} we refer to a single evaluation with
initialization $\vec{\xi}_v\overset{\mathrm{i.i.d.}}{\sim}\rho_K$, as a \emph{trial}, where the output and its law are
\begin{equation}
    Z_t(G)\coloneq t(\vec G^{\vec{\xi}}),
    \qquad
    P_G^t\coloneq\Law(Z_t(G)).
\end{equation}
\subsection{Message-passing neural networks}
\label{subsec:hypothesis_classes}

Let $\mathcal{G}$ be a class of graphs with node features and let $V_1(\mathcal{G})\coloneq\{(G,v) \colon G\in\mathcal{G},\,v\in V(G)\}$.
A depth-$D$ MPNN initializes $\vec{h}_G^{(0)}(v)=\vec{x}_v$, where $\vec{x}_v$ is the initial
feature of node $v$, and updates its representations for
$\ell=0,\ldots,D-1$ according to
\begin{align*}
\vec{m}_G^{(\ell)}(v)
&=
\operatorname{\mathsf{AGG}}_{\ell}\!\left(
\{\!\{\vec{h}_G^{(\ell)}(u) \colon u\in N_G(v)\}\!\}
\right),\\
\vec{h}_G^{(\ell+1)}(v)
&=
\operatorname{UPD}_{\ell}\!\left(
\vec{h}_G^{(\ell)}(v),\vec{m}_G^{(\ell)}(v)
\right).
\label{eq:MPNNS_gilmer}
\end{align*}

In this work, we specialize the update function to
\[
\vec{h}_G^{(\ell+1)}(v)
=
\sigma\!\left(
\vec{W}_{\mathrm{self}}^{(\ell)}\vec{h}_G^{(\ell)}(v)
+
\vec{W}_{\mathrm{agg}}^{(\ell)}\vec{m}_G^{(\ell)}(v)
+\vec b^{(\ell)}
\right),
\]
where $\sigma$ is the coordinate-wise ReLU activation function and
$\vec b^{(\ell)}\in\R^{d_{\ell+1}}$ is a bias shared by all vertices. We consider
the following aggregation schemes:
\[
\begin{aligned}
\vec{m}_{G,\overline{\mathrm{sum}}}^{(\ell)}(v)
&=\frac{1}{|V(G)|}\sum_{u\in N_G(v)}\vec{h}_G^{(\ell)}(u),\\
\vec{m}_{G,\mathrm{mean}}^{(\ell)}(v)
&=\frac{1}{|N_G(v)|}\sum_{u\in N_G(v)}\vec{h}_G^{(\ell)}(u),
\end{aligned}
\]
and 
\[
\begin{aligned}
\vec{m}_{G,\mathrm{max}}^{(\ell)}(v)&=
\max_{u\in N_G(v)}\vec{h}_G^{(\ell)}(u),\\
\vec{m}_{G,\mathrm{min}}^{(\ell)}(v)&=
\min_{u\in N_G(v)}\vec{h}_G^{(\ell)}(u),
\end{aligned}
\]
where the maximum and minimum are taken coordinate-wise.

Let $\theta\in\Theta$ collect the update and readout parameters, and
let $\mathcal{H}_{\overline{\mathrm{sum}}}^{(D)}$,
$\mathcal{H}_{\mathrm{mean}}^{(D)}$,
$\mathcal{H}_{\mathrm{max}}^{(D)}$, and $\mathcal{H}_{\mathrm{min}}^{(D)}$ denote the corresponding classes of maps
$\vec{h}_\theta^{(D)}\colon V_1(\mathcal{G})\to\mathbb{R}^{d_D}$.
The class $\mathcal{H}_{\mathrm{min}}^{(D)}$ is defined analogously. We assume
throughout that
\[
\left\|\vec{W}_{\mathrm{self}}^{(\ell)}\right\|_2,\,
\left\|\vec{W}_{\mathrm{agg}}^{(\ell)}\right\|_2\le B_W,
\qquad
\|\vec b^{(\ell)}\|_2\le B_b.
\]
The constants $B_W,B_b<\infty$ are shared across all
$\theta\in\Theta$ and layers and are independent of graph size.
When a linear readout $\vec{W}_{\mathrm{out}}$ is used, we also assume
$\|\vec{W}_{\mathrm{out}}\|_2\le B_{\mathrm{out}}$.

The graph-level output uses a permutation-invariant readout chosen as part
of the architecture:
\[
F_\theta(G,\vec{x})
=
\operatorname{\mathsf{READOUT}}_\theta\!\left(
\{\!\{\vec{h}_G^{(D)}(v) \colon v\in V(G)\}\!\}\right).
\]
We write $F(G,\vec{x})$ for the network output, where
$\vec{x}=(\vec{x}_v)_{v\in V(G)}$ denotes the initial node features.

\subsection{Population approximation}

For a graph statistic $t$, its \emph{$t$-mean} is the graph invariant defined by
\begin{equation}
    \mu_t(G)
    \coloneq
    \E_{\vec G^{\vec{\xi}}\sim P_G}[t(\vec G^{\vec{\xi}})].
\end{equation}
\subsection{Uniform Lipschitzness}
\label{app:standard-mpnn-lipschitzness}

\begin{proof}[Proof of \cref{lem:standard-mpnn-lipschitzness}]
Take the coordinate maximum norm on each hidden space and the induced
matrix norm $\|\cdot\|_{\infty\to\infty}$. For two initializations $\vec{x},\vec{x}'$
of the same nonempty graph, write
\begin{equation}
    e_\ell\coloneq\max_{v\in V(G)}
    \|\vec{h}_G^{(\ell)}(v;\vec{x})-\vec{h}_G^{(\ell)}(v;\vec{x}')\|_\infty.
\end{equation}
Mean and order-normalized sum aggregation are nonexpansive in this
max-over-nodes norm. Coordinatewise max and min are also nonexpansive,
since each coordinate changes by at most $e_\ell$. For unnormalized sums
on graphs of maximum degree at most $\Delta$, the corresponding bound is
$\Delta e_\ell$. Define empty neighborhood aggregates and the readout of
the empty graph to be zero.

Let $A_\ell=1$ for the normalized, max, and min aggregations, and
$A_\ell=\Delta$ for unnormalized sums. Since ReLU is nonexpansive and
the biases cancel in differences, each update satisfies
\begin{equation}
    e_{\ell+1}\leq
    \left(\|\vec{W}_{\mathrm{self}}^{(\ell)}\|_{\infty\to\infty}
    +A_\ell\|\vec{W}_{\mathrm{agg}}^{(\ell)}\|_{\infty\to\infty}\right)e_\ell.
\end{equation}
Mean pooling is nonexpansive. Composing these bounds with the final
linear map gives a feature-Lipschitz constant
\begin{equation}
    L_\infty=
    \|\vec{W}_{\mathrm{out}}\|_{\infty\to\infty}
    \prod_{\ell=0}^{D-1}
    \left(\|\vec{W}_{\mathrm{self}}^{(\ell)}\|_{\infty\to\infty}
    +A_\ell\|\vec{W}_{\mathrm{agg}}^{(\ell)}\|_{\infty\to\infty}\right).
\end{equation}
This constant is independent of graph size. For any other fixed norm
$\|\cdot\|$ on the input space, choose $c$ such that
$\|\vec{z}\|_\infty\leq c\|\vec{z}\|$. Then $L=cL_\infty$ gives uniform
Lipschitzness with respect to $\|\cdot\|_{\mathrm{in}}$.
\end{proof}

Write $D_K\coloneq\sup_{\vec{u},\vec{u}'\in K}\|\vec{u}-\vec{u}'\|$ for the diameter of the
feature space. The input norm and uniform Lipschitzness are defined in
\cref{def:uniform-lipschitzness-draft}.

Uniform Lipschitzness immediately
bounds the output diameter, independently of graph size.
\begin{lemma}[Uniform output diameter]
\label{lem:uniform-output-diameter-draft}
Let $F$ be a uniformly $L$-Lipschitz MPNN. Then, for every graph $G$,
$Z_F(G)$ is supported on an interval of length at most $LD_K$.
\end{lemma}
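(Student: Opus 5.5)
The plan is a direct application of uniform Lipschitzness to pairs of initializations. Here $Z_F(G)=F(G,\vec{\xi}^G)$ is the single-trial output, and its law is the pushforward of $\rho_K^{\otimes V(G)}$ under $\vec{x}\mapsto F(G,\vec{x})$. It therefore suffices to show that the image
\[
F(G,K^{V(G)})=\{F(G,\vec{x})\colon \vec{x}\in K^{V(G)}\}
\]
is contained in an interval of length at most $LD_K$. The support of the law of $Z_F(G)$ is contained in the closure of this image, and taking a closure does not increase the length of an interval.

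For arbitrary $\vec{x},\vec{x}'\in K^{V(G)}$, \cref{def:uniform-lipschitzness-draft} gives
\[
|F(G,\vec{x})-F(G,\vec{x}')|\leq L\|\vec{x}-\vec{x}'\|_{\mathrm{in}}.
\]
Since each coordinate satisfies $\vec{x}_v,\vec{x}'_v\in K$, we have $\|\vec{x}_v-\vec{x}'_v\|\leq D_K$ for every $v$. Taking the maximum over $v\in V(G)$ gives $\|\vec{x}-\vec{x}'\|_{\mathrm{in}}\leq D_K$. This bound does not depend on $|V(G)|$, because the input norm is a maximum over nodes rather than a sum. Hence any two attainable outputs differ by at most $LD_K$.

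It follows that
\[
\sup_{\vec{x}} F(G,\vec{x})-\inf_{\vec{x}} F(G,\vec{x})\leq LD_K,
\]
where both quantities are finite because $K$ is compact. The interval $\bigl[\inf_{\vec{x}} F(G,\vec{x}),\,\sup_{\vec{x}} F(G,\vec{x})\bigr]$ is closed, has length at most $LD_K$, and contains every value of $Z_F(G)$ surely, so it contains the support of its law. For the empty graph the output is defined to be zero, so the claim holds trivially.

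The only step that needs any care is the measure-theoretic one: that "supported on" refers to the closed support of the law, and that this support lies in the closure of the image. This is routine, so I do not expect a real obstacle here.
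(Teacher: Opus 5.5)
Your proposal is correct and follows the paper's own argument: bound $\|\vec{x}-\vec{x}'\|_{\mathrm{in}}\leq D_K$ nodewise and apply uniform $L$-Lipschitzness, so the image of $\vec{x}\mapsto F(G,\vec{x})$ has diameter at most $LD_K$. Your remarks on the closed support of the law and on the empty graph are extra care that the paper leaves implicit.
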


\begin{proof}
For any $\vec{x},\vec{x}'\in K^{V(G)}$, every nodewise difference is at most $D_K$,
so $\|\vec{x}-\vec{x}'\|_{\mathrm{in}}\leq D_K$. Uniform $L$-Lipschitzness then
gives $|F(G,\vec{x})-F(G,\vec{x}')|\leq LD_K$. Thus the image of
$\vec{x}\mapsto F(G,\vec{x})$ has diameter at most $LD_K$, which proves the claim.
\end{proof}

\subsection{Empirical averaging}
In addition to analyzing the expressivity of the mean of random-feature MPNNs, we also consider the complexity of finite-sample estimation, which is required in practice. To estimate $\mu_t(G)$, draw $s$ independently initialized trials
$X_1,\ldots,X_s\sim P_G$ and define
\begin{equation}
    \widehat\mu_{t,s}(G)
    \coloneq
    \frac1s\sum_{i=1}^s t(X_i).
\end{equation}
For an MPNN $F$, write $\widehat F_s=\widehat\mu_{F,s}$. Then
$\E[\widehat F_s(G)]=\overline F(G)$ for every $s$, so the number of trials affects
estimation but not the represented population function.

\begin{definition}[Size-uniform high-probability approximation]
\label{def:high-probability-approximation-draft}
For a graph invariant $T$, $\varepsilon\geq 0$, $\alpha\in(0,1)$, and
$s\in\mathbb N$, an MPNN $F$ size-uniformly approximates $T$ with error
$\varepsilon$ and failure probability at most $\alpha$ using $s$ trials if 
\begin{equation}
    \inf_{G\in\mathcal G}
    \Pr\left(
        |\widehat F_s(G)-T(G)|\leq\varepsilon
    \right)
    \geq 1-\alpha.
\end{equation}
\end{definition}

For the $L$-Lipschitz models we consider, population-level approximation (as in \Cref{def:size-uniform-draft}) immediately gives a finite-trial high-probability approximation. The key is that the required number of samples is constant for any fixed model, and in particular independent of graph size.
\begin{proposition}[Automatic finite-trial bound]
\label{prop:population-to-trials-draft}
Let $T$ be a graph invariant. For $\varepsilon>0$, suppose that $F$ is uniformly $L$-Lipschitz and
$\sup_{G\in\mathcal G}|\overline F(G)-T(G)|\leq\varepsilon/2$. Then, for any $\alpha\in(0,1)$, the invariant $F$ size-uniformly
approximates the invariant $T$ with error $\varepsilon$ and failure probability at most
$\alpha$ using any number of trials satisfying
\begin{equation}
    s\geq
    \frac{2L^2D_K^2}{\varepsilon^2}\log\frac{2}{\alpha}.
\end{equation}
\end{proposition}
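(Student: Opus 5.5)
We prove this with Hoeffding's inequality. The ingredient that makes it work is \cref{lem:uniform-output-diameter-draft}: for each graph $G$, the random output $F(G,\vec{\xi}^G)$ lies in an interval of length at most $LD_K$, and this holds uniformly in $G$.

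First fix an arbitrary $G\in\mathcal G$. The trials $F(G,\vec{\xi}_1^G),\ldots,F(G,\vec{\xi}_s^G)$ are i.i.d.\ with mean $\overline F(G)$, and each lies in an interval of length at most $LD_K$. Hoeffding's inequality then gives
\[
\Pr\bigl(|\widehat F_s(G)-\overline F(G)|>\varepsilon/2\bigr)\leq 2\exp\!\left(-\frac{2s(\varepsilon/2)^2}{L^2D_K^2}\right)=2\exp\!\left(-\frac{s\varepsilon^2}{2L^2D_K^2}\right).
\]
Integrability is automatic, since the outputs are bounded.

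Second, choose $s\geq \frac{2L^2D_K^2}{\varepsilon^2}\log\frac{2}{\alpha}$. The right-hand side above is then at most $\alpha$.

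Third, on the complementary event, the triangle inequality and the hypothesis $|\overline F(G)-T(G)|\leq\varepsilon/2$ give
\[
|\widehat F_s(G)-T(G)|\leq |\widehat F_s(G)-\overline F(G)|+|\overline F(G)-T(G)|\leq\varepsilon .
\]
So $\Pr(|\widehat F_s(G)-T(G)|\leq\varepsilon)\geq1-\alpha$.

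None of these bounds depends on $G$, so taking the infimum over $\mathcal G$ yields \cref{def:high-probability-approximation-draft}.

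There is no real obstacle. The only point needing care is that the Hoeffding range must be independent of graph size, and \cref{lem:uniform-output-diameter-draft} supplies exactly that. A degenerate case is $LD_K=0$: then the output is almost surely constant, equal to $\overline F(G)$, and the claim holds trivially for every $s\geq1$.
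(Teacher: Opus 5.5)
Your proposal is correct and follows the paper's proof essentially step for step: the uniform output-diameter lemma gives a graph-independent Hoeffding range of $LD_K$, the threshold on $s$ makes the deviation probability at most $\alpha$, and the triangle inequality with the population hypothesis gives the conclusion uniformly over $\mathcal G$. Your separate treatment of the degenerate case $LD_K=0$ is a small extra that the paper leaves implicit.
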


\begin{proof}
By \cref{lem:uniform-output-diameter-draft}, each of the $s$ independent
trials averaged by $\widehat F_s(G)$ lies in an interval of length at most
$LD_K$. Hoeffding's inequality therefore yields
\begin{equation}
\begin{aligned}
    \Pr\left(
        |\widehat F_s(G)-\overline F(G)|>\frac{\varepsilon}{2}
    \right)&\\
    &\leq
    2\exp\left(-\frac{s\varepsilon^2}{2L^2D_K^2}\right)
    \leq \alpha.
\end{aligned}
\end{equation}
Thus, with probability at least $1-\alpha$,
$|\widehat F_s(G)-\overline F(G)|\leq\varepsilon/2$. In this case, the
triangle inequality and the population approximation hypothesis give
\begin{equation}
\begin{aligned}
    |\widehat F_s(G)-T(G)|
    &\leq
    |\widehat F_s(G)-\overline F(G)|
    +|\overline F(G)-T(G)|\\
    &\leq \frac{\varepsilon}{2}+\frac{\varepsilon}{2}
    =\varepsilon.
\end{aligned}
\end{equation}
This bound is independent of $G$ so taking the
infimum over $\mathcal G$ completes the proof. 
\end{proof}

\subsection{Graph-pair reductions and statistical distinguishability}
\label{app:graph-pair-reductions}

We also consider whether random-feature MPNNs distinguish graph pairs at
the population level. 
For an MPNN $F$, define its population gap on $G,H\in\mathcal G$ by
\begin{equation}
    \gap_F(G,H)
    \coloneq
    |\overline F(G)-\overline F(H)|.
\end{equation}
We say that $F$ mean-distinguishes the pair with margin $\Delta$ if
$\gap_F(G,H)\geq\Delta$.

We next connect the population gap to empirical isomorphism testing. For a graph statistic $t$, an $s$-trial test for $G,H$
receives two independent batches
\begin{equation}
\begin{aligned}
    \vec Z_A&=(Z_{t,1}(A),\ldots,Z_{t,s}(A)),\\
    \vec Z_B&=(Z_{t,1}(B),\ldots,Z_{t,s}(B)),
\end{aligned}
\end{equation}
where $A,B\in\{G,H\}$ and the entries of each batch are i.i.d.\ with laws
$P_A^t$ and $P_B^t$, respectively. The test returns one when it declares
that the batches came from different graphs.

\begin{definition}[Statistical distinguishability]
\label{def:statistical-distinguishability-draft}
For $\alpha\in(0,1/2)$, the pair $G,H$ is statistically
$(t,s,\alpha)$-distinguishable if there is a test $\psi$ such that
\begin{equation}
    \max_{A,B\in\{G,H\}}
    \Pr\left(
        \psi(\vec Z_A,\vec Z_B)
        \neq \vec 1\{A\neq B\}
    \right)
    \leq\alpha.
    \end{equation}
Thus the test must distinguish same from different in all four choices of
$(A,B)$.
\end{definition}

For observed batches $\vec z,\vec z'\in\R^s$, consider the
empirical-mean test
\begin{equation}
    \varphi_{t,\Delta,s}(\vec z,\vec z')
    \coloneq
    \vec 1\left\{
        \left|
            \frac1s\sum_{i=1}^s z_i
            -\frac1s\sum_{i=1}^s z_i'
        \right|>\frac{\Delta}{2}
    \right\}.
\end{equation}

\begin{proof}[Proof of \cref{prop:finite-trial-detection-draft}]
By \cref{lem:uniform-output-diameter-draft}, the outputs on each fixed graph
lie in an interval of length at most $LD_K$. Pairing the two batches, the
empirical difference is therefore an average of independent variables with
range length at most $2LD_K$. Its mean is zero
when $A=B$ and has absolute value at least $\Delta$ when $A\neq B$.
An error requires a deviation of at least $\Delta/2$, so Hoeffding's
inequality bounds every case by
\begin{equation}
    2\exp\left(-\frac{s\Delta^2}{8L^2D_K^2}\right).
\end{equation}
\end{proof}

Graph isomorphism testing will be particularly useful for bounding the power of random features. 
Specifically, by contraposition, a lower bound against all $s$-trial tests  implies $\gap_F(G,H)<\Delta$.
Finally, we can convert
bounded population gap to non-universality, in the function approximation framework. 

\begin{corollary}[Non-universality from graph pairs]
\label{cor:pair-nonuniversality-draft}
Let $\mathcal F$ be a class of random-feature MPNNs. Suppose that, for some
$\beta<1$, for every $F\in\mathcal F$ there exist non-isomorphic graphs
$G,H\in\mathcal G$ with
\begin{equation}
    \gap_F(G,H)\leq\beta.
\end{equation}
Then $\mathcal F$ is not size-uniformly universal.
\end{corollary}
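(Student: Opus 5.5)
The plan is to argue by contradiction. The one thing to build is a single bounded target invariant $T$ that separates the witnessing pairs by a fixed margin, chosen without knowing which $F$ will be offered.

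Suppose, for contradiction, that $\mathcal F$ is size-uniformly universal. Then for every bounded invariant $T$ and every $\varepsilon>0$ there is an $F\in\mathcal F$ with $\sup_{G\in\mathcal G}|\overline F(G)-T(G)|\leq\varepsilon$. The catch is that the pair $(G,H)$ supplied by the hypothesis depends on $F$, so $T$ must be fixed in advance. I take $T$ to be $\{0,1\}$-valued, so that $|T(G)-T(H)|=1$ for the relevant pair.

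The difficulty is that a single $\{0,1\}$-valued invariant cannot separate every non-isomorphic pair. The hypothesis only asks that \emph{some} non-isomorphic pair has small gap, and that pair could be one on which $T$ happens to agree. I see two ways to handle this.

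\begin{itemize}
\item \emph{Finite family of targets.} Pick finitely many $\{0,1\}$-valued invariants $T_1,\dots,T_m$ such that every non-isomorphic pair is separated by at least one of them, and combine them into one real-valued target. For example, set $T=\sum_j 3^{-j}T_j$. Then any non-isomorphic pair differs under $T$ by at least some $c>0$, but $c$ depends on which $T_j$ first separates the pair. Since there are infinitely many isomorphism classes, no finite family achieves a uniform $c$.
\item \emph{Invariant embedding.} Instead, injectively encode the isomorphism class of $G$ into $[0,1]$, say $T(G)=\iota([G])$. This is bounded and separates all non-isomorphic pairs, but the separation between two classes can be arbitrarily small.
\end{itemize}

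I therefore read the corollary as being applied to a class where the relevant pairs come with a uniform separation. The cleanest way to secure this is to prove the statement in the form: for each bounded $T$, universality gives $F_\varepsilon$ with error $\varepsilon$; choose $\varepsilon<(1-\beta)/2$; by the hypothesis there are non-isomorphic $G,H$ with $\gap_{F_\varepsilon}(G,H)\leq\beta$. Then
\[
|T(G)-T(H)|\leq|T(G)-\overline F(G)|+\gap_F(G,H)+|\overline F(H)-T(H)|\leq\beta+2\varepsilon<1 .
\]
This contradicts universality as soon as $T$ differs by at least $1$ on every non-isomorphic pair. So the key step is to exhibit an invariant with unit separation on all non-isomorphic pairs, which must be unbounded.

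This suggests taking $T(G)=\iota([G])$ for an injection $\iota$ of isomorphism classes into $\mathbb{N}$; distinct integers differ by at least $1$. Since \cref{def:size-uniform-draft} requires approximating every invariant $T\colon\mathcal G\to\R$, bounded or not, this $T$ is admissible, and the triangle-inequality argument above finishes the proof. Reconciling this with the ``bounded invariants'' phrasing used in \cref{cor:cfi-nonuniversality-main} is the main obstacle. There I would instead take the pairs to be specific (the CFI pairs) and use $T=\mathbf 1\{G\in\{G_k\}\}$, which is bounded and gives the needed unit separation on exactly those pairs.
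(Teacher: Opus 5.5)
Your final argument is correct and is essentially the paper's proof. The paper also takes the unbounded invariant $T_\star$ that sends each isomorphism class to its index in $\mathbb N$, and observes that approximating it uniformly with error below $(1-\beta)/2$ would force $\gap_F(G,H)\geq 1-2\varepsilon>\beta$ for every non-isomorphic pair. Your remark about bounded targets also matches how the paper proves \cref{cor:cfi-size-uniform-approximation-draft}, which uses the indicator of the CFI family $H_{1,k}$.
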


\begin{proof}
First, enumerate the isomorphism classes in $\mathcal G$ and consider the graph invariant
$T_\star\colon\mathcal G\to\mathbb N$ that assigns each class its index. If there is some
$F$ that approximates $T_\star$ uniformly with error strictly less than
$(1-\beta)/2$, then 
$\gap_F(G,H)>\beta$ for $G$ and $H$ non-isomorphic, contradicting the
hypothesis. Hence $\mathcal F$ does not size-uniformly approximate
$T_\star$.
\end{proof}

\section{Quantization for Size-Uniform Function Approximation}
\label{sec:quantization-draft}

\subsection{Quantization setup}

For a finite nonempty palette $S\subset K$, fix a quantization map
$q_S\colon K\to S$ and define its radius by
\begin{equation}
    \delta(S)\coloneq\sup_{\vec{z}\in K}\|\vec{z}-q_S(\vec{z})\|.
\end{equation}
The corresponding nodewise quantization operator is
$(Q_S\vec{x})_v\coloneq q_S(\vec{x}_v)$. By construction,
\begin{equation}
    \|\vec{x}-Q_S\vec{x}\|_{\mathrm{in}}\leq\delta(S)
    \qquad
    \text{for every }\vec{x}\in K^{V(G)}.
\end{equation}
For $\delta>0$, let $N_K(\delta)$ be the covering number of $K$. Since
$K$ is compact, we may choose a palette $S_\delta\subset K$ with
$|S_\delta|=N_K(\delta)$ and take $q_{S_\delta}$ to be a nearest-point map,
so that $\delta(S_\delta)\leq\delta$. Thus $Q_{S_\delta}\vec{\xi}^G$ consists of
i.i.d.\ random colors drawn from a palette of size $N_K(\delta)$.

To make this concrete, take $K=[0,1]^d$, and consider rounding to a uniform grid. For
$m\in\mathbb N$, let
\begin{equation}
\begin{gathered}
    T_m\coloneq
    \left\{\frac{2k-1}{2m}\colon k =1,\ldots,m\right\}\subset[0,1],\\
    S_m\coloneq T_m^d.
\end{gathered}
\end{equation}
Let $q_{S_m}$ round each coordinate to the nearest point in $S_m$, and abbreviate
$Q_m\coloneq Q_{S_m}$. Let
\begin{equation}
    \kappa\coloneq\sup\{\|\vec{u}\|\colon\|\vec{u}\|_\infty\leq1\}.
\end{equation}
 Since rounding changes each coordinate by at
most $1/(2m)$, we have
\begin{equation}
    \|\vec{x}-Q_m\vec{x}\|_{\mathrm{in}}
    \leq\frac{\kappa}{2m}
    \qquad
    \text{for every }\vec{x}\in([0,1]^d)^{V(G)}.
\end{equation}
Consequently,
\begin{equation}
    N_K\left(\frac{\kappa}{2m}\right)\leq m^d.
\end{equation}
If $\rho_K$ is uniform, then $Q_m\vec{\xi}^G$ is uniform on the  palette
$S_m$. The random-color WL results below use uniformly distributed colors.

\subsection{Size-uniform quantization}

For an MPNN $F$ and a finite palette $S\subset K$, define the population
predictor obtained from the quantized features by
\begin{equation}
    \overline F_S(G)
    \coloneq
    \E[F(G,Q_S\vec{\xi}^G)].
\end{equation}

\begin{proposition}[Size-uniform quantization]
\label{prop:size-uniform-quantization-draft}
Let $F$ be a uniformly $L$-Lipschitz MPNN and let
$T\colon\mathcal G\to\R$ be a graph invariant. Then, for every finite
nonempty $S\subset K$,
\begin{equation}
    \sup_{G\in\mathcal G}|\overline F_S(G)-T(G)|
    \leq
    \sup_{G\in\mathcal G}|\overline F(G)-T(G)|
    +L\delta(S).
\end{equation}
\end{proposition}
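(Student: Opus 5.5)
The plan is to reduce the statement to a pointwise coupling bound and then apply the triangle inequality. Fix $G\in\mathcal G$ and couple the two predictors by evaluating $F$ on the same draw $\vec{\xi}^G\sim\rho_K^{\otimes V(G)}$ and on its quantization $Q_S\vec{\xi}^G$. This gives $\overline F(G)=\E[F(G,\vec{\xi}^G)]$ and $\overline F_S(G)=\E[F(G,Q_S\vec{\xi}^G)]$ on a common probability space.

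First I check that both expectations are finite. Since $q_S$ maps $K$ into $S\subset K$, the quantized input $Q_S\vec{\xi}^G$ lies in $K^{V(G)}$, so uniform Lipschitzness (\cref{def:uniform-lipschitzness-draft}) applies to the pair $(\vec{\xi}^G,Q_S\vec{\xi}^G)$. By \cref{lem:uniform-output-diameter-draft}, $F(G,\cdot)$ is bounded on $K^{V(G)}$, so both expectations exist.

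The one point needing care is measurability of $q_S$, since a nearest-point map may have ties. I would fix a measurable tie-breaking rule, for example choosing the nearest point of least index in an enumeration of $S$. Each preimage $q_S^{-1}(s)$ is then a finite Boolean combination of closed sets of the form $\{\vec z:\|\vec z-s\|\le\|\vec z-s'\|\}$, hence Borel. Alternatively, one can simply assume measurability as part of the setup, as the paper implicitly does. I expect this to be the only potential obstacle, and it is a minor one.

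Next, by the definition of $\delta(S)$, we have $\|\vec{\xi}^G-Q_S\vec{\xi}^G\|_{\mathrm{in}}=\max_v\|\vec{\xi}_v-q_S(\vec{\xi}_v)\|\le\delta(S)$ almost surely. Uniform $L$-Lipschitzness then gives
\[
\bigl|F(G,\vec{\xi}^G)-F(G,Q_S\vec{\xi}^G)\bigr|\le L\delta(S)
\]
pointwise. Jensen's inequality, in the form $|\E X|\le\E|X|$, yields $|\overline F(G)-\overline F_S(G)|\le L\delta(S)$. This bound does not depend on $|V(G)|$; this also recovers \cref{thm:quantization-main}.

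Finally, I apply the triangle inequality:
\[
|\overline F_S(G)-T(G)|\le|\overline F(G)-T(G)|+L\delta(S)\le\sup_{G'\in\mathcal G}|\overline F(G')-T(G')|+L\delta(S).
\]
Taking the supremum over $G\in\mathcal G$ on the left completes the proof.
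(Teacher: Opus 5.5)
Your proposal is correct and matches the paper's proof: both couple $\overline F(G)$ and $\overline F_S(G)$ through the same draw $\vec{\xi}^G$, bound the pointwise difference by $L\delta(S)$ using uniform Lipschitzness, pass this bound through the expectation, and then conclude with the triangle inequality and a supremum over $\mathcal G$. Your remarks on integrability and on choosing a Borel tie-breaking rule for $q_S$ are valid details that the paper leaves implicit.
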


\begin{proof}
For every graph
$G$,
\begin{equation}
\begin{aligned}
    |\overline F_S(G)-\overline F(G)|
    &\leq \E\bigl[|F(G,Q_S\vec{\xi}^G)-F(G,\vec{\xi}^G)|\bigr] \\
    &\leq L\delta(S).
\end{aligned}
\end{equation}
Taking the supremum over $\mathcal G$ and applying the triangle inequality
completes the proof.
\end{proof}

As a consequence, if $\overline F$ approximates $T$ with error at most
$\varepsilon/2$ and $L>0$, taking $S=S_{\varepsilon/(2L)}$ gives a predictor
that uses $N_K(\varepsilon/(2L))$ colors with error at most $\varepsilon$. For the
midpoint grid on $[0,1]^d$, taking $q=m^d$ gives the explicit rate
\begin{equation}
    \sup_{G\in\mathcal G}|\overline F_{S_m}(G)-T(G)|
    \leq
    \sup_{G\in\mathcal G}|\overline F(G)-T(G)|
    +\frac{L\kappa}{2q^{1/d}}.
\end{equation}
The palette size may depend on the target, accuracy, feature space, and
Lipschitz constant, but not on graph size.

Let $\mathcal A$ be the class of Lipschitz MPNNs with random node
features supported on $K$, and let $\mathcal A_{\mathrm{color}}$ be the
subclass of random-color MPNNs as defined in \cref{sec:formalism-draft}.
\begin{theorem}[Finite colors preserve the approximation class]
\label{thm:color-approximation-class-draft}
The classes $\mathcal A$ and $\mathcal A_{\mathrm{color}}$ uniformly
approximate the same graph invariants.
\end{theorem}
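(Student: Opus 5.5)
The plan is to prove the two inclusions separately. One direction is immediate from the definitions; the other comes from \cref{prop:size-uniform-quantization-draft}. \emph{Easy direction.} Every random-color MPNN in $\mathcal A_{\mathrm{color}}$ is a Lipschitz MPNN whose features are finitely supported in $K$, so $\mathcal A_{\mathrm{color}}\subseteq\mathcal A$. Therefore any invariant uniformly approximated by $\mathcal A_{\mathrm{color}}$ is uniformly approximated by $\mathcal A$ (\cref{def:size-uniform-draft}). If the paper also allows one-hot palettes outside $K$, I would handle them with the equivalence promised in the main text. First precompose with a fixed injection of the palette into $K$. Then absorb the linear change of coordinates into $\vec{W}_{\mathrm{self}}^{(0)},\vec{W}_{\mathrm{agg}}^{(0)}$. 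This preserves uniform Lipschitzness, with a modified constant.

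\emph{Main direction.} Let $T$ be uniformly approximated by $\mathcal A$ and fix $\varepsilon>0$. Choose $F\in\mathcal A$, uniformly $L$-Lipschitz, with $\sup_{G}|\overline F(G)-T(G)|\le\varepsilon/2$. If $L=0$, any single point palette works. Otherwise, by compactness of $K$, take $S=S_{\varepsilon/(2L)}$ with nearest-point map $q_S$, so that $\delta(S)\le\varepsilon/(2L)$. By \cref{prop:size-uniform-quantization-draft}, $\sup_G|\overline F_S(G)-T(G)|\le\varepsilon$.

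It remains to check that $\overline F_S$ is the population predictor of a member of $\mathcal A_{\mathrm{color}}$. Let $\rho_S\coloneq (q_S)_\#\rho_K$, the pushforward of $\rho_K$ under $q_S$; $q_S$ can be chosen Borel by breaking ties in a fixed order. This is a distribution on the finite palette $S\subset K$, and the paper explicitly allows it to be non-uniform. Nodewise application preserves independence, so $Q_S\vec{\xi}^G$ has law $\rho_S^{\otimes V(G)}$. Hence the random-color MPNN with the same parameters as $F$ and color law $\rho_S$ has population predictor exactly $\overline F_S$. Its Lipschitz constant on $S^{V(G)}\subseteq K^{V(G)}$ is still $L$, so it lies in $\mathcal A_{\mathrm{color}}$.

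The only point needing care, and the one I expect to be the real obstacle, is this identification of $\mathcal A_{\mathrm{color}}$ membership. One must confirm that changing the initialization law from $\rho_K$ to $\rho_S$ keeps us in the hypothesis class. This concerns support in $K$, the parameter bounds $B_W,B_b$, and uniform Lipschitzness. All of these hold because the network weights are unchanged and only the law is restricted to a subset of $K$. Since $\varepsilon$ was arbitrary, the two approximation classes coincide.
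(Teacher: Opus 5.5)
Your proof is correct and matches the paper's own argument. The inclusion $\mathcal A_{\mathrm{color}}\subseteq\mathcal A$ gives one direction, and \cref{prop:size-uniform-quantization-draft} with a palette of radius at most $\varepsilon/(2L)$ gives the other; your pushforward check via $\rho_S=(q_S)_\#\rho_K$ just spells out the paper's one-line claim that the quantized model lies in $\mathcal A_{\mathrm{color}}$.

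Your one-hot aside is not needed, since $\mathcal A_{\mathrm{color}}$ is defined with palettes inside $K$. As written it would also fail in general: sending $c(i)\mapsto\vec{e}_i$ is not linear once $q>d$, and linear maps do not commute with max or min aggregation. That is why \cref{prop:encoding-invariance-draft} inserts extra nodewise ReLU layers instead.
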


\begin{proof}
One direction follows from $\mathcal A_{\mathrm{color}}\subseteq\mathcal A$.
For the other direction, suppose that $\mathcal A$ uniformly approximates $T$, and fix
$\varepsilon>0$. Choose $F\in\mathcal A$ whose population predictor has error
at most $\varepsilon/2$, and let $L$ be its uniform Lipschitz constant.
Choose a finite palette $S$ such that $L\delta(S)\leq\varepsilon/2$.
Such a palette exists by compactness. By
\cref{prop:size-uniform-quantization-draft}, the quantized model belongs to
$\mathcal A_{\mathrm{color}}$ and approximates $T$ with error at most
$\varepsilon$.
\end{proof}

\subsection{One-hot and vector color encodings}

The quantization reduction uses the vectors in a finite palette $S\subset K$
as colors, while our later results use one-hot vectors for simplicity. We now
compare these two representations.

An encoding of $q$ colors in a feature space $K$ is an injection
$c\colon[q]\to K$. The one-hot encoding maps $i$ to the coordinate basis vector
$\vec{e}_i\in\R^q$.

Let $\widetilde c\colon[q]\to\widetilde K$ be another encoding, and let
$\phi\colon\widetilde K\to K$ be $M$-Lipschitz. Define its conversion error as
\begin{equation}
    \eta
    \coloneq
    \max_{i\in[q]}\|\phi(\widetilde c(i))-c(i)\|.
\end{equation}
For a coloring $\chi\colon V(G)\to[q]$, we can convert the encodings by
$c\circ\chi\coloneq(c(\chi_v))_{v\in V(G)}$.

If $F$ is uniformly $L$-Lipschitz, then applying $\phi$ nodewise before $F$
gives a uniformly $LM$-Lipschitz model. Moreover, for i.i.d.\ colors drawn
from any distribution on $[q]$, uniform Lipschitzness gives
\begin{equation}
    \sup_{G\in\mathcal G}
    \bigl|\E F(G,\phi\circ\widetilde c\circ\chi)
    -\E F(G,c\circ\chi)\bigr|
    \leq L\eta.
\end{equation}

\begin{proposition}[Encoding invariance]
\label{prop:encoding-invariance-draft}
Fix $q$ and any distribution on $[q]$. Uniformly Lipschitz MPNNs using
one-hot colors and those using any injective vector encoding uniformly
approximate the same graph invariants.
\end{proposition}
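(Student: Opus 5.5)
The plan is to show that each class can simulate the other up to arbitrarily small uniform error, using the conversion bound stated just before the proposition. Fix $q$ and a distribution on $[q]$. Let $c$ be the one-hot encoding into $\R^q$ and let $\widetilde c\colon[q]\to\widetilde K$ be any injective vector encoding. Since the palette is finite, both images are finite sets, and we may take the relevant compact feature spaces to be these images (or their convex hulls).

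\emph{Vector to one-hot.} Suppose $\widetilde F$ is a uniformly $L$-Lipschitz MPNN with $\widetilde c$-encoded colors and $\sup_G|\E\widetilde F(G,\widetilde c\circ\chi)-T(G)|\le\varepsilon$. Let $\vec{M}$ be the matrix whose $i$th column is $\widetilde c(i)$, and define the linear map $\phi(\vec z)=\vec{M}\vec z$. Then $\phi(\vec e_i)=\widetilde c(i)$ exactly, so the conversion error is $\eta=0$, and $\phi$ is $\|\vec{M}\|$-Lipschitz. Applying $\phi$ nodewise before $\widetilde F$ can be absorbed into the architecture: replace $\vec{W}_{\mathrm{self}}^{(0)}$ by $\vec{W}_{\mathrm{self}}^{(0)}\vec{M}$, and note that every aggregation we use (mean, normalized sum, max, min) commutes with or is followed by the linear layer. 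For max and min this is not automatic. To handle it, I would instead insert one extra layer with $\vec{W}_{\mathrm{agg}}=0$ that computes $\sigma(\vec{M}\vec z+\vec b)$ with a bias large enough to keep the output positive on the finite palette, and undo that bias in the next layer's bias. The resulting model is uniformly $L\|\vec{M}\|$-Lipschitz, and by the displayed bound its expected output is unchanged, since $\eta=0$.

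\emph{One-hot to vector.} Since $\widetilde c$ is injective on a finite set, there exists a map $\psi\colon\widetilde K\to\R^q$ with $\psi(\widetilde c(i))=\vec e_i$. Its restriction to the finite set $\widetilde c([q])$ is trivially Lipschitz, with constant at most $\sqrt2/\min_{i\ne j}\|\widetilde c(i)-\widetilde c(j)\|$. The \emph{main obstacle} is realizing $\psi$ inside the MPNN's update form $\sigma(\vec W\vec z+\vec b)$ with a size-independent Lipschitz constant. I would first try a small ReLU MLP computing indicator-like bumps. This requires either allowing a short nodewise MLP preprocessing, built from extra layers with zero aggregation weights, or accepting exact interpolation only on the finite palette. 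For the latter, choose $\psi$ as a ReLU network that is exact on the $q$ points, which is possible since $q$ points in general position can be interpolated by a one-hidden-layer ReLU network, and note that only values at palette points matter in expectation. This gives $\eta=0$ again, and the composite model is uniformly Lipschitz with constant $L$ times the Lipschitz constant of that network.

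In both directions the expected outputs agree exactly, or within $L\eta$ for arbitrarily small $\eta$, uniformly over $\mathcal G$. An $\varepsilon$-approximation of $T$ in one class therefore transfers to an $\varepsilon$-approximation in the other, which proves that the two classes uniformly approximate the same graph invariants.
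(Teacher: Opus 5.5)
Your argument is essentially the paper's. You realize an exact conversion between the two encodings on the finite palette using nodewise ReLU layers with $\vec{W}_{\mathrm{agg}}=0$. The composite model then agrees with the original on every coloring, the population predictors coincide, and only a graph-size-independent Lipschitz factor is lost. Absorbing $\vec{M}$ into the first layer for mean and order-normalized sums is a fine shortcut, provided you also replace $\vec{W}_{\mathrm{agg}}^{(0)}$ by $\vec{W}_{\mathrm{agg}}^{(0)}\vec{M}$, not only the self weight.

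\textbf{Isolated vertices under max/min.} One step fails as written. For max and min you undo the shift $\vec b$ by subtracting $\vec{W}_{\mathrm{self}}^{(0)}\vec b+\vec{W}_{\mathrm{agg}}^{(0)}\vec b$ in the next layer's bias. This is valid only where the neighborhood is nonempty. At an isolated vertex both models use the zero aggregate, so the correction $-\vec{W}_{\mathrm{agg}}^{(0)}\vec b$ itself introduces a discrepancy. The outputs, and hence the expectations, can then differ on graphs with isolated vertices, so ``$\eta=0$'' no longer gives equality. The paper's footnote avoids this with a constant coordinate. Carry a channel equal to $1$; its aggregate is $1$ exactly when $N_G(v)\neq\varnothing$, and $\deg(v)/|V(G)|$ under order-normalized sums. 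Then cancel the aggregated shift by giving that channel the column $-\vec{W}_{\mathrm{agg}}^{(0)}\vec b$ in the aggregation weight matrix, rather than through the bias.

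\textbf{The other direction.} Exact interpolation needs only that the $\widetilde c(i)$ be distinct, not in general position. Project onto a direction on which they are distinct and interpolate piecewise linearly in one dimension. Also, your ``either/or'' is not a real alternative: you need both. The exact interpolant must still be built from zero-aggregation layers, with its final affine map wrapped in a ReLU. That ReLU is harmless because the one-hot targets are nonnegative.

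With these repairs the proof is complete.
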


\begin{proof}
For a fixed finite palette, a biased ReLU network can interpolate each of
the assignments $\vec{e}_i\mapsto c(i)$ and $c(i)\mapsto \vec{e}_i$ exactly. We implement
the conversion using finitely many initial nodewise layers, setting
$\vec{W}_{\mathrm{agg}}^{(\ell)}=0$ in these layers.\footnote{For signed target
encodings, use a nonnegative shift and a constant coordinate, and absorb
the shift into the first message-passing layer.}
Their depth, width, and weights may depend on the encodings, but not on
graph size. The conversion therefore has a graph-size-independent
Lipschitz constant. Composing it with the original MPNN gives the same
output for every coloring. The resulting population predictors are
therefore identical, although their Lipschitz constants may differ by a
factor that is independent of graph size.
\end{proof}

Taking $S=S_\delta$ and $c$ to be a bijection from $[N_K(\delta)]$ onto
$S_\delta$ shows that the vector colors produced by quantization and their
one-hot encodings have the same size-uniform approximation power.

\section{Indistinguishability with Random Features Using CFI}
\label{sec:cfi-draft}
In this section we introduce a family of graph pairs that requires an
exponential number of trials in the graph size to distinguish with a fixed
number of random colors. Our
construction applies beyond MPNNs to any test based on the WL patterns of
these graphs. The results of
\cref{sec:formalism-draft,sec:quantization-draft} convert this
trial lower bound into non-universality of uniformly Lipschitz MPNNs with
continuous random features for bounded graph invariants. The construction
gives this obstruction on connected graphs of maximum degree eight. Adding
a virtual node gives a second family of diameter two with the same trial
lower bound, so non-universality holds under either restriction separately.

\subsection{Random-color WL}
\label{subsec:cfi-random-color-wl-draft}
We begin by defining random-color WL, in which graph isomorphism testing is
performed by comparing the WL patterns of two graphs over independent random
coloring trials. For $q\in\mathbb N$, write $[q]=\{1,\ldots,q\}$.
For a coloring $\chi\colon V(G)\to[q]$, define the
$1$-WL pattern of a vertex recursively. Set
\begin{equation}
        \pi_0^{G,\chi}(v)\coloneq\chi(v),
\end{equation}
and, for $t\geq0$, let
\begin{equation}
        \pi_{t+1}^{G,\chi}(v)
        \coloneq
        \left(
        \pi_t^{G,\chi}(v),
        \{\!\{\pi_t^{G,\chi}(u)\colon u\sim v\}\!\}
        \right).
\end{equation}
Here $u\sim v$ denotes adjacency and $\{\!\{\cdot\}\!\}$ denotes a multiset.
The complete $1$-WL pattern of $v$ is then
$\pi^{G,\chi}(v)\coloneq(\pi_t^{G,\chi}(v))_{t\geq0}$. The graph-level
observation is the unlabeled multiset
\begin{equation}
        \Pi(G,\chi)\coloneq 
        \{\!\{\pi^{G,\chi}(v)\colon v\in V(G)\}\!\}.
\end{equation}
Sample a random coloring $\chi^G\colon V(G)\to[q]$ with
$\chi^G(v)\overset{\mathrm{i.i.d.}}{\sim}\operatorname{Unif}([q])$ and define
\begin{equation}
        Z_{\mathrm{WL},q}(G)\coloneq \Pi(G,\chi^G).
\end{equation}
We write $P_G^{\mathrm{WL},q}$ for the law of $Z_{\mathrm{WL},q}(G)$.

We use the same setting as
\cref{def:statistical-distinguishability-draft}, with WL observations
in place of scalar outputs.

\begin{definition}[Random-color WL distinguishability]
\label{def:rcwl-draft}
For $q,s\in\mathbb N$ and $\alpha\in(0,1/2)$, a non-isomorphic pair $G,H$ is
\emph{statistically $(q,s,\alpha)$-distinguishable by random-color WL} if
there is a binary-valued, possibly randomized test $\psi$ satisfying the following. For every
$A,B\in\{G,H\}$, let
\begin{equation}
\begin{aligned}
        \vec Z_A&=(Z_1(A),\ldots,Z_s(A)),\\
        \vec Z'_B&=(Z'_1(B),\ldots,Z'_s(B))
\end{aligned}
\end{equation}
be independent batches with i.i.d.\ entries of laws $P_A^{\mathrm{WL},q}$ and
$P_B^{\mathrm{WL},q}$, respectively. Then
\begin{equation}
        \max_{A,B\in\{G,H\}}
        \Pr\left(
        \psi(\vec Z_A,\vec Z'_B)\neq\vec 1\{A\neq B\}
        \right)
        \leq\alpha.
\end{equation}
\end{definition}

The test may post-process the complete WL observations arbitrarily and is not
restricted to an empirical mean. For a scalar-output MPNN $F$ and a fixed
color encoding $c\colon[q]\to K$, the observation $Z_{\mathrm{WL},q}(G)$
determines the network output $F(G,c\circ\chi^G)$ by the standard $1$-WL
upper bound \citep{xu2019how}.
Thus random-color WL tests include all tests based on random-color MPNN outputs.

We now construct graph pairs whose random-color WL distributions are
exponentially close in total variation for any fixed number of colors.
The construction uses the
parity gadget from
\citet{cai1992optimal}, which we refer to as the CFI gadget. Let $q\geq2$ denote the number of colors.

\subsection{The parity gadget}

The CFI gadget is designed to construct graphs whose isomorphism
class depends on the parity of choices made across many local structures.

The core gadget has three distinguished unordered vertex pairs
\begin{equation}
        X=(x^0,x^1), \qquad
        Y=(y^0,y^1), \qquad
        Z=(z^0,z^1),
\end{equation}
and four internal vertices
\begin{equation}
        g_{abc},
        \qquad
        (a,b,c)\in\{0,1\}^3,
        \qquad
        a\oplus b\oplus c=0.
\end{equation}
Here $\oplus$ denotes addition modulo two. Each internal vertex $g_{abc}$ is
adjacent exactly to the three distinguished
vertices
\begin{equation}
        x^a,\quad y^b,\quad z^c.
\end{equation}

\begin{figure*}[t]
\centering
\begin{tikzpicture}[
    scale=1.02,
    edge/.style={line width=0.8pt, draw=black!32},
    stub/.style={line width=1.1pt, draw=black!45},
    outer/.style={circle, draw=black!80, fill=teal!65, minimum size=7.5mm,
        inner sep=0pt},
    inner/.style={circle, draw=black!85, fill=black!82, minimum size=6.5mm,
        inner sep=0pt},
    label distance=2pt
]
    \node[outer] (x0) at (-3.15, 1.62) {$\scriptstyle x^0$};
    \node[outer] (x1) at (-3.15, 0.88) {$\scriptstyle x^1$};
    \node[outer] (y0) at (-3.15,-0.88) {$\scriptstyle y^0$};
    \node[outer] (y1) at (-3.15,-1.62) {$\scriptstyle y^1$};
    \node[outer] (z0) at ( 3.15, 0.37) {$\scriptstyle z^0$};
    \node[outer] (z1) at ( 3.15,-0.37) {$\scriptstyle z^1$};

    \node[inner] (e0) at (0, 1.50) {};
    \node[inner] (e1) at (0, 0.50) {};
    \node[inner] (e2) at (0,-0.50) {};
    \node[inner] (e3) at (0,-1.50) {};

    \draw[stub] (x0) -- ++(-0.62,0);
    \draw[stub] (x1) -- ++(-0.62,0);
    \draw[stub] (y0) -- ++(-0.62,0);
    \draw[stub] (y1) -- ++(-0.62,0);
    \draw[stub] (z0) -- ++(0.62,0);
    \draw[stub] (z1) -- ++(0.62,0);

    \draw[edge] (e0) -- (x0);
    \draw[edge] (e0) -- (y0);
    \draw[edge] (e0) -- (z0);

    \draw[edge] (e1) -- (x0);
    \draw[edge] (e1) -- (y1);
    \draw[edge] (e1) -- (z1);

    \draw[edge] (e2) -- (x1);
    \draw[edge] (e2) -- (y0);
    \draw[edge] (e2) -- (z1);

    \draw[edge] (e3) -- (x1);
    \draw[edge] (e3) -- (y1);
    \draw[edge] (e3) -- (z0);
\end{tikzpicture}
\caption{The parity/CFI gadget.}
\label{fig:cfi-parity-gadget-draft}
\end{figure*}

The central property is that endpoint swaps extend to an automorphism over the entire
gadget precisely when they have even parity. Interpreting each swap as a bit
flip, the third swap is therefore the XOR of the first two.
This is the $k=3$ case of Lemma~6.1 of \citet{cai1992optimal}.

\begin{lemma}[Extension property of the parity gadget]
\label{lem:extension_property_xor_gate-draft}
Let $s_X,s_Y,s_Z\in\{0,1\}$, and let $\tau$ be the permutation of the distinguished
vertices that swaps the endpoints of pair $X$ if $s_X=1$, swaps the endpoints
of pair $Y$ if $s_Y=1$, and swaps the endpoints of pair $Z$ if $s_Z=1$.

Then $\tau$ extends to an automorphism of the parity gadget if and
only if
\begin{equation}
        s_X\oplus s_Y\oplus s_Z=0.
\end{equation}
\end{lemma}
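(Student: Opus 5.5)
We prove both directions by direct construction and by counting. The gadget graph has vertex set consisting of the six distinguished vertices $x^0,x^1,y^0,y^1,z^0,z^1$ and the four internal vertices $g_{abc}$ with $a\oplus b\oplus c=0$. Within the gadget each distinguished vertex has degree two and each internal vertex has degree three. An automorphism extending $\tau$ must therefore map internal vertices to internal vertices; we only need it to fix the set of distinguished vertices as $\tau$ prescribes.

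\emph{Sufficiency.} Suppose $s_X\oplus s_Y\oplus s_Z=0$. We define $\hat\tau(g_{abc})\coloneq g_{a\oplus s_X,\,b\oplus s_Y,\,c\oplus s_Z}$ and let $\hat\tau$ agree with $\tau$ on distinguished vertices, so that $\tau(x^a)=x^{a\oplus s_X}$, and similarly for $y$ and $z$. The index $(a\oplus s_X,b\oplus s_Y,c\oplus s_Z)$ has parity $a\oplus b\oplus c\oplus s_X\oplus s_Y\oplus s_Z=0$, so $\hat\tau$ permutes the internal vertices bijectively. Every edge has the form $g_{abc}x^a$, $g_{abc}y^b$, or $g_{abc}z^c$. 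Under $\hat\tau$, for example, $g_{abc}x^a$ maps to $g_{a',b',c'}x^{a'}$ with $a'=a\oplus s_X$, which is again an edge. Hence $\hat\tau$ preserves adjacency. Since it is a bijection on a finite graph mapping edges to edges, it is an automorphism.

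\emph{Necessity.} Suppose $\tau$ extends to an automorphism $\hat\tau$. Then $\hat\tau$ maps internal vertices to internal vertices, say $\hat\tau(g_{000})=g_{a'b'c'}$. The neighborhood of $g_{000}$ is $\{x^0,y^0,z^0\}$, and $\hat\tau$ sends it to $\{x^{s_X},y^{s_Y},z^{s_Z}\}$. This set must equal the neighborhood $\{x^{a'},y^{b'},z^{c'}\}$ of $g_{a'b'c'}$. Comparing the pairs $X$, $Y$, and $Z$ gives $a'=s_X$, $b'=s_Y$, and $c'=s_Z$. Because $g_{a'b'c'}$ exists only when $a'\oplus b'\oplus c'=0$, we obtain $s_X\oplus s_Y\oplus s_Z=0$.

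The only delicate point is justifying that the automorphism preserves the internal/distinguished split. Here we argue that the automorphism is, by hypothesis, an extension of $\tau$, which already permutes the distinguished vertices among themselves; the internal vertices therefore map to internal vertices. Degree considerations confirm this within the gadget. If the gadget is viewed inside a larger graph via its stubs, the statement is about the gadget alone, so this suffices.
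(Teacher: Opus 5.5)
Your proof is correct and takes essentially the paper's route. The paper cites this lemma as the $k=3$ case of Lemma~6.1 of \citet{cai1992optimal}, uses exactly your extension $g_{abc}\mapsto g_{a\oplus s_X,\,b\oplus s_Y,\,c\oplus s_Z}$ in the proof of \cref{lem:cfi-root-hiding-draft}, and proves the general-$k$ analogue (\cref{lem:parity-extension}) with your argument that each internal vertex's image is forced by its neighbors.
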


\subsection{The graph pair family}

We now compose the parity gadgets so that a parity choice made at any input
pair can be transmitted to a single root pair.

Fix $k\geq2$ input vertex pairs
\begin{equation}
        X_i=(x_i^0,x_i^1),
        \qquad i=1,\ldots,k.
\end{equation}
For each $i$, attach two six-vertex blocks to the input pair as follows. Attach
$x_i^0$ to all six vertices of a copy of $2C_3$, and attach $x_i^1$ to all six
vertices of a copy of $C_6$. Here $2C_3$ is the disjoint union of two triangles
and $C_6$ is a six-cycle. Thus the two sides of the input pair are
non-isomorphic as rooted attached blocks.

Next, build a full binary tree of parity gadgets whose leaves are the input pairs
$X_1,\ldots,X_k$. Each internal node of the tree is one copy of the parity
gadget above. We identify each child's output pair with its parent's input
pair, matching superscripts. The output of the tree is a final vertex pair
\begin{equation}
        R=(r^0,r^1).
\end{equation}
Finally, attach a marker leaf to $r^0$ in $G_0$ and to $r^1$ in $G_1$.
We write $G_{0,k},G_{1,k}$ when the dependence on $k$ matters. The $2k-1$
vertex pairs, $k-1$ gadgets, attached blocks, and marker give
\begin{equation}
        n_k\coloneq |V(G_{0,k})|=|V(G_{1,k})|=20k-5.
\end{equation}
Each input endpoint has degree eight, each intermediate endpoint has degree
four, and every other vertex has degree at most three. Thus both graphs are
connected and have maximum degree eight.
The case $k=4$ is shown in \cref{fig:cfi-graph-pair-draft}.

For $j\in\{0,1\}$, let $\widehat G_{j,k}$ be obtained from $G_{j,k}$ by
adding a \emph{virtual node} $\nu_j$ adjacent to every original vertex,
including the marker. We omit $k$ when it is clear from context. Then
\begin{equation}
        \widehat n_k\coloneq |V(\widehat G_{j,k})|=n_k+1=20k-4.
\end{equation}
The graphs $\widehat G_{j,k}$ have diameter two. Their original vertices
have degree at most nine, while $\nu_j$ has degree $n_k$, so the virtual
node is the unique vertex of degree greater than nine. In random-color WL,
it receives an independent uniform color from the same palette as every
other vertex.

\begin{figure*}[t]
\centering
\begingroup
\pgfmathsetlengthmacro{\cfiunit}{(\linewidth-70pt)/26.62}
\begin{tikzpicture}[
    x=\cfiunit,y=\cfiunit,
    font=\fontsize{10}{12}\selectfont,
    endpoint/.style={circle,draw=teal!70!black,fill=teal!60,
        minimum size=2.8mm,inner sep=0pt},
    internal/.style={circle,draw=black!85,fill=black!82,
        minimum size=2.8mm,inner sep=0pt},
    block vertex/.style={circle,draw=black!65,fill=black!48,
        minimum size=1.7mm,inner sep=0pt},
    marker/.style={circle,draw=orange!70!black,fill=orange!85,
        minimum size=2.6mm,inner sep=0pt},
    gadget edge/.style={draw=black!48,line width=.65pt},
    attachment edge/.style={draw=black!40,line width=.5pt},
    block edge/.style={draw=black!72,line width=.75pt},
    marker edge/.style={draw=orange!80!black,line width=1.05pt}
]
\foreach \which/\shift in {0/0,1/14.8} {
    \begin{scope}[shift={(\shift,0)}]
        \node[font=\fontsize{12}{14}\selectfont] at (3.4,8.15) {$G_{\which,4}$};

        \foreach \i/\bit/\height in {
            1/0/6.65,1/1/4.75,2/0/2.85,2/1/.95,
            3/0/-.95,3/1/-2.85,4/0/-4.75,4/1/-6.65} {
            \node[endpoint,label={[xshift=-3pt]above:{$x_{\i}^{\bit}$}}]
                (x\i\bit) at (0,\height) {};
            \begin{scope}[shift={(0,\height)}]
                \ifnum\bit=0
                    \foreach \j/\xx/\yy in {
                        1/-1.95/.72,2/-1.95/.18,3/-1.20/.28,
                        4/-1.95/-.18,5/-1.95/-.72,6/-1.20/-.28} {
                        \node[block vertex] (a\i\bit-\j) at (\xx,\yy) {};
                    }
                    \node[anchor=east] at (-2.22,0) {$2C_3$};
                    \begin{scope}[on background layer]
                        \draw[block edge]
                            (a\i\bit-1) -- (a\i\bit-2) -- (a\i\bit-3) -- (a\i\bit-1);
                        \draw[block edge]
                            (a\i\bit-4) -- (a\i\bit-5) -- (a\i\bit-6) -- (a\i\bit-4);
                    \end{scope}
                \else
                    \foreach \j/\angle in {1/90,2/150,3/210,4/270,5/330,6/30} {
                        \node[block vertex] (a\i\bit-\j)
                            at ({-1.50+.67*cos(\angle)},{.67*sin(\angle)}) {};
                    }
                    \node[anchor=east] at (-2.22,0) {$C_6$};
                    \begin{scope}[on background layer]
                        \draw[block edge]
                            (a\i\bit-1) -- (a\i\bit-2) -- (a\i\bit-3)
                            -- (a\i\bit-4) -- (a\i\bit-5) -- (a\i\bit-6) -- (a\i\bit-1);
                    \end{scope}
                \fi
                \begin{scope}[on background layer]
                    \foreach \j in {1,...,6} {
                        \draw[attachment edge] (x\i\bit) -- (a\i\bit-\j);
                    }
                \end{scope}
            \end{scope}
        }

        \foreach \branch/\leftinput/\rightinput/\center in {1/1/2/3.8,2/3/4/-3.8} {
            \foreach \triple/\height in {000/1.425,011/.475,101/-.475,110/-1.425} {
                \node[internal] (g\branch-\triple) at (2.2,{\center+\height}) {};
            }
            \node[endpoint,label={[xshift=-5pt]above:{$z_{\branch}^0$}}]
                (z\branch0) at (4.2,{\center+.475}) {};
            \node[endpoint,label={[xshift=-5pt]below:{$z_{\branch}^1$}}]
                (z\branch1) at (4.2,{\center-.475}) {};
            \begin{scope}[on background layer]
                \foreach \triple/\a/\b/\c in {000/0/0/0,011/0/1/1,101/1/0/1,110/1/1/0} {
                    \draw[gadget edge] (g\branch-\triple) -- (x\leftinput\a);
                    \draw[gadget edge] (g\branch-\triple) -- (x\rightinput\b);
                    \draw[gadget edge] (g\branch-\triple) -- (z\branch\c);
                }
            \end{scope}
        }

        \foreach \triple/\height in {000/1.5,011/.5,101/-.5,110/-1.5} {
            \node[internal] (groot-\triple) at (6.4,\height) {};
        }
        \node[endpoint,label={above:{$r^0$}}] (r0) at (8.6,.5) {};
        \node[endpoint,label={below:{$r^1$}}] (r1) at (8.6,-.5) {};
        \node[marker,label={[font=\fontsize{10}{12}\selectfont]right:marker}]
            (marker) at (9.6,0) {};
        \begin{scope}[on background layer]
            \foreach \triple/\a/\b/\c in {000/0/0/0,011/0/1/1,101/1/0/1,110/1/1/0} {
                \draw[gadget edge] (groot-\triple) -- (z1\a);
                \draw[gadget edge] (groot-\triple) -- (z2\b);
                \draw[gadget edge] (groot-\triple) -- (r\c);
            }
            \draw[marker edge] (r\which) -- (marker);
        \end{scope}
    \end{scope}
}
\end{tikzpicture}
\endgroup\space
\caption{The CFI graph pair with four input pairs ($k=4$).
The intermediate pairs $(z_1^0,z_1^1)$ and $(z_2^0,z_2^1)$ are each shared
by a child gadget and the root gadget. Shading indicates vertex roles,
not random colors. The diameter-two variants are obtained by adding one
virtual node adjacent to every vertex shown in each graph.}
\label{fig:cfi-graph-pair-draft}
\end{figure*}

\subsection{Non-isomorphism}
We next verify that the graphs $G_0$ and $G_1$,
obtained by attaching the marker leaf to $r^0$ and $r^1$, respectively, are
non-isomorphic.

\begin{lemma}
\label{lem:cfi-nonisomorphic-draft}
The graphs $G_0$ and $G_1$ are not isomorphic. The graphs $\widehat G_0$ and
$\widehat G_1$ are also not isomorphic.
\end{lemma}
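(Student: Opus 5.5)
We would argue by contradiction, using an isomorphism-invariant \emph{bit labeling} of the endpoint vertices, computed bottom-up through the parity tree. For leaves the bit records which block is attached; at each gadget the output bit is the XOR of the input bits. The marker then sits on an endpoint of bit $0$ in $G_0$ and of bit $1$ in $G_1$. The first step is to show that an isomorphism $\varphi\colon G_0\to G_1$ must preserve vertex roles: block vertex, input endpoint, intermediate endpoint, root endpoint, gadget-internal vertex, marker.

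\textbf{Roles from local structure.} The degrees of the construction separate most roles.
\begin{itemize}
\item Input endpoints are exactly the vertices of degree eight.
\item Intermediate endpoints are exactly the vertices of degree four.
\item The marker is the unique vertex of degree one.
\item Block vertices have degree three. Each is adjacent to a degree-eight vertex and to two other degree-three vertices that share that same degree-eight neighbor.
\item Gadget-internal vertices have degree three, but all their neighbors are endpoints, and no two of them are adjacent. This separates them from block vertices.
\item Root endpoints are the remaining vertices: non-block, non-marker vertices of degree two or three whose neighbors are gadget-internal vertices (plus possibly the marker).
\end{itemize}
The endpoint pairs are also recoverable. Two endpoints form a pair exactly when they have the same role and are both adjacent to internal vertices of one gadget, but no internal vertex is adjacent to both. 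Each gadget is recoverable as the set of four internal vertices sharing their endpoint neighborhoods within three pairs. Orienting the tree is then easy, because the root pair has been identified by role. In each gadget the output pair is the one lying on the unique tree path toward $R$ in the tree whose nodes are pairs and whose edges are gadgets. Consequently $\varphi$ maps each gadget to a gadget, and it maps input pairs, output pairs, and the root pair to pairs of the same kind.

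\textbf{Invariant bits.} Define $\beta(x_i^0)=0$ and $\beta(x_i^1)=1$. This is intrinsic: $x_i^0$ is the input endpoint whose attached block contains a triangle, and $2C_3$ contains triangles while $C_6$ does not. Every isomorphism therefore preserves $\beta$ on input endpoints. At a gadget with input pairs $X,Y$ and output pair $Z$, each internal vertex $g$ is adjacent to $x\in X$, $y\in Y$, $z\in Z$. Set $\beta(z)=\beta(x)\oplus\beta(y)$.

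This must be well defined: different internal vertices adjacent to the same $z$ must give the same XOR. By construction $g_{abc}$ has $c=a\oplus b$, so by induction up the tree $\beta(x^a)=a$ for every endpoint of every pair. Since this definition uses only adjacency, the roles, and the orientation fixed above, induction from the leaves to the root gives $\beta(\varphi(v))=\beta(v)$ for every endpoint $v$. Equivalently, $\varphi$ induces zero swap on every input pair, and \cref{lem:extension_property_xor_gate-draft} propagates zero parity up each gadget.

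\textbf{Conclusion.} The marker's neighbor has $\beta=0$ in $G_0$ and $\beta=1$ in $G_1$. Since $\varphi$ maps the marker to the marker, it maps $r^0$ to $r^1$, contradicting $\beta$-preservation. For $\widehat G_0,\widehat G_1$, the virtual node is the unique vertex of degree greater than nine. Any isomorphism therefore fixes it and restricts to an isomorphism $G_0\to G_1$, which we have ruled out.

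The main obstacle is the role-recognition step, in particular recovering the pairs and the tree orientation purely from adjacency. The intrinsic definitions above are the key. If the degree bookkeeping has edge cases (for instance a root endpoint of degree two versus three), we would instead run the induction directly on the definable relation ``internal vertex adjacent to three endpoints,'' peeling gadgets from the leaves.
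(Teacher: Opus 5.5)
Your proof is correct and follows essentially the same route as the paper. You recover the pair--gadget tree from degrees and adjacency, and use the non-isomorphic $2C_3$ and $C_6$ attachments to forbid swaps at the input pairs. You then propagate this through the XOR relation at each gadget, where your invariant labeling $\beta$ is a repackaging of the paper's swap bits $s_W$, to contradict the marker's position, and you handle $\widehat G_0,\widehat G_1$ through the unique vertex of degree greater than nine.

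The one blemish is that your role-recovery step is circular as written: you define pairs via gadgets and gadgets via pairs. This is easily repaired. One option is to identify pairs top-down from the root, as the paper does. Another is to note that the gadgets are exactly the four-element sets of internal vertices whose neighborhoods together contain only six endpoints.
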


\begin{proof}
Suppose that $\gamma\colon V(G_0)\to V(G_1)$ is a graph isomorphism, and let $m_j$
be the marker leaf of $G_j$. Since the marker and the unmarked root endpoint
are the unique vertices of degrees one and two, respectively,
$\gamma(m_0)=m_1$, $\gamma(r^0)=r^1$, and $\gamma(r^1)=r^0$.
The four internal vertices of the root gadget are precisely the neighbors
of $r^0$ or $r^1$ other than the marker. Since $\gamma$ preserves adjacency
and maps the root pair and marker to their counterparts, it maps this
four-vertex set in $G_0$ onto the corresponding set in $G_1$.
The individual internal vertices may be permuted.

For a gadget with output pair $W=(w^0,w^1)$, let $A$ be its set of four
internal vertices. Two distinct input vertices $u,v$ belong to the same pair
if and only if $N(u)\cap N(v)\cap A=\varnothing$, where $N(v)$ denotes
the neighborhood of $v$. Since $\gamma$ preserves
this condition, it maps input pairs to input pairs, possibly interchanging
the two. Such a pair is a leaf of the gadget tree exactly when its endpoints
have degree eight. Otherwise, the four internal vertices of its child gadget
are the neighbors of its endpoints outside $A$. Induction from the root
therefore shows that $\gamma$ preserves the pair--gadget tree, up to
permutation of sibling subtrees.

For each pair $W=(w^0,w^1)$, let $W'=(u^0,u^1)$ be its image pair and
define $s_W\in\{0,1\}$ by
\begin{equation}
        \gamma(w^b)=u^{b\oplus s_W},\qquad b\in\{0,1\}.
\end{equation}
Since $\gamma$ exchanges the root endpoints, $s_R=1$. At every input pair $X_i$, the
non-isomorphic $2C_3$ and $C_6$ attachments force $s_{X_i}=0$.
By \cref{lem:extension_property_xor_gate-draft}, every gadget with input pairs
$X,Y$ and output pair $Z$ satisfies
\begin{equation}
        s_Z=s_X\oplus s_Y.
\end{equation}
This relation is unchanged when we interchange the two input pairs.
We prove $s_W=0$ for every pair $W$ by induction on the height of the
subtree rooted at $W$. At height zero, $W$ is an input pair $X_i$, for
which $s_{X_i}=0$. For the induction step, let $Z$ be the output pair of
a gadget with input pairs $X,Y$. Their subtrees have smaller height, so
the induction hypothesis gives $s_X=s_Y=0$. Hence
$s_Z=s_X\oplus s_Y=0$. Taking $W=R$ gives $s_R=0$, contradicting $s_R=1$.

Any isomorphism between $\widehat G_0$ and $\widehat G_1$ must map $\nu_0$
to $\nu_1$, since these are their unique vertices of degree greater than
nine. Deleting the virtual nodes would then give an isomorphism between
$G_0$ and $G_1$, which we have ruled out.
\end{proof}

\subsection{Block erasure}

We next identify an event under which the random coloring conceals the local
difference between the $C_6$ and $2C_3$ attachments at an input pair. For each
input pair $X_i$, let $B_i$ be the event that both attached six-vertex blocks
are monochromatic. The two blocks need not have the same color.

Since a six-vertex block is monochromatic with probability $q^{-5}$, we have
\begin{equation}
        \Pr(B_i)=q^{-10}.
\end{equation}
\begin{lemma}[Block erasure]
\label{lem:cfi-block-erasure-draft}
Let $G\in\{G_0,G_1\}$ and let $\chi$ be a coloring for which $B_i$ holds.
Let $H$ be obtained from $G$ by replacing the $2C_3$ attachment at $x_i^0$
with $C_6$ and the $C_6$ attachment at $x_i^1$ with $2C_3$, on the same
vertex sets and with all other edges unchanged. Then
\begin{equation}
        \Pi(G,\chi)=\Pi(H,\chi).
\end{equation}
\end{lemma}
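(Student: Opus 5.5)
The argument shows that, under $B_i$, the stable $1$-WL colorings of $(G,\chi)$ and $(H,\chi)$ coincide vertex by vertex under the identity map on $V(G)=V(H)$. This immediately gives $\Pi(G,\chi)=\Pi(H,\chi)$. Concretely, I will prove by induction on $t$ that $\pi_t^{G,\chi}(v)=\pi_t^{H,\chi}(v)$ for every vertex $v$ and every $t\geq0$.

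Write $P$ for the six vertices of the $2C_3$ block at $x_i^0$ (colored $a$ under $\chi$) and $Q$ for the six vertices of the $C_6$ block at $x_i^1$ (colored $b$). The graphs $G$ and $H$ have the same vertex set and the same edges, except for edges inside $P$ and inside $Q$. In both graphs, $P$ and $Q$ each induce a $2$-regular graph, and each is attached only to its own endpoint $x_i^0$ or $x_i^1$ respectively. The base case $t=0$ is trivial, since $\pi_0=\chi$ in both graphs.

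The induction step splits into two cases.

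\emph{Vertices outside $P\cup Q$.} Such a vertex $v$ has the same neighborhood in $G$ and $H$. By the inductive hypothesis, the multiset $\{\!\{\pi_t(u)\colon u\sim v\}\!\}$ is the same in both graphs, so $\pi_{t+1}(v)$ agrees.

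\emph{Vertices inside $P$ or $Q$.} For $v\in P$, this needs more care, because its two within-block neighbors differ between $G$ and $H$. The key auxiliary claim is that, in each graph and at every round $t$, all six vertices of $P$ share a common pattern, and likewise for $Q$. I would strengthen the induction hypothesis to include this claim. It holds because $\chi$ is constant on $P$, and each $v\in P$ has neighbors consisting of the attachment endpoint $x_i^0$ plus two block vertices, which share a pattern by the hypothesis. Hence all vertices of $P$ see the same neighbor multiset, in both $G$ and $H$. Given this uniformity, the neighbor multiset of $v\in P$ at round $t$ consists of $\pi_t(x_i^0)$ together with two copies of the common $P$-pattern. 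By induction, each of these agrees between $G$ and $H$. The same argument applies to $Q$ with $x_i^1$.

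The main obstacle is making the simultaneous induction rigorous: uniformity within blocks must be proved jointly with the agreement between $G$ and $H$, since each relies on the other at the previous round. A cleaner alternative is to define the equitable partition obtained by running WL on $G$ and verify directly that it is also equitable for $H$ with identical neighbor counts. Within $P$ and $Q$ the counts are equal because every block vertex has exactly $2$ block neighbors in both graphs, and nowhere else do any counts change. Since $1$-WL colorings are determined by the iterated degree data, this equitability check yields the equality of all patterns $\pi_t$, and hence of the multisets $\Pi(G,\chi)=\Pi(H,\chi)$.
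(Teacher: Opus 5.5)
Your proof is correct and is essentially the paper's argument: a simultaneous induction on the refinement round. It shows that corresponding vertices of $G$ and $H$ have equal patterns while all vertices of each replaced monochromatic block share a pattern, using that every block vertex has two within-block neighbors plus the unchanged attachment endpoint in both graphs. (As a minor remark, within-block uniformity can be proved in each graph separately, so the two claims do not depend on each other as you suggest; your equitable-partition alternative is also valid.)
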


\begin{proof}
At round zero, corresponding vertices have the same color in $G$ and $H$,
and each replaced block is monochromatic. Suppose that corresponding
vertices have the same patterns at round $t$ and that all vertices within
each replaced block share a pattern. In both graphs, each block vertex
has two neighbors within its block and one at the unchanged attachment
endpoint. Thus, at round $t+1$, corresponding block vertices have the same
pattern, and all vertices within each block share a pattern. All other
vertices have unchanged neighborhoods, so their patterns also agree.
By induction, corresponding vertices have the same patterns at every round,
and the graph-level observations are equal.
\end{proof}

\subsection{One erased input hides the root}

The parity tree is designed so that concealing one input pair also conceals
which root endpoint carries the marker leaf.

Let $B\coloneq\bigcup_{i=1}^k B_i$ be the event that at least one input
block pair is erased.

\begin{lemma}
\label{lem:cfi-root-hiding-draft}
Conditioned on $B$, the random-color WL sample distributions of $G_0$ and $G_1$ are
identical,
\begin{equation}
        \Law(Z_{\mathrm{WL},q}(G_0)\mid B)
        =\Law(Z_{\mathrm{WL},q}(G_1)\mid B).
\end{equation}
The same equality holds with $\widehat G_0$ and $\widehat G_1$ in place of
$G_0$ and $G_1$.
\end{lemma}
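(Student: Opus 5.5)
Our plan is to build, for every coloring in $B$, a measure-preserving correspondence between colorings of $G_0$ and colorings of $G_1$ that preserves the WL observation. Both graphs share the vertex set except that the marker is attached to $r^0$ or to $r^1$. We identify the vertex sets in the obvious way, including the marker $m$, so a coloring $\chi$ of one is a coloring of the other.

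We first partition $B$ into disjoint events $B'_i \coloneq B_i\setminus\bigcup_{j<i}B_j$. Since the coloring law is the same for both graphs and each $B'_i$ depends only on colors of block vertices, it suffices to show $\Law(\Pi(G_0,\chi)\mid B'_i)=\Law(\Pi(G_1,\chi)\mid B'_i)$ for each $i$ with $\Pr(B'_i)>0$. Mixing over $i$ with weights $\Pr(B'_i)/\Pr(B)$ then gives the claim.

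Fix $i$. Let $P$ be the path of pairs from $X_i$ up to $R$ in the gadget tree. Define a bijection $\sigma$ on vertices that swaps the endpoints of every pair on $P$. On each gadget along $P$, exactly two of its three pairs lie on $P$, namely the child pair on the path and the output pair. The swap pattern therefore has even parity, and by \cref{lem:extension_property_xor_gate-draft} it extends to a permutation of that gadget's internal vertices preserving gadget adjacency. On all other gadgets, pairs, and blocks, $\sigma$ is the identity. At $X_i$, $\sigma$ also exchanges the two attached six-vertex blocks, mapping the $2C_3$ vertex set onto the $C_6$ vertex set by some fixed bijection, and fixes the marker. We then check that $\sigma$ is an isomorphism from $H_0$ to $G_1$. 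Here $H_0$ is the graph of \cref{lem:cfi-block-erasure-draft} obtained from $G_0$ by exchanging the block structures at $X_i$. After the swap, $x_i^1$ carries the $C_6$ block as in $G_1$, gadgets are preserved, and the marker now hangs on $r^1$. Hence $\Pi(H_0,\chi)=\Pi(G_1,\chi\circ\sigma^{-1})$. Combined with \cref{lem:cfi-block-erasure-draft}, which gives $\Pi(G_0,\chi)=\Pi(H_0,\chi)$ on $B_i$, we get $\Pi(G_0,\chi)=\Pi(G_1,\chi\circ\sigma^{-1})$ for $\chi\in B_i$.

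It remains to see that $\chi\mapsto\chi\circ\sigma^{-1}$ preserves the uniform i.i.d.\ law and maps $B'_i$ onto itself. Since $\sigma$ is a vertex bijection, it preserves the product uniform law. It maps block vertex sets at $X_j$ to themselves for $j\neq i$ and swaps the two blocks at $X_i$, so it preserves each $B_j$, and hence $B'_i$. Therefore the conditional law of $\Pi(G_0,\chi)$ given $B'_i$ equals that of $\Pi(G_1,\chi')$ with $\chi'\sim$ the same conditional law, which is the claim. For $\widehat G_j$, we extend $\sigma$ by $\nu_0\mapsto\nu_1$. The block-erasure argument goes through unchanged, since block vertices additionally see the virtual node identically in $G$ and $H$, so the same proof applies.

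The main obstacle is verifying carefully that the swap along $P$ is a genuine isomorphism $H_0\to G_1$. This needs the parity check at every gadget on $P$, including the leaf gadget containing $X_i$, and the observation that the block exchange at $X_i$ is exactly what compensates for swapping $x_i^0,x_i^1$. We will spell out the extension from \cref{lem:extension_property_xor_gate-draft} gadget by gadget.
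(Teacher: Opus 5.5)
Your proof is correct and matches the paper's argument: you swap endpoints along the path from $X_i$ to $R$, extend via the parity-gadget lemma, exchange the two blocks at $X_i$, apply block erasure on $B_i$, and use the law-preserving recoloring $\chi\mapsto\chi\circ\sigma^{-1}$ (your partition of $B$ by least erased index is just an explicit form of the paper's fixed choice of $X_i$ for each erased set). The only bookkeeping point is to define $H_0$ as the pullback of $G_1$ along $\sigma$, so that the block-internal edges match your chosen bijection; block erasure allows this because it holds for any $C_6$ and $2C_3$ structures on those vertex sets.
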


\begin{proof}
On $B$, choose an erased input pair $X_i$. Let $G_{\mathrm{core}}$ be the graph formed by
the parity gadgets, without the attached blocks or marker. We define an
automorphism $\tau_i\colon V(G_{\mathrm{core}})\to V(G_{\mathrm{core}})$ along the unique path from
$X_i$ to $R$ in the parity-gadget tree. For each pair $W=(w^0,w^1)$,
set $s_W=1$ if $W$ lies on this path and $s_W=0$ otherwise, and define
\begin{equation}
        \tau_i(w^b)=w^{b\oplus s_W},\qquad b\in\{0,1\}.
\end{equation}
Each gadget has either two incident pairs swapped or none, so its swap
bits satisfy $s_X\oplus s_Y\oplus s_Z=0$.
By \cref{lem:extension_property_xor_gate-draft}, these swaps extend to a gadget
automorphism. Explicitly, on its internal vertices, set
\begin{equation}
        \tau_i(g_{abc})=g_{a\oplus s_X,\,b\oplus s_Y,\,c\oplus s_Z}.
\end{equation}
Since the gadget maps agree on shared pairs and the gadgets have disjoint
internal vertex sets, these maps define an automorphism $\tau_i$ of $G_{\mathrm{core}}$.

To extend $\tau_i$ to a bijection $V(G_0)\to V(G_1)$, map the marker
of $G_0$ to that of $G_1$ and fix all attached blocks except those at $X_i$.
Exchange the two blocks at $X_i$ using a fixed bijection in one direction
and its inverse in the other. Since $\tau_i(r^b)=r^{b\oplus1}$, this
extension preserves the marker edge.
The only edges it fails to preserve are within the exchanged $2C_3$ and
$C_6$ blocks. For a coloring $\chi$ satisfying $B_i$, both blocks are
monochromatic. Thus \cref{lem:cfi-block-erasure-draft} gives
\begin{equation}
        \Pi(G_0,\chi)=\Pi(G_1,\chi\circ\tau_i^{-1}).
\end{equation}
Here $\chi\circ\tau_i^{-1}$ assigns to each image vertex the color of
its preimage.

The coloring map $\chi\mapsto\chi\circ\tau_i^{-1}$ preserves the set of
erased input pairs. If we use the same choice of $X_i$ for each such set,
this map is reversible and gives a bijection between the colorings
satisfying $B$. This bijection preserves their uniform conditional law.
Since matched colorings have identical WL observations, their laws agree
conditioned on $B$.

For the virtual-node graphs, extend each bijection by setting
$\tau_i(\nu_0)=\nu_1$ and carrying the color of $\nu_0$ to $\nu_1$.
This extension preserves all edges incident to the virtual node. The
only remaining discrepancies are within the two exchanged monochromatic
blocks. The induction in \cref{lem:cfi-block-erasure-draft} still applies:
each block vertex receives the additional pattern of the virtual node,
and the two virtual nodes receive equal multisets of patterns from all
original vertices. Thus corresponding vertices have identical patterns at
every round. Since the virtual-node color is independent of $B$, the
extended coloring bijection also preserves the uniform conditional law.
\end{proof}

\subsection{Total variation and sample complexity}

We now convert the root-hiding event into an upper bound on total variation
and, subsequently, into a lower bound on the required number of trials.

\begin{proposition}[Upper bound on CFI total variation]
\label{prop:cfi-tv-upper-draft}
The random-color WL distributions of $G_0$ and $G_1$ satisfy
\begin{equation}
    \operatorname{TV}(P_{G_0}^{\mathrm{WL},q},P_{G_1}^{\mathrm{WL},q})
    \leq (1-q^{-10})^k.
\end{equation}
For fixed $q$, this bound is $\exp(-\Omega(k))$.
The same bound holds for $\widehat G_0$ and $\widehat G_1$.
\end{proposition}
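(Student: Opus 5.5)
We use the standard coupling bound: if two laws agree conditionally on an event of common probability $p$, then their total variation is at most $1-p$. Write $P_j\coloneq P_{G_j}^{\mathrm{WL},q}$. Each $P_j$ is the pushforward of the uniform law $\mu$ on colorings $\chi\colon V\to[q]$ under $\chi\mapsto\Pi(G_j,\chi)$. Here we identify $V(G_0)$ and $V(G_1)$ via the common construction, so that the event $B$ is defined for both graphs and has the same probability $\Pr(B)$ under $\mu$.

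First, decompose each law as a mixture over $B$ and its complement $B^c$:
\[
P_j=\Pr(B)\,\Law(Z_{\mathrm{WL},q}(G_j)\mid B)+\Pr(B^c)\,\Law(Z_{\mathrm{WL},q}(G_j)\mid B^c).
\]
By \cref{lem:cfi-root-hiding-draft}, the first components coincide. Hence the signed difference is
\[
P_0-P_1=\Pr(B^c)\bigl(\Law(\cdot\mid B^c)_0-\Law(\cdot\mid B^c)_1\bigr),
\]
and since the total variation between two probability measures is at most $1$,
\[
\operatorname{TV}(P_0,P_1)\leq\Pr(B^c).
\]
One point needs care: $\Pr(B)$ must be the same for both graphs. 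It is, because $B$ depends only on the colors of the attached block vertices, which are i.i.d.\ uniform in either graph.

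Second, compute $\Pr(B^c)$. The events $B_i$ depend on the colors of disjoint vertex sets, namely the twelve block vertices at $X_i$. They are therefore independent, each with probability $q^{-10}$ as computed above. Thus
\[
\Pr(B^c)=\prod_{i=1}^k\bigl(1-\Pr(B_i)\bigr)=(1-q^{-10})^k\leq\exp(-k q^{-10}),
\]
using $1-x\leq e^{-x}$. This gives the stated bound and, for fixed $q$, the rate $\exp(-\Omega(k))$.

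For $\widehat G_0,\widehat G_1$ the same argument applies verbatim. The extra virtual-node color is independent of $B$, so $\Pr(B)$ is unchanged, and \cref{lem:cfi-root-hiding-draft} already provides the conditional equality for the virtual-node graphs.

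The only delicate step is the conditional equality itself, and it is supplied by the preceding lemma. What remains is to check that conditioning on the same event in both graphs is legitimate, which the identification of vertex sets and the equal probability of $B$ ensure.
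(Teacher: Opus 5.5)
Your proof is correct and follows the paper's argument essentially step for step. Conditional equality on $B$ from \cref{lem:cfi-root-hiding-draft} gives $\operatorname{TV}\le\Pr(B^c)$, independence of the $B_i$ over disjoint block sets gives $(1-q^{-10})^k$, and the virtual node leaves both ingredients unchanged; your explicit mixture decomposition and the $1-x\le e^{-x}$ step just spell out details the paper leaves implicit.
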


\begin{proof}
By \cref{lem:cfi-root-hiding-draft}, the conditional laws on $B$ agree.
Since $B$ has the same probability in both graphs,
\begin{equation}
    \operatorname{TV}(P_{G_0}^{\mathrm{WL},q},P_{G_1}^{\mathrm{WL},q})
    \leq \Pr(B^c).
\end{equation}
Here $B^c$ denotes the complement of $B$, namely the event that no input
block pair is erased.
The events $B_i$ are independent because they depend on disjoint attached
blocks. Therefore
\begin{equation}
    \Pr(B^c)
    =\prod_{i=1}^k\Pr(B_i^c)
    =(1-q^{-10})^k.
\end{equation}
The conditional-law equality in \cref{lem:cfi-root-hiding-draft} and the
probabilities of the events $B_i$ are unchanged by adding the virtual
node, so the same argument proves the bound for $\widehat G_0,\widehat G_1$.
\end{proof}

\begin{corollary}[Lower bound on the number of random-color WL trials]
\label{cor:cfi-trial-lower-draft}
Fix $q\geq2$ and $\alpha\in(0,1/2)$. If the pair $(G_0,G_1)$ is statistically
$(q,s,\alpha)$-distinguishable by random-color WL, then
\begin{equation}
    s\geq (1-2\alpha)(1-q^{-10})^{-k}.
\end{equation}
In particular, for fixed $q$ and $\alpha$, the required number of trials is
$\exp(\Omega(n_k))$.
The same lower bound holds for $(\widehat G_0,\widehat G_1)$, with
$\widehat n_k$ in place of $n_k$ in the asymptotic statement.
\end{corollary}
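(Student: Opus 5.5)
The plan is a standard tensorization argument for total variation over product measures, combined with the single-trial bound of \cref{prop:cfi-tv-upper-draft}. Write $P_j\coloneq P_{G_j}^{\mathrm{WL},q}$ and $\eta\coloneq\operatorname{TV}(P_0,P_1)\leq(1-q^{-10})^k$. A test $\psi$ sees the pair $(\vec Z_A,\vec Z'_B)$, whose law is $P_A^{\otimes s}\otimes P_B^{\otimes s}$. I would compare two of the four cases that differ in only one batch. Take $(A,B)=(G_0,G_0)$, where the test must output $0$, and $(A,B)=(G_1,G_0)$, where it must output $1$. The second batch has the same law $P_0^{\otimes s}$ in both cases, so the total variation between the full observation laws equals $\operatorname{TV}(P_0^{\otimes s},P_1^{\otimes s})$.

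The first step is the testing bound. For any possibly randomized $\psi$, the acceptance probabilities under two laws differ by at most their total variation. An error of at most $\alpha$ in both cases therefore forces
\[
1-2\alpha\leq\operatorname{TV}(P_0^{\otimes s},P_1^{\otimes s}).
\]
The second step is subadditivity. Couple the two product measures coordinatewise with maximal couplings, or telescope over hybrid products. This gives $\operatorname{TV}(P_0^{\otimes s},P_1^{\otimes s})\leq s\eta$. Combining the two steps yields $s\geq(1-2\alpha)/\eta\geq(1-2\alpha)(1-q^{-10})^{-k}$, which is the claimed bound.

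For the asymptotic statement I would use $n_k=20k-5$. With $q$ and $\alpha$ fixed, $-\log(1-q^{-10})$ is a positive constant $c_q\geq q^{-10}$. Hence $s\geq(1-2\alpha)\exp(c_qk)=\exp(\Omega(n_k))$. Tracking the constant also gives the $\exp(\Omega(n_k/q^{10}))$ form used in \cref{thm:cfi-trials-main}. For the virtual-node pair, \cref{prop:cfi-tv-upper-draft} supplies the same $\eta$, and $\widehat n_k=n_k+1$, so the same argument applies verbatim.

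No step is a real obstacle. The one point needing care is that the definition involves two independent batches and four cases. The reduction to a single-batch comparison is valid because the unchanged batch has an identical law in both cases, and because randomization of $\psi$ does not affect the total-variation bound.
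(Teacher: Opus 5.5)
Your proposal is correct and matches the paper's proof. The paper also compares two of the four cases that differ in a single batch: it keeps the first batch at $G_0$ and varies the second, i.e.\ $P_s\otimes P_s$ versus $P_s\otimes Q_s$. It then lower-bounds their total variation by $1-2\alpha$, applies subadditivity over products together with \cref{prop:cfi-tv-upper-draft}, and derives the asymptotic claim from $n_k=20k-5$. One cosmetic point: write the chain as $1-2\alpha\leq s\eta\leq s(1-q^{-10})^k$ rather than dividing by $\eta$, which avoids the degenerate case $\eta=0$ (where no such test exists at all).
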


\begin{proof}
By \cref{prop:cfi-tv-upper-draft} and subadditivity of total variation under
products, $s$ independent trials satisfy
\begin{equation}
    \operatorname{TV}\!\left(
        (P_{G_0}^{\mathrm{WL},q})^{\otimes s},
        (P_{G_1}^{\mathrm{WL},q})^{\otimes s}
    \right)
    \leq s(1-q^{-10})^k.
\end{equation}
Here $P\otimes Q$ denotes the joint law of independent draws from $P$ and
$Q$, and $P^{\otimes s}$ denotes the joint law of $s$ independent draws
from $P$.
Let $P_s\coloneq(P_{G_0}^{\mathrm{WL},q})^{\otimes s}$ and
$Q_s\coloneq(P_{G_1}^{\mathrm{WL},q})^{\otimes s}$. A test with failure
probability at most $\alpha$ has expected output at most $\alpha$ under
$P_s\otimes P_s$ and at least $1-\alpha$ under $P_s\otimes Q_s$. Consequently,
\begin{equation}
\begin{aligned}
    1-2\alpha
    &\leq \operatorname{TV}(P_s\otimes P_s,P_s\otimes Q_s)\\
    &=\operatorname{TV}(P_s,Q_s)\\
    &\leq s(1-q^{-10})^k.
\end{aligned}
\end{equation}
Rearranging gives the first claim. The second follows from
$n_k=\Theta(k)$ when $q$ and $\alpha$ are fixed.
For the virtual-node pair, the same total-variation bound applies and
$\widehat n_k=\Theta(k)$, giving the remaining claim.
\end{proof}

\subsection{Non-universality}

We now use the trial lower bound to prove non-universality for bounded
graph invariants, separately on bounded-degree graphs and bounded-diameter
graphs.

\begin{corollary}[Non-universality from CFI pairs]
\label{cor:cfi-size-uniform-approximation-draft}
The class of scalar-output uniformly Lipschitz MPNNs with uniform
initialization on $[0,1]^d$ is not a size-uniform universal approximator for
bounded graph invariants on either of the following graph classes:
\begin{enumerate}[label=(\roman*)]
    \item connected graphs of maximum degree at most eight;
    \item graphs of diameter two.
\end{enumerate}
\end{corollary}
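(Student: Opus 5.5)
The plan is to apply \cref{cor:pair-nonuniversality-draft}, but with a bounded target so that the conclusion concerns bounded graph invariants. Consider the class (i) of connected graphs of maximum degree at most eight. Let $T$ be the invariant with $T(G_{0,k})=1$ for every $k$ and $T=0$ on every other graph. This $T$ is bounded, since it takes values in $\{0,1\}$, and it is well defined because $G_{0,k}\not\cong G_{1,k}$ by \cref{lem:cfi-nonisomorphic-draft}.

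Suppose, for contradiction, that some uniformly $L$-Lipschitz MPNN $F$ with uniform features on $[0,1]^d$ satisfies $\sup_G|\overline F(G)-T(G)|\leq 1/4$. Then $\gap_F(G_{0,k},G_{1,k})\geq 1/2$ for every $k$.

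The key step is to show that $\gap_F(G_{0,k},G_{1,k})\to0$ as $k\to\infty$, which contradicts the previous bound. I will pass to finitely many colors, using the fact that $L$ is fixed. Choose the grid palette $S_m$ with $L\kappa/(2m)\leq 1/16$. By \cref{prop:size-uniform-quantization-draft} (equivalently \cref{thm:quantization-main}), $|\overline F_{S_m}(G)-\overline F(G)|\leq 1/16$ for all $G$. Hence $|\overline F_{S_m}(G_{0,k})-\overline F_{S_m}(G_{1,k})|\geq 3/8$.

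Since $\rho_K$ is uniform, the quantized features are i.i.d.\ uniform colors from a palette of size $q=m^d$ (take $m\geq 2$, so $q\geq2$). Fix a bijection $c\colon[q]\to S_m$. The output $F(G,c\circ\chi)$ is then a function of the WL observation $\Pi(G,\chi)$, by the $1$-WL upper bound. Write $F(G,c\circ\chi)=f(\Pi(G,\chi))$ for a single function $f$ that is the same for both graphs, because the MPNN computation factors through the WL patterns and the readout. By \cref{lem:uniform-output-diameter-draft}, $f$ ranges over an interval of length at most $LD_K$ on the supports of both laws. Shift that interval to center it at $0$; this is possible because $G_{0,k}$ and $G_{1,k}$ have the same number of vertices and share palettes. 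We then get
\[
|\overline F_{S_m}(G_{0,k})-\overline F_{S_m}(G_{1,k})|\leq LD_K\,\operatorname{TV}\bigl(P^{\mathrm{WL},q}_{G_{0,k}},P^{\mathrm{WL},q}_{G_{1,k}}\bigr)\leq LD_K(1-q^{-10})^k,
\]
using \cref{prop:cfi-tv-upper-draft}.

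The range issue needs a little care. The image sets for the two graphs may differ, but each lies in an interval of length $LD_K$, and the union of the two ranges is what matters for the TV bound. One fix is to bound $\sup|f|$ over both supports by $|F(G,\mathbf{0}\text{-like})|+LD_K$ and use $2\sup|f|\cdot\mathrm{TV}$. A cleaner fix is to couple the two graphs on the event $B$, on which the observations agree exactly by \cref{lem:cfi-root-hiding-draft}. Then the difference of means equals $\Pr(B^c)$ times the difference of conditional means on $B^c$. Each conditional mean is bounded by a constant independent of $k$, because outputs on a fixed-size graph are Lipschitz perturbations of the output on the constant coloring. I expect the main obstacle to be obtaining a $k$-independent bound on the output magnitude across both graphs. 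If needed, I will get it from the bounded weights in \cref{subsec:hypothesis_classes} with mean pooling, which give $|F|\leq C(B_W,B_b,B_{\mathrm{out}},D)$ for features in $[0,1]^d$.

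Taking $k$ large then makes the gap smaller than $3/8$, which is the contradiction. For class (ii), I will repeat the same argument verbatim with $\widehat G_{j,k}$, which have diameter two. \cref{lem:cfi-nonisomorphic-draft}, \cref{lem:cfi-root-hiding-draft}, and \cref{prop:cfi-tv-upper-draft} all hold for these virtual-node pairs, with the target $T$ equal to $1$ on $\widehat G_{0,k}$.
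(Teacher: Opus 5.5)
Your argument is correct, and it takes a more direct route than the paper. The paper argues through testing. It uses \cref{prop:finite-trial-detection-draft} to turn a quantized gap of at least $1/4$ into a Hoeffding-based $(q,s,1/3)$ random-color WL test, with $s$ independent of $k$. It then invokes the trial lower bound \cref{cor:cfi-trial-lower-draft}, itself derived from \cref{prop:cfi-tv-upper-draft} via subadditivity of total variation over products, to rule that test out for large $k$. You instead apply $|\E f(Z_0)-\E f(Z_1)|\le\operatorname{osc}(f)\cdot\operatorname{TV}$ to the single-trial WL laws. This skips the testing machinery and gives an explicit bound, $\gap_F(G_{0,k},G_{1,k})\le 2L\delta(S_m)+2LD_K(1-q^{-10})^k$ with $q=m^d$. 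The paper's detour buys a unified presentation in which non-universality is a direct instance of the statistical trial lower bound, which holds against arbitrary tests and not only empirical means.

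The range step needs a cleaner justification than the ones you give. Equal vertex counts and a shared palette do not imply that the two output ranges coincide, so that first reason does not work. The bounded-weight magnitude bound is valid only for mean pooling with a linear readout, not for an arbitrary permutation-invariant readout that happens to be uniformly Lipschitz. No magnitude bound is needed. A constant coloring lies in $B$, and the coupling in \cref{lem:cfi-root-hiding-draft} maps it to a constant coloring with the same WL observation. So $F$ takes a common value $y$ on constant colorings of $G_{0,k}$ and $G_{1,k}$; equivalently, use that the conditional laws on $B$ agree and $\Pr(B)>0$. Both output ranges then contain $y$, so their union lies in an interval of length $2LD_K$. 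This gives $|\overline F_{S_m}(G_{0,k})-\overline F_{S_m}(G_{1,k})|\le 2LD_K(1-q^{-10})^k$ for every uniformly Lipschitz MPNN, and the same holds for the virtual-node pairs. The paper avoids this issue automatically, because Hoeffding is applied to paired differences $F(A,\cdot)-F(B,\cdot)$, whose range is at most $2LD_K$ wherever the two intervals sit. Finally, $T(G_{1,k})=0$ uses both $G_{0,k}\not\cong G_{1,k}$ and the fact that the orders $n_k$ are distinct across $k$.
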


\begin{proof}
Fix a uniformly Lipschitz model $F$. By
\cref{prop:size-uniform-quantization-draft}, choose a midpoint grid
$S_m$, with $m\geq2$, such that quantization changes each population mean
by at most $1/8$, uniformly over graphs. The quantized features are uniform
on the $q=m^d$ grid points. The restriction of $F$ to $S_m$ remains
uniformly Lipschitz. By \cref{prop:finite-trial-detection-draft}, a
finite number $s$ of trials suffices to detect any quantized population gap
of at least $1/4$ with failure probability at most $1/3$.
These tests are also random-color WL tests, and both $q$ and $s$ are
independent of graph size.

Choose $H_{j,k}=G_{j,k}$ for the first graph class and
$H_{j,k}=\widehat G_{j,k}$ for the second. For sufficiently large $k$,
\cref{cor:cfi-trial-lower-draft} rules out statistical
$(q,s,1/3)$-distinguishability of $(H_{0,k},H_{1,k})$.
The quantized population gap must therefore be less than $1/4$.
Including the two quantization errors gives
$\gap_F(H_{0,k},H_{1,k})<1/2$.

For the chosen graph class, define a bounded graph invariant
$T_{\mathrm{CFI}}$ to equal $1$ on graphs isomorphic to $H_{1,k}$ for some
$k\geq2$ and $0$ otherwise. By \cref{lem:cfi-nonisomorphic-draft} and the
strictly increasing orders of each family, we have
$T_{\mathrm{CFI}}(H_{0,k})=0$ and $T_{\mathrm{CFI}}(H_{1,k})=1$ for every
$k$. Uniform approximation to
$T_{\mathrm{CFI}}$ with error less than $1/4$ would force
$\gap_F(H_{0,k},H_{1,k})>1/2$ for every $k$, contradicting the bound above.
Thus the model class fails to uniformly approximate a bounded graph
invariant on each of the two graph classes.
\end{proof}

\subsection{Colors and trials}

The trial lower bound also quantifies the tradeoff between the number of
colors and the number of trials.

\begin{corollary}[Colors and trials]
\label{cor:cfi-concentrated-randomness-draft}
Distinguishing the CFI pair $(G_{0,k},G_{1,k})$ by random-color WL with
success probability at least $2/3$, using $q\geq2$ colors and $s$ trials
per graph, requires
\begin{equation}
        q^{10}\log(3s)=\Omega(n_k).
\end{equation}
The same conclusion holds for $(\widehat G_{0,k},\widehat G_{1,k})$ with
$\widehat n_k$ in place of $n_k$.
\end{corollary}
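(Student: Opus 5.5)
We derive this directly from \cref{cor:cfi-trial-lower-draft} with $\alpha=1/3$. Succeeding with probability at least $2/3$ in every case $(A,B)$ is exactly statistical $(q,s,1/3)$-distinguishability. That corollary then gives
\begin{equation*}
    s\geq \tfrac13(1-q^{-10})^{-k},
\end{equation*}
which rearranges to
\begin{equation*}
    (1-q^{-10})^{-k}\leq 3s.
\end{equation*}
Taking logarithms yields
\begin{equation*}
    k\,\log\frac{1}{1-q^{-10}}\leq\log(3s).
\end{equation*}
The bound itself is immediate; what remains is to convert the left-hand side into the form $k/q^{10}$.

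For that conversion we use the elementary inequality $-\log(1-x)\geq x$ for $x\in[0,1)$, applied with $x=q^{-10}$. This is valid since $q\geq2$ gives $x\leq 2^{-10}<1$. It gives $k\,q^{-10}\leq\log(3s)$, equivalently $k\leq q^{10}\log(3s)$. Since $n_k=20k-5$, we have $k\geq n_k/20$, so $q^{10}\log(3s)\geq n_k/20=\Omega(n_k)$. The constant here is absolute and independent of $q$ and $s$, which is the uniformity the asymptotic statement requires. For the virtual-node pair, \cref{cor:cfi-trial-lower-draft} gives the same inequality in $k$, and $\widehat n_k=20k-4\leq 20k$, so the identical computation yields $q^{10}\log(3s)\geq \widehat n_k/20$.

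There is no substantial obstacle. The only points to check are that the direction of the logarithm inequality gives a lower bound on $q^{10}\log(3s)$ rather than an upper bound, and that the $\Omega$ constant does not hide a dependence on $q$; the explicit constant $1/20$ settles both. We also note that the bound is meaningful, not vacuous, for every $s\geq1$, because $\log(3s)\geq\log3>0$.
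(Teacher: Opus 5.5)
Your proof is correct and takes the same approach as the paper: apply the trial lower bound of \cref{cor:cfi-trial-lower-draft} with $\alpha=1/3$, take logarithms, use $-\log(1-x)\geq x$ to get $kq^{-10}\leq\log(3s)$, and convert via $n_k=20k-5$ (respectively $\widehat n_k=20k-4$). Making the constant $1/20$ explicit is a small improvement over the paper's write-up, since it shows directly that the $\Omega$ does not depend on $q$ or $s$.
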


\begin{proof}
By \cref{cor:cfi-trial-lower-draft} with $\alpha=1/3$,
\begin{equation}
        \log(3s)\geq-k\log(1-q^{-10})\geq kq^{-10}.
\end{equation}
Multiplying by $q^{10}$ and using $n_k=20k-5$ proves the claim.
For the virtual-node pair, the same trial lower bound and
$\widehat n_k=20k-4$ give the stated conclusion.
\end{proof}

The number of colors enters this bound polynomially, whereas the number of
trials enters only logarithmically. Thus, compensating for a smaller color
palette can require exponentially more trials.

Conversely, one trial per graph with $q=3n_k^2$ colors achieves success
probability at least $2/3$. By the union bound, both colorings are injective
with probability at least $1-2\binom{n_k}{2}/q\geq2/3$. On this event,
one WL refinement determines each graph up to isomorphism. Define the test to return zero if either coloring
is not injective and otherwise compare the reconstructed graphs. This test
satisfies the required guarantee for every $A,B\in\{G_{0,k},G_{1,k}\}$.
Thus polynomially many colors suffice in a
single trial, whereas any fixed number of colors requires exponentially
many trials by \cref{cor:cfi-trial-lower-draft}.
For the virtual-node pair, the same injective-coloring test uses
$q=3\widehat n_k^2$ colors. Therefore this exponential separation holds
both on graphs of maximum degree eight and, separately, on graphs of
diameter two.

\section{Separation from Uncolored 1-WL}

Now we show that random colors strictly increase expressive power over
standard uncolored $1$-WL.

\begin{proposition}[Random colors detect three-step returns]
\label{prop:positive-separation}
For arbitrarily large $n$, there are $n$-vertex graphs $G_n,H_n$
that are $1$-WL indistinguishable, but are separated by
a depth-$3$, uniformly Lipschitz random-color MPNN $F$ using two
initial colors, with $\gap_F(G_n,H_n)=1/4$.
\end{proposition}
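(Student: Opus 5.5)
The plan is to take $G_n$ to be the disjoint union of $n/3$ triangles and $H_n$ the disjoint union of $n/6$ copies of $C_6$, for $n\in 6\mathbb N$. Both graphs are $2$-regular on $n$ vertices, so uncolored $1$-WL assigns every vertex the same color in each and cannot tell them apart. The idea is to build an MPNN that compares the colors seen at distance one with those seen at distance two. In a triangle these two sets coincide, because every walk of length two either returns to the start or lands on another neighbor. In $C_6$ the distance-two vertices are new vertices whose colors are independent of the neighbors' colors.

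I would give each vertex a scalar color $x_v\in\{0,1\}$, drawn uniformly at random. We use mean aggregation; since every vertex has degree $2$, this is just half the neighbor sum. The network is built as follows.
\begin{enumerate}[label=(\roman*)]
\item The first layer stores $x_v$ and $m_1(v)=s_v/2$, where $s_v=\sum_{u\sim v}x_u$. Both quantities are nonnegative, so ReLU leaves them unchanged.
\item In the second layer, the mean over neighbors of the second channel equals $t_v/4$, where $t_v=\sum_{u\sim v}s_u=2x_v+d_v$. In a triangle $d_v=s_v$. In $C_6$, $d_v$ is the color sum of the two vertices at distance two, which are distinct from both neighbors.
\item Using the self channel, which carries $x_v$ and $s_v/2$, the second-layer update outputs the two coordinates $\relu(s_v-d_v)$ and $\relu(d_v-s_v)$. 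These are linear in the available quantities, via $d_v=4\cdot(t_v/4)-2x_v$, followed by ReLU.
\item The third layer adds the two coordinates to form $|s_v-d_v|$, and we apply mean pooling with a linear readout of weight $1/3$.
\end{enumerate}
By \cref{lem:standard-mpnn-lipschitzness}, this fixed-parameter mean-aggregation ReLU MPNN is uniformly Lipschitz.

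The remaining step is computing the two expectations. On $G_n$ we have $|s_v-d_v|=0$ identically, so $\overline F(G_n)=0$. On $H_n$, the quantities $s_v$ and $d_v$ are independent $\mathrm{Bin}(2,1/2)$ variables, since they depend on disjoint pairs of vertices. The pairs $(s_v,d_v)\in\{(0,1),(1,0),(1,2),(2,1)\}$ each have probability $1/8$ and give $|s_v-d_v|=1$. The pairs $(0,2)$ and $(2,0)$ each have probability $1/16$ and give $|s_v-d_v|=2$. Hence $\E|s_v-d_v|=1/2+1/4=3/4$, and by linearity of mean pooling $\overline F(H_n)=\tfrac13\cdot\tfrac34=1/4$. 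This gives $\gap_F(G_n,H_n)=1/4$.

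I do not expect a real obstacle. The only points needing care are the exact bookkeeping of the normalization constants when writing $d_v$ as a linear combination of the layer-two inputs, and checking that every intermediate coordinate is either nonnegative or deliberately passed through ReLU. A depth-2 construction already suffices, and an identity-like third layer (adding the two ReLU coordinates) brings the depth to $3$ as stated.
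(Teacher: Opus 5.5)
Your proof is correct. It uses the same graph pair ($n/3$ triangles versus $n/6$ copies of $C_6$) and the same appeal to \cref{lem:standard-mpnn-lipschitzness} as the paper, but a different test functional and a different way of computing the expectation.

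The paper takes colors in $\{-1,1\}$ and the network $F(G,\vec{x})=\frac1{|V(G)|}\sum_v|x_v+(\vec{M}_G^3\vec{x})_v|$, where $\vec{M}_G$ is neighbor averaging. The identity $|a+b|=1+ab$ for $a\in\{-1,1\}$, $b\in[-1,1]$ removes the absolute value, and independence with zero mean kills all cross terms. This gives the closed form $\overline F(G)=1+\frac1{|V(G)|}\sum_v r_G(v)$, where $r_G(v)$ is the three-step random-walk return probability. That probability is $1/4$ on triangles and $0$ on the bipartite $C_6$.

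You instead use $\{0,1\}$ colors and exploit $2$-regularity to recover the two-hop color sum $d_v=t_v-2x_v$ from the second aggregation. You then compute $\E|s_v-d_v|=3/4$ by enumerating two independent $\mathrm{Bin}(2,1/2)$ variables, and rescale by $1/3$ to land exactly on $1/4$. Your route is more hands-on and needs only two aggregation rounds, so your third layer is padding.

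The paper's second-moment trick gives an interpretable invariant on every graph, not just these $2$-regular ones. It also extends directly to $k$-step return probabilities via $\vec{M}_G^k$, which is what the proposition's title refers to. Your output has no comparably clean meaning on general graphs, but the statement does not require one.
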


\begin{proof}
For each $\ell\geq1$, set $n=6\ell$. Let $G_n$ and $H_n$ be disjoint unions
of $\ell$ copies of $2C_3$ and $C_6$, respectively. Both graphs are $2$-regular
on $n$ vertices, so uncolored $1$-WL yields one color class of size $n$ at
every round on both graphs. Thus, the graphs are $1$-WL indistinguishable.

Define the normalized neighbor-averaging operator $\vec{M}_G$ by
\begin{equation}
    (\vec{M}_G\vec{x})_v\coloneq\frac1{\deg_G(v)}\sum_{u\sim v}x_u
    \qquad\text{if }\deg_G(v)>0,
\end{equation}
and set $(\vec{M}_G\vec{x})_v=0$ at isolated vertices. Write $\vec{M}_G^3$ for three successive
applications of this operator. For initial features $\vec{x}\in[-1,1]^{V(G)}$, set
\begin{equation}
    F(G,\vec{x})\coloneq
    \frac{1}{|V(G)|}\sum_{v\in V(G)}
    \left|x_v+(\vec{M}_G^3\vec{x})_v\right|.
\end{equation}
The MPNN stores signed values through their positive and negative parts
and performs normalized neighbor averaging. In the third layer, it computes the absolute value using
$|z|=\operatorname{ReLU}(z)+\operatorname{ReLU}(-z)$. Normalized averaging is nonexpansive in the
max-over-nodes norm, and the final average does not introduce any dependence
on $|V(G)|$. Thus $F$ is uniformly $2$-Lipschitz.

Let each vertex receive an independent uniform color
$\xi_v\in\{-1,1\}$. The identity $|a+b|=1+ab$
for $a\in\{-1,1\}$ and $b\in[-1,1]$ gives
\begin{equation}
    F(G,\vec{\xi}^G)=1+
    \frac1{|V(G)|}\sum_{v\in V(G)}
    \xi_v(\vec{M}_G^3\vec{\xi}^G)_v.
\end{equation}
Let $r_G(v)$ be the probability that a length-$3$ random walk from $v$
returns to $v$. The centered colors are independent with zero mean, so all terms with
$u\neq v$ are zero in expectation, while $\E\xi_v^2=1$.
Therefore,
\begin{equation}
    \overline F(G)=1+
    \frac{1}{|V(G)|}\sum_{v\in V(G)}r_G(v).
\end{equation}
On $2C_3$, this probability is $1/4$ at every vertex,
as the walk can go around its triangle in one of the two directions. On
$C_6$, it is $0$, since $C_6$ is bipartite and a walk of odd length cannot
return to its starting vertex. Thus $\overline F(G_n)=5/4$ and
$\overline F(H_n)=1$, giving $\gap_F(G_n,H_n)=1/4$.
\end{proof}

By \cref{prop:finite-trial-detection-draft}, this constant population
gap yields statistical distinguishability using a constant number of trials.

\section{Incomparability with Higher-Order WL}
\label{app:kwl-incomparability}

We now compare random-color MPNNs with higher-order WL, using statistical
distinguishability as defined in \cref{def:rcwl-draft}.
Under the convention stated in the main text, $k$-WL colors ordered
$k$-tuples of vertices. Initially, a tuple's color records equality and
adjacency among its entries. For an update, each vertex $w$ gives a list
of $k$ colors: those of the tuples obtained by replacing the first entry
by $w$, then the second, and so on. The new color records the old color
and the multiset of these lists over all vertices $w$.

\begin{theorem}[Incomparability with higher-order WL]
\label{thm:kwl-incomparability}
For every $k\geq2$:
\begin{enumerate}
    \item some $k$-WL-indistinguishable pair is distinguished with success
    probability greater than $2/3$ by a uniformly $1$-Lipschitz two-color
    MPNN with order-normalized sum aggregation using one random-color
    trial per graph;
    \item for every fixed $q,s\geq1$, some pair distinguished by $2$-WL,
    and hence by $k$-WL, is not statistically
    $(q,s,1/3)$-distinguishable by random-color WL, and hence cannot be distinguished with the
    same guarantee by any $q$-color MPNN using $s$ trials.
\end{enumerate}

\end{theorem}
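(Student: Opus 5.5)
For part~(ii) we reuse the CFI family of \cref{sec:cfi-draft}, and for part~(i) we attach identifier paths to a $k$-WL-indistinguishable pair.

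\emph{Part (ii).} Fix $q,s\geq1$ and take the pair $(G_{0,k},G_{1,k})$. Put $\alpha=1/3$ in \cref{cor:cfi-trial-lower-draft}: for every $k$ with $s<\tfrac13(1-q^{-10})^{-k}$, the pair is not statistically $(q,s,1/3)$-distinguishable by random-color WL. Since random-color WL tests subsume all tests on $q$-color MPNN outputs, no $q$-color MPNN with $s$ trials distinguishes it either.

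It remains to show that $2$-WL distinguishes $G_{0,k}$ from $G_{1,k}$ for every $k$, and I expect this to be the delicate half. I would argue with the bijective $3$-pebble game, which is equivalent to $2$-FWL. Spoiler first pins the marker and the root pair $r^0,r^1$. Duplicator's bijection must then carry a ``twist'' (a swapped pair) down one branch of the gadget tree. Spoiler follows the twist gadget by gadget, using two pebbles to hold the current pair and the third to step into the child gadget. This works because the parity constraint of \cref{lem:extension_property_xor_gate-draft} is local, so three pebbles detect an odd number of swaps at a gadget. At a leaf the twist must exchange the $2C_3$ and $C_6$ blocks. Spoiler then uses the three pebbles to exhibit a triangle on one side and not the other, as $2$-FWL counts triangles. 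Alternatively, I would run the $2$-FWL refinement directly: show that the tuple colors of $(x_i^0,x_i^0)$ and $(x_i^1,x_i^1)$ differ, and that the difference propagates to the root pair by induction on height.

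\emph{Part (i).} Take any $k$-WL-indistinguishable non-isomorphic pair $G,H$ on $n$ vertices, such as a standard CFI pair over a base graph of large treewidth. Form $G',H'$ by attaching to every vertex a fresh pendant path with $L=\lceil\log_2(10n^2)\rceil$ vertices. I would prove that $G',H'$ remain $k$-WL-indistinguishable with the bijective $(k+1)$-pebble game. Duplicator extends her winning bijection for $G,H$ by mapping each path to the path of the image vertex, position by position. A pebble on a path vertex is then simulated by a pebble on its attachment vertex, together with the distance along the path. Making this simulation airtight is the main technical point of part~(i).

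Next, colors are read from the attachment vertex along each path, giving every original vertex a binary string of length $L$. A union bound shows that all strings are distinct with probability at least $1-\binom n2 2^{-L}\geq 0.95$. On this event, $L+1$ WL rounds let each original vertex learn its string, i.e.\ its identifier. One further round gives it the multiset of neighbor identifiers. The graph-level multiset of these patterns then determines the graph up to isomorphism, and in particular tells $G'$ from $H'$.

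The pair is fixed, so depth and width may depend on $n$. The set of colored graphs involved is finite, and order-normalized sum aggregation is injective on multisets of bounded size over a finite set, because the normalization is a fixed constant for fixed $n$. Hence there is a ReLU MPNN with order-normalized sum whose scalar output is $0$ on a non-injective outcome, $1$ if the reconstruction is $G'$, and $2$ if it is $H'$. Rescaling the readout to uniform Lipschitz constant $1$ (\cref{lem:standard-mpnn-lipschitzness}) only rescales these three output values, so they remain distinct.

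The test uses one trial per graph. It outputs ``different'' exactly when both outputs are nonzero and unequal, and ``same'' otherwise. If $A=B$, the test never errs. If $A\neq B$, it errs only when some identifier fails, which happens with probability at most $0.1<1/3$. So the success probability exceeds $2/3$ in all four cases.
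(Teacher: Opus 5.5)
Your proposal is correct. Part (ii) follows the paper: the CFI pair, \cref{cor:cfi-trial-lower-draft} at $\alpha=1/3$, and $2$-WL separation by triangle counts that color refinement propagates up the parity tree. Part (i) reaches the same conclusion by a partly different route.

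\emph{$k$-WL invariance.} You simulate the bijective $(k+1)$-pebble game through the projection $v_j\mapsto(v,j)$. The paper instead inducts on $k$-FWL tuple colors, showing that the color of $((u_1)_{j_1},\ldots,(u_k)_{j_k})$ is determined by the positions and the original color of $(u_1,\ldots,u_k)$. The two are equivalent, and your ghost-game version is short and airtight. Adjacency in the new graph depends only on positions and on equality and adjacency of attachment vertices, and several pebbles on one path simply project to one vertex.

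\emph{The network.} You appeal to exact $1$-WL simulation on the finite set of colored inputs, followed by a lookup readout. The paper builds the network explicitly: degrees from $N$ times normalized sums, tip propagation for path positions, OR-propagated color matrices, and an identifier-indexed adjacency matrix. The paper's degree-$3$ base graphs make original vertices recognizable by degree and give non-isomorphism of the attached graphs directly. Your generic route needs only that $G,H$ have no isolated vertices, which holds for connected CFI pairs. Then original vertices and path directions are recoverable from distances to tips, and non-isomorphism follows from the distinguisher itself. Your three-valued output also gives a one-sided test with success at least $0.9$, instead of the paper's $13/18$.

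Three small repairs are needed.
\begin{enumerate}
  \item \cref{cor:cfi-trial-lower-draft} is stated for $q\geq2$, so handle $q=1$ separately. There every block pair is erased and the observation laws coincide.
  \item The triangle argument must go through block vertices. The endpoints $x_i^0$ and $x_i^1$ each lie in six triangles, and a $C_6$ block vertex together with $x_i^1$ does form triangles. The separating invariant is that a $2C_3$ block vertex lies in three triangles and a $C_6$ block vertex in two.
  \item Your twist-chasing pebble sketch assumes Duplicator's bijections respect the gadget tree. It does not address bijections that permute sibling subtrees or send child-side internal vertices to parent-side ones. So write out the refinement argument you list as the alternative, as the paper does.
\end{enumerate}
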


\begin{proof}
\noindent\textbf{Random colors beat $k$-WL.}
Choose connected, non-isomorphic CFI graphs $C^0,C^1$
on $n$ vertices, each of degree three, that are $k$-WL equivalent
\citep[Theorem~5.2 and Corollary~6.5]{cai1992optimal}.
Forget their colors; this preserves $k$-WL equivalence and non-isomorphism
\citep[Corollary~7.11]{chen2025uncolored}.
Attach a path $v=v_0,v_1,\ldots,v_t$ to each original vertex $v$, adding
$t=\lceil\log_2(3n^2)\rceil$ new vertices, and denote the resulting graphs by
$\widehat C^0,\widehat C^1$, of order $N=n(t+1)$.
Each original vertex now has degree four, while the added vertices have
degree one or two. Any isomorphism between the new graphs would therefore
map original vertices to original vertices, giving an isomorphism between
$C^0$ and $C^1$. Thus the new graphs remain non-isomorphic.

We verify directly that attaching these paths also preserves $k$-WL
equivalence. Consider a tuple
$((u_1)_{j_1},\ldots,(u_k)_{j_k})$, where $u_i$ is an original vertex
and $j_i\in\{0,\ldots,t\}$ is a position on its attached path.
We claim that after $r$ iterations its color is determined by the path
positions $(j_1,\ldots,j_k)$ and the color of $(u_1,\ldots,u_k)$ after
$r$ iterations on the original graph.
Initially, the original tuple color records which $u_i$ coincide and
which are adjacent. Together with the path positions, this determines
equality and adjacency in the new tuple: edges either join consecutive
positions on the same path or adjacent original vertices at position zero.

For the induction step, every replacement vertex has the form $w_\ell$.
Consider the $k$ tuples obtained by replacing each entry in turn with
$w_\ell$. By induction, their list of colors is determined by the
corresponding list using $w$ in the original graph, together with the
path positions. The next original WL color records its current color
and the multiset of these lists over all $w$. Repeating this for each
$\ell\in\{0,\ldots,t\}$ accounts for every replacement in the new graph,
proving the claim. Since the original graphs have identical tuple-color
counts at every iteration, the new graphs do too, for each choice of
path positions. Hence $\widehat C^0$ and $\widehat C^1$ remain $k$-WL equivalent.

Independent uniform binary features give the original vertices independent
uniform identifiers $\vec z_v=(\xi_{v_1},\ldots,\xi_{v_t})$, with
\begin{equation}
 \Pr\{\text{an identifier collision}\}
 \leq\binom n2 2^{-t}\leq\frac16.
\end{equation}
We compute these identifiers directly with an order-normalized sum MPNN.
Aggregating a constant-one channel at $u$ gives $\deg(u)/N$ on these
graphs; multiplying by the fixed constant $N$ recovers the degree.
We mark the degree-one path tips and propagate this indicator to
neighbors for $t$ rounds, retaining previously reached vertices.
The vertex $v_j$ is first reached by this propagated indicator at round
$t-j$, so the first arrival time identifies its path position.
In particular, the original vertices are exactly those first reached
at round $t$, so they can identify themselves.
Each vertex $u$ then stores a binary matrix $B_u\in\{0,1\}^{2\times t}$,
whose $(i,j)$ entry indicates that color $i\in\{0,1\}$ has been
observed at path position $j$.
Initialize all matrices to zero, then at each added vertex in position
$j$ with color $i$, set entry $(i,j)$ to one.
For $t-1$ rounds, each added vertex combines its own and its neighbors'
matrices entrywise by logical OR (computed by summing and clipping at
one), while original vertices keep their matrices zero.
Every added vertex then learns its path's complete matrix.
One further aggregation delivers this matrix to the original vertex
through its unique path neighbor. Its other neighbors are original
vertices and contribute zero. The matrix thus determines $\vec z_v$.

Next, original vertices exchange one-hot encodings of their identifiers,
with added vertices sending zero. Each original vertex encodes its
incident original edges in a matrix whose rows and columns are indexed
by the $2^t$ possible identifiers. In the row indexed by its own
identifier, it places a one in each column corresponding to a neighbor's
identifier; all other entries are zero. Added vertices contribute zero
matrices. When identifiers are distinct, mean pooling followed by
multiplication by $N$ gives the original graph's adjacency matrix in
these identifier coordinates. Pooling the one-hot identifiers alongside
the matrices also gives the number of vertices with each identifier.
On the finite set of possible pooled inputs, a ReLU MLP can represent
any function exactly. We choose it to accept precisely when $n$ distinct
identifiers occur, and the graph encoded on those identifiers is
isomorphic to $C^0$. A collision leaves fewer than $n$ occupied identifiers
and is rejected. The resulting Boolean output satisfies
$F(\widehat C^0)=1$ precisely on collision-free inputs, and
$F(\widehat C^1)=0$ always.

All the Boolean operations above have exact ReLU implementations.
The required neighbor sums are obtained by multiplying the
order-normalized sums by the fixed constant $N$.

By \cref{lem:standard-mpnn-lipschitzness} and Lipschitzness of the fixed
ReLU readout, this network has a finite size-uniform Lipschitz bound $L$.
Divide its output by $\max\{1,L\}$
to obtain a uniformly $1$-Lipschitz network, without changing equality
comparisons between outputs. The architecture depends on the chosen pair;
this is not a single size-uniform recognition algorithm.
Writing $p=\Pr\{F(\widehat C^0)=1\}\geq5/6$ for the unscaled output,
declare the two inputs the same exactly when their independent
single-trial outputs agree. The worst-case success probability is
$\min\{p^2+(1-p)^2,1,p\}\geq13/18>2/3$.

\medskip\noindent\textbf{$2$-WL beats every fixed random-color budget.}
Use $G_{0,m},G_{1,m}$ from \cref{sec:cfi-draft}, with $m=2^h$ input
pairs and a perfect binary parity tree. The marker is the unique
degree-one vertex, so $2$-WL recognizes it. Let $Q_m$ be the graph
obtained by deleting the marker. The pair colors reveal whether a
replacement vertex is the marker, so $2$-WL can recover its refinement
on $Q_m$ by omitting this vertex from the updates.

$2$-WL distinguishes the attachments by their triangle counts: in $Q_m$,
each vertex of a $2C_3$ attachment lies
in three triangles, whereas each vertex of a $C_6$ attachment lies in
two. This separates the input endpoints $x_i^0,x_i^1$ consistently in
both graphs. Once a gadget's inputs are distinguished, its internal
vertices count their superscript-$0$ input neighbors, so its output endpoints see the
distinct count multisets $\{2,0\}$ and $\{1,1\}$.
Since $2$-WL refines ordinary color refinement, these
distinctions propagate through successive parity gadgets up the tree,
distinguishing the root endpoints $r^0,r^1$. The marker's attachment to
$r^0$ or $r^1$ then distinguishes the two graphs.

For fixed $q\geq2$ and $s\geq1$, choose $m=2^h$ large enough that
$s<\tfrac13(1-q^{-10})^{-m}$. Then
\cref{cor:cfi-trial-lower-draft} rules out statistical
$(q,s,1/3)$-distinguishability. For $q=1$, every block pair is erased,
so \cref{lem:cfi-block-erasure-draft,lem:cfi-root-hiding-draft} give
identical $1$-WL observations for every trial count.
The same indistinguishability bound extends to MPNNs.
\end{proof}

\section{Subgraph Isomorphism for Trees}
\label{sec:tree-subgraph}

We now adapt the color-coding algorithm of~\citet{alon1995color} to a random-color MPNN. Specifically, we show that for any rooted tree $T$ there exists a random-color MPNN that can decide at each vertex $v$ of a given graph $G$ if a copy of $T$ is rooted at $v$, using a number of layers, hidden dimension size, and number of trials independent of the size of $G$.
In this section, we use the MPNN architecture with coordinatewise max
aggregation and binary vertex outputs.

First we show that even graph-level detection is impossible for standard MPNNs.
\begin{proposition}
\label{prop:uncolored-tree-obstruction}
No MPNN with identical initial vertex features can decide whether an arbitrary
graph contains a simple path of length three.
\end{proposition}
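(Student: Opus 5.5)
The plan is to exhibit the $1$-WL-indistinguishable pair $2C_3$ and $C_6$ already used in \cref{prop:positive-separation}, and to show that one contains the target path while the other does not. Here a ``simple path of length three'' means a path with three edges, that is, four distinct vertices $v_0v_1v_2v_3$. This matches the four-vertex path mentioned after \cref{thm:tree-subgraph-main}.

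The first step is the combinatorial check. In $C_6$, any four consecutive vertices of the cycle form such a path, so $C_6$ contains one. In $2C_3$, every connected component has only three vertices. A simple path with four distinct vertices is connected, so it would have to lie inside a single component, which is impossible. Hence $2C_3$ contains no such path.

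The second step is indistinguishability. Both graphs are $2$-regular on six vertices. With identical initial features, every vertex in either graph has the same $1$-WL pattern at every round, by induction on the round. At round zero all colors are equal. If all vertices share a pattern at round $t$, then each vertex sees its own pattern together with a multiset of two copies of that same pattern, so all patterns again coincide at round $t+1$. Therefore the graph-level multisets of $1$-WL patterns are identical. By the standard $1$-WL upper bound \citep{xu2019how}, recalled in \cref{subsec:cfi-random-color-wl-draft}, the observation $\Pi(G,\chi)$ determines the MPNN output.

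The same conclusion can also be verified directly for the architecture at hand. One inducts on layers to show that all node representations are equal within and across the two graphs. This works because the neighborhoods all have size two, so mean, max, min and sum aggregation give equal messages. The order-normalized sum uses $|V(G)|=6$ in both graphs, so it also agrees. The readout is permutation invariant and receives identical multisets, so it yields equal outputs. Consequently any such MPNN assigns the same value to $2C_3$ and $C_6$, and cannot output True on one and False on the other. No single network therefore decides the property on all graphs.

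The only point needing care is the interpretation of ``path of length three'' and of the aggregations that involve graph size. Both are handled by choosing two graphs of equal order and equal degree.
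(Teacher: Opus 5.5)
Your proposal is correct and takes the same approach as the paper: it uses the pair $C_6$ and $2C_3$, notes that both are $2$-regular on six vertices and hence $1$-WL indistinguishable, and observes that only $C_6$ contains a simple path with three edges. Your extra details, namely the component-size argument for $2C_3$ and the layerwise check for each aggregation including the order-normalized sum, make explicit what the paper's two-line proof leaves implicit.
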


\begin{proof}
The graphs $C_6$ and $2C_3$ are both $2$-regular on six vertices, so they are
$1$-WL indistinguishable and yield identical outputs under any such MPNN.
However, $C_6$ contains a simple path of length three, whereas $2C_3$ does not.
\end{proof}

Let $G=(V,E,\chi)$ be a colored graph with $\chi:V\to[k]$. A set of vertices $U\subseteq V$ is \emph{colorful} if all vertices in $U$ have different colors. For a graph $H$, a \emph{colorful subgraph embedding} is an injective edge-preserving map $\phi \colon V(H)\to V(G)$ such that $\im\phi$ is colorful. The embedded subgraph need not be an induced subgraph.
\par
Let $T=(V(T),E(T))$ be a rooted tree. For a vertex $t\in V(T)$, let $T_t$ denote the subtree of $T$ with root $t$.

\begin{lemma}[Colorful tree subgraph isomorphism]
\label{lem:colorful-tree-mpnn}
    Let $T$ be a rooted tree with $k\geq1$ vertices. There is an MPNN that can determine, for each vertex $v$ of a colored graph $G=(V,E,\chi)$ with $\chi \colon V\to[k]$, whether there is a colorful subgraph embedding of $T$ mapping its root to $v$, with hidden dimension $\exp(O(k))$ and $O(k)$ message-passing layers.
\end{lemma}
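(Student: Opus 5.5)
We implement the Alon--Yuster--Zwick dynamic program for colorful rooted trees. For each tree vertex $t\in V(T)$ and each color set $C\subseteq[k]$ with $|C|=|V(T_t)|$, define the Boolean indicator $b_{t,C}(v)$. It equals one exactly when there is a colorful embedding of $T_t$ sending $t$ to $v$ whose image uses exactly the colors $C$. Each vertex's hidden state will be the vector of all such indicators, which has at most $k\cdot 2^k=\exp(O(k))$ coordinates, together with a one-hot copy of $\chi(v)$. The answer at $v$ is $\bigvee_{C}b_{\mathrm{root},C}(v)$.

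\textbf{Leaves.} For a leaf $t$ we have $b_{t,\{c\}}(v)=\mathbf 1\{\chi(v)=c\}$, which is read off the one-hot input.

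\textbf{Internal vertices.} Let $t$ have children $t_1,\dots,t_m$. The recursion is
\[
b_{t,C}(v)=\bigvee \Bigl[\chi(v)\in C\wedge \textstyle\bigwedge_{j=1}^m \bigvee_{u\sim v} b_{t_j,C_j}(u)\Bigr],
\]
where the outer disjunction ranges over partitions $C\setminus\{\chi(v)\}=C_1\sqcup\dots\sqcup C_m$ with $|C_j|=|V(T_{t_j})|$. Disjointness of the $C_j$ makes the union of the child images colorful, hence injective. The images are disjoint from $v$ because $\chi(v)\notin C_j$. Conversely, any colorful embedding restricts to the children's subtrees and yields such a partition. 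The distinct children $t_j$ may map to the same neighbor $u$ only if their images coincide, which disjoint nonempty color sets forbid. Distinct $j$ may use different neighbors, and this is exactly what the independent inner disjunctions allow.

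\textbf{MPNN implementation.} Coordinatewise max aggregation of $\{0,1\}$-valued indicators over neighbors computes $\bigvee_{u\sim v}b_{t_j,C_j}(u)$ exactly. We take an empty maximum to be $0$, consistent with the convention in \cref{lem:standard-mpnn-lipschitzness}. The remaining conjunctions and disjunctions are Boolean formulas over $\exp(O(k))$ bits. Conjunction is $\relu(\sum_i a_i-(r-1))$ and disjunction is $\min(1,\sum a_i)=\sum a_i-\relu(\sum a_i-1)$. Each can be realized with $O(1)$ ReLU layers in which $\vec W_{\mathrm{agg}}=0$, copying the other coordinates forward through the identity on nonnegative values. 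We process tree vertices by height $h=0,\dots,\operatorname{height}(T)\le k-1$. Each stage uses one aggregation layer followed by a constant number of nodewise layers computing all $b_{t,C}$ for $t$ at height $h$. This gives $O(k)$ layers. The width per stage is bounded by the number of triples $(t,C,\text{partition})$. That count is at most $k\cdot 2^k\cdot k^k$ naively, which is not $\exp(O(k))$ unless handled carefully.

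\textbf{Main obstacle: partition blowup.} The step I expect to be the main obstacle is keeping the width at $\exp(O(k))$, since enumerating all ordered partitions into $m$ children may cost $m^{|C|}$. I would fix this as in Alon--Yuster--Zwick by binarizing the children. Order the children $t_1,\dots,t_m$ of $t$, and let $T_t^{(j)}$ denote $t$ together with the subtrees of its first $j$ children. Maintain the indicators $b^{(j)}_{t,C}(v)$, updated by
\[
b^{(j)}_{t,C}(v)=\bigvee_{C'\sqcup C''=C}b^{(j-1)}_{t,C'}(v)\wedge \textstyle\bigvee_{u\sim v}b_{t_j,C''}(u).
\]
Each step enumerates only splits of $C$ into two parts, at most $4^k$ per $t$. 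Intermediate states number $O(k\,2^k)$ in total, so the width is $\exp(O(k))$.

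However, the aggregated quantities $\bigvee_u b_{t_j,C''}(u)$ are all available after the layer aggregating the children's level. Hence the sequential combination over $j$ is purely nodewise. It still adds $O(m)$ Boolean layers, and summing over all tree vertices gives $O(k)$ extra layers overall. With the constants absorbed, the total depth remains $O(k)$ and the hidden dimension is $\exp(O(k))$.
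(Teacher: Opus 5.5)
Your proposal is correct and, after your binarization step, it is essentially the paper's proof. The paper runs the same child-by-child recursion $b^{(j)}_{t,C}(v)=\bigvee_{C'\sqcup C''=C} b^{(j-1)}_{t,C'}(v)\wedge\bigvee_{u\sim v} b_{t_j,C''}(u)$, phrased as adding tree edges in post-order, with one max-aggregation layer plus a constant-depth ReLU MLP per edge, which gives $k-1$ iterations and width $O(k4^k)$. The only difference is scheduling: you batch aggregations by tree height and then do the merges nodewise, which changes nothing essential.
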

\begin{proof}
    We use a dynamic program to find colorful subgraph embeddings of $T$. We first describe the dynamic program and prove its correctness. We then show that this dynamic program can be implemented on an MPNN.
    \par
    Let $r$ be the root of $T$.
    The dynamic program iteratively adds edges to the graph using a post-order traversal from the root $r$; whenever it visits a nonroot node, it adds the edge connecting that node to its parent. At iteration $i\in\{0,\ldots,k-1\}$, let $T^{(i)}$ denote the spanning subgraph of $T$ with all edges added so far, and let $T^{(i)}_t$ denote the subtree of $T^{(i)}$ at node $t\in V(T)$, following edges away from $r$.
    We use a dynamic programming table $D$ with entries corresponding to $V(G)\times V(T)\times\{0,\ldots,k-1\}\times\powerset([k])$, where $\powerset([k])$ denotes the power set of $[k]$. The dynamic programming table will store $\True$/$\False$ corresponding to
    \begin{equation}
    \begin{gathered}
        D[v,\,t,\,i,\,S] =\\
        \begin{aligned}
        &\text{There is a colorful subgraph embedding $\phi$}\\
        &\text{from the subtree $T^{(i)}_t$ to $G$}\\
        &\text{such that $\phi(t)=v$ and $\chi(\im\phi) = S$}.
        \end{aligned}
    \end{gathered}
    \end{equation}
    Clearly, there is a colorful embedding from $T$ to $G$ mapping $r$ to $v$ if and only if $D[v,r,k-1,[k]]$ is $\True$.
    \par
    The base case of the dynamic program consists of singleton trees. For all vertices $v\in V(G)$ and nodes $t\in V(T)$, we set the entry $D[v,t,0,\{\chi(v)\}]=\True$ as $t$ is the only node in the subtree $T^{(0)}_t$. All other entries at iteration zero are $\False$.
    \par
    Let $\{a,p\}$ be the edge added at iteration $i$, where $a$ is the child and $p$ is the parent. We only need to update the dynamic programming table entries corresponding to $p$, since the subtrees at all other nodes remain the same.
    \begin{equation}
    \begin{gathered}
    D[v,\,p,\,i,\,S] =\\
    \begin{cases}
        \True & \begin{aligned}[t]
            &\text{if }D[v,p,i-1,S']=\True\\
            &\text{and }D[u,a,i-1,S'']=\True \\
            &\text{for some neighbor }u\text{ of }v \\
            &\text{and some decomposition}\\
            &S=S'\sqcup S''.
        \end{aligned} \\
        \False & \text{otherwise,}
    \end{cases}
    \end{gathered}
    \end{equation}
    where $\sqcup$ denotes disjoint union. For $t\neq p$, set $D[v,t,i,S]=D[v,t,i-1,S]$.
    As a sketch of why this is correct, observe that any colorful embedding $\phi$ of $T^{(i)}_p$ with $\phi(p)=v$ will have to map $a$ to a neighbor $u$ of $v$, so it can be decomposed as embeddings of $T^{(i-1)}_p$ and $T^{(i-1)}_a$. If we can guarantee embeddings of these trees rooted at $v$ and $u$ exist, and their images have disjoint color sets, then this implies they can be combined to an embedding of $T^{(i)}_p$ rooted at $v$ (the images won't share nodes as they have different colors). Induction over the added edges proves correctness.
    \par
    We now argue that message passing can achieve this. We initialize node features as one-hot vectors in $\R^k$ corresponding to node colors $\chi(v)$. Our hidden vectors between iterations will be of size $k\times 2^k$; after iteration $ i$, the node feature $\vec{h}^{(i)}_v$ will correspond to the dynamic programming table entry $D[v,\cdot,i,\cdot]$. Accordingly, we index our node features by $\vec{h}_v^{(i)}[t,S]$ for $t\in V(T)$ and $S\subseteq[k]$. Our MPNN will then have $\vec{h}_v^{(i)}[t,S]=1$ if $D[v,t,i,S]=\True$ and $\vec{h}_v^{(i)}[t,S]=0$ otherwise. Here $i$ indexes dynamic-programming iterations, each implemented by a constant number of MPNN layers.
    \par
    Initially, we apply a linear encoding layer that sets $\vec{h}_v^{(0)}[t,\{\chi(v)\}]=1$ for all nodes $t\in V(T)$, with all other entries zero. Our message-passing updates will be of the form
    \begin{equation}
    \begin{aligned}
        \vec{m}_v^{(i)}&=\max_{u\in N_G(v)}\vec{h}_u^{(i-1)},\\
        \vec{h}_v^{(i)}&=f_i\left(\vec{h}_v^{(i-1)},\vec{m}_v^{(i)}\right),
    \end{aligned}
    \end{equation}
    where the maximum is coordinatewise, and $f_i$ is a ReLU MLP. At isolated vertices, set $\vec{m}_v^{(i)}=0$.
    \par
    We will repeatedly make use of the fact that a two-layer ReLU MLP can implement a threshold function $f(x)=\relu(x)-\relu(x-1)$ with the property that $f(x)=1$ for $x\geq 1$ and $f(x)=0$ for $x\leq 0$.
    \par
    Each iteration will then correspond to adding an edge to the tree. Max aggregation gives $\vec{m}_v^{(i)}[a,S'']=1$ iff $D[u,a,i-1,S'']=\True$ for some neighbor $u$ of $v$. For each $S\subseteq[k]$, we compute the update
    \begin{equation}
    \begin{multlined}
    f_i\left(\vec{h}_v^{(i-1)},\vec{m}_v^{(i)}\right)[p,S]\\
    =f\!\Biggl(\sum_{S=S'\sqcup S''}
        \relu\!\Bigl(\vec{h}_v^{(i-1)}[p,S']\\
        {}+\vec{m}_v^{(i)}[a,S'']-1\Bigr)\Biggr).
    \end{multlined}
    \end{equation}
    Thus, this entry is 1 iff there exists $S'\sqcup S''=S$ such that $D[v,p,i-1,S']$ and $D[u,a,i-1,S'']$ are true for some neighbor $u$ of $v$. This can be implemented by a three-layer ReLU MLP with hidden dimension $O(k4^k)$. The first layer takes the sum of $\vec{h}_v^{(i-1)}[p,S']$ and $\vec{m}_v^{(i)}[a,S'']$, subtracts 1, and applies ReLU, storing it in an intermediate feature $[p,S',S'']$. This feature is 1 iff both inputs are 1. The final two layers sum up over all choices of $S'\sqcup S''=S$ and take a threshold. These two layers set neighbor weights to zero, and copy all entries for $t\neq p$ unchanged through the self-update.
    \par
    After $k-1$ iterations, the vertex output $\vec{h}_v^{(k-1)}[r,[k]]$ is 1 iff a colorful copy of $T$ is rooted at $v$, and 0 otherwise. Including initialization, the construction uses $O(k)$ layers and hidden dimension $O(k4^k)=\exp(O(k))$, with parameters depending only on $T$.
\end{proof}

\begin{theorem}[Random-color MPNN for rooted tree detection]
\label{thm:random-color-tree-subiso}
Let $T$ be a rooted tree with $k\geq1$ vertices. There is a random-color MPNN using $k$
initial colors, hidden dimension $\exp(O(k))$, and $O(k)$ message-passing layers,
using max aggregation and binary vertex outputs, with the following guarantee.
For every graph $G$ and vertex $v\in V(G)$, one random-color
trial accepts at $v$ only if $G$ contains a copy of $T$ rooted at $v$. If such
a copy exists, then one trial accepts at $v$ with probability at least
\begin{equation}
        \frac{k!}{k^k}.
\end{equation}
Consequently, for any $\alpha\in(0,1)$, running
\begin{equation}
        s=\left\lceil \frac{k^k}{k!}\log(1/\alpha)\right\rceil
\end{equation}
independent trials and accepting at $v$ when its average binary output is positive
(equivalently, when any trial accepts at $v$) decides whether $G$
contains a copy of $T$ rooted at $v$ with false-positive probability $0$ and
false-negative probability at most $\alpha$ for each $v$.
\end{theorem}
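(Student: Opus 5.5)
We build the network directly from \cref{lem:colorful-tree-mpnn}. Take the MPNN given there, which has $O(k)$ max-aggregation layers and hidden dimension $\exp(O(k))$. Feed it one-hot encodings of i.i.d.\ uniform colors $\chi(v)\in[k]$ as initial features, and use the coordinate $\vec h_v^{(k-1)}[r,[k]]\in\{0,1\}$ as the binary vertex output. The lemma guarantees that this output equals $1$ exactly when some colorful subgraph embedding of $T$ sends the root $r$ to $v$. All parameters depend only on $T$, so depth, width and palette size are independent of $G$.

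\emph{No false positives.} A colorful embedding is in particular an injective, edge-preserving map $\phi\colon V(T)\to V(G)$ with $\phi(r)=v$. Its image is therefore a (not necessarily induced) copy of $T$ rooted at $v$. So whenever a trial accepts at $v$, such a copy exists, regardless of which coloring was drawn.

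\emph{Acceptance probability.} Suppose a copy exists, given by an embedding $\phi$. Its image consists of exactly $k$ vertices, and their colors are i.i.d.\ uniform on $[k]$. The image is colorful exactly when these colors form a bijection onto $[k]$, which happens with probability $k!/k^k$. On that event the lemma's output at $v$ is $1$. Hence a single trial accepts with probability at least $k!/k^k$; it can be larger if several copies exist.

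\emph{Amplification.} The average of the binary outputs is positive if and only if some trial accepts, so false positives remain impossible. The trials are independent, so the false-negative probability is at most
\[
(1-k!/k^k)^s\le \exp(-s\,k!/k^k)\le\alpha
\]
for $s=\lceil (k^k/k!)\log(1/\alpha)\rceil$, using $1-x\le e^{-x}$.

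I do not expect a real obstacle here, since the substantive work sits in \cref{lem:colorful-tree-mpnn}. The one point needing care is the interface with that lemma: its initial linear encoding must act on exactly the one-hot color features provided by the random-color initialization, and the readout must return the single table entry $[r,[k]]$ as a $\{0,1\}$ value. Both follow from the lemma's construction with a fixed final projection onto that coordinate.
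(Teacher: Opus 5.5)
Your proposal is correct and follows the paper's proof essentially step for step. Both instantiate \cref{lem:colorful-tree-mpnn} with i.i.d.\ uniform one-hot colors from $[k]$, rule out false positives because a colorful embedding is in particular an embedding, fix one copy to get acceptance probability at least $k!/k^k$, and amplify over independent trials using $1-x\le e^{-x}$. Your remark about reading out the coordinate $[r,[k]]$ just makes explicit the interface with the lemma that the paper leaves implicit.
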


\begin{proof}
Use the colored MPNN from Lemma~\ref{lem:colorful-tree-mpnn} with color set
$[k]$. In one random-color trial, each vertex of $G$ receives an independent
uniform color from $[k]$, and the colored MPNN checks at each $v$ for a colorful copy of
$T$ rooted at $v$. If the trial accepts at $v$, then the detected colorful copy is in particular an
uncolored copy of $T$ rooted at $v$, so there are no false positives.

Fix $v$ such that $G$ contains a copy of $T$ rooted at $v$, and fix one such copy. The
$k$ vertices of this copy receive distinct colors with probability $k!/k^k$.
On this event, the fixed copy is colorful, so the colored MPNN accepts at $v$. Thus one
trial accepts at $v$ with probability at least $k!/k^k$. For $s$ independent trials,
the false-negative probability at $v$ is at most
\begin{equation}
        \left(1-\frac{k!}{k^k}\right)^s
        \le
        \exp\left(-s\frac{k!}{k^k}\right)
        \le
        \alpha .\qedhere
\end{equation}
\end{proof}

Participation in any role follows by running all choices of the pattern root in parallel and taking the maximum at each vertex, with the same asymptotic resource bounds.

\begin{corollary}[Tree subgraph detection]
\label{cor:tree-subgraph-detection}
For every tree $T$ and $\alpha\in(0,1)$, max-over-nodes readout in
\cref{thm:random-color-tree-subiso} detects whether a graph contains a
subgraph isomorphic to $T$, with the same resource and trial bounds,
no false positives, and false-negative probability at most $\alpha$.
\end{corollary}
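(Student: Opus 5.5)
The plan is to reduce unrooted detection to the rooted construction of \cref{thm:random-color-tree-subiso}, taking every vertex of $T$ in turn as the pattern root. For each $t\in V(T)$, let $T^{(t)}$ be $T$ rooted at $t$, and let $F_t$ be the max-aggregation random-color MPNN from \cref{thm:random-color-tree-subiso} for $T^{(t)}$. All $F_t$ use the same palette $[k]$ and can be fed the same one-hot initial features. Their hidden states are stacked block-diagonally into one MPNN. This is possible because each $F_t$ has $O(k)$ layers, and the shorter ones can be padded with identity layers, which ReLU implements exactly on nonnegative Boolean entries. The stacked network has width $k\cdot\exp(O(k))=\exp(O(k))$ and depth $O(k)$, so the resource bounds are unchanged.

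Next I append a nodewise max over the $k$ binary outputs, $o_v=\max_t F_t(v)$. This is exact with ReLU, for example via $\max(a,b)=a+\relu(b-a)$ applied iteratively, or by a threshold of the sum. I then take a max-over-nodes readout $\max_v o_v$ as the graph-level output. Within one trial, this output is $1$ iff some vertex $v$ and some root choice $t$ see a colorful copy of $T^{(t)}$ rooted at $v$.

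\emph{No false positives.} Each accepting $F_t$ certifies a genuine (non-induced) copy of $T$ in $G$, by the soundness part of \cref{thm:random-color-tree-subiso}.

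\emph{Completeness.} If $G$ contains a copy of $T$, fix one copy and any of its vertices $v$, say the image of $t$. With probability at least $k!/k^k$ this copy is colorful. On that event, $F_t$ accepts at $v$ by \cref{lem:colorful-tree-mpnn}, so the graph output is $1$. Independence across trials then gives false-negative probability at most $(1-k!/k^k)^s\le\alpha$ for the stated $s$, exactly as in the rooted proof. Accepting when the average of the binary graph outputs over trials is positive is the same as accepting when any trial accepts.

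Actually, a single root suffices for graph-level detection, since any copy of $T$ is rooted somewhere. So the parallel construction is needed only for the per-vertex participation statement, and the graph-level claim can even use one fixed root. The only point requiring care is checking that padding and stacking preserve exactness of the Boolean computations, which is routine.
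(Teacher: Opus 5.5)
Your proof is correct, and your closing remark is exactly the paper's proof: root $T$ arbitrarily, run the rooted detector of \cref{thm:random-color-tree-subiso}, and take the maximum over vertex outputs in each trial. The parallel stacking over all roots is a valid but unnecessary detour for the graph-level claim; the paper uses it only for the per-vertex participation remark before the corollary. The one detail the paper includes that you omit is setting the readout of the empty graph to zero, since otherwise the maximum over an empty vertex set is undefined.
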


\begin{proof}
Root $T$ arbitrarily and take the maximum over vertex outputs in each trial.
A positive graph contains a fixed copy rooted at some vertex, whose miss
probability is at most $\alpha$; a negative graph has no positive vertex
outputs. For the empty graph, set the readout to zero.
\end{proof}

\begin{corollary}[Detecting simple paths]
\label{cor:simple-paths}
For every $k\ge 1$, there is a random-color MPNN that detects whether a graph
contains a simple path of length $k$, meaning a path with $k$ edges and $k+1$
vertices, using $k+1$ initial colors, hidden dimension $\exp(O(k))$, and $O(k)$
message-passing layers. One trial has no false positives and accepts with
probability at least
\begin{equation}
        \frac{(k+1)!}{(k+1)^{k+1}}
\end{equation}
whenever such a path exists. Hence, for any $\alpha\in(0,1)$,
\begin{equation}
        s=\left\lceil
        \frac{(k+1)^{k+1}}{(k+1)!}\log(1/\alpha)
        \right\rceil
\end{equation}
independent trials give false-positive probability $0$ and false-negative
probability at most $\alpha$.
\end{corollary}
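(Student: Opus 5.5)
This follows directly from \cref{thm:random-color-tree-subiso} and \cref{cor:tree-subgraph-detection}, applied with $T$ equal to the path $P_{k+1}$ on $k+1$ vertices. A simple path with $k$ edges in $G$ is exactly a non-induced subgraph isomorphic to $P_{k+1}$. Here ``non-induced'' is essential: chords between path vertices are allowed, which matches the paper's notion of subgraph embedding, since that only requires an injective edge-preserving map. The plan is to fix one endpoint of $P_{k+1}$ as its root, so that $T$ becomes a rooted tree with $k+1$ vertices. We then invoke \cref{thm:random-color-tree-subiso} with $k+1$ in place of $k$. This yields a random-color MPNN with $k+1$ colors, $O(k+1)=O(k)$ layers, hidden dimension $\exp(O(k+1))=\exp(O(k))$, and max aggregation, whose vertex outputs detect rooted copies.

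For the graph-level statement, we use the max-over-nodes readout from \cref{cor:tree-subgraph-detection}. This readout is a single additional max pooling over binary outputs, so it does not change the asymptotic resource bounds.

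No false positives: any accepting vertex witnesses a colorful embedding of $P_{k+1}$. Its image consists of $k+1$ distinct vertices joined consecutively by edges, which is a simple path of length $k$.

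Single-trial acceptance: if a simple path of length $k$ exists, fix one such path $v_0,\dots,v_k$. Its $k+1$ vertices receive pairwise distinct uniform colors with probability $(k+1)!/(k+1)^{k+1}$. On this event, the dynamic program of \cref{lem:colorful-tree-mpnn} accepts at $v_0$ (the image of the root), and hence the max readout accepts. For $s$ independent trials, the miss probability is at most $(1-p)^s\le e^{-sp}$ with $p=(k+1)!/(k+1)^{k+1}$. This is at most $\alpha$ for the stated $s$.

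There is essentially no obstacle; the only care needed is the index shift between $k$ edges and $k+1$ tree vertices, and checking that the $O(\cdot)$ bounds absorb it.
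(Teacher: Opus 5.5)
Your proposal is correct and takes the same approach as the paper, whose proof simply applies \cref{cor:tree-subgraph-detection} to the path tree on $k+1$ vertices. Your additional details (rooting at an endpoint, the shift from $k$ to $k+1$ absorbed by the $O(\cdot)$ bounds, and the estimate $(1-p)^s\le e^{-sp}\le\alpha$) only spell out what the paper leaves implicit.
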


\begin{proof}
Apply Corollary~\ref{cor:tree-subgraph-detection} to the path tree on $k+1$
vertices.
\end{proof}

\section{A Local MST Edge Task}

We next consider a local task for computing MST edges.  Let $G=(V,E)$ be a simple graph with edge weights
$w\colon E\to\Lambda$, where $\Lambda$ is a fixed finite ordered alphabet. We
aim to identify edges that uniquely minimize the weight across a cut $S\subset V$
with $|S|<k$. For $k\ge |V|$, this is
exactly the set of edges occurring in every MST (or every minimum spanning
forest if $G$ is disconnected).

An edge belongs to every MST if and only if it is the unique minimum-weight
edge crossing some cut. The cut, however, may have many vertices
on both sides. We therefore consider a local version of this task, identifying
only edges whose membership in every MST can be established by a cut with
fewer than $k$ vertices on one side.

Fix an integer $k\ge 2$, and let $\delta_G(S)$ denote the edges with exactly
one endpoint in $S$. This constrained edge set is
\begin{equation}
    \begin{gathered}
    A_k(G,w)\coloneq\\
    \left\{e\in E\colon\;
    \begin{aligned}
    &\exists S\subset V,\ |S|<k,\ e\in\delta_G(S),\\
    &w(e)<w(f)\ \text{for all }f\in\delta_G(S)\setminus\{e\}
    \end{aligned}
    \right\}.
    \end{gathered}
\end{equation}

To identify edges in $A_k$, we look for a small component separated from the
opposite endpoint after removing the candidate edge and all heavier edges.
For $e=uv$, define
\begin{equation}
    D_e\coloneq (V,\{f\in E\setminus\{e\}\colon w(f)\le w(e)\}),
\end{equation}
and let $C_e(v)$ denote the connected component of $v$ in $D_e$.
If the endpoints are separated, $e$ is uniquely lightest across the boundary
of either component. A component with fewer than $k$ vertices therefore gives
the required cut. Conversely, any qualifying set $S$ contains the component
of its endpoint. Hence,
\begin{equation}
    e\in A_k(G,w)
    \quad\Longleftrightarrow\quad
    C_e(u)\neq C_e(v)\ \text{and}\ \min\{|C_e(u)|,|C_e(v)|\}<k.
\end{equation}

We use the edge-feature extension of message passing: messages may depend
on the sender's state and a fixed encoding of the edge weight. Coordinatewise
max aggregation and updates implement the filters below using the initial
colors, which are retained in the node states. The edge decoder is a shared
ReLU MLP of the form $d(\vec{h}_u+\vec{h}_v,w(uv))$, where $\vec{h}_u,\vec{h}_v$ are the final
endpoint embeddings. An empty maximum is taken to be zero.

The next proposition shows that random colors allow a random color MPNN to compute this task for constant $k$. 

\begin{proposition}[Finite-color local Kruskal edge detection]
\label{prop:local-kruskal}
Fix an integer $k\ge 2$, a finite ordered alphabet $\Lambda$, and
$\varepsilon,\alpha\in(0,1)$. There is a finite-color edge-output MPNN, using a
constant number of independent random-color trials, such that for every
weighted graph $G$ with weights in $\Lambda$, the random output
$\widehat A\subseteq E(G)$ satisfies
\begin{equation}
        \Pr\left\{
        \begin{gathered}
        |\widehat A\cap A_k|\ge (1-\varepsilon)|A_k|
        \\
        \text{ and }\\
        |\widehat A\setminus A_k|
        \le \varepsilon |E\setminus A_k|
        \end{gathered}
        \right\}
        \ge 1-\alpha .
\end{equation}
The number of colors, number of trials, and depth depend only on
$k,\varepsilon,\alpha$, and $\Lambda$, not on $|V(G)|$.
\end{proposition}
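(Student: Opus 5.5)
Use the characterization displayed just before the statement: $e=uv\in A_k(G,w)$ iff $C_e(u)\neq C_e(v)$ and $\min\{|C_e(u)|,|C_e(v)|\}<k$. The plan is to have every vertex compute, in each trial, the set of colors it can reach by $<k$ hops in the filtered graph $D_e$. Color each vertex uniformly from a palette $[q]$ with $q$ large (depending on $k,\varepsilon,\alpha$). For every threshold $\lambda\in\Lambda$ the network runs a separate channel that propagates color-set indicators along edges of weight $\le\lambda$ for $k$ rounds, using max aggregation over one-hot color vectors. This gives $R_\lambda(v,j)\subseteq[q]$, the colors within $j$ hops in $G_{\le\lambda}$. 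One edge of weight exactly $\lambda$ must be excluded, namely $e$ itself. To handle this, the decoder works from a "one side" test, which needs only colorful small neighborhoods.

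\textbf{Decision rule.} Let $e=uv$ with $w(e)=\lambda$. Say side $v$ is \emph{small and closed} if the ball $B_{<k}(v)$ in $D_e$ has fewer than $k$ vertices and no further growth. Colors detect this when the ball is colorful: the color count stabilizes, $|R(v,k-1)|=|R(v,k)|<k$. To avoid counting through $e$, define the per-endpoint propagation on $G_{\le\lambda}$ minus edges of weight $\lambda$ incident to the other endpoint. This is still awkward because of weight ties, so I would instead let the decoder read $h_u+h_v$ containing both endpoints' color-reach sets for $G_{<\lambda}$ and $G_{\le\lambda}$ at each hop count. It then accepts when some endpoint's $<\lambda$-ball is small and closed, its color set is disjoint from the other side's, and no weight-$\lambda$ edge other than $e$ leaves it. That last condition needs a refined test, since weight-$\lambda$ edges other than $e$ may leave $C_e(v)$ and then $e$ is not uniquely minimal.

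\textbf{Error analysis.} Conditional on the balls of radius $\le k+1$ around $u,v$ being colorful and color-disjoint where relevant (at most $k$ relevant vertices per side when the side is small, plus boundary vertices), the decision is exact. The difficulty is that boundaries can have unbounded degree. I would only require colorfulness of the small component together with one witness neighbor per boundary edge, so for each edge a bounded set of $O(k)$ vertices must be injectively colored. That happens with probability at least $1-O(k^2)/q$. Run $s$ trials and take a majority vote per edge. The per-edge error is $\beta=\exp(-\Omega(s))$ once $q$ is large. Markov's inequality on the number of misclassified edges in $A_k$ (respectively $E\setminus A_k$) then bounds the probability of exceeding an $\varepsilon$ fraction by $\beta/\varepsilon$ for each type. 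Choose $\beta\le\alpha\varepsilon/2$.

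\textbf{Main obstacle.} The hard step is the false-positive side: edges whose "small side" looks closed only because a high-degree boundary leaks to many vertices sharing few colors. The count $|R(v,k)|$ saturates at $q$, and with bounded $q$ a large component can masquerade as small only if all its $\ge k$ vertices at distance $\le k$ use $<k$ colors. I would control this by taking $q\gg k^2$ and bounding, for a fixed BFS-ordered set of $k$ vertices in the component, the probability that they are not colorful by $\binom{k}{2}/q$. These requirements fix $q$ and $s$ depending only on $k,\varepsilon,\alpha,\Lambda$, and depth $O(k|\Lambda|)$.
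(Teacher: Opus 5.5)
There is a genuine gap, and it sits exactly at the step you flag and then leave open: computing reach sets in $D_e$, that is, deleting $e$ but no other edge of weight at most $w(e)$.

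\textbf{The fallback rule misses target edges.} Your fallback uses an endpoint's $G_{<\lambda}$-ball and requires that no weight-$\lambda$ edge other than $e$ leaves it. This is not a refinement of the correct test. It is a strictly stronger condition: it certifies only edges for which $C_e(v)$ coincides with the $G_{<\lambda}$-component of $v$. (No weight-$\lambda$ edge other than $e$ can leave $C_e(v)$ at all, since $D_e$ contains all of them; you are conflating $C_e(v)$ with the $G_{<\lambda}$-component.)

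Concretely, take $k\ge3$ and all weights equal. Attach to a large $2$-edge-connected core containing $u$ a vertex $v$ adjacent to $u$ and to a leaf $x$. Then $uv\in A_k$ via the cut $\{v,x\}$. But $v$'s $G_{<\lambda}$-ball is $\{v\}$, the weight-$\lambda$ edge $vx$ leaves it, and $u$'s side fails for the same reason. So your rule rejects $uv$ in every trial. With many such pendants, half of $A_k$ is missed deterministically, and no choice of $q$ or $s$ helps. Moreover, checking ``no weight-$\lambda$ edge other than $e$ leaves it'' already requires telling $e$ apart from the other weight-$\lambda$ edges at an endpoint, which is the unsolved problem itself.

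\textbf{The missing idea: let the colors name the edge.} The paper runs one channel per $(\lambda,\{a,b\})$. That channel keeps all edges of weight at most $\lambda$ except the weight-$\lambda$ edges whose endpoint colors are $\{a,b\}$, and edge $uv$ reads the channel $(w(e),\{\chi(u),\chi(v)\})$. This channel is always a subgraph of $D_e$. Now fix $W=W_u\cup W_v$, where $W_x=C_e(x)$ if that component is small, and otherwise $W_x$ is a connected $k$-vertex subset of $C_e(x)$ containing $x$. Whenever $W$ is colorful, the channel keeps every $D_e$-edge inside $W$: $u$ and $v$ are then the only vertices of $W$ with colors $a,b$, and $G$ is simple.

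\textbf{The disjointness test also causes systematic misses.} You require the small side's color set to be \emph{disjoint} from the other side's. If the other endpoint lies in a large component with many vertices within $k-1$ hops (unbounded degree is allowed), its reach set is all of $[q]$ with high probability for fixed $q$. Genuine target edges are then rejected regardless of $s$.

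The paper instead accepts when $M(u)\vee M(v)$ and $\Sigma(u)\neq\Sigma(v)$. Here $\Sigma(x)$ is the color set within $k-1$ hops of $x$ in the selected channel, and $M=\mathbf{1}\{|\Sigma|<k\}$. On the colorful event, a large side shows at least $k$ colors (your BFS $k$-set argument) and a small side fewer than $k$. Two distinct small components have disjoint color sets automatically.

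\textbf{Boundary witnesses are unnecessary and unbounded.} The subgraph-of-$D_e$ property also removes the need for your boundary ``witness neighbors'': nothing can leak out of a small $C_e(x)$ through heavier edges. This matters because one witness per boundary edge is not an $O(k)$-size set when the small side contains high-degree vertices.

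\textbf{What does work.} Your aggregation by majority vote, with per-trial error $O(k^2)/q<1/2$, and the Markov step are fine. They are a valid alternative to the paper's any-trial acceptance with $q$ enlarged after $s$ is fixed.
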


\begin{proof}
Fix $q\ge 2k$. In one random trial, assign each vertex an independent uniform color
\begin{equation}
        \chi(v)\in [q]\coloneq \{1,\ldots,q\}.
\end{equation}
For each $\lambda\in\Lambda$ and $1\le a\le b\le q$, write
\begin{equation}
    \begin{gathered}
    D_{\lambda,a,b}\coloneq\\
    \left(V,\left\{xy\in E\colon\;
    \begin{aligned}
    &w(xy)<\lambda\ \text{or}\\
    &w(xy)=\lambda,\ \{\chi(x),\chi(y)\}\neq\{a,b\}
    \end{aligned}
    \right\}\right).
    \end{gathered}
\end{equation}
Thus, $D_{\lambda,a,b}$ retains edges of weight at most $\lambda$, except
weight-$\lambda$ edges with endpoint colors $\{a,b\}$. An edge $e=uv$ selects
the channel $\lambda=w(e)$ and
$\{a,b\}=\{\chi(u),\chi(v)\}$. This channel deletes $e$, but may also delete
other edges with the same weight and endpoint colors.

Run $k-1$ message-passing rounds in each channel, maintaining the finite state
\begin{equation}
        S_t^{\lambda,a,b}(v)\subseteq [q],
\end{equation}
the set of colors seen within distance at most $t$ from $v$ in
$D_{\lambda,a,b}$. Initialize
\begin{equation}
        S_0^{\lambda,a,b}(v)=\{\chi(v)\},
\end{equation}
and update by
\begin{equation}
        S_{t+1}^{\lambda,a,b}(v)
        =
        S_t^{\lambda,a,b}(v)\cup
        \bigcup_{x\in N_{D_{\lambda,a,b}}(v)}
        S_t^{\lambda,a,b}(x).
\end{equation}
We encode each color subset by its indicator vector. The union update is then
a coordinatewise OR, which we implement using max aggregation and fixed ReLU
updates. All channels are stored in the same node state. To implement the
filter, group incoming messages by the sender's initial color and the edge
weight, take coordinatewise maxima within each group, and select the groups
allowed by the receiver's initial color and the channel. These selections
are finite Boolean operations and can be implemented by fixed ReLU updates.
The number of channels depends only on $q$ and $\Lambda$, not on $|V(G)|$.

After $k-1$ rounds, set
\begin{equation}
        \begin{aligned}
        \Sigma_{\lambda,a,b}(v)&\coloneq S_{k-1}^{\lambda,a,b}(v),\\
        M_{\lambda,a,b}(v)&\coloneq
        \mathbf{1}\{|\Sigma_{\lambda,a,b}(v)|<k\}.
        \end{aligned}
\end{equation}
For an edge $e=uv$, the desired decision rule uses its selected channel
and accepts in one trial if
\begin{equation}
        \bigl(M_{\lambda,a,b}(u)\vee M_{\lambda,a,b}(v)\bigr)
        \quad\text{and}\quad
        \Sigma_{\lambda,a,b}(u)\neq\Sigma_{\lambda,a,b}(v).
\end{equation}
To implement this rule from the sum of endpoint embeddings, retain the
initial one-hot color and all $\Sigma$ indicators and $M$ bits in each
final node state. The summed color coordinates determine the unordered
endpoint colors, including the case of equal colors, and hence select
the channel together with $w(e)$. The summed $M$ coordinate is positive
exactly when at least one endpoint has $M=1$. The two $\Sigma$ sets differ
exactly when some summed indicator coordinate equals one. On inputs in
$\{0,1,2\}$, the ReLU function
\[
    r(x)=\relu(x)-2\relu(x-1)+\relu(x-2)
\]
is the indicator of this event. Taking the Boolean OR over these coordinates
and the conjunction with the $M$ condition gives the decision rule.
Channel selection and these Boolean operations are exactly implementable
by a fixed ReLU MLP applied to $\vec{h}_u+\vec{h}_v$ and the encoded edge weight.
Its width and depth depend only on $q,k,\Lambda$.
The final network accepts an edge when the average of its $s$ independent
binary trial outputs is positive, equivalently when any trial accepts it.

Fix an edge $e=uv$. For each endpoint $x\in\{u,v\}$, choose $W_x=C_e(x)$ if
$|C_e(x)|<k$; otherwise, choose a connected $k$-vertex set $W_x\subseteq C_e(x)$
containing $x$. Set $W=W_u\cup W_v$, so $|W|\le 2k$. These sets depend on
$G,w,e$, not on the random colors.

Suppose that the colors on $W$ are all distinct. The selected channel is a
subgraph of $D_e$, and it retains every edge of $D_e$ with both endpoints in
$W$: the only vertices of $W$ with colors $\chi(u)$ and $\chi(v)$ are $u$ and
$v$, and $e$ is already absent from $D_e$. Here we use that $G$ is simple.
If $|C_e(x)|<k$, the channel therefore preserves the whole component, so
$\Sigma_{\lambda,a,b}(x)$ is its color set and $M_{\lambda,a,b}(x)=1$.
If $|C_e(x)|\ge k$, the connected set $W_x$ lies within distance at most
$k-1$ from $x$ in the channel, so at least $k$ distinct colors are seen and
$M_{\lambda,a,b}(x)=0$.

Consequently, if both endpoint components are large, the decoder rejects.
If they are the same small component, their color sets agree, and the
decoder again rejects. If they are different and at least one is small,
their color sets differ: two small components have disjoint color sets,
while a large component contributes at least $k$ colors. Thus the decoder
is correct whenever the colors on $W$ are distinct. This event has probability
at least
\begin{equation}
        p_*(q)\coloneq \frac{(q)_{2k}}{q^{2k}},
\end{equation}
where $(q)_j\coloneq q(q-1)\cdots(q-j+1)$ denotes the falling factorial.
Therefore, after $s$ independent trials, each target edge is missed with
probability at most $(1-p_*(q))^s$, and each non-target edge is falsely accepted
with probability at most $s(1-p_*(q))$.

Set $\eta\coloneq \alpha\varepsilon/2$. Choose $q$ large enough that
$p_*(q)\ge 1/2$, then choose $s$ so that
$(1-p_*(q))^s\le \eta$, and finally increase $q$, if necessary, so that
\begin{equation}
        s(1-p_*(q))\le \eta .
\end{equation}
Then each target edge is missed with probability at most $\eta$, and each
non-target edge is falsely accepted with probability at most $\eta$. Markov's
inequality applied to the two error counts yields failure probability at most
$\alpha$. If either edge class is empty, its error count is identically zero.
\end{proof}

\begin{theorem}[Deterministic obstruction for the local Kruskal task]
\label{thm:deterministic-kruskal-obstruction}
Let $\Lambda=\{0,1\}$ and let $k>5$. There is a connected weighted graph $(H,w)$
on ten vertices, with weights in $\Lambda$, such that no deterministic
uncolored MPNN with identical initial vertex features and an edge-weight-aware
symmetric edge decoder computes $A_k(H,w)$.
\end{theorem}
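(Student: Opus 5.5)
The plan is to exhibit a single weighted graph on ten vertices in which every vertex looks identical to edge-weighted $1$-WL. In that graph, two edges of the same weight will have opposite membership in $A_k$. A deterministic uncolored MPNN assigns every vertex the same final embedding, so the decoder $d(\vec{h}_u+\vec{h}_v,w(uv))$ must then give both edges the same output.

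The first step is to simplify $A_k$ using $k>5$. On ten vertices, every cut $S$ has one side with at most $5<k$ vertices. Since $\delta_G(S)=\delta_G(V\setminus S)$, the size restriction in the definition of $A_k$ is vacuous. Hence $A_k(H,w)$ is exactly the set of edges lying in every MST. By the characterization in the text, $e=uv$ belongs to this set iff $u$ and $v$ lie in different components of $D_e$.

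The construction is as follows.
\begin{itemize}
\item Take two disjoint $5$-cycles $P=p_0\cdots p_4$ and $P'=p'_0\cdots p'_4$, all with weight $0$.
\item Add weight-$1$ chords $p_1p_3$, $p_2p_4$, $p'_1p'_3$ and $p'_2p'_4$. These join non-adjacent cycle vertices, so $H$ stays simple.
\item Add one weight-$1$ edge $b=p_0p'_0$.
\end{itemize}
Every vertex then has exactly two weight-$0$ neighbors and exactly one weight-$1$ neighbor, and $H$ is connected.

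Next I check membership in $A_k$ for two edges of weight $1$.
\begin{itemize}
\item The edge $b$ is a bridge of $H$. Since no edges are heavier than weight $1$, the graph $D_b$ is just $H-b$, in which $p_0$ and $p'_0$ are separated. Hence $b\in A_k(H,w)$; the cut is $V(P)$, with $5<k$ vertices.
\item The chord $e=p_1p_3$ has endpoints joined in $D_e$ by the weight-$0$ path $p_1p_2p_3$. Hence $e\notin A_k(H,w)$.
\end{itemize}

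Then I argue by induction over layers that every vertex of $H$ has the same hidden state at every layer. All initial features are identical by hypothesis. Suppose all states coincide at some layer. Each vertex receives, grouped by edge weight, the same multiset of messages: two weight-$0$ messages and one weight-$1$ message, all from identical senders. Any edge-feature-aware aggregation (sum, mean, max or min) and update therefore produce identical outputs at every vertex. The $1$-WL upper bound of \citet{xu2019how}, in its edge-labeled form, gives the same conclusion.

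Consequently $\vec{h}_{p_0}+\vec{h}_{p'_0}=\vec{h}_{p_1}+\vec{h}_{p_3}$. Since $w(b)=w(e)=1$, the decoder receives identical inputs on $b$ and $e$ and gives them identical outputs, even though exactly one of them lies in $A_k(H,w)$.

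The main point needing care is the claim that $A_k$ coincides with the every-MST edge set, together with stating the edge-weighted WL invariance precisely for the message model used here. In that model, messages depend on the sender state and the edge weight, followed by max aggregation or any symmetric aggregation. The rest is a direct check on the explicit graph.
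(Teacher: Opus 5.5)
Your proof is correct and is essentially the paper's argument. As an unweighted graph, yours is exactly the paper's $H$: the $5$-cycle with chords $p_1p_3,p_2p_4$ is $K_4$ on $\{p_1,\ldots,p_4\}$ with the edge $p_1p_4$ subdivided by $p_0$, and $b$ joins the two subdivision vertices. The argument is also the same: vertex states are invariant across the graph, so a symmetric decoder receives identical inputs on a bridge and on a non-bridge. The only difference is that the paper gives every edge weight $1$, which makes all decoder inputs identical and reduces $A_k(H,w)$ to the single bridge $\{e_*\}$, avoiding the weight-grouped message argument that your $0/1$ weighting requires.
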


\begin{proof}
Take two disjoint copies of $K_4$, subdivide one edge in each copy, and join
the two new subdivision vertices by an edge $e_*$
(\cref{fig:uncolored-bridge-obstruction}). Give every edge weight $1$.

Every vertex of $H$ has degree three. Since the initial vertex features are
identical and every edge has the same weight, induction on the message-passing
layers shows that all vertex states remain identical. Thus, any symmetric
edge decoder receives the same endpoint states and edge weight for every edge.

However, $e_*$ is a bridge whose removal leaves two components of size $5<k$,
so $e_*\in A_k(H,w)$. Every other edge lies on a cycle of weight-$1$ edges,
so none uniquely minimizes any cut. Consequently, no deterministic uncolored
MPNN with an edge-weight-aware symmetric decoder computes $A_k$ on this graph.
\end{proof}

\begin{center}
\begin{minipage}{\linewidth}
\centering
\begin{tikzpicture}[
    vertex/.style={circle, fill=black!80, draw=black!80,
        inner sep=0pt, minimum size=4.5pt},
    edge/.style={draw=black!65, line width=0.8pt},
    font=\small
]
    \foreach \side/\sign in {L/-1,R/1} {
        \begin{scope}[xscale=\sign]
            \node[vertex] (\side-a) at (3.4,0) {};
            \node[vertex] (\side-b) at (1.25,0.95) {};
            \node[vertex] (\side-c) at (1.25,-0.95) {};
            \node[vertex] (\side-d) at (2.05,0) {};
            \node[vertex] (\side-s) at (1.25,0) {};
            \draw[edge] (\side-a) -- (\side-b) -- (\side-s)
                -- (\side-c) -- (\side-a);
            \draw[edge] (\side-d) -- (\side-a);
            \draw[edge] (\side-d) -- (\side-b);
            \draw[edge] (\side-d) -- (\side-c);
            \node at (2.2,-1.3) {5 vertices};
        \end{scope}
    }
    \draw[draw=teal!75!black, line width=1.6pt]
        (L-s) -- node[above=3pt, text=black] {$e_*$} (R-s);
\end{tikzpicture}
\captionof{figure}{Counterexample in \cref{thm:deterministic-kruskal-obstruction}.
Every vertex has degree three and every edge has weight $1$.
The highlighted edge $e_*$ is the only bridge; deleting it leaves two
five-vertex components. Thus $A_k(H,w)=\{e_*\}$ for $k>5$.}
\label{fig:uncolored-bridge-obstruction}
\end{minipage}
\end{center}

\section{Expressivity Hierarchies for Colors and Trials}
\label{sec:random-color-hierarchies}

\Cref{cor:cfi-concentrated-randomness-draft} shows that
colors and trials can have very different quantitative effects. We now
show that increasing either resource strictly increases expressivity, i.e., the ability to distinguish non-isomorphic graphs with high probability. To do that, we use statistical $(q,s,\alpha)$-distinguishability by random-color
WL as defined in \cref{def:rcwl-draft}, i.e., a test receives $s$ independent
trials per graph with $q$ uniform colors and has failure probability at
most $\alpha$ for each of the four input pairs. 

Specifically, we show the two hierarchy results regarding the number of colors and trials, respectively:

\begin{theorem}[Strict hierarchy in the number of trials]
\label{thm:strict-hierarchy}
For every integer $s\geq2$, there are non-isomorphic finite graphs
$G_s,H_s$ that are $(2,s,1/3)$-distinguishable by
random-color WL, but are not statistically
$(2,s-1,1/3)$-distinguishable.
\end{theorem}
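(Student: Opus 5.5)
Following the sketch in the main text, we reuse the CFI parity-tree construction of \cref{sec:cfi-draft}, but with input blocks that two colors do not erase so often. The main-text sketch uses ``pairs of cycle blocks'' such that a trial reveals the graph exactly when every input pair has at least one nonmonochromatic block. We therefore take $k$ input pairs $X_i$ and attach $2C_\ell$ to $x_i^0$ and $C_{2\ell}$ to $x_i^1$, with block size $m=2\ell$ as a tunable parameter. We then build the parity tree and attach the marker to $r^0$ or $r^1$.

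The first step is to determine the law of a single trial. Let $E_i$ be the event that both blocks at $X_i$ are monochromatic, so $\Pr(E_i)=2^{-2(m-1)}=:\beta$. Let $R=\bigcap_i E_i^c$ be the revelation event, with $p\coloneq\Pr(R)=(1-\beta)^k$. On $R^c$, \cref{lem:cfi-block-erasure-draft,lem:cfi-root-hiding-draft} apply verbatim, so the conditional WL laws agree. On $R$, we must show that the WL observation determines which graph was colored. A nonmonochromatic $2C_\ell$ and a nonmonochromatic $C_{2\ell}$ give distinct rooted WL patterns at the attached endpoint. This holds because color refinement of a cycle with at least two colors sees the cycle's cyclic color word, and a single $2\ell$-cycle word differs from a pair of $\ell$-cycle words. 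A monochromatic block next to a nonmonochromatic one also differs. Hence every input pair has its endpoints distinguished, these distinctions propagate up the parity gadgets as in the $2$-WL argument of \cref{thm:kwl-incomparability}, and the marker then reveals the graph.

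The obstacle here is exactly the cycle-pattern claim, including ruling out accidental collisions between the patterns of $x_i^0$ and $x_i^1$. I would choose $\ell$ prime and odd. I would also verify that the marker side is recognizable from the multiset $\Pi$ alone, not merely that the two multisets differ. For that, I would show that on $R$ the support of $\Pi(G_0,\cdot)$ is disjoint from that of $\Pi(G_1,\cdot)$. This is the step I expect to be hardest.

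Given this structure, each observation is, in law, a mixture. With probability $1-p$ it is a draw from a common law $\nu$; with probability $p$ it is a draw from $\mu_A$, where $\mu_G$ and $\mu_H$ are mutually singular. I would compute the optimal error of the four-case test exactly. The best test declares ``different'' iff some trial in each batch is revealing and the revealed identities differ. It declares ``same'' if the revealed identities agree. If a batch has no revealing trial, it must guess, and since the pair of unordered tests is symmetric, the guess is uninformative. The minimax error is then governed by $\Pr(\text{some batch has no revelation})=1-(1-(1-p)^s)^2$, up to a factor: the optimal randomized guess achieves error $\tfrac12\bigl(1-(1-(1-p)^s)^2\bigr)$. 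For the lower bound I would argue via a coupling that makes the unrevealed batch identical under $G$ and $H$. For the upper bound I would use the explicit test.

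Finally, I would tune $(k,m)$. The error $e_s(p)=\tfrac12(1-(1-(1-p)^s)^2)$ is continuous and strictly decreasing in $p$, and strictly decreasing in $s$. So for each $s\ge2$ there is an interval of $p$ values with $e_s(p)\le1/3<e_{s-1}(p)$. The achievable values $p=(1-2^{-2(m-1)})^k$ form a sufficiently dense set: we can first fix $m$ large enough that consecutive powers in $k$ differ by a ratio arbitrarily close to $1$, and then choose $k$. This yields $G_s,H_s$, and non-isomorphism follows as in \cref{lem:cfi-nonisomorphic-draft}.
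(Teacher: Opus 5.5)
\textbf{The block-distinguishability step fails.} You claim that nonmonochromatic colorings of $2C_\ell$ and $C_{2\ell}$ always get different $1$-WL patterns. This is false. Color refinement sees only unfolding trees. If $C_{2\ell}$ is colored by a doubled word $ww$ and both $\ell$-cycles of $2C_\ell$ are colored by $w$, both colored graphs are $2$-fold covers of $C_\ell$ colored by $w$, so they receive the same stable color multiset. Concretely, take $\ell=3$ and $w=122$. In both $C_6$ colored $122122$ and $2C_3$ colored $122,122$, the two color-$1$ vertices each have two color-$2$ neighbors, and the four color-$2$ vertices each have one neighbor of each color. The partition is already stable and identical. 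Taking $\ell$ prime and odd does not help, because the cover $C_{2\ell}\to C_\ell$ exists for every $\ell$.

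Such a collision acts exactly like an erasure. Rewire the blocks on their own vertex sets: the two cycles $a_0\cdots a_{\ell-1}$ and $b_0\cdots b_{\ell-1}$ become the single cycle $a_0\cdots a_{\ell-1}b_0\cdots b_{\ell-1}$, and conversely at the partner. This preserves every vertex's multiset of neighbor colors. Hence the induction of \cref{lem:cfi-block-erasure-draft} still goes through, with ``block vertices with the same index in $w$ share a pattern'' in place of ``all block vertices share a pattern.'' The automorphism of \cref{lem:cfi-root-hiding-draft} then pairs this coloring of $G_0$ with a coloring of $G_1$ that has the same observation, and both colorings lie in your revelation event $R$. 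So the supports on $R$ are not disjoint, and the mixture with revealing weight $(1-2^{-2(m-1)})^k$ is false; the hidden mass is strictly larger than $1-\Pr(R)$. The crossing argument needs an exact reveal-or-hide dichotomy with a known weight. Without it, the achievability half (that $s$ trials reach error at most $1/3$) is not justified.

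\textbf{How to repair it.} Use blocks for which WL-equivalent colorings must be monochromatic in the same color. The paper takes $C_\ell$ versus $C_3\sqcup C_{\ell-3}$ with $\ell\geq7$ prime (\cref{lem:block}). In a common stable coloring, the ratio of a class's multiplicity in the triangle to its multiplicity in $C_\ell$ is constant on the connected quotient, hence equal to $3/\ell$, and primality forces a single class. The same computation for your blocks gives ratio $1/2$ and no contradiction, consistent with the collision above.

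\textbf{A smaller gap in the propagation step.} On the revelation event, an input pair may have one nonmonochromatic block and one monochromatic block. The CFI tree has no edge between partners, so the propagation is not literally the $2$-WL argument of \cref{thm:kwl-incomparability}, which separates both endpoints by triangle counts. The paper adds an edge inside each input pair, so the monochromatic-block endpoint inherits the complementary superscript. It also uses a single $k$-ary parity gadget, whose internal vertices read off $\operatorname{par}(\vec a)$ in one round. With a tree you would need an additional recursive argument.

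Once the exact dichotomy holds, your optimal-error formula $\tfrac12\bigl(1-(1-(1-p)^s)^2\bigr)$ and your density-based tuning agree with \cref{prop:risk} and the paper's choice of $\ell$ large and $k$ minimal.
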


Ignoring the last trial gives the corresponding non-strict inclusion.
The theorem shows that two colors suffice, which is optimal. 
For colors, we compare budgets of at most $q$ colors, i.e., the test may
choose a uniform palette of size $r\leq q$, fixed across all its trials.

\begin{theorem}[Strict hierarchy in the number of colors]
\label{thm:color-hierarchy}
For every pair of integers $q,s\geq1$, there are non-isomorphic finite
graphs $G_{q,s},H_{q,s}$ that are statistically
$(q+1,s,1/3)$-distinguishable by random-color WL, but are not
statistically $(r,s,1/3)$-distinguishable for any $1\leq r\leq q$.
\end{theorem}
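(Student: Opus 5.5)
We plan to reuse the parity-gadget construction of \cref{sec:cfi-draft}, changing the attached blocks so that the per-input erasure probability depends much more steeply on the palette size. Fix $m\geq3$. At each input pair $X_i$, attach $2C_m$ to $x_i^0$ and $C_{2m}$ to $x_i^1$. These are $1$-WL-equivalent $2$-regular rooted blocks of size $2m$, so $2C_3$ versus $C_6$ is the case $m=3$. Join $k$ input pairs through a binary parity tree and attach the marker to $r^0$ or $r^1$, as before.

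\textbf{Step 1: the lower bound for $r\le q$.} \cref{lem:cfi-block-erasure-draft} carries over verbatim to any pair of $2$-regular blocks on a common vertex set, once both blocks are monochromatic. \cref{lem:cfi-nonisomorphic-draft} and \cref{lem:cfi-root-hiding-draft} also go through unchanged. Hence with $r$ colors the per-pair erasure probability is $e_r=r^{-2(2m-1)}$. As in \cref{prop:cfi-tv-upper-draft}, the single-trial total variation is at most $p_r\coloneq(1-e_r)^k$. The argument of \cref{cor:cfi-trial-lower-draft} then gives error at least $\tfrac12(1-\operatorname{TV})$. Using the sharper bound $\operatorname{TV}\le 1-(1-p_r)^s$ for $s$ trials, any $(r,s,1/3)$ test needs $1-(1-p_r)^s\ge 1/3$. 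Since $e_r$ decreases in $r$, $p_r\le p_q$ for all $r\le q$. For $r=1$ everything is erased, so $p_1=0$. It therefore suffices to arrange $1-(1-p_q)^s<1/3$.

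\textbf{Step 2: the upper bound for $q+1$ colors.} Here we need a ``revelation'' event $R_i$ at each input pair with two properties. First, on $R_i$ the $1$-WL patterns identify which of $x_i^0,x_i^1$ carries $2C_m$. Second, $\Pr(R_i)\ge 1-c\,e_{q+1}$ for a constant $c$ depending only on $m$.

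Given this, we argue as follows. If $\bigcap_i R_i$ holds, WL propagates the orientation through every gadget, as in the $2$-WL part of \cref{thm:kwl-incomparability}: internal vertices count superscript-$0$ inputs, giving the multisets $\{2,0\}$ versus $\{1,1\}$. It thereby identifies which root endpoint carries the marker, so the trial reveals the graph identity. The test declares a batch's identity if some trial in it reveals, and otherwise guesses. It compares the two batches' identities. It errs only if some batch has no revealing trial, which happens with probability at most $2(1-\tilde p)^s$, where $\tilde p\ge(1-c\,e_{q+1})^k$.

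\textbf{Step 3: choosing parameters.} Choose $k$ so that $s\,p_q\le 1/4$, i.e.\ $k\approx e_q^{-1}\log(4s)$. Then
\[
\tilde p\gtrsim \exp\!\left(-c\,\tfrac{e_{q+1}}{e_q}\log(4s)\right),
\qquad
\tfrac{e_{q+1}}{e_q}=\Bigl(\tfrac{q}{q+1}\Bigr)^{2(2m-1)}.
\]
Taking $m$ large makes $\tilde p$ arbitrarily close to $1$, so $2(1-\tilde p)^s\le 1/3$. Both $m$ and $k$ depend only on $q$ and $s$, which yields $G_{q,s},H_{q,s}$.

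\textbf{Main obstacle.} The hardest step is Step 2: defining $R_i$ so that WL provably separates the two blocks, with failure probability comparable to the monochromatic probability $e_{q+1}$. A nonmonochromatic coloring does not automatically let $1$-WL tell $2C_m$ from $C_{2m}$. For example, a coloring that is constant on each triangle does not separate $2C_3$ from $C_6$ colored as two arcs. I would first try a cruder event whose probability is still exponentially closer to $1$ for $q+1$ than for $q$. Candidate: ``some block receives a color appearing exactly once in the union of the two blocks, and one of its cycle neighbors has a different color.'' Any unique color individualizes a vertex. From there $1$-WL traces cycle lengths when $m$ colors are locally distinct. Failing this, I would replace the cycle blocks with larger $1$-WL-equivalent gadgets in which revelation holds whenever the block uses at least two colors, so that $1-\Pr(R_i)$ is exactly the erasure probability. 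The parameter argument of Step 3 only needs the ratio $(1-\Pr(R_i))/e_q$ to tend to $0$ as the block size grows.
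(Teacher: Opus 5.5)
You have the right overall architecture, and it matches the paper's: a hiding event that makes the laws close for $r\le q$, a revelation event for $q+1$ colors, $k$ of order $e_q^{-1}\log s$, and a growing block size so that the ratio $(q/(q+1))^{\text{block size}}$ vanishes. Step~1 is fine.

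The gap is Step~2. For your blocks it is not merely unproven but false, and this breaks the construction. Label the two cycles of $2C_m$ as $a_1\cdots a_m$ and $a'_1\cdots a'_m$, and the $C_{2m}$ as $b_1\cdots b_{2m}$. Consider the event that $\chi(a_t)=\chi(a'_t)$ and $\chi(b_t)=\chi(b_{t+m})$ for all $t$; with $r$ colors it has probability $r^{-2m}$. On this event, a fixed rewiring preserves every WL pattern: turn $\{a,a'\}$ into the $2m$-cycle $a_1\cdots a_m a'_1\cdots a'_m$, and turn $\{b\}$ into two $m$-cycles. This works because both configurations cover the same colored $C_m$. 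The swap coupling of the root-hiding lemma then goes through with this event in place of monochromaticity, using the block bijection $a_t\mapsto b_t$, $a'_t\mapsto b_{t+m}$ and its inverse.

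Consequences:
\begin{itemize}
\item On this event the patterns at $X_i$ coincide with those of the block-exchanged configuration, so $R_i$ must exclude it. Hence $\Pr(R_i^c)\ge (q+1)^{-2m}=(q+1)^{2m-2}\,e_{q+1}$, and any valid $c$ grows with $m$. This kills your limit in Step~3.
\item Worse, with your $k\approx q^{4m-2}\log(4s)$, the single-trial total variation with $q+1$ colors is at most $\exp(-k(q+1)^{-2m})\approx(4s)^{-q^{4m-2}/(q+1)^{2m}}$. Since $q^2>q+1$ for $q\ge2$, $s$ times this tends to $0$ as $m\to\infty$. So for large $m$ the pair is \emph{not} $(q+1,s,1/3)$-distinguishable.
\item Your two-arc example is actually separated by WL: an arc end sees two colors, while every triangle vertex sees only its own. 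The real obstruction is periodic colorings.
\item Your ``unique color'' candidate also fails. For fixed $q$, a color occurring exactly once among $4m$ vertices has probability tending to $0$ as $m\to\infty$.
\end{itemize}

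The missing idea is a block pair for which colored $1$-WL equivalence forces both blocks to be monochromatic in the same color. Then non-revelation coincides exactly with erasure, and effectively $c=1$. The paper takes $C_\ell$ versus $C_3\sqcup C_{\ell-3}$ with $\ell\ge7$ prime. Let $a_c$ count color $c$ in $C_\ell$ and $b_c$ count it in the triangle. In a common stable coloring, edge counts along the connected quotient make $b_c/a_c$ constant, so $b_c=3a_c/\ell$. This forces $\ell\mid a_c$, hence a single stable color. The paper also shows that the observation detects whether erasure occurred. Your test implicitly needs this in order to know which trials ``reveal.'' Together these give $P_b=(1-\eta)Q+\eta Q_b$ with mutually singular parts and an exact optimal error. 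The choice of $k$ of order $q^{2\ell-2}$ with $\ell\to\infty$ is then precisely your Step~3.

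Alternatively, you could keep $2C_m/C_{2m}$, but then two things must change. Step~1 must use the periodicity event above instead of monochromaticity. Step~2 must characterize exactly when colored $2C_m$ and $C_{2m}$ are WL-equivalent, and show that orientation still propagates without pair edges. Then hiding and non-revelation both live on the scale $r^{-2m}$, up to $\mathrm{poly}(m)$ factors. As written, your hiding and non-revelation events live on different exponential scales.
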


Hence, for every fixed number of trials, increasing the allowed palette
size strictly enlarges the set of distinguishable graph pairs. We prove
both theorems using one graph class. In each trial, the observation
either conceals the graph identity or determines it exactly. We then
carefully choose the construction parameters so that the optimal failure
probability crosses $1/3$ when one trial or one color is added.

\subsection{Construction of the graph class}

Let $\ell\geq7$ be prime, and create the following disjoint union of cycles
\begin{equation}
    A_\ell\coloneq C_\ell,
    \qquad B_\ell\coloneq C_3\sqcup C_{\ell-3}.
\end{equation}
Both graphs are $2$-regular on $\ell$ vertices. The following results show that $1$-WL distinguishes them unless both graphs are monochromatic.

\begin{lemma}
\label{lem:block}
Two colored copies of $A_\ell$ and $B_\ell$ are indistinguishable
by $1$-WL if and only if both are monochromatic in the same color.
If a root is added to each graph and joined to every block vertex,
the resulting colored graphs are indistinguishable if and only if
the blocks are monochromatic in the same color and the two roots
have the same initial color.
\end{lemma}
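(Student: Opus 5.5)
The ``if'' direction is a direct induction, and for the ``only if'' direction the plan is to read off, from the complete $1$-WL pattern of a vertex on a $2$-regular graph, the colored bi-infinite sequence obtained by unwinding its cycle, and then use primality of $\ell$ through minimal periods.

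\emph{The ``if'' direction.} If both $A_\ell$ and $B_\ell$ are monochromatic in the same color $c$, then both are $2$-regular with all initial colors equal. Induction on rounds shows that every vertex in either graph has the same pattern $\pi_t$ at every round $t$. Since both graphs have $\ell$ vertices, $\Pi(A_\ell,\chi)=\Pi(B_\ell,\chi')$.

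\emph{Patterns as colored lines.} For a vertex $v$ on a cycle $C_m$ with coloring $\chi$, write $\sigma_v\colon\mathbb Z\to[q]$ for the sequence of colors met when walking around the cycle from $v$; it is $m$-periodic. Let $L_t(v)$ be the unordered pair of the two length-$t$ color words read from $v$ in the two directions. I would prove by induction on $t$ that $\pi_t(v)$ determines $L_t(v)$; equivalently, the depth-$t$ unfolding tree of a degree-$2$ vertex is determined by the ball of radius $t$ in the universal cover, which is the line. For the step, $\pi_{t+1}(v)$ contains $\chi(v)$ and the multiset $\{\pi_t(v_-),\pi_t(v_+)\}$ of its two neighbors. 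I expect this to be the main obstacle. Knowing $L_t(v_\pm)$ is not immediately enough, because each neighbor's pair contains one backward word through $v$ and one forward word. One must pick, in each neighbor's pair, the word that starts with $\chi(v)$ and whose continuation is consistent with the other neighbor.

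To make this clean, I would instead prove the stronger statement that $\pi_t(v)$ determines the set of all length-$(2t+1)$ color windows centered at $v$, up to reflection. This follows because the walks of length $t$ from $v$ in the unfolding tree include the two geodesic directions. Letting $t\to\infty$, the complete pattern $\pi(v)$ determines $\sigma_v$ up to reversal.

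\emph{The ``only if'' direction.} Suppose $\Pi(A_\ell,\chi)=\Pi(B_\ell,\chi')$. Then there is a pattern-preserving bijection between the vertex sets, so every $v\in A_\ell$ has the same line $\sigma_v$, up to reversal, as some $w\in B_\ell$.

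Since $\ell$ is prime, the minimal period of $\sigma_v$ divides $\ell$, so it is either $1$ (when $A_\ell$ is monochromatic) or $\ell$. The minimal period is invariant under reversal. On the other side, $\sigma_w$ has period $3$ or $\ell-3$, so its minimal period is at most $\ell-3<\ell$. Hence $\sigma_v$ has minimal period $1$, and $A_\ell$ is monochromatic in some color $c$.

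Every $w\in B_\ell$ then matches a constant-$c$ line, so $\chi'(w)=c$ for all $w$. Thus $B_\ell$ is monochromatic in the same color $c$.

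\emph{The rooted version.} Add a root joined to every block vertex. The root is the unique vertex of degree $\ell\geq7$, while block vertices have degree $3$; degree is read off at round $1$. A pattern-preserving bijection must therefore send root to root, which forces the two roots to have the same initial color.

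For a block vertex, its rooted pattern determines its cycle pattern: inductively, the cycle-neighbor patterns are recovered by discarding the unique neighbor whose pattern records degree $\ell$. Equal multisets of block patterns therefore give equal multisets of cycle patterns, and the first part applies.

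Conversely, if the blocks are monochromatic in the same color and the roots share a color, induction shows that all block vertices share a single pattern at each round in both graphs. The two roots then receive identical multisets, so the two colored graphs have identical observations.
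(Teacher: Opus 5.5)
Your route is sound and genuinely different from the paper's, but the step you flagged as the main obstacle is asserted rather than proved. The sentence ``the walks of length $t$ from $v$ in the unfolding tree include the two geodesic directions'' does not explain how to recognize the non-backtracking branches in an unordered tree whose other branches backtrack. It also does not explain how to pair a left half with a right half, which is exactly the pairing problem you described.

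The missing idea is a peeling argument by multiset deletion. Write $v_j$ for the vertex at signed position $j$ along the cycle through $v$. Fix one of the two neighbor entries of $\pi_t(v)$ and call it $\pi_{t-1}(v_1)$. Its neighbor multiset is $\{\!\{\pi_{t-2}(v),\pi_{t-2}(v_2)\}\!\}$. Since $\pi_{t-2}(v)$ is a truncation of $\pi_t(v)$, deleting one copy of it leaves $\pi_{t-2}(v_2)$; this is well defined even when the two entries coincide. Next, delete $\pi_{t-3}(v_1)$, a truncation of $\pi_{t-1}(v_1)$, from the neighbor multiset of $\pi_{t-2}(v_2)$ to get $\pi_{t-3}(v_3)$. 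Continue down to $\pi_0(v_t)=\chi(v_t)$. Running the same procedure from the other neighbor entry yields $\chi(v_{-1}),\ldots,\chi(v_{-t})$. Hence $\pi_t(v)$ determines $\chi(v_{-t}),\ldots,\chi(v_t)$ up to reversal, with both halves correctly paired, by a procedure that does not depend on the graph. Alternatively, cite the standard fact that equal stable $1$-WL colors are equivalent to isomorphic colored rooted universal covers; for a cycle, the cover is the line colored by $\sigma_v$. With this in place, your minimal-period argument and your rooted extension go through.

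The paper never reconstructs anything from patterns. It runs color refinement on the disjoint union, uses that the stable partition is equitable, and double counts edges between adjacent stable classes. If $a_c$ is the multiplicity of class $c$ in $A_\ell$ and $b_c$ its multiplicity in the triangle of $B_\ell$, then $b_c/a_c$ is constant on the connected quotient. So $b_c=3a_c/\ell$, and primality of $\ell$ forces a single stable class. That argument is shorter and uses only that $A_\ell$ is connected of prime order while $B_\ell$ has a smaller component, so it does not depend on the graphs being cycles.

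Your argument is tied to $2$-regularity, but it makes the obstruction concrete as a periodicity mismatch. A colored $\ell$-cycle has minimal period $1$ or $\ell$, whereas every line in $B_\ell$ has minimal period dividing $3$ or $\ell-3$. The rooted part is handled the same way in both proofs: the root is identified by its degree, and its contribution is removed from the block vertices' neighbor multisets.
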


\begin{proof}
Suppose that the colored blocks are indistinguishable, and run color
refinement on their disjoint union. For each stable color $c$, let
$a_c$ be its multiplicity in $A_\ell$ and let $b_c$ be its
multiplicity in the triangular component of $B_\ell$. Every stable
color occurring in either graph occurs in both. Let $m_{cd}$ be the
number of neighbors of color $d$ of a vertex of color $c$.

The quotient on stable colors is connected because $A_\ell$ is
connected. A color occurring in the triangle brings all its neighboring
colors into the triangle, so $b_c>0$ for every stable color $c$.
For distinct adjacent colors, counting edges gives
\begin{equation}
\begin{aligned}
    a_c m_{cd}&=a_d m_{dc},\\
    b_c m_{cd}&=b_d m_{dc}.
\end{aligned}
\end{equation}
Hence, the fraction $b_c/a_c$ is constant on the connected quotient. Summing the
multiplicities gives $b_c=3a_c/\ell$. Since $\ell$ is prime and
$\ell\geq7$, every positive $a_c$ is divisible by $\ell$.
As $\sum_c a_c=\ell$, there is only one stable color. Both blocks
are therefore monochromatic in the same initial color. Conversely,
equally colored monochromatic $2$-regular graphs of the same order
have identical WL colors.

In the rooted graphs, the roots have degree $\ell$ and the block
vertices have degree three. Under indistinguishability, the two roots
therefore have the same stable color. Removing their common contribution
from the block vertices' neighbor multisets leaves the same argument.
The converse follows by a standard induction over the refinement rounds.
\end{proof}

We next use these blocks to distinguish the endpoints of a parity
gadget. Fix $k\geq3$ input pairs $X_i=(x_i^0,x_i^1)$, $i\in[k]$,
and a root pair $R=(r^0,r^1)$. Add an edge within every pair.
For each input $i$, attach $A_\ell$ to $x_i^0$ and $B_\ell$ to
$x_i^1$, joining each endpoint to every vertex of its block.
For $\vec a\in\{0,1\}^k$, write
\begin{equation*}
    \operatorname{par}(\vec a)
    \coloneq a_1\oplus\cdots\oplus a_k.
\end{equation*}
Create one internal vertex $g_{\vec a}$ for each such vector, with
neighborhood
\begin{equation*}
    N(g_{\vec a})
    =\{x_1^{a_1},\ldots,x_k^{a_k},
        r^{\operatorname{par}(\vec a)}\}.
\end{equation*}
Finally, define $H_{b,\ell,k}$, for $b\in\{0,1\}$, by attaching
one marker leaf to $r^b$. Note that each graph has
$2k\ell+2k+2^k+3$ vertices. For fixed $\ell,k$, set
$H_b \coloneq H_{b,\ell,k}$.

The following results show that $1$-WL can recover the structural role of $H_b$.

\begin{lemma}[Structural roles]
\label{lem:roles}
The complete WL observation of $H_b$ identifies the input endpoints,
block vertices, internal vertices, root endpoints, and marker leaf.
For every $t\geq0$, an input endpoint's structure after $t+2$ rounds
determines the $t$-round WL observation of its attached block in
isolation. The endpoint also identifies its pair partner through their
pair edge. These statements hold for every initial coloring.
\end{lemma}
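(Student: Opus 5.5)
The argument will separate vertices by their degrees, which are coloring-independent and are recorded after one WL round, and then use the neighbor multisets to read off the remaining roles. First I will tabulate the degrees in $H_b$. A block vertex has degree three: two neighbors in its cycle and one at its attachment endpoint. An input endpoint $x_i^c$ has degree $\ell+1+2^{k-1}$: its block, its pair partner, and the $2^{k-1}$ internal vertices $g_{\vec a}$ with $a_i=c$. An internal vertex has degree $k+1$. A root endpoint has degree $1+2^{k-1}$ plus one for the marker on $r^b$. The marker has degree one. Since $k\geq3$ and $\ell\geq7$, the only coincidences to worry about are $k+1=3$, which cannot happen, and possible clashes between the root degrees and the input-endpoint degrees. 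Because $\ell+1>1$, the input endpoints are strictly larger than the unmarked root, and they exceed the marked root as long as $\ell+1>2$, which holds. The marker is the unique vertex of degree one.

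Since the WL pattern of a vertex contains its round-one pattern, and hence its degree, the complete observation $\Pi(H_b,\chi)$ determines the multiset of degrees. This already identifies every role class for every coloring $\chi$: the degree-three vertices are the block vertices, the degree-$(k+1)$ vertices are the internal vertices, and so on. If one wants the two root endpoints distinguished from each other, the marked one is adjacent to the degree-one vertex, and this is visible after two rounds. For the pair-partner claim, I will note that an input endpoint's neighbors of degree $\ell+1+2^{k-1}$ consist of exactly one vertex, namely its partner. The endpoint therefore sees its partner's pattern, and the partner is singled out among its neighbors by degree.

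For the block claim, I will use induction on $t$ and show that the $(t+2)$-round pattern of $x_i^c$ determines the $t$-round multiset of isolated-block patterns. The endpoint's round-$(t+2)$ pattern contains the multiset of its neighbors' round-$(t+1)$ patterns, and the block neighbors are picked out by degree three. The core of the induction is the following claim. For a block vertex $y$ attached to endpoint $x$, the round-$(t+1)$ pattern of $y$ in $H_b$ determines its round-$t$ isolated pattern, because $y$'s neighborhood is its two cycle neighbors together with $x$. I will prove this by an inner induction. At round $s+1$, the multiset of $y$'s neighbors' round-$s$ patterns contains exactly one degree-not-three entry, the pattern of $x$. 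Removing that entry leaves the two cycle neighbors' round-$s$ patterns, and by the inner hypothesis these determine their round-$(s-1)$ isolated patterns. Since the maps are nested and deterministic, this gives a lag-one recovery. The extra round accounts for reaching the block from the endpoint.

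I expect the main obstacle to be bookkeeping rather than any conceptual difficulty. The degree arithmetic has to be checked so that root and input degrees genuinely never coincide, and the lag between rounds has to be aligned correctly in the induction. Colors never interfere, because each step uses only degrees, which are present regardless of the initial coloring.
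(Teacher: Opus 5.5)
Your proposal is correct and follows the paper's proof essentially step for step: the same degree table separates the roles, the partner is the unique neighbor with input-endpoint degree, and the same lag-one induction shows that a block vertex's $(t+1)$-round pattern determines its $t$-round isolated pattern by discarding the single neighbor whose degree is not three. Two points are left implicit: the internal degree $k+1$ must also be separated from the root degrees $2^{k-1}+1$ and $2^{k-1}+2$ (true for $k\geq3$), and degrees can be read off neighbor patterns only from round one onward, which your trivial base case $t=0$ takes care of.
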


\begin{proof}
We first look at the vertex degrees of the different structural roles in the graph $H_b$:
\begin{equation*}
\begin{array}{c|c}
\text{Role}&\text{Degree}\\ \hline
\text{marker leaf}&1\\
\text{block vertex}&3\\
\text{internal vertex}&k+1\\
\text{unmarked root}&2^{k-1}+1\\
\text{marked root}&2^{k-1}+2\\
\text{input endpoint}&\ell+2^{k-1}+1.
\end{array}
\end{equation*}
The degrees are distinct for $k\geq3$ and $\ell\geq7$, so one
1-$WL$ round can identify such structures. Each input endpoint has exactly one
input-endpoint neighbor (i.e., the partner), and, by construction, its block neighbors
are precisely its neighbors of degree $3$.

For a colored graph $(H_b,\chi)$, abbreviate its full structures
by $\pi_t(v)=\pi_t^{H_b,\chi}(v)$. Let $\beta_t(v)$ be the
structure obtained by running $1$-WL inside the block containing the vertex $v$.
We prove by induction that $\pi_{t+1}(v)$ determines $\beta_t(v)$
for every block vertex $v$. The base case holds because
$\beta_0(v)=\chi(v)$. The structure $\pi_{t+2}(v)$ records its own
$\pi_{t+1}$ structure and those of its neighbors. Because
$t+1\geq1$, each neighbor pattern determines the neighbor's degree.
Retaining exactly the neighbors of degree $3$ and applying the
induction hypothesis implies
\begin{equation*}
\begin{aligned}
    \beta_{t+1}(v)=\bigl(&\beta_t(v),\\
    &\{\!\{\beta_t(u) \colon u \in N(v),\ \deg(u)=3\}\!\}\bigr).
\end{aligned}
\end{equation*}
These are exactly the neighbors inside the block. An input endpoint's
$\pi_{t+2}$ pattern then gives the multiset of $\beta_t$ structures
over every vertex of its attached block.
\end{proof}

We now show that swapping the endpoints propagates through the gadget. 

\begin{lemma}[Parity extension]
\label{lem:parity-extension}
Consider the parity gadget before adding the blocks and marker.
Swapping the endpoints of $X_i$ when $s_i=1$ and of $R$ when
$s_R=1$ extends to an automorphism of the gadget if and only if
\begin{equation*}
    s_R=s_1\oplus\cdots\oplus s_k.
\end{equation*}
\end{lemma}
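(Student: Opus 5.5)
We prove both directions by explicitly extending the swap to the internal vertices, or by tracking how an automorphism must act on them. The gadget before blocks and marker consists of the pairs $X_1,\ldots,X_k$ and $R$, one edge inside each pair, and the internal vertices $g_{\vec a}$ for $\vec a\in\{0,1\}^k$. Write $\tau$ for the permutation of endpoints determined by $s_1,\ldots,s_k,s_R$, and $\vec s=(s_1,\ldots,s_k)$.

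\emph{Sufficiency.} Assume $s_R=s_1\oplus\cdots\oplus s_k$. Extend $\tau$ by setting $\tau(g_{\vec a})\coloneq g_{\vec a\oplus\vec s}$, which is a bijection on internal vertices. Pair edges $x_i^0x_i^1$ and $r^0r^1$ are mapped to themselves, since each pair is fixed setwise. For an internal edge, the neighborhood of $g_{\vec a}$ is sent to
\[
\{x_1^{a_1\oplus s_1},\ldots,x_k^{a_k\oplus s_k},\,r^{\operatorname{par}(\vec a)\oplus s_R}\}.
\]
Because $\operatorname{par}$ is linear over $\mathbb F_2$, we have $\operatorname{par}(\vec a\oplus\vec s)=\operatorname{par}(\vec a)\oplus\operatorname{par}(\vec s)=\operatorname{par}(\vec a)\oplus s_R$. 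So this set is exactly $N(g_{\vec a\oplus\vec s})$, and adjacency is preserved. Non-edges are preserved as well, since the extension is a bijection mapping the edge set into itself and the graph is finite.

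\emph{Necessity.} Suppose some automorphism $\gamma$ extends $\tau$. Every internal vertex is adjacent to exactly one endpoint of each pair, and $\gamma$ acts on endpoints as $\tau$. Hence $\gamma(g_{\vec a})$ is an internal vertex, because internal vertices are exactly the vertices outside the given endpoints. Its input neighbors are $x_i^{a_i\oplus s_i}$. Internal vertices are determined by their input neighbors, since the map $\vec a\mapsto N(g_{\vec a})\cap\bigcup_i X_i$ is injective. Therefore $\gamma(g_{\vec a})=g_{\vec a\oplus\vec s}$.

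Now compare root neighbors. The root neighbor of $\gamma(g_{\vec a})$ must be $\tau(r^{\operatorname{par}(\vec a)})=r^{\operatorname{par}(\vec a)\oplus s_R}$. It must also equal $r^{\operatorname{par}(\vec a\oplus\vec s)}=r^{\operatorname{par}(\vec a)\oplus\operatorname{par}(\vec s)}$. Hence $s_R=\operatorname{par}(\vec s)$.

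The only delicate step is justifying that $\gamma$ maps internal vertices to internal vertices. Here $\gamma$ is only specified on endpoints and is a bijection of the whole vertex set, so it must map the complement of the endpoints onto itself. No degree argument is needed. The rest is the $\mathbb F_2$-linearity computation.
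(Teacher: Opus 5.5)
Your proof is correct and follows the same route as the paper. The input neighbors force $g_{\vec a}\mapsto g_{\vec a\oplus\vec s}$, and by $\mathbb F_2$-linearity of $\operatorname{par}$, preserving the root edges reduces to $s_R=\operatorname{par}(\vec s)$; you also spell out steps the paper leaves implicit, namely that internal vertices map to internal vertices and that preserving edges preserves non-edges.
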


\begin{proof}
Let $\vec s=(s_1,\ldots,s_k)$; the input neighbors force
$g_{\vec a}$ to map to $g_{\vec a\oplus\vec s}$. Its root edge
is preserved exactly when
\begin{equation*}
\begin{aligned}
    \operatorname{par}(\vec a)\oplus s_R
    &=\operatorname{par}(\vec a\oplus\vec s)\\
    &=\operatorname{par}(\vec a)\oplus\operatorname{par}(\vec s).
\end{aligned}
\end{equation*}
This is equivalent to the stated condition. When it holds, the map
preserves every edge. Permuting input pairs also preserves the
gadget and does not change this condition.
\end{proof}

In addition, we can show that the  graphs $H_{0,\ell,k}$ and $H_{1,\ell,k}$ are non-isomorphic.

\begin{lemma}[Non-isomorphism]
\label{lem:noniso}
For every prime $\ell\geq7$ and $k\geq3$, the graphs
$H_{0,\ell,k}$ and $H_{1,\ell,k}$ are not isomorphic.
\end{lemma}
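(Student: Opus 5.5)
We argue by contradiction, following the template of \cref{lem:cfi-nonisomorphic-draft}. Suppose $\gamma\colon V(H_0)\to V(H_1)$ is an isomorphism. From the degree table in the proof of \cref{lem:roles}, whose entries are pairwise distinct for $k\geq3$ and $\ell\geq7$, $\gamma$ preserves each structural role. Hence $\gamma$ maps the marker leaf to the marker leaf and the marked root endpoint to the marked root endpoint. Since the marker is attached to $r^0$ in $H_0$ and to $r^1$ in $H_1$, this gives $\gamma(r^0)=r^1$, and therefore $\gamma(r^1)=r^0$.

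Next, $\gamma$ maps input endpoints to input endpoints and preserves the pair edges, which are the unique input-endpoint neighbors. So it maps each input pair $X_i$ onto some input pair $X_{\sigma(i)}$, with a swap bit $s_i$ defined by $\gamma(x_i^b)=x_{\sigma(i)}^{b\oplus s_i}$. The block neighbors of an input endpoint are exactly its degree-$3$ neighbors, so $\gamma$ restricts to an isomorphism between the attached blocks. The block at $x^0$ is $A_\ell=C_\ell$, which is connected. The block at $x^1$ is $B_\ell=C_3\sqcup C_{\ell-3}$, which is not. These blocks are therefore non-isomorphic, which forces $s_i=0$ for every $i$.

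Finally, $\gamma$ maps internal vertices to internal vertices. For each $\vec a$, the image $\gamma(g_{\vec a})$ has input neighbors $x_{\sigma(i)}^{a_i}$. Because internal vertices are determined by their input neighborhoods, $\gamma(g_{\vec a})=g_{\vec a'}$, where $\vec a'$ is $\vec a$ with its coordinates permuted by $\sigma$. This gives $\operatorname{par}(\vec a')=\operatorname{par}(\vec a)$. Preserving the root edge then requires $\gamma(r^{\operatorname{par}(\vec a)})=r^{\operatorname{par}(\vec a)}$, which contradicts $\gamma(r^0)=r^1$. Equivalently, \cref{lem:parity-extension}, together with the remark that permuting input pairs does not change the parity condition, forces $s_R=\bigoplus_i s_i=0$, whereas the marker forces $s_R=1$.

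The only delicate point is checking that the restriction of $\gamma$ really yields a gadget automorphism in the sense of \cref{lem:parity-extension} after composing with the input permutation $\sigma$. The direct internal-vertex computation above avoids relying on that reduction, so I expect no real obstacle.
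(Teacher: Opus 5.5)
Your proof is correct and follows the paper's route. Degrees fix the structural roles and force $\gamma(r^0)=r^1$; the connected $A_\ell$ versus the disconnected $B_\ell$ rules out swapping input endpoints; and the parity structure of the internal vertices then rules out the root swap. Your explicit computation $\gamma(g_{\vec a})=g_{\vec a'}$ just unpacks the step the paper delegates to \cref{lem:parity-extension}.
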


\begin{proof}
An isomorphism must preserve the roles in \cref{lem:roles} and
map the marked root $r^0$ in $H_0$ to $r^1$ in $H_1$.
Moreover, an isomorphism also maps input-pair edges to input-pair edges and preserves
the subgraph induced by each endpoint's degree-three neighbors.
Since $A_\ell$ is connected and $B_\ell$ is disconnected, it
cannot exchange the two endpoint types. It may therefore permute
input pairs but cannot swap their endpoints. By
\cref{lem:parity-extension}, it cannot swap the root endpoints
either, leading to a contradiction.
\end{proof}

\subsection{Erasure and the trial law}

In one trial, assign every vertex an independent uniform color from
$[q]$. Hence, a block is monochromatic with probability $q^{1-\ell}$.
For input $i$, let $E_i$ be the event that both attached blocks are
monochromatic while their colors need not agree. By the above, these events are
independent and have probability
\begin{equation}
    p_{\ell,q}\coloneq\Pr(E_i)=q^{2-2\ell}.
\end{equation}
Let
\begin{equation}
    E\coloneq\bigcup_{i=1}^k E_i,
    \qquad D\coloneq E^c,
\end{equation}
and
\begin{equation}
    \eta_{\ell,k,q}\coloneq\Pr(D)
    =(1-p_{\ell,q})^k.
\end{equation}
We call input $i$ erased when $E_i$ occurs. For fixed $\ell,k,q$,
write $P_b=P_{H_b}^{\mathrm{WL},q}$ for the probability law of
$Z_{\mathrm{WL},q}(H_b)$.

\begin{lemma}
\label{lem:decomposition}
For $q\geq2$, the conditional WL observation laws on $E$ agree,
whereas the conditional laws on $D$ have disjoint supports.
Moreover, the observation also determines whether $E$ or $D$ occurred.
Consequently, there are mutually singular probability laws
$Q,Q_0,Q_1$ such that, for $b\in\{0,1\}$,
\begin{equation}
    P_b=(1-\eta_{\ell,k,q})Q+\eta_{\ell,k,q}Q_b.
\end{equation}
For $q=1$, the two trial laws agree and $\eta_{\ell,k,1}=0$.
\end{lemma}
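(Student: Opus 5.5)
The plan has three parts: a symmetry argument on $E$, a detection argument on $D$, and then assembling the mixture decomposition.

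\textbf{Agreement on $E$.} I would mirror \cref{lem:cfi-root-hiding-draft}. On $E$, fix a canonical erased input $i$, say the smallest index with $E_i$. Define a bijection $\tau_i\colon V(H_0)\to V(H_1)$ as follows.
\begin{enumerate}
\item Swap the endpoints of $X_i$ and of $R$. By \cref{lem:parity-extension} with $s_i=s_R=1$, this extends to the gadget automorphism $g_{\vec a}\mapsto g_{\vec a\oplus\vec e_i}$. The pair edges are preserved automatically.
\item Exchange the two blocks at $X_i$ via a fixed bijection $A_\ell\leftrightarrow B_\ell$.
\item Fix all other blocks, and send marker to marker.
\end{enumerate}
Since $\tau_i(r^0)=r^1$, the marker edge is preserved. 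The only violated edges lie inside the two exchanged blocks, and on $E_i$ both blocks are monochromatic. Each block vertex still has two block neighbours and one endpoint neighbour, so the induction of \cref{lem:cfi-block-erasure-draft} gives $\Pi(H_0,\chi)=\Pi(H_1,\chi\circ\tau_i^{-1})$.

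The map $\chi\mapsto\chi\circ\tau_i^{-1}$ preserves the set of erased inputs, hence preserves the canonical $i$, and is a measure-preserving bijection of the colorings in $E$. Therefore the two conditional laws on $E$ coincide; call this common law $Q$.

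\textbf{Detecting $D$ and separating $H_0,H_1$ on $D$.} By \cref{lem:roles}, the observation identifies the roles of all vertices. It also gives, for each input endpoint, the full WL observation of its attached block, and it identifies pair partners. By \cref{lem:block}, $A_\ell$ and $B_\ell$ are indistinguishable iff both are monochromatic in the same color. A single block being monochromatic is itself readable from its isolated WL observation, via a single round-zero color. So the observation determines which $E_i$ occur, and in particular whether $E$ or $D$ holds. This also shows the supports of $Q$ and of the conditional laws on $D$ are disjoint.

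To separate $H_0$ from $H_1$ on $D$, I would show that every input pair is oriented: the $0$-side endpoint carries an $A_\ell$ block and the $1$-side a $B_\ell$ block. On $D$, each pair has at least one non-monochromatic block, and I need this to let WL tell which side carries the connected cycle. This is the main obstacle, because a non-monochromatic $A_\ell$ with a monochromatic $B_\ell$ must still yield distinct, canonically recognizable patterns. I would try to argue via the stable-color counting of \cref{lem:block}: a non-monochromatic coloring of $C_\ell$ is never WL-equivalent to any coloring of $C_3\sqcup C_{\ell-3}$. Hence the two endpoints of each pair get distinct stable colors in both $H_0$ and $H_1$. The harder question is whether the observation reveals \emph{which} endpoint is the $A_\ell$ side. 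If it does not, the parity mechanism still works, because the comparison only needs distinctness of the two endpoints.

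Given distinct endpoint colors, an internal vertex $g_{\vec a}$ sees the tuple of endpoint colors. For each root endpoint $r^c$, its multiset of neighbour colors consists of the tuples of $\vec a$ with $\operatorname{par}(\vec a)=c$, and these multisets differ for $c=0,1$. The marker is attached to the root endpoint whose neighbour multiset is the parity-$b$ one. This reads off $b$ from the observation in a way that is consistent across the two graphs, giving disjoint supports for $Q_0$ and $Q_1$.

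\textbf{Assembling.} Setting $Q_b=\Law(Z\mid D)$ for $H_b$ and using $\Pr(D)=\eta_{\ell,k,q}$, which is the same for both graphs, gives $P_b=(1-\eta)Q+\eta Q_b$ with $Q,Q_0,Q_1$ mutually singular. For $q=1$, every block is monochromatic, so $E$ is certain, $\eta=0$, and the first step gives $P_0=P_1$.
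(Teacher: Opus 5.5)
Your treatment of $E$ (the least-erased-index bijection, the block-erasure induction, and the coloring bijection), the observability of $E$, the assembly of the mixture, and the case $q=1$ all match the paper's proof. The gap is in separating $H_0$ from $H_1$ on $D$. You leave open whether the observation reveals which endpoint of a pair carries $A_\ell$, and fall back on the claim that distinctness of the two endpoints suffices. That claim is false.

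First, distinct colors within each pair do not let $g_{\vec a}$ recover $\operatorname{par}(\vec a)$. Take colors $(\alpha,\beta)$, $(\beta,\gamma)$, $(\gamma,\alpha)$ on $X_1,X_2,X_3$. Then $g_{000}$ and $g_{111}$ see the same multiset of input colors, yet they have different parities. More fundamentally, to read off $b$ you must know which of the two root neighbor multisets is the parity-$0$ one, and that label depends on the orientation of the input pairs. Reversing the superscripts at a single pair $j$ renames $g_{\vec a}$ as $g_{\vec a\oplus\vec e_j}$ and exchanges the two parity classes. So your assertion that the readout is ``consistent across the two graphs'' is exactly the unproved point. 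Indeed, on $E$ it is precisely the undetectable loss of orientation at one input that makes the two laws coincide.

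The missing step follows from \cref{lem:block} itself. A colored $A_\ell$ and a colored $B_\ell$ are WL-equivalent only when both are monochromatic in the same color. Hence the observations of nonmonochromatic colorings of $A_\ell$ form a set disjoint from the observations of all colorings of $B_\ell$, and symmetrically. So the type of any nonmonochromatic block is a function of its isolated observation alone, the same function for both graphs. By \cref{lem:roles}, each endpoint's complete pattern determines its block's isolated observation and its partner's pattern. On $D$, there are two cases:
\begin{itemize}
\item If the endpoint's own block is nonmonochromatic, its type fixes the endpoint's superscript directly.
\item Otherwise its partner's block is nonmonochromatic, and the pair edge gives the complementary superscript.
\end{itemize}
Then $g_{\vec a}$ counts its superscript-one input neighbors to obtain $\operatorname{par}(\vec a)$. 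The marked root, recognized by degree, has only internal neighbors of parity $b$. This gives one function of the observation that returns $b$ on the $D$-support of either graph, which is what yields $Q_0\perp Q_1$.
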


\begin{proof}
We first show that one erased input hides the graph identity.
Fix a coloring of $H_0$ in $E$, and choose the least erased index
$i$. On the same vertex sets, we replace the block $A_\ell$ at
$x_i^0$ by $B_\ell$ and the block $B_\ell$ at $x_i^1$ by
$A_\ell$. Therefore, each replaced block is monochromatic, has $\ell$ vertices,
and is $2$-regular internally. At every refinement round, all
vertices of such a block have the same structure, determined by two
copies of its preceding structure and the structure of its endpoint.
A straightforward structural induction therefore shows that these replacements preserve the full WL observation.

The resulting graph is isomorphic to $H_1$, i.e.,  swap the endpoints of
$X_i$ and $R$, send $g_{\vec a}$ to
$g_{\vec a\oplus\vec e_i}$, exchange the two blocks at input
$i$, and fix all other inputs and blocks. Here $\vec e_i$ is
the vector with its only nonzero coordinate at $i$. We now map
every vertex color along this isomorphism, including the colors
on the internal vertices, endpoints, and marker. We then choose the block
bijections in inverse pairs. This defines an involution between
the two copies of $E$, i.e., it preserves every event $E_j$ and hence
the least erased index. Also, all colorings have the same probability,
so the map preserves the conditional uniform law on $E$.
The coupled observations agree. The least erased index is used
only to define this map on labeled colorings, i.e., the observer need
not recover it.

Next, because the degree-three neighbor patterns retain
their initial colors, two WL rounds determine whether an endpoint's own block is
monochromatic. In the first round, the same information
about its unique input-endpoint neighbor. Thus the complete
observation determines the existence of an erased input and
hence the event $E$.

It remains to recover the graph identity on $D$. The color
partition on the disjoint union of two colored $\ell$-vertex
blocks stabilizes within $2\ell$ rounds. By \cref{lem:block},
the $2\ell$-round observation therefore identifies the type of
any nonmonochromatic block. By \cref{lem:roles}, an endpoint's
full structure after $2\ell+2$ rounds determines this observation.
On $D$, every input has at least one nonmonochromatic block.
Its type determines the endpoint's superscript. That is, if the other
block is monochromatic, the pair edge gives its endpoint the
complementary superscript. Thus, after $2\ell+3$ rounds, every
input endpoint's pattern determines its superscript, without
recovering the input index.

After one further round, $g_{\vec a}$ determines the number of
its input neighbors with superscript one and hence
$\operatorname{par}(\vec a)$. All internal neighbors of $r^b$
have parity $b$. Since degree identifies the marked root,
its pattern after $2\ell+5$ rounds determines $b$.
The two conditional supports on $D$ are therefore disjoint.
Since $E$ is observable, they are also disjoint from the common
conditional support on $E$. This gives the decomposition.
When $q=1$, every input is erased, and the same coupling proves
equality of the deterministic observations.
\end{proof}

\subsection{Optimal failure probability}

The decomposition makes the optimal error depend only on the
probability that both batches reveal their graph identities.

\begin{proposition}
\label{prop:risk}
Suppose that, for $b\in\{0,1\}$,
\begin{equation}
    P_b=(1-\eta)Q+\eta Q_b,
    \qquad \eta\in[0,1],
\end{equation}
where $Q,Q_0,Q_1$ are mutually singular probability laws.
Among randomized tests using two independent batches of $s$
i.i.d.\ observations, the smallest worst-case failure probability
for deciding whether their generating graph identities agree is
\begin{equation}
\label{eq:hierarchy-optimal-error}
    \alpha_s^*(\eta)
    =\frac12\left[1-\bigl(1-(1-\eta)^s\bigr)^2\right].
\end{equation}
The worst case ranges over all four ordered choices from
$\{P_0,P_1\}$, indexed by their graph identities.
\end{proposition}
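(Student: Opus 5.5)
We prove matching achievability and converse bounds. Call a batch \emph{revealing} if at least one of its $s$ observations lies in the support of $Q_0$ or $Q_1$. Because $Q,Q_0,Q_1$ are mutually singular, each observation almost surely shows which of the three components produced it. A batch from $P_b$ is therefore revealing with probability $\rho\coloneq 1-(1-\eta)^s$, and when it is, it determines $b$ exactly. The target value is then $\alpha_s^*(\eta)=\tfrac12(1-\rho^2)$.

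\emph{Achievability.} Take the test that outputs $\vec 1\{b\neq b'\}$ when both batches are revealing. Otherwise it outputs an independent fair coin. The test is correct whenever both batches reveal, which has probability $\rho^2$ because the batches are independent and the revealing probability does not depend on $b$. Otherwise it errs with probability exactly $1/2$. Every ordered pair $(A,B)$ therefore has error $\tfrac12(1-\rho^2)$.

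\emph{Converse.} I will lower bound the worst-case error by the average error under a uniform prior. Write each batch as a revealing indicator together with its content. Conditioned on at least one batch being non-revealing, the non-revealing batch is distributed as $Q^{\otimes s}$ regardless of its identity. Hence, under a uniform prior on $(b,b')\in\{0,1\}^2$ with the two coordinates independent, the posterior of that non-revealing $b$ stays uniform and independent of everything observed. Consequently the target bit $b\oplus b'$ is a fair coin given the observations on this event, so any randomized test errs there with conditional probability exactly $1/2$. Averaging gives Bayes error at least $\tfrac12(1-\rho^2)$, and the maximum over the four pairs is at least this average.

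The main obstacle is making the measure-theoretic conditioning rigorous. To handle it, I will decompose each $P_b^{\otimes s}$ over the set $J\subseteq[s]$ of coordinates drawn from the $Q_b$ component. Mutual singularity lets a measurable function recover $J$ from the observations, up to a null set. Given $J=\varnothing$, the batch law is $Q^{\otimes s}$ for both $b$. I will then write the test's acceptance probability as an integral over these components and bound it term by term. The remaining steps are elementary.
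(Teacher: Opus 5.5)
Your proposal is correct and follows the paper's proof. The paper uses the same achievability test: decode and compare when both batches reveal, otherwise output a fair coin. It also uses the same converse via the uniform prior on the four ordered identity pairs, under which a non-revealing batch's identity stays an independent fair bit, so $b\oplus b'$ is uniform off the both-revealing event. Your decomposition of $P_b^{\otimes s}$ over the index set $J$ just supplies the measure-theoretic detail behind the paper's brief claims that the mixture components are observable and that conditioning on a $Q$-only batch leaves its identity uniform.
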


\begin{proof}
Mutual singularity makes the three alternatives $Q,Q_0,Q_1$
observable. A batch reveals its identity if it contains an
observation from $Q_0$ or $Q_1$, an event of probability
$a_s=1-(1-\eta)^s$. If both batches reveal their identities,
compare them and answer correctly. Otherwise, flip an independent
fair coin. This test has failure probability $(1-a_s^2)/2$
in each of the four input cases.

For the lower bound, put the uniform prior on the four ordered
pairs of graph identities. If a batch contains only observations
from $Q$, its identity remains an independent uniform bit
conditional on both observed batches. Unless both identities
are revealed, their equality therefore has conditional probability
$1/2$. Every test has average failure probability at least
$(1-a_s^2)/2$ under this prior, and its worst-case failure
probability is at least its average. This proves optimality.
\end{proof}

Set $\delta\coloneq1-1/\sqrt3$. The proposition gives
\begin{equation}
\label{eq:hierarchy-threshold}
\begin{aligned}
    \alpha_s^*(\eta)\leq\frac13
    &\quad\Longleftrightarrow\quad(1-\eta)^s\leq\delta\\
    &\quad\Longleftrightarrow\quad\eta\geq1-\delta^{1/s}.
\end{aligned}
\end{equation}
For the constructed pair, use $\eta=\eta_{\ell,k,q}$ when
$q\geq2$. For $q=1$, the trial laws agree, so the optimal
failure probability is $1/2$, also given by the formula at
$\eta=0$.

\subsection{Strict hierarchies}

We are now ready to prove the two hierarchy results. We begin by proving the hierarchy results on the number of trials.  

\begin{proof}[Proof of \cref{thm:strict-hierarchy}]
Fix $s\geq2$, and set
\begin{equation*}
    L_s\coloneq1-\delta^{1/s},
    \qquad U_s\coloneq1-\delta^{1/(s-1)}.
\end{equation*}
Then $0<L_s<U_s<1$. Choose a prime $\ell\geq7$ sufficiently
large that, with $p=p_{\ell,2}=2^{2-2\ell}$,
\begin{equation*}
    p\leq1-\frac{L_s}{U_s},
    \qquad (1-p)^2\geq U_s.
\end{equation*}
Let $k$ be the smallest positive integer such that
$(1-p)^k<U_s$. The second inequality ensures $k\geq3$.
Minimality gives
\begin{equation*}
    L_s\leq(1-p)U_s\leq(1-p)^k<U_s.
\end{equation*}
Thus $L_s\leq\eta_{\ell,k,2}<U_s$. By
\cref{eq:hierarchy-threshold}, $s$ trials per graph achieve
failure probability at most $1/3$, whereas every test with
$s-1$ trials has worst-case failure probability greater than
$1/3$. Take $G_s=H_{0,\ell,k}$ and $H_s=H_{1,\ell,k}$.
By \cref{lem:noniso}, they are non-isomorphic, which gives the claim.
\end{proof}

We now give the hierarchy results for the number of colors.

\begin{proof}[Proof of \cref{thm:color-hierarchy}]

Fix $q,s\geq1$, and set
\begin{equation}
    \tau\coloneq1-\delta^{1/s},
    \qquad c\coloneq-\log\tau>0.
\end{equation}
For a prime $\ell\geq7$, put $a=2\ell-2$ and choose
\begin{equation}
    k\coloneq\max\{3,\lfloor c q^a\rfloor+1\}.
\end{equation}
When $q\geq2$, we have $kq^{-a}>c$, so
\begin{equation}
\begin{aligned}
    \eta_{\ell,k,q}
    &=(1-q^{-a})^k\\
    &\leq\exp(-kq^{-a})<\exp(-c)=\tau.
\end{aligned}
\end{equation}
For $q=1$, the same strict inequality holds because
$\eta_{\ell,k,1}=0$. Since $\eta_{\ell,k,r}$ is nondecreasing
in $r$, every palette size $r\leq q$ gives optimal failure
probability greater than $1/3$ with $s$ trials per graph.

On the other hand, $k\leq c q^a+3$, and Bernoulli's inequality
gives
\begin{equation}
\begin{aligned}
    \eta_{\ell,k,q+1}
    &\geq1-k(q+1)^{-a}\\
    &\geq1-c\left(\frac{q}{q+1}\right)^a-3(q+1)^{-a}.
\end{aligned}
\end{equation}
As $\ell$ tends to infinity through the primes, the last
expression tends to one. Choose $\ell$ sufficiently large that
it exceeds $\tau$. By \cref{eq:hierarchy-threshold}, $q+1$
colors and $s$ trials per graph then give failure probability
at most $1/3$. Taking $G_{q,s}=H_{0,\ell,k}$ and
$H_{q,s}=H_{1,\ell,k}$ proves the claim.
\end{proof}

\section{Representation Spaces for Message-Passing Networks}
\label{app:representation_spaces}

We fix an aggregation scheme
$\alpha\in\{\overline{\mathrm{sum}},\mathrm{mean},
\mathrm{max},\mathrm{min}\}$. All measures below are finite Borel measures. Graphs are finite and nonempty; for mean, max, and min aggregation, we assume that every vertex has a nonempty neighborhood.

\subsection{Metrics on measures and compact sets}

Let $(X,d)$ be a compact metric space.
Write $\mathcal{M}_{\le1}(X)$ for the nonnegative measures on $X$
with total mass at most one, and $\mathcal{P}(X)$ for the
probability measures on $X$.
For $\mu,\nu\in\mathcal{M}_{\le1}(X)$, define the bounded-Lipschitz
metric
\begin{equation}
\beta_d(\mu,\nu)
\coloneq
\sup_{\substack{\|f\|_\infty\le1\\\operatorname{Lip}_d(f)\le1}}
\left|\int_X f\,d\mu-\int_X f\,d\nu\right|.
\label{eq:bl_metric}
\end{equation}
The bound on $\|f\|_\infty$ makes this expression finite even when
the measures have different total masses.

Let $\mathcal{K}(X)$ denote the collection of nonempty compact
subsets of $X$. Its Hausdorff metric is
\begin{equation}
\operatorname{H}_d(A,B)
\coloneq
\max\left\{
\sup_{a\in A}\inf_{b\in B}d(a,b),
\sup_{b\in B}\inf_{a\in A}d(a,b)
\right\}.
\label{eq:hausdorff_metric}
\end{equation}

\subsection{Iterated representation spaces}

Set $\mathcal{X}_0=K$ and
$\mathsf{d}_0(\vec{x},\vec{y})=\|\vec{x}-\vec{y}\|_2$.
Given $(\mathcal{X}_\ell,\mathsf{d}_\ell)$, define the neighborhood
space
\[
\mathcal{Y}_\ell
=
\begin{cases}
\mathcal{M}_{\le1}(\mathcal{X}_\ell),
& \alpha=\overline{\mathrm{sum}},\\
\mathcal{P}(\mathcal{X}_\ell),
& \alpha=\mathrm{mean},\\
\mathcal{K}(\mathcal{X}_\ell),
& \alpha\in\{\mathrm{max},\mathrm{min}\},
\end{cases}
\]
equipped with
\[
\mathsf{e}_\ell
=
\begin{cases}
\beta_{\mathsf{d}_\ell},
& \alpha\in\{\overline{\mathrm{sum}},\mathrm{mean}\},\\
\operatorname{H}_{\mathsf{d}_\ell},
& \alpha\in\{\mathrm{max},\mathrm{min}\}.
\end{cases}
\]
Define
\begin{equation}
\mathcal{X}_{\ell+1}
=
\mathcal{X}_\ell\times\mathcal{Y}_\ell,
\qquad
\mathsf{d}_{\ell+1}\bigl((z,\nu),(z',\nu')\bigr)
=
\mathsf{d}_\ell(z,z')+\mathsf{e}_\ell(\nu,\nu').
\label{eq:recursive_representation_metric}
\end{equation}

For normalized-sum and mean aggregation, these are spaces of iterated degree measures: each representation records the initial feature together with measures on lower-depth representations. For max and min aggregation, we replace measures with compact neighborhood sets.

\subsection{Induced representations and pseudometrics}
\label{app:induced_representations}

For a graph $G$ with initialization $\vec{x}\in K^{V(G)}$, set
$\eta_{G,v}^{(0)}(\vec{x})=\vec{x}_v$ and recursively define
\begin{equation*}
\eta_{G,v}^{(\ell+1)}(\vec{x})
=
\bigl(\eta_{G,v}^{(\ell)}(\vec{x}),\nu_{G,v}^{(\ell)}(\vec{x})\bigr),
\end{equation*}
where
\[
\nu_{G,v}^{(\ell)}(\vec{x})
=
\begin{cases}
\displaystyle
\frac{1}{|V(G)|}
\sum_{u\in N_G(v)}
\delta_{\eta_{G,u}^{(\ell)}(\vec{x})},
& \alpha=\overline{\mathrm{sum}},\\[6pt]
\displaystyle
\frac{1}{|N_G(v)|}
\sum_{u\in N_G(v)}
\delta_{\eta_{G,u}^{(\ell)}(\vec{x})},
& \alpha=\mathrm{mean},\\[6pt]
\{\eta_{G,u}^{(\ell)}(\vec{x}):u\in N_G(v)\},
& \alpha\in\{\mathrm{max},\mathrm{min}\}.
\end{cases}
\]
Here $\delta_z$ is the Dirac probability measure at $z$. The normalized-sum measure has mass $|N_G(v)|/|V(G)|\le1$, the mean measure has mass one, and the neighborhood sets are finite and nonempty. Thus
$\eta_{G,v}^{(\ell)}(\vec{x})\in\mathcal{X}_\ell$.

The induced pseudometric on rooted attributed graphs is
\begin{equation*}
d_D^{\mathrm{node}}\bigl((G,\vec{x},v),(G',\vec{x}',v')\bigr)
\coloneq
\mathsf{d}_D\bigl(
\eta_{G,v}^{(D)}(\vec{x}),\eta_{G',v'}^{(D)}(\vec{x}')
\bigr).
\end{equation*}
It is a pseudometric because distinct rooted attributed graphs
may have identical representations.

For graph-level representations, define
\[
R_{G,\vec{x}}^{(D)}
\coloneq
\frac1{|V(G)|}\sum_{v\in V(G)}
\delta_{\eta_{G,v}^{(D)}(\vec{x})}
\in\mathcal{P}(\mathcal{X}_D).
\]
The induced graph-level pseudometric is
\begin{equation*}
d_D^{\mathrm{graph}}\bigl((G,\vec{x}),(G',\vec{x}')\bigr)
\coloneq
W_1\bigl(R_{G,\vec{x}}^{(D)},R_{G',\vec{x}'}^{(D)}\bigr),
\end{equation*}
where the Wasserstein distance $W_1$ uses the ground metric $\mathsf{d}_D$.

The induced representation maps are continuous, hence Borel measurable, by induction on depth using the continuity of the finite measure and neighborhood-set constructions. So their
laws under random initialization are well defined.

\section{Compactness of the Representation Spaces and Lipschitz Property}
\label{app:representation_compactness_lipschitzness}
This section collects the compactness and Lipschitz properties of the deterministic representation framework used in our analysis. We recall the relevant results and refer to \citet{wittig2026which} for the underlying arguments. We develop the extension to random initialization in the next appendix.

\subsection{Compactness of Representation Spaces}
\label{app:subsec_compactness}
Compactness of the representation spaces follows inductively from compactness of the initial feature space $K$; each successive space is a product of the preceding space and a space of neighborhood
measures or compact sets. The following lemma provides the compactness properties needed for this construction.

\begin{lemma}
\label{lem:measure_hyperspace_compactness}
If $(X,d)$ is a compact metric space, then
$\mathcal{M}_{\le1}(X)$ and $\mathcal{P}(X)$ are compact under
$\beta_d$, and $\mathcal{K}(X)$ is compact under
$\operatorname{H}_d$.
\end{lemma}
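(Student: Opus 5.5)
We prove the three claims separately: first the compactness of $\mathcal{P}(X)$ and $\mathcal{M}_{\le1}(X)$ under $\beta_d$, then the compactness of $\mathcal{K}(X)$ under $\operatorname{H}_d$.

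\textbf{Measures.} Since $X$ is compact metric, $C(X)$ is separable, and the Riesz representation theorem identifies finite Borel measures with positive functionals on $C(X)$. By Banach--Alaoglu, the set $\{\mu\ge0:\mu(X)\le1\}$ is weak-$*$ compact. It is also weak-$*$ metrizable, because the unit ball of the dual of a separable space is. Both defining conditions, positivity and $\int 1\,d\mu\le1$, are preserved under weak-$*$ limits, so $\mathcal{M}_{\le1}(X)$ is closed and hence compact. Testing against the constant $1$ shows that $\mathcal{P}(X)$ is a closed subset, so it is compact too.

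It remains to identify the weak-$*$ topology with the topology of $\beta_d$ on these sets. Since the space is weak-$*$ compact, it suffices to prove the following: if $\mu_n\to\mu$ weakly, then $\beta_d(\mu_n,\mu)\to0$. Then the identity map from the weak-$*$ topology to the $\beta_d$ topology is a continuous bijection from a compact space onto a Hausdorff space, so it is a homeomorphism.

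To prove this convergence, note that the class $\mathcal{F}=\{f:\|f\|_\infty\le1,\operatorname{Lip}_d(f)\le1\}$ is uniformly bounded and equicontinuous. By Arzel\`a--Ascoli, for each $\varepsilon>0$ it has a finite $\varepsilon$-net $f_1,\dots,f_m$ in sup norm. Then
\[
\sup_{f\in\mathcal{F}}\Bigl|\int f\,d\mu_n-\int f\,d\mu\Bigr|\le 2\varepsilon+\max_{j\le m}\Bigl|\int f_j\,d\mu_n-\int f_j\,d\mu\Bigr|,
\]
where the $2\varepsilon$ term uses that both measures have mass at most one. The maximum tends to zero, which gives the claim.

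\textbf{Compact sets.} We use the classical result that the Hausdorff hyperspace of a compact metric space is compact, by showing it is complete and totally bounded.

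For total boundedness, take a finite $\varepsilon$-net $N$ of $X$. Every nonempty compact $A$ lies within Hausdorff distance $\varepsilon$ of the nonempty subset $\{y\in N:d(y,A)\le\varepsilon\}$, so the finitely many nonempty subsets of $N$ form an $\varepsilon$-net of $\mathcal{K}(X)$.

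For completeness, let $(A_n)$ be a Cauchy sequence and define the limit candidate as the upper limit
\[
A=\bigcap_{n}\overline{\bigcup_{m\ge n}A_m}.
\]
This is a decreasing intersection of nonempty compact sets, so $A$ is nonempty and compact. To show $\operatorname{H}_d(A_n,A)\to0$:
\begin{itemize}
\item Every point of $A$ is close to $A_n$ for large $n$, by the Cauchy property.
\item Conversely, each $a\in A_n$ starts a sequence of points in later sets $A_m$ with geometrically small successive steps. This sequence is Cauchy, so it converges in $X$, and its limit lies in $A$ and is close to $a$.
\end{itemize}

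\textbf{Main obstacle.} The step needing the most care is the topology identification for $\beta_d$ on subprobability measures. The Arzel\`a--Ascoli argument above handles it cleanly, and the rest is standard. One could alternatively cite \citet{wittig2026which} for the whole lemma.
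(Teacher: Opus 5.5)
Your proposal is correct. For $\mathcal{M}_{\le1}(X)$ and $\mathcal{P}(X)$ you follow the paper's route: weak$^*$ compactness and metrizability (via Riesz and Banach--Alaoglu), then identification with the $\beta_d$ topology. The difference is that the paper simply cites that $\beta_d$ metrizes the weak topology. You instead prove the one direction you need, that weak convergence implies $\beta_d$ convergence, with the Arzel\`a--Ascoli net argument. This is the same device the paper itself uses for $W_1$ in \cref{lem:wasserstein_compactness}.

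For compactness alone you can stop once the identity from the weak$^*$ topology to the $\beta_d$ topology is continuous. $\mathcal{M}_{\le1}(X)$ is then a continuous image of a compact space. The compact-to-Hausdorff homeomorphism step additionally needs $\beta_d$ to separate measures. That is true by density of bounded Lipschitz functions in $C(X)$, but you do not check it, and you do not need it.

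For $\mathcal{K}(X)$ the paper just invokes the Blaschke selection theorem. You instead prove compactness directly by showing the Hausdorff hyperspace is totally bounded (nonempty subsets of a finite $\varepsilon$-net) and complete (via the upper limit $\bigcap_n\overline{\bigcup_{m\ge n}A_m}$). Both of your bullets go through with the usual $2\varepsilon$ bounds, uniformly in the points involved. Your version is longer but self-contained. It also applies verbatim to an arbitrary compact metric space, which is the generality actually needed for the iterated spaces $\mathcal{X}_\ell$. The citation route is shorter but leans on a selection theorem usually stated in Euclidean space, plus the external references for the general case.
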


\begin{proof}
For complete proofs, see \Citep[Theorems~96 and~98]{wittig2026which}. For the spaces of measures, compactness follows from the compactness and metrizability of the weak$^*$ topology together with the fact that the Kantorovich metric $\beta_d$ metrizes this topology; see \cite{Kechris1995}, Theorem~17.22, and \cite{Bogachev2007}, Theorem~8.3.2. For $\mathcal{K}(X)$, compactness with respect to the Hausdorff metric $\operatorname{H}_d$ follows from the Blaschke selection theorem; see \Citep[Theorem~1.8.4]{Schneider1993}.
\end{proof}

\begin{corollary}
\label{cor:representation_compactness}
For every $\ell\ge0$, $(\mathcal{X}_\ell,\mathsf{d}_\ell)$ is a
compact metric space.
\end{corollary}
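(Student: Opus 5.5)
We prove \cref{cor:representation_compactness} by induction on $\ell$, using \cref{lem:measure_hyperspace_compactness} at each step together with the elementary fact that a finite product of compact metric spaces is compact under the sum metric.

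\emph{Base case.} $\mathcal X_0=K$ carries $\mathsf d_0(\vec x,\vec y)=\|\vec x-\vec y\|_2$. This is a genuine metric because it is induced by a norm, and $K$ is compact by the standing assumption that $K\subset\R^d$ is compact.

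\emph{Inductive step.} Assume $(\mathcal X_\ell,\mathsf d_\ell)$ is a compact metric space. We first show that $(\mathcal Y_\ell,\mathsf e_\ell)$ is compact metric, splitting by aggregation scheme.
\begin{itemize}
\item For $\alpha=\overline{\mathrm{sum}}$, $\mathcal Y_\ell=\mathcal M_{\le1}(\mathcal X_\ell)$ with $\beta_{\mathsf d_\ell}$.
\item For $\alpha=\mathrm{mean}$, $\mathcal Y_\ell=\mathcal P(\mathcal X_\ell)$ with $\beta_{\mathsf d_\ell}$.
\item For $\alpha\in\{\max,\min\}$, $\mathcal Y_\ell=\mathcal K(\mathcal X_\ell)$ with $\operatorname H_{\mathsf d_\ell}$.
\end{itemize}
In each case \cref{lem:measure_hyperspace_compactness}, applied to the compact metric space $(\mathcal X_\ell,\mathsf d_\ell)$, gives compactness.

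These are metrics and not merely pseudometrics:
\begin{itemize}
\item $\beta_d$ separates finite Borel measures on a compact metric space, since bounded Lipschitz functions are uniformly dense in $C(X)$ and finite Borel measures are determined by their integrals against $C(X)$.
\item $\operatorname H_d$ is a metric on nonempty compact sets by the standard argument: if $\operatorname H_d(A,B)=0$, each point of $A$ is a limit of points of the closed set $B$, so $A\subseteq B$, and symmetrically $B\subseteq A$.
\end{itemize}

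Then $\mathcal X_{\ell+1}=\mathcal X_\ell\times\mathcal Y_\ell$ with $\mathsf d_{\ell+1}=\mathsf d_\ell+\mathsf e_\ell$.
\begin{itemize}
\item \emph{Metric.} A sum of metrics on the factors is a metric on the product: symmetry and the triangle inequality are inherited coordinatewise, and the sum vanishes only when both coordinates agree.
\item \emph{Compactness.} It suffices to prove sequential compactness, which is equivalent to compactness for metric spaces. Given a sequence $(z_n,\nu_n)$, extract a subsequence with $z_n\to z$ in $\mathcal X_\ell$, then a further subsequence with $\nu_n\to\nu$ in $\mathcal Y_\ell$. Along it, $\mathsf d_{\ell+1}\bigl((z_n,\nu_n),(z,\nu)\bigr)\to0$.
\end{itemize}
This closes the induction.

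The only nontrivial ingredient is \cref{lem:measure_hyperspace_compactness}, which we assume. The one point needing care is whether the lemma's statement includes that $\beta_d$ and $\operatorname H_d$ are metrics; the arguments above fill that in if it does not.
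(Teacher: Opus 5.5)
Your proposal is correct and follows the same route as the paper: induction on $\ell$. At each step, \cref{lem:measure_hyperspace_compactness} gives compactness of $\mathcal{Y}_\ell$, and compactness carries over to $\mathcal{X}_\ell\times\mathcal{Y}_\ell$ under the sum metric; your check that $\beta_d$ and $\operatorname{H}_d$ are genuine metrics fills in a point the paper leaves implicit.
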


\subsection{Lipschitzness of MPNNs}
\label{app:subsec_lipschitzness}

Set $\phi_\theta^{(0)}(\vec{z})=\vec{z}$.
For $(z,\nu)\in\mathcal{X}_{\ell+1}$, define
\[
\phi_\theta^{(\ell+1)}(z,\nu)
=
\sigma\left(
\vec{W}_{\mathrm{self}}^{(\ell)}\phi_\theta^{(\ell)}(z)
+
\vec{W}_{\mathrm{agg}}^{(\ell)}
\mathcal{A}_\theta^{(\ell)}(\nu)
+\vec b^{(\ell)}
\right),
\]
where
\[
\mathcal{A}_\theta^{(\ell)}(\nu)
=
\begin{cases}
\displaystyle\int\phi_\theta^{(\ell)}(y)\,d\nu(y),
& \alpha\in\{\overline{\mathrm{sum}},\mathrm{mean}\},\\[5pt]
\displaystyle\max_{y\in\nu}\phi_\theta^{(\ell)}(y),
& \alpha=\mathrm{max},\\[5pt]
\displaystyle\min_{y\in\nu}\phi_\theta^{(\ell)}(y),
& \alpha=\mathrm{min}.
\end{cases}
\]
The maximum and minimum are taken coordinate-wise.

The map $\phi_\theta^{(\ell)}$ realizes the MPNN on the structural representation space. The following identity, obtained by induction on $\ell$, shows that the structural representation contains all
information required to compute the node output. Consequently, Lipschitz bounds for $\phi_\theta^{(\ell)}$ transfer directly to the MPNN under the induced pseudometric.

\begin{lemma}
\label{lem:representation_realization}
For every initialization $\vec{x}$, vertex $v$, and depth $\ell$,
\[
\phi_\theta^{(\ell)}\bigl(\eta_{G,v}^{(\ell)}(\vec{x})\bigr)
=
\vec{h}_{\theta,G}^{(\ell)}(v;\vec{x}).
\]
\end{lemma}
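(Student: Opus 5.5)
We argue by induction on the depth $\ell$, with the identity holding for every graph $G$, initialization $\vec{x}$, and vertex $v$ at once. For the base case $\ell=0$, both sides are $\vec{x}_v$. Indeed, $\eta_{G,v}^{(0)}(\vec{x})=\vec{x}_v$ and $\phi_\theta^{(0)}$ is the identity, while $\vec{h}_{\theta,G}^{(0)}(v;\vec{x})=\vec{x}_v$ by the MPNN initialization.

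For the inductive step, assume the identity at depth $\ell$ for all vertices. Since $\eta_{G,v}^{(\ell+1)}(\vec{x})=(\eta_{G,v}^{(\ell)}(\vec{x}),\nu_{G,v}^{(\ell)}(\vec{x}))$, the definition of $\phi_\theta^{(\ell+1)}$ gives
\[
\sigma\bigl(\vec{W}_{\mathrm{self}}^{(\ell)}\phi_\theta^{(\ell)}(\eta_{G,v}^{(\ell)}(\vec{x}))+\vec{W}_{\mathrm{agg}}^{(\ell)}\mathcal{A}_\theta^{(\ell)}(\nu_{G,v}^{(\ell)}(\vec{x}))+\vec b^{(\ell)}\bigr).
\]
By the induction hypothesis, the self term equals $\vec{h}_{\theta,G}^{(\ell)}(v;\vec{x})$. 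It therefore remains to show $\mathcal{A}_\theta^{(\ell)}(\nu_{G,v}^{(\ell)}(\vec{x}))=\vec{m}_G^{(\ell)}(v)$. Once this holds, the displayed expression is exactly the MPNN update defining $\vec{h}_{\theta,G}^{(\ell+1)}(v;\vec{x})$.

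We verify the aggregation identity case by case.

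\emph{Normalized sum.} Integrating $\phi_\theta^{(\ell)}$ against the sum of Dirac measures $\frac{1}{|V(G)|}\sum_{u\in N_G(v)}\delta_{\eta_{G,u}^{(\ell)}(\vec{x})}$ gives $\frac{1}{|V(G)|}\sum_{u\in N_G(v)}\phi_\theta^{(\ell)}(\eta_{G,u}^{(\ell)}(\vec{x}))$. By the induction hypothesis applied at each neighbor $u$, this is $\frac{1}{|V(G)|}\sum_{u\in N_G(v)}\vec{h}_{\theta,G}^{(\ell)}(u;\vec{x})$.

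\emph{Mean.} The same computation applies with normalization $|N_G(v)|$ in place of $|V(G)|$.

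\emph{Max and min.} The set $\nu$ is the finite set $\{\eta_{G,u}^{(\ell)}(\vec{x}):u\in N_G(v)\}$. Its coordinatewise max (or min) of $\phi_\theta^{(\ell)}$ equals the coordinatewise max (or min) of $\vec{h}_{\theta,G}^{(\ell)}(u;\vec{x})$ over $u\in N_G(v)$.

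The one subtlety is in the max/min case. Two distinct neighbors $u\neq u'$ may have the same representation $\eta_{G,u}^{(\ell)}(\vec{x})=\eta_{G,u'}^{(\ell)}(\vec{x})$, so the set collapses duplicates. This does not affect the result, because equal representations have equal $\phi_\theta^{(\ell)}$-values by the induction hypothesis, and a max or min ignores multiplicities.

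The remaining conventions also need checking. Neighborhoods are nonempty by the standing assumption for mean, max, and min, so $\nu$ is well defined in those cases. For the normalized sum with an empty neighborhood, the zero measure integrates to $0$, which matches the empty-aggregate convention. With these checks the induction closes, and there is no substantive obstacle; the only care needed is the duplicate-collapse point in the max/min case.
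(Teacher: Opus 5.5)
Your proof is correct and follows the paper's argument: induction on $\ell$, with the induction hypothesis handling the self term and a case-by-case identification of $\mathcal{A}_\theta^{(\ell)}(\nu_{G,v}^{(\ell)}(\vec{x}))$ with the message-passing aggregate. Your explicit treatment of duplicate representations in the max/min neighborhood set and of the empty-neighborhood convention for normalized sums makes precise two points the paper leaves implicit. You also correctly keep the bias $\vec b^{(\ell)}$, which the paper's displayed inductive step omits.
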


\begin{proof}
We prove the result by induction on $\ell$. For $\ell=0$, we have
$$
\phi_\theta^{(0)} \bigl(\eta_{G,v}^{(0)}(\vec{x})\bigr)= \phi_\theta^{(0)}(\vec{x}_v) = \vec{x}_v= \vec{h}_{\theta,G}^{(0)}(v;\vec{x}).
$$
Now suppose that $\phi_\theta^{(\ell)} \bigl(\eta_{G,u}^{(\ell)}(\vec{x})\bigr)= \vec{h}_{\theta,G}^{(\ell)}(u;\vec{x})$, for every vertex $u\in V(G)$. For $\ell+1$, we write
$\eta_{G,v}^{(\ell+1)}(\vec{x}) = \bigl( \eta_{G,v}^{(\ell)}(\vec{x}), \nu_{G,v}^{(\ell)}(\vec{x})\bigr).$ Then, by definition of $\phi_\theta^{(\ell+1)}$,
\begin{align*}
\phi_\theta^{(\ell+1)}
\bigl(\eta_{G,v}^{(\ell+1)}(\vec{x})\bigr)
&=
\sigma\Bigl(
\vec{W}_{\mathrm{self}}^{(\ell)}
\phi_\theta^{(\ell)}
\bigl(\eta_{G,v}^{(\ell)}(\vec{x})\bigr)
+
\vec{W}_{\mathrm{agg}}^{(\ell)}
\mathcal A_\theta^{(\ell)}
\bigl(\nu_{G,v}^{(\ell)}(\vec{x})\bigr)
\Bigr).
\end{align*}
By the induction hypothesis, the first term inside the activation is
$$
\phi_\theta^{(\ell)}
\bigl(\eta_{G,v}^{(\ell)}(\vec{x})\bigr)
=
\vec{h}_{\theta,G}^{(\ell)}(v;\vec{x}).
$$
It remains to identify the aggregation term. By construction of
$\nu_{G,v}^{(\ell)}(\vec{x})$, applying
$\mathcal A_\theta^{(\ell)}$ to this neighborhood representation
recovers the corresponding aggregation of the depth-$\ell$ hidden
states. Indeed, for the sum and mean cases, integration against the
induced atomic measure gives the required weighted sum of
$$
\phi_\theta^{(\ell)}
\bigl(\eta_{G,u}^{(\ell)}(\vec{x})\bigr),
\qquad u\in N_G(v),
$$
which, by the induction hypothesis, equals the corresponding sum or
mean of
$\vec{h}_{\theta,G}^{(\ell)}(u;\vec{x})$. For $\alpha=\mathrm{max}$,
\begin{align*}
\mathcal A_\theta^{(\ell)}
\bigl(\nu_{G,v}^{(\ell)}(\vec{x})\bigr)
&=
\max_{y\in\nu_{G,v}^{(\ell)}(\vec{x})}
\phi_\theta^{(\ell)}(y)\
&=
\max_{u\in N_G(v)}
\vec{h}_{\theta,G}^{(\ell)}(u;\vec{x}),
\end{align*}
and the same argument gives, for $\alpha=\mathrm{min}$,
$$
\mathcal A_\theta^{(\ell)}\bigl(\nu_{G,v}^{(\ell)}(\vec{x})\bigr)=\min_{u\in N_G(v)}\vec{h}_{\theta,G}^{(\ell)}(u;\vec{x}),
$$
with maxima and minima taken coordinate-wise. Thus, in every case, $\mathcal A_\theta^{(\ell)}
(\nu_{G,v}^{(\ell)}(\vec{x}))$ is exactly the neighborhood aggregate appearing in the definition of the message-passing update. Hence
$$
\phi_\theta^{(\ell+1)} \bigl(\eta_{G,v}^{(\ell+1)}(\vec{x})\bigr) =\vec{h}_{\theta,G}^{(\ell+1)}(v;\vec{x}).
$$
This completes the induction.
\end{proof}

\begin{lemma}
\label{lem:aggregation_stability}
Let $g\colon X\to\mathbb{R}^p$ satisfy
$\sup_x\|g(x)\|_2\le M$ and
$\|g(x)-g(y)\|_2\le C d(x,y)$.
For $\mu,\nu\in\mathcal{M}_{\le1}(X)$,
\[
\left\|\int g\,d\mu-\int g\,d\nu\right\|_2
\le\max\{M,C\}\beta_d(\mu,\nu),
\qquad
\left\|\int g\,d\mu\right\|_2\le M.
\]
For $A,B\in\mathcal{K}(X)$,
\[
\|\max_A g-\max_B g\|_2
\le\sqrt{p}\,C\operatorname{H}_d(A,B),
\qquad
\|\max_A g\|_2\le\sqrt{p}\,M.
\]
The same bounds hold for minimum aggregation.
\end{lemma}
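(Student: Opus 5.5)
The plan is to treat the four cases (integral against $\mu\in\mathcal M_{\le1}(X)$, and $\max$/$\min$ over compact sets) separately, since each reduces to a coordinatewise or duality argument with no interaction between them.

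\textbf{Measures.} The norm bound is immediate: $\bigl\|\int g\,d\mu\bigr\|_2\le\int\|g\|_2\,d\mu\le M\mu(X)\le M$. For the difference, set $w\coloneq\int g\,d\mu-\int g\,d\nu\in\R^p$. If $w\neq0$, let $u=w/\|w\|_2$ and consider the scalar function $f\coloneq\langle u,g\rangle$. Cauchy--Schwarz gives $\|f\|_\infty\le M$ and $\operatorname{Lip}_d(f)\le C$. Hence $f/\max\{M,C\}$ is admissible in the supremum \eqref{eq:bl_metric}. Since $\|w\|_2=\langle u,w\rangle=\int f\,d\mu-\int f\,d\nu$, we get $\|w\|_2\le\max\{M,C\}\beta_d(\mu,\nu)$. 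The one point to check is that $f$ is Borel and bounded, so both integrals are finite even though the masses may differ; this holds because $g$ is continuous.

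\textbf{Compact sets.} Work coordinatewise. For a coordinate $j$, the map $g_j$ is continuous and $C$-Lipschitz, since $|g_j(x)-g_j(y)|\le\|g(x)-g(y)\|_2$. By compactness, $\max_A g_j$ is attained at some $a\in A$. By the definition of $\operatorname{H}_d$ and compactness of $B$, there is $b\in B$ with $d(a,b)\le\operatorname{H}_d(A,B)$. Therefore
\[
\max_A g_j=g_j(a)\le g_j(b)+C\,d(a,b)\le\max_B g_j+C\operatorname{H}_d(A,B).
\]
Swapping the roles of $A$ and $B$ gives $|\max_A g_j-\max_B g_j|\le C\operatorname{H}_d(A,B)$. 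Summing squares over the $p$ coordinates yields the factor $\sqrt p$. Likewise $|\max_A g_j|\le\sup_x|g_j(x)|\le M$ for each $j$, which gives $\|\max_A g\|_2\le\sqrt p\,M$. For the minimum, apply the maximum case to $-g$, using $\min_A g=-\max_A(-g)$; the constants $M$ and $C$ are unchanged.

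\textbf{Main difficulty.} Nothing here is hard. The only step needing care is the scaling in the measure case. The scalar test function must satisfy both $\|f\|_\infty\le1$ and $\operatorname{Lip}(f)\le1$, so the correct normalizing factor is $\max\{M,C\}$, not $C$. This is exactly why the bound mixes the two constants. It also explains why the bounded-Lipschitz metric, rather than $W_1$, is needed for sub-probability measures of unequal mass.
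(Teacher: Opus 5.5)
Your proof is correct and follows essentially the same route as the paper: you test the vector integral against the unit-direction functional $\langle u,g\rangle$ rescaled by $\max\{M,C\}$ so it is admissible for $\beta_d$, and you handle max/min coordinatewise via Hausdorff proximity, summing squares for the $\sqrt{p}$ factor and passing to $-g$ for the minimum. Choosing $u=w/\|w\|_2$ directly, rather than taking a supremum over unit vectors, also covers the degenerate case $\max\{M,C\}=0$ automatically, since $w\neq0$ forces $M>0$; the paper simply calls this case immediate.
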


\begin{proof}
For each unit vector $\vec{a}\in\mathbb{R}^p$, the scalar function
$\langle \vec{a},g\rangle$ has uniform norm at most $M$ and Lipschitz
constant at most $C$.
After scaling by $\max\{M,C\}$, it is an admissible test function
in \cref{eq:bl_metric}; the zero case is immediate.
Taking the supremum over unit vectors proves the first bound.
The second follows from $\mu(X)\le1$.

For each coordinate $j$, Hausdorff proximity gives
\[
|\max_A g_j-\max_B g_j|
\le C\operatorname{H}_d(A,B).
\]
Summing the squared coordinate bounds yields the factor
$\sqrt{p}$. Since $|g_j|\le M$, each coordinate of $\max_A g$
has absolute value at most $M$, proving its norm bound.
Replacing $g$ by $-g$ proves the claims for minimum aggregation.
\end{proof}

Combining these aggregation bounds with the $1$-Lipschitz property of ReLU yields recursive bounds on the magnitude and Lipschitz constant of each layer. We denote these bounds by $M_{\theta,\ell}$ and $C_{\theta,\ell}$, respectively; the resulting recursions provide an explicit Lipschitz certificate for the final readout. Write
\[
a_\ell=\|\vec{W}_{\mathrm{self}}^{(\ell)}\|_2,
\qquad
b_\ell=\|\vec{W}_{\mathrm{agg}}^{(\ell)}\|_2,
\qquad
R_K=\max_{\vec{x}\in K}\|\vec{x}\|_2.
\]
Initialize $M_{\theta,0}=R_K$ and $C_{\theta,0}=1$.
The bias contributes to the magnitude bound but cancels in differences of updates.
For normalized-sum and mean aggregation, define
\begin{align*}
M_{\theta,\ell+1}
&=(a_\ell+b_\ell)M_{\theta,\ell}+\|\vec b^{(\ell)}\|_2,\\
C_{\theta,\ell+1}
&=\max\left\{
a_\ell C_{\theta,\ell},
b_\ell\max\{M_{\theta,\ell},C_{\theta,\ell}\}
\right\}.
\end{align*}
For max and min aggregation, define
\begin{align*}
M_{\theta,\ell+1}
&=(a_\ell+b_\ell\sqrt{d_\ell})M_{\theta,\ell}+\|\vec b^{(\ell)}\|_2,\\
C_{\theta,\ell+1}
&=\max\{a_\ell,b_\ell\sqrt{d_\ell}\}C_{\theta,\ell}.
\end{align*}

\begin{proposition}[Boundedness and Lipschitz continuity]
\label{prop:mpnn_certificates}
For every $\ell$,
\[
\sup_{z\in\mathcal{X}_\ell}
\|\phi_\theta^{(\ell)}(z)\|_2\le M_{\theta,\ell},
\qquad
\|\phi_\theta^{(\ell)}(z)-\phi_\theta^{(\ell)}(z')\|_2
\le C_{\theta,\ell}\mathsf{d}_\ell(z,z').
\]
Consequently, $\psi_\theta=\vec{W}_{\mathrm{out}}\phi_\theta^{(D)}$
has Lipschitz certificate $B_\theta\coloneq \|\vec{W}_{\mathrm{out}}\|_2C_{\theta,D}$.
\end{proposition}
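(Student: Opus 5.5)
We argue by induction on $\ell$, proving the magnitude bound and the Lipschitz bound together, since the Lipschitz recursion for normalized-sum and mean aggregation uses the magnitude bound of the previous layer. The base case $\ell=0$ is immediate: $\phi_\theta^{(0)}$ is the identity on $\mathcal X_0=K$, so $\|\phi_\theta^{(0)}(\vec z)\|_2\le R_K=M_{\theta,0}$ and its Lipschitz constant with respect to $\mathsf d_0=\|\cdot\|_2$ is $1=C_{\theta,0}$.

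For the inductive step, fix $(z,\nu),(z',\nu')\in\mathcal X_{\ell+1}$. For the magnitude bound, we use that ReLU satisfies $\|\sigma(\vec u)\|_2\le\|\vec u\|_2$. The triangle inequality together with the operator-norm bounds then gives
\[
\|\phi_\theta^{(\ell+1)}(z,\nu)\|_2\le a_\ell M_{\theta,\ell}+b_\ell\|\mathcal A_\theta^{(\ell)}(\nu)\|_2+\|\vec b^{(\ell)}\|_2 .
\]
We bound $\|\mathcal A_\theta^{(\ell)}(\nu)\|_2$ by the second estimate in \cref{lem:aggregation_stability}, applied with $g=\phi_\theta^{(\ell)}$ and $M=M_{\theta,\ell}$. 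This yields $M_{\theta,\ell}$ in the measure cases, using mass at most one, and $\sqrt{d_\ell}M_{\theta,\ell}$ in the max/min cases. The result is exactly the recursion for $M_{\theta,\ell+1}$.

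For the Lipschitz bound, ReLU is $1$-Lipschitz and the biases cancel, so
\[
\|\phi_\theta^{(\ell+1)}(z,\nu)-\phi_\theta^{(\ell+1)}(z',\nu')\|_2\le a_\ell C_{\theta,\ell}\mathsf d_\ell(z,z')+b_\ell\|\mathcal A_\theta^{(\ell)}(\nu)-\mathcal A_\theta^{(\ell)}(\nu')\|_2 .
\]
By \cref{lem:aggregation_stability} with $C=C_{\theta,\ell}$, the last norm is at most $\max\{M_{\theta,\ell},C_{\theta,\ell}\}\beta_{\mathsf d_\ell}(\nu,\nu')$ in the sum/mean cases, and at most $\sqrt{d_\ell}\,C_{\theta,\ell}\operatorname{H}_{\mathsf d_\ell}(\nu,\nu')$ in the max/min cases. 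In each case the right-hand side has the form $\alpha\,\mathsf d_\ell(z,z')+\beta\,\mathsf e_\ell(\nu,\nu')$, and this is at most $\max\{\alpha,\beta\}\bigl(\mathsf d_\ell(z,z')+\mathsf e_\ell(\nu,\nu')\bigr)=\max\{\alpha,\beta\}\,\mathsf d_{\ell+1}$ by \eqref{eq:recursive_representation_metric}. This matches the stated recursion for $C_{\theta,\ell+1}$.

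The main obstacle is applying \cref{lem:aggregation_stability} legitimately. It needs $g=\phi_\theta^{(\ell)}$ to be bounded and Lipschitz on the \emph{whole} compact space $\mathcal X_\ell$, not just on realized graph representations, and this is exactly what the induction hypothesis supplies. It also needs $\nu$ to lie in the right space: a measure of mass at most one, or a compact set. Continuity of $\phi_\theta^{(\ell)}$, which follows from it being Lipschitz, makes the integrals and the maxima over compact sets well defined.

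The final claim follows because $\vec W_{\mathrm{out}}$ is linear, so $\psi_\theta$ is Lipschitz with constant $\|\vec W_{\mathrm{out}}\|_2C_{\theta,D}$.
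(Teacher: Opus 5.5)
Your proposal is correct and follows essentially the same route as the paper: a joint induction on depth that uses nonexpansiveness of ReLU and \cref{lem:aggregation_stability} with $g=\phi_\theta^{(\ell)}$, then absorbs the two coefficients into $C_{\theta,\ell+1}$ via $\mathsf d_{\ell+1}=\mathsf d_\ell+\mathsf e_\ell$. You simply spell out the magnitude recursion and the $\max\{\alpha,\beta\}$ step that the paper leaves implicit.
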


\begin{proof}
The statement trivially holds at depth zero. Suppose it holds at depth $\ell$. ReLU is $1$-Lipschitz and satisfies $\|\sigma(\vec{y})\|_2\le\|\vec{y}\|_2$. For $s=(z,\nu)$ and $s'=(z',\nu')$,
\begin{align*}
\|\phi_\theta^{(\ell+1)}(s)
-\phi_\theta^{(\ell+1)}(s')\|_2
&\le
a_\ell C_{\theta,\ell}\mathsf{d}_\ell(z,z')\\
&\quad+
b_\ell\|\mathcal{A}_\theta^{(\ell)}(\nu)
-\mathcal{A}_\theta^{(\ell)}(\nu')\|_2.
\end{align*}
Apply Lemma \ref{lem:aggregation_stability} to the second term.
The resulting coefficients of
$\mathsf{d}_\ell(z,z')$ and $\mathsf{e}_\ell(\nu,\nu')$
are bounded by $C_{\theta,\ell+1}$.
Their sum is therefore at most
$C_{\theta,\ell+1}\mathsf{d}_{\ell+1}(s,s')$.
The norm estimates in the same lemma give
$M_{\theta,\ell+1}$.
This completes the induction.
Composition with the linear readout gives the final assertions.
\end{proof}

Since $a_\ell,b_\ell\le B_W$,
$\|\vec b^{(\ell)}\|_2\le B_b$,
$\|\vec{W}_{\mathrm{out}}\|_2\le B_{\mathrm{out}}$, and the architecture
is fixed, these recursions yield uniform finite bounds
\begin{equation}
\sup_{\theta\in\Theta}B_\theta\le B,
\qquad
\sup_{\theta\in\Theta}\|\psi_\theta\|_\infty\le M,
\label{eq:uniform_certificate_bounds}
\end{equation}
where we may choose $M\ge1$.
The constants do not depend on graph order.

\section{Randomly Initialized Representations}
\label{app:randomized_representations}
We now extend the compactness and Lipschitz properties to representation laws induced by random node initialization. The resulting domain is totally bounded under the Wasserstein distance and therefore admits finite covers at every positive scale. Taking expectations of the MPNN outputs yields predictors
on this domain that retain the deterministic Lipschitz certificates established above.

\subsection{Compactness of the probability-measure domain}

\begin{lemma}
\label{lem:wasserstein_compactness}
If $(X,d)$ is compact, then $(\mathcal{P}(X),W_1)$ is compact.
\end{lemma}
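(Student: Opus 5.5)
We need to show that $(\mathcal{P}(X),W_1)$ is compact when $(X,d)$ is a compact metric space. The plan is to reduce to the compactness already recorded in \cref{lem:measure_hyperspace_compactness}: $\mathcal{P}(X)$ is compact under the bounded-Lipschitz metric $\beta_d$. It then suffices to show that $W_1$ and $\beta_d$ induce the same topology on $\mathcal{P}(X)$. Compactness is a topological property, so it transfers.

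First, we compare the two metrics directly via Kantorovich--Rubinstein duality. On a compact space,
\[
W_1(\mu,\nu)=\sup_{\operatorname{Lip}_d(f)\le1}\Bigl|\int f\,d\mu-\int f\,d\nu\Bigr|.
\]
Because $\mu$ and $\nu$ are both probability measures, adding a constant to $f$ does not change the integral difference. We may therefore restrict to $1$-Lipschitz $f$ that vanish at a fixed base point $x_0$. Such functions satisfy $\|f\|_\infty\le D\coloneq\operatorname{diam}(X)<\infty$.

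Set $c=\max\{1,D\}$. Then $f/c$ is an admissible test function in \eqref{eq:bl_metric}, which gives $W_1\le c\,\beta_d$. Conversely, every test function for $\beta_d$ is $1$-Lipschitz, so $\beta_d\le W_1$. Hence
\[
\beta_d\le W_1\le \max\{1,\operatorname{diam}X\}\,\beta_d
\]
on $\mathcal{P}(X)$. The two metrics are therefore bi-Lipschitz equivalent.

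Given this equivalence, a sequence is Cauchy or convergent in one metric exactly when it is in the other. Sequential compactness, equivalently total boundedness plus completeness, is thus preserved, and $(\mathcal{P}(X),W_1)$ is compact. Finiteness of $W_1$ itself follows from $W_1\le D$.

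The only substantive step is invoking Kantorovich--Rubinstein duality, which we take as standard for compact metric spaces (e.g.\ Villani). If we wanted to avoid it, we would argue sequentially instead. By Prokhorov's theorem, every sequence in $\mathcal{P}(X)$ has a weakly convergent subsequence. On a bounded space, weak convergence implies $W_1$-convergence: the optimal couplings have compact support in $X\times X$, and one can pass to the limit using that $d$ is bounded and continuous.

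I expect the main obstacle to be bookkeeping rather than mathematics. The point to make explicit is that restricting to probability measures is what allows the constant shift of $f$. This is why the argument does not extend to $\mathcal{M}_{\le1}(X)$, where the masses of $\mu$ and $\nu$ may differ.
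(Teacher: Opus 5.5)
Your argument is correct, but it is organized differently from the paper's. The paper argues sequentially. It extracts a weakly convergent subsequence using \cref{lem:measure_hyperspace_compactness}. It then shows that weak convergence implies $W_1$-convergence: it applies Arzel\`a--Ascoli to the class $\mathcal{T}_0$ of $1$-Lipschitz functions vanishing at $x_0$, picks a finite $a$-net $f_1,\dots,f_N$, and uses Kantorovich--Rubinstein duality to bound $W_1(\mu_n,\mu)$ by $\max_j\bigl|\int f_j\,d\mu_n-\int f_j\,d\mu\bigr|+2a$.

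You instead compare the metrics directly. The normalization $f(x_0)=0$ is legitimate because both measures have mass one, and it gives $\|f\|_\infty\le\operatorname{diam}X$. Rescaling then yields $\beta_d\le W_1\le\max\{1,\operatorname{diam}X\}\,\beta_d$ on $\mathcal{P}(X)$, so compactness transfers through the identity map. Only the upper bound is actually needed, since a continuous image of a compact space is compact.

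Both proofs rest on Kantorovich--Rubinstein duality. Yours dispenses with Arzel\`a--Ascoli and the net, and gives a stronger, quantitative conclusion: the two metrics are bi-Lipschitz equivalent on $\mathcal{P}(X)$. The paper's route establishes only the qualitative fact that weak convergence forces $W_1$-convergence.

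One caveat concerns only your fallback sketch, not the main proof. A weak subsequential limit of optimal couplings of $(\mu_n,\mu)$ is merely some coupling of $\mu$ with itself, not necessarily the diagonal one. So ``passing to the limit'' there would additionally require stability of optimal plans, or a duality argument like the paper's.
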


\begin{proof}
We assume $X\ne\varnothing$, since otherwise the claim is
immediate. By \cref{lem:measure_hyperspace_compactness}, every
sequence in $\mathcal{P}(X)$ has a weakly convergent subsequence.
It remains to show that weak convergence implies convergence
in $W_1$. Fix $x_0\in X$ and let
\[
\mathcal{T}_0
=
\{f\colon X\to\mathbb{R}:
\operatorname{Lip}_d(f)\le1,\ f(x_0)=0\}.
\]
The functions in $\mathcal{T}_0$ are uniformly bounded by
$\operatorname{diam}(X)$ and equicontinuous. Since $\mathcal{T}_0$
is also closed under uniform convergence, it is compact in the
uniform norm by Arzelà--Ascoli
\cite[Theorem~7.25]{rudin1976principles}. Now suppose $\mu_n$ converges weakly to $\mu$. Given $a>0$,
choose a finite $a$-net $\{f_1,\ldots,f_N\}$ of $\mathcal{T}_0$.
Kantorovich--Rubinstein duality then gives
\[
W_1(\mu_n,\mu)
=
\sup_{f\in\mathcal{T}_0}
\left|\int_X f\,d\mu_n-\int_X f\,d\mu\right|
\le
\max_{1\le j\le N}
\left|\int_X f_j\,d\mu_n-\int_X f_j\,d\mu\right|+2a.
\]
The maximum tends to zero by weak convergence, and $a$ can be
chosen arbitrarily small. Hence $W_1(\mu_n,\mu)\to0$.
Every sequence therefore has a $W_1$-convergent subsequence,
which proves compactness.
\end{proof}

Applying this lemma to $\mathcal{X}_D$ shows that
$\mathcal{P}(\mathcal{X}_D)$ is compact.
Its subset $\mathcal{Q}_D$ need not be compact, but it is totally bounded (i.e., admits a finite $\varepsilon$-cover for every $\varepsilon>0$).

\subsection{Expected outputs and Lipschitzness}

For $\vec{\xi}^G\sim\rho_K^{\otimes V(G)}$, define
\[
Q_{G,v}^{(D)}
=
\operatorname{Law}(\eta_{G,v}^{(D)}(\vec{\xi}^G)),
\qquad
Q_G^{(D)}
=
\frac1{|V(G)|}\sum_v Q_{G,v}^{(D)}.
\]
These laws are well defined by the measurability established in
\cref{app:induced_representations}.
Moreover, $Q_G^{(D)}$ is the mean of the random measure
$R_{G,\vec{\xi}^G}^{(D)}$, in the sense that
\[
\int f\,dQ_G^{(D)}
=
\mathbb{E}\left[\int f\,dR_{G,\vec{\xi}^G}^{(D)}\right]
\]
for every bounded measurable $f$.

By \cref{lem:representation_realization},
\[
F_\theta(G,\vec{x})=\int\psi_\theta\,dR_{G,\vec{x}}^{(D)}.
\]
The boundedness of $\psi_\theta$ permits exchanging expectation
and the finite vertex sum, giving
\[
\overline{F}_\theta(G) = \mathbb{E}[F_\theta(G,\vec{\xi}^G)] =
\int\psi_\theta\,dQ_G^{(D)} = \mathcal{F}_\theta(Q_G^{(D)}).
\]

For $Q,Q'\in\mathcal{P}(\mathcal{X}_D)$ and any coupling
$\pi\in\Pi(Q,Q')$,
\begin{align*}
|\mathcal{F}_\theta(Q)-\mathcal{F}_\theta(Q')|
&=
\left|\int(\psi_\theta(z)-\psi_\theta(z'))\,d\pi(z,z')\right|\\
&\le
B_\theta\int\mathsf{d}_D(z,z')\,d\pi(z,z').
\end{align*}

Thus, averaging preserves the certificate $B_\theta$, with Lipschitz continuity now measured with respect to $W_1$ on probability measures and absolute value on scalar outputs. The same argument gives
\[
|F_\theta(G,\vec{x})-F_\theta(G',\vec{x}')|
\le
B_\theta d_D^{\mathrm{graph}}\bigl((G,\vec{x}),(G',\vec{x}')\bigr).
\]

On graphs without specified initializations, define
\[
d_D^{\mathrm{rand}}(G,G')
\coloneq
W_1(Q_G^{(D)},Q_{G'}^{(D)}).
\]
This is a pseudometric, and
\[
|\overline{F}_\theta(G)-\overline{F}_\theta(G')|
\le B_\theta d_D^{\mathrm{rand}}(G,G').
\]
Relabeling a graph permutes its node representations and preserves
the i.i.d.\ initialization law. Hence $Q_G^{(D)}$ and
$\overline{F}_\theta(G)$ are invariant under vertex relabeling.

\section{Uniform Monte Carlo Approximation}
\label{app:monte_carlo}

Fix training graphs $G_1,\ldots,G_n$ and write
$Q_i=Q_{G_i}^{(D)}$.
For each graph, the $S$ initializations are independent and have
law $\rho_K^{\otimes V(G_i)}$.
The following argument is conditional on the training graphs and
does not require them to form a cover. Dependence between node
representations within an initialization is allowed.

Let $\Psi=\{\psi_\theta\colon\theta\in\Theta\}$.
By \cref{eq:uniform_certificate_bounds},
\[
\sup_{\theta\in\Theta}\|\psi_\theta\|_\infty\le M,
\qquad
\sup_{\theta\in\Theta}\operatorname{Lip}_{\mathsf d_D}
(\psi_\theta)\le B,
\]
where $M\ge1$ and $B<\infty$ depend only on the fixed architecture,
feature space, and weight and bias bounds.
Thus, $\Psi$ is uniformly bounded and equicontinuous on the compact
space $\mathcal X_D$. Arzelà--Ascoli (\cite[Theorem~7.25]{rudin1976principles}) implies total boundedness in
the uniform norm. For $a>0$, let
\[
N_\Psi(a)
\coloneq
\mathcal N(\Psi,\|\cdot\|_\infty,a),
\]
with covering centers chosen in $\Psi$.
The class $\Psi$ is separable in the uniform norm, and the sampling
deviation is continuous in this norm, so its supremum over $\Psi$
is measurable.

\begin{proposition}[Uniform sampling bound]
\label{prop:uniform_monte_carlo}
For every $a,t>0$,
\[
\mathbb P\left(
\sup_{\theta\in\Theta}\max_{1\le i\le n}
|\widehat F_{\theta,S}(G_i)-\overline F_\theta(G_i)|
>2a+t
\right)
\le
2nN_\Psi(a)\exp\left(-\frac{St^2}{2M^2}\right).
\]
\end{proposition}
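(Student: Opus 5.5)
We will prove the bound by a standard covering plus Hoeffding plus union bound argument. The idea is to reduce the supremum over the infinite class $\Psi$ to a maximum over a finite $a$-net, and then control each fixed network on each training graph by concentration. The key representation, from \cref{app:randomized_representations}, is
\[
F_\theta(G_i,\vec{x})=\int\psi_\theta\,dR^{(D)}_{G_i,\vec{x}},
\qquad
\overline F_\theta(G_i)=\int\psi_\theta\,dQ_i .
\]
It lets us view both $\widehat F_{\theta,S}(G_i)$ and $\overline F_\theta(G_i)$ as linear functionals of $\psi_\theta$. Write the empirical measure
\[
\widehat R_{i,S}\coloneq\frac1S\sum_{s=1}^S R^{(D)}_{G_i,\vec{\xi}_{i,s}^{G_i}} .
\]
Then
\[
\widehat F_{\theta,S}(G_i)-\overline F_\theta(G_i)=\int\psi_\theta\,d(\widehat R_{i,S}-Q_i).
\]
Both $\widehat R_{i,S}$ and $Q_i$ are probability measures on $\mathcal X_D$.

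\textbf{Step 1 (discretization).} Fix an $a$-net $\psi_1,\ldots,\psi_N\in\Psi$ with $N=N_\Psi(a)$ in the uniform norm, which exists by the Arzel\`a--Ascoli total boundedness noted before the statement. For any $\theta$, pick $j$ with $\|\psi_\theta-\psi_j\|_\infty\le a$. Integrating $\psi_\theta-\psi_j$ against two probability measures changes the deviation by at most $2a$, so
\[
\bigl|\widehat F_{\theta,S}(G_i)-\overline F_\theta(G_i)\bigr|
\le 2a+\Bigl|\int\psi_j\,d(\widehat R_{i,S}-Q_i)\Bigr| .
\]
Hence the event in the statement is contained in the event that some pair $(i,j)$ has $\bigl|\int\psi_j\,d(\widehat R_{i,S}-Q_i)\bigr|>t$. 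Choosing the centers inside $\Psi$ ensures each $\psi_j$ satisfies the bound $\|\psi_j\|_\infty\le M$.

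\textbf{Step 2 (concentration for a fixed pair).} For fixed $i$ and $j$, the quantities $Y_s\coloneq\int\psi_j\,dR^{(D)}_{G_i,\vec{\xi}_{i,s}^{G_i}}$ are i.i.d.\ over $s$, because the initializations are independent. Each $Y_s$ lies in $[-M,M]$, since the measure is a probability measure and $|\psi_j|\le M$, and its mean is $\int\psi_j\,dQ_i$ by the mean-measure identity for $Q^{(D)}_G$. Dependence between vertices within one initialization is irrelevant, because each $Y_s$ is a single bounded scalar. Hoeffding's inequality with range $2M$ gives
\[
\Pr\Bigl(\Bigl|\frac1S\sum_{s=1}^S Y_s-\E Y_1\Bigr|>t\Bigr)\le 2\exp\!\left(-\frac{St^2}{2M^2}\right).
\]

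\textbf{Step 3 (union bound).} A union bound over the $n$ training graphs and $N_\Psi(a)$ net elements gives the stated bound.

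The only delicate point is measurability of the supremum over $\Theta$. The paper has already addressed it via separability of $\Psi$ and continuity of the deviation in the uniform norm. Beyond that, the argument is routine; the main thing to verify carefully is the $2a$ step. It needs both $\widehat R_{i,S}$ and $Q_i$ to have mass one, which holds because mean pooling makes $R^{(D)}$ a probability measure.
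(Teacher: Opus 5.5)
Your proposal is correct and follows the paper's proof essentially step for step. Both arguments take an $a$-net of $\Psi$ with centers in $\Psi$, apply Hoeffding with range $2M$ to each fixed pair $(i,j)$, take a union bound over the $nN_\Psi(a)$ pairs, and lose $a$ on the sampled average and $a$ on the expectation when passing from a net element to an arbitrary $\theta$. Your empirical measure $\widehat R_{i,S}$ is only a notational repackaging of the paper's variables $Y_{i,s,j}$.
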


\begin{proof}
Choose an $a$-net $\psi_1,\ldots,\psi_N$ of $\Psi$, with
$N=N_\Psi(a)$, and define
\[
Y_{i,s,j}
=
\frac1{|V(G_i)|}\sum_{v\in V(G_i)}
\psi_j\bigl(\eta_{G_i,v}^{(D)}(\vec{\xi}_{i,s}^{G_i})\bigr).
\]
For fixed $i,j$, these variables are independent over $s$,
belong to $[-M,M]$, and satisfy
\[
\mathbb E[Y_{i,s,j}]=\int\psi_j\,dQ_i.
\]
Hoeffding's inequality gives
\[
\mathbb P\left(
\left|\frac1S\sum_{s=1}^S Y_{i,s,j}
-\int\psi_j\,dQ_i\right|>t
\right)
\le
2\exp\left(-\frac{St^2}{2M^2}\right).
\]
A union bound makes this estimate simultaneous over all $i,j$,
with failure probability at most
$2nN_\Psi(a)\exp(-St^2/(2M^2))$.

For any $\theta$, choose $j$ such that
$\|\psi_\theta-\psi_j\|_\infty\le a$.
Replacing $\psi_\theta$ by $\psi_j$ changes its sampled graph-level
average by at most $a$ and its expectation by at most $a$.
Hence, on the simultaneous event,
\[
|\widehat F_{\theta,S}(G_i)-\overline F_\theta(G_i)|
\le 2a+t
\]
for every $\theta$ and $i$.
\end{proof}

Taking $a=\varepsilon_{\mathrm{MC}}/4$ and
$t=\varepsilon_{\mathrm{MC}}/2$, a sufficient sample size is
\begin{equation}
S\ge
\frac{8M^2}{\varepsilon_{\mathrm{MC}}^2}
\log\left(
\frac{2nN_\Psi(\varepsilon_{\mathrm{MC}}/4)}{\delta}
\right).
\label{eq:mc_sample_complexity}
\end{equation}
With probability at least $1-\delta$, this ensures
\[
\sup_{\theta\in\Theta}\max_{1\le i\le n}
|\widehat F_{\theta,S}(G_i)-\overline F_\theta(G_i)|
\le\varepsilon_{\mathrm{MC}}.
\]
Since $\bigl||a-y|-|b-y|\bigr|\le|a-b|$, averaging over the
training graphs yields
\begin{equation}
\mathbb P\left(
\sup_{\theta\in\Theta}
|\widehat{\mathcal L}_{\mathrm{emp}}(\theta)
-\mathcal L_{\mathrm{emp}}(\theta)|
\le\varepsilon_{\mathrm{MC}}
\right)\ge1-\delta.
\label{eq:uniform_mc_main}
\end{equation}
The same bound holds for the regularized losses because their
regularizers coincide.

\section{Proof of Learnability Guarantee}
\label{app:learning_proofs}
Below we prove \Cref{thm:randomized_generalization} from the main paper. For fixed training graphs $G_1,\ldots,G_n$, we recall the definitions
\[
\mathcal L_{\mathrm{emp}}(\theta)
\coloneq
\frac1n\sum_{i=1}^n
|\overline F_\theta(G_i)-f^\ast(G_i)|,
\qquad
\mathcal L_{\mathrm{reg}}(\theta)
\coloneq
\mathcal L_{\mathrm{emp}}(\theta)
+\lambda(B_\theta-B_\ast)_+.
\]
We also write
\[
\widehat{\mathcal L}_{\mathrm{emp}}(\theta)
\coloneq
\frac1n\sum_{i=1}^n
|\widehat F_{\theta,S}(G_i)-f^\ast(G_i)|.
\]
Note that $\widehat{\mathcal L}_{\mathrm{reg}}$ and $\mathcal L_{\mathrm{reg}}$ use the same regularizer since the Lipschitz certificate only depends on the parameters of the MPNN and not on the initialization of node features.

\begin{proof}
Fix $\varepsilon>0$ and $\delta\in(0,1)$, and set
\[
r=\frac{\varepsilon}{2(2B_\ast+1)}.
\]
By total boundedness of $(\mathcal G,d_D^{\mathrm{rand}})$, choose a finite $r$-net $\{G_1,\ldots,G_n\}\subseteq\mathcal G$. For every $G\in\mathcal G$, there is an index $i$ such that $d_D^{\mathrm{rand}}(G,G_i)\le r$. The triangle inequality
and the Lipschitz bounds give
\begin{align*}
|\overline F_\theta(G)-f^\ast(G)|
&\le
|\overline F_\theta(G)-\overline F_\theta(G_i)|
+|\overline F_\theta(G_i)-f^\ast(G_i)|
+|f^\ast(G_i)-f^\ast(G)|\\
&\le
(B_\theta+B_\ast)r+n\mathcal L_{\mathrm{emp}}(\theta).
\end{align*}

Choose
\[
\eta=\min\left\{\lambda,\frac{\varepsilon}{4n}\right\},
\qquad
\varepsilon_{\mathrm{MC}}=\frac{\eta}{2}.
\]
By the uniform Monte Carlo bound proved in \cref{app:monte_carlo}, every $S$ satisfying
\cref{eq:mc_sample_complexity} gives an event of probability at least $1-\delta$ on which
\[
\sup_{\theta\in\Theta}
|\widehat{\mathcal L}_{\mathrm{emp}}(\theta)
-\mathcal L_{\mathrm{emp}}(\theta)|
\le\frac{\eta}{2}.
\]
On this event, any $\theta$ with $\widehat{\mathcal L}_{\mathrm{reg}}(\theta)<\eta$
satisfies
\[
\mathcal L_{\mathrm{emp}}(\theta)
\le
\widehat{\mathcal L}_{\mathrm{emp}}(\theta)+\frac{\eta}{2}
<
\frac{3\eta}{2},
\qquad
B_\theta\le B_\ast+\frac{\eta}{\lambda}.
\]
Consequently,
\[
\sup_{G\in\mathcal G}
|\overline F_\theta(G)-f^\ast(G)|
\le
\frac{3n\eta}{2}
+\left(2B_\ast+\frac{\eta}{\lambda}\right)r
\le
\frac{3\varepsilon}{8}+\frac{\varepsilon}{2}
<\varepsilon.
\]
\end{proof}

\clearpage
\section{Experimental details}
\label{app:experimental-details}

\paragraph{Computing infrastructure.}
Experiments ran on a shared server using three NVIDIA RTX A6000 GPUs
(48\,GB each) and two AMD EPYC 7452 32-core CPUs. We used PyTorch 2.9.1
with CUDA 12.8, running each training fit on a single GPU and independent
fits in parallel.

\paragraph{Asset licenses.}
The \href{https://github.com/ralphabb/GNN-RNI/blob/43c69c06462d3a165ba2b532d76cbc045fc767d7/LICENSE}{GNN-RNI repository}
providing EXP includes a GNU GPLv3 license, with no separate dataset
license stated. The pinned
\href{https://github.com/chrsmrrs/k-gnn/tree/cde1f577fce0f3abb9ee9e5b0f8bc4ab43956e2e}{k-gnn}
and \href{https://github.com/joisino/random-features/tree/180e926edd601e10d4b115a0a1722fa7183f7f13}{random-features (TRIANGLE)}
releases contain no explicit license statement. Our reproduction scripts
fetch third-party code and data directly from these repositories.

\subsection{Bridge prediction}
\label{app:er-bridges}

\paragraph{Data and targets.}
We sample graphs from $G(n,1.5/(n-1))$, allowing disconnected graphs. Training and validation each
contain equal numbers of graphs at $n=16,32$; independent test splits
use $n\in\{32,128,512\}$. For each target $b\in\{4,5,6\}$, an edge
is positive exactly when it is a bridge whose removal leaves a smaller
endpoint component of size $b$; every other edge is negative, including
bridges of other sizes.  We retain every sampled graph and
extend splits to obtain at least 1,000 training positives and 200 per
evaluation split (\cref{tab:bridge-data-coverage}). All palettes and
training seeds share the data for their target.

\begin{table}[t]
\centering
\small
\begin{tabular}{@{}lrrr@{}}
\toprule
Split & $b=4$ & $b=5$ & $b=6$ \\
\midrule
Train ($n=16,32$) & 2,944 / 1,516 & 2,944 / 1,001 & 4,608 / 1,004 \\
Validation ($n=16,32$) & 640 / 367 & 640 / 208 & 1,056 / 205 \\
Test ($n=32$) & 448 / 305 & 448 / 200 & 672 / 207 \\
Test ($n=128$) & 128 / 372 & 128 / 223 & 160 / 220 \\
Test ($n=512$) & 64 / 705 & 64 / 447 & 64 / 324 \\
\bottomrule
\end{tabular}
\caption{Bridge datasets: graph count / positive-edge count.}
\label{tab:bridge-data-coverage}
\end{table}

\paragraph{Model and training.}
Each trial assigns independent uniform colors from $q\in\{2,4,16,32\}$,
encoded as one-hot vectors padded to 32 coordinates for every palette.
The six-layer, width-32 ReLU MPNN uses max aggregation and a symmetric
two-layer edge head, with 22,305 trainable parameters in every condition.
Training averages 16 sigmoid-probability trials before class-balanced
binary cross-entropy over pooled edges. We sample graphs uniformly with
replacement in batches of eight and use Adam at $10^{-3}$ without weight
decay for 100,000 updates, without early stopping. We select checkpoints
by lowest validation balanced error. Each target and palette uses five
training seeds, varying initialization, minibatches, and training colors.

\paragraph{Evaluation and reporting.}
Test inference averages 16 fresh trials and thresholds at $1/2$;
validation and test color streams are fixed and shared across fits.
FNR pools positive edges across test graphs, while FPR pools all negative
edges, including wrong-size bridges. \Cref{fig:er-bridge-generalization}
reports balanced accuracy, $100[1-(\mathrm{FNR}+\mathrm{FPR})/2]$,
and \cref{tab:bridge-exact-side-errors} reports its complementary error.
Standard errors (SEs) equal the sample standard deviation across five
training seeds divided by $\sqrt{5}$, conditional on the fixed dataset.
\Cref{tab:bridge-neighbor-errors} exposes confusion hidden by pooled
balanced error: at $b=6,q=32,n=512$, FPRs on side-5 and side-7 bridges
are $70.5\%$ and $92.8\%$, and precision is $13.5\%$, despite $6.41\%$
balanced error. The learned detectors are motivated by the local MST
construction but do not implement its algorithm.

\begin{table}[t]
\centering
\small
\setlength{\tabcolsep}{4pt}
\caption{Exact-side bridge detection: balanced test error (\%), mean $\pm$ SE across five training seeds.}
\label{tab:bridge-exact-side-errors}
\begin{tabular}{rrccc}
\toprule
$b$ & Colors & $n=32$ & $n=128$ & $n=512$ \\
\midrule
4 & 2 & $3.77 \pm 0.25$ & $2.97 \pm 0.16$ & $2.47 \pm 0.08$ \\
4 & 4 & $1.54 \pm 0.11$ & $1.09 \pm 0.04$ & $1.14 \pm 0.05$ \\
4 & 16 & $0.47 \pm 0.12$ & $0.28 \pm 0.08$ & $0.25 \pm 0.02$ \\
4 & 32 & $0.28 \pm 0.10$ & $0.43 \pm 0.07$ & $0.20 \pm 0.02$ \\
\midrule
5 & 2 & $8.00 \pm 0.71$ & $9.40 \pm 0.66$ & $8.64 \pm 0.89$ \\
5 & 4 & $4.32 \pm 0.15$ & $3.97 \pm 0.15$ & $2.62 \pm 0.07$ \\
5 & 16 & $2.97 \pm 0.61$ & $2.57 \pm 0.62$ & $2.04 \pm 0.32$ \\
5 & 32 & $2.54 \pm 0.27$ & $2.19 \pm 0.52$ & $1.40 \pm 0.41$ \\
\midrule
6 & 2 & $7.53 \pm 0.54$ & $8.45 \pm 0.86$ & $9.01 \pm 0.88$ \\
6 & 4 & $6.08 \pm 0.14$ & $7.48 \pm 0.54$ & $8.65 \pm 0.89$ \\
6 & 16 & $6.48 \pm 0.49$ & $7.33 \pm 0.99$ & $7.67 \pm 1.63$ \\
6 & 32 & $5.67 \pm 0.62$ & $6.22 \pm 1.04$ & $6.41 \pm 1.21$ \\
\bottomrule
\end{tabular}
\end{table}

\begin{table}[t]
\centering
\small
\setlength{\tabcolsep}{5pt}
\caption{Neighboring-side false-positive rates and overall precision (\%) at $n=512$, averaged over five training seeds. Bridges of sizes $b-1$ and $b+1$ are negatives for target $b$; precision is the fraction of predicted positives with the correct label.}
\label{tab:bridge-neighbor-errors}
\begin{tabular}{rrccc}
\toprule
$b$ & Colors & Side $b-1$ FPR & Side $b+1$ FPR & Precision \\
\midrule
4 & 2 & 54.08 & 38.79 & 42.29 \\
4 & 4 & 27.96 & 10.87 & 61.67 \\
4 & 16 & 0.97 & 4.65 & 90.60 \\
4 & 32 & 0.39 & 1.79 & 93.18 \\
\midrule
5 & 2 & 87.38 & 93.02 & 11.91 \\
5 & 4 & 67.23 & 68.02 & 29.65 \\
5 & 16 & 23.15 & 43.40 & 43.61 \\
5 & 32 & 6.78 & 26.17 & 56.69 \\
\midrule
6 & 2 & 93.47 & 94.72 & 8.18 \\
6 & 4 & 87.83 & 98.18 & 8.75 \\
6 & 16 & 83.62 & 95.15 & 10.24 \\
6 & 32 & 70.47 & 92.81 & 13.50 \\
\bottomrule
\end{tabular}
\end{table}

\subsection{Tree substructure detection}
\label{app:tree-detection}

\paragraph{Data and exact labels.}
The fixed ten-vertex pattern has a root with three children and two
leaves per child. In a simple 3-regular graph, it embeds as a non-induced
subgraph rooted at $v$ exactly when the closed two-hop ball around $v$
has ten vertices; edges between the six leaves are allowed. We sample
connected random 3-regular graphs on 24 vertices, rejecting isomorphic
duplicates within and across splits. We use
256 training, 64 validation, and 1,000 test graphs, with 4,096, 1,024,
and 16,000 selected root vertices, respectively, balanced exactly by
label within each split. Graphs are split before roots are selected;
models receive full graphs, and all fits share the splits. Evaluation
tests unseen graphs of the same size.

\paragraph{Model and training.}
We use two width-32 residual ReLU message-passing layers with max
aggregation and a vertex MLP, without graph pooling. Independent uniform
one-hot colors are projected to the hidden space; hidden widths are
fixed, but the input projection grows with the palette. We train all
combinations of $q\in\{2,8,64\}$ colors and $s\in\{1,16,64\}$ trials
with five training seeds. The models have 5,345, 5,537, and 7,329 trainable
parameters for $q=2,8,64$, respectively, independent of trial count.
Training uses binary cross-entropy on each
root's maximum trial logit, fresh colors, and Adam at $10^{-3}$ without
weight decay. Each update samples 32 roots per class with replacement.
We allow up to 200,000 updates, stopping after 25,000 updates without
improved validation max-trial cross-entropy, assessed every 250 updates.
We select the checkpoint minimizing this loss.

\paragraph{Inference and reporting.}
Let $z_v^{(r)}$ be the logit at vertex $v$ in trial $r$, and let
$\sigma$ denote the sigmoid function. At inference, we sum trial
probabilities and predict
\[
    a_v=\sum_{r=1}^{s}\sigma(z_v^{(r)}),
    \qquad \widehat y_v=\mathbf{1}\{a_v>\tau\}.
\]
Thus, a vertex is predicted to have the maximum two-hop neighborhood
size of ten exactly when its summed score exceeds the threshold $\tau$;
otherwise it is predicted negative. We choose $\tau$ separately for each
trained model to minimize validation classification error and hold it
fixed on the test set. Evaluation color streams are fixed across training seeds. \Cref{fig:tree-detection-heatmap}
reports mean node accuracy $\pm$ SE across five seeds on the fixed,
balanced test set; \cref{tab:tree-detection-seeds} retains individual
errors, including the variation at $(q,s)=(8,64)$. These learned
detectors do not implement the exact color-coding dynamic program or
require colorful witnesses.

\begin{table}[t]
\centering
\small
\setlength{\tabcolsep}{3pt}
\begin{tabular}{@{}rrrrrrr@{}}
\toprule
$q$ & $s$ & Seed 0 & Seed 1 & Seed 2 & Seed 3 & Seed 4 \\
\midrule
2 & 1 & 46.45 & 46.52 & 46.44 & 46.52 & 46.52 \\
2 & 16 & 38.17 & 39.46 & 41.93 & 41.02 & 42.89 \\
2 & 64 & 36.00 & 36.77 & 37.01 & 36.94 & 36.19 \\
\addlinespace
8 & 1 & 36.86 & 37.64 & 37.10 & 37.89 & 37.44 \\
8 & 16 & 10.71 & 14.26 & 11.60 & 12.11 & 11.02 \\
8 & 64 & 2.96 & 25.66 & 1.30 & 2.31 & 15.18 \\
\addlinespace
64 & 1 & 30.64 & 28.21 & 31.44 & 28.51 & 28.04 \\
64 & 16 & 10.01 & 7.89 & 3.30 & 18.70 & 8.47 \\
64 & 64 & 0.09 & 0.86 & 0.32 & 0.48 & 1.41 \\
\bottomrule
\end{tabular}
\caption{Tree detection test error (\%) for each training seed, with
$q$ colors and $s$ trials. All models use the same 16,000 labeled roots
on 1,000 unseen graphs.}
\label{tab:tree-detection-seeds}
\end{table}

\subsection{Inference-time quantization}
\label{app:exp-inference-quantization}

\paragraph{EXP data and model.}
We use the released EXP dataset~\cite{abboud2021surprising}:
1,200 graphs on 32--64 vertices, in 600 oppositely labeled,
$1$-WL-indistinguishable pairs. The released ten-fold splits contain
972 training, 108 validation, and 120 test graphs per fold, with balanced
labels and pairs kept together. This is held-out classification, not
size extrapolation. Our GCN-RNI reimplementation uses eight width-64
mean-neighbor layers, separate root transforms, ELU, max graph pooling,
and an MLP classifier. Inputs are 64 independent
$\operatorname{Unif}[-1,1)$ coordinates, or one followed by 63 zeros,
keeping the architecture fixed at 72,354 trainable parameters. Original
vertex labels are not inputs.

\paragraph{EXP training.}
We use Adam at $5\cdot10^{-4}$, batches of 20, negative log-likelihood,
and 500 epochs, with one fresh feature draw per prediction and final
checkpoints. We follow the paper's ELU rather than the released script's
tanh default, and use its Gaussian-input learning rate because no
separate uniform-input rate is reported.

\paragraph{TRIANGLE data and model.}
The released TRIANGLE data~\cite{sato2021random} contain 1,000 20-vertex
3-regular training graphs and 1,000 test graphs at each of 20 and 100 vertices.
Labels indicate triangle membership, with original class frequencies
retained. Following rGIN, we use four width-64 layers with sum aggregation
including the root, MLP updates, batch normalization, ReLU, and a vertex
classifier, totaling 30,930 trainable parameters. We replace its 100-value
random scalar with a fresh independent
$\operatorname{Unif}[0,1)$ scalar per vertex; no other vertex features
are supplied.

\paragraph{TRIANGLE training.}
We train one model for 17,500 Adam updates (350 epochs), with batches
of 32 graphs and learning rate $0.01$, halved every 50 epochs. We retain
the upstream cross-entropy on softmax outputs, weighted inversely by
class frequency. Training uses one trial per prediction; evaluation
uses the final checkpoint without test-based selection.

\paragraph{Frozen-model evaluation.}
Continuous and quantized conditions share feature draws and test graphs,
with weights and batch-normalization statistics fixed. We average metrics
from single-trial predictions, not predictions across trials. EXP quantizes
each active coordinate to $-1/2$ or $1/2$ according to its sign; each fold
averages argmax classification accuracy over 50 draws. We report mean
accuracy and descriptive SE across ten fold means, whose training sets
overlap. TRIANGLE uses $Q_q(u)=(\lfloor qu\rfloor+1/2)/q$ and reports
mean one-versus-rest ROC-AUC over both classes. Its mean and SE use ten
paired feature draws of one trained model, not multiple training seeds.
Both SEs are sample SD divided by $\sqrt{10}$.
The exploratory palette sweep in \cref{fig:triangle-inference-quantization}
shows close agreement with continuous inputs for $q\geq4$, with
nonmonotonic behavior at smaller palettes.

\begin{figure}[t]
\centering
\includegraphics[width=\linewidth]{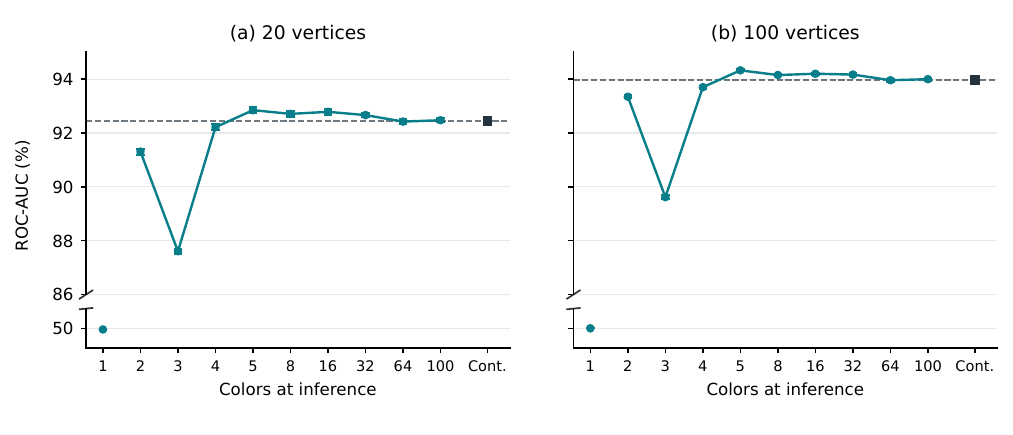}
\caption{TRIANGLE inference-time quantization of one frozen rGIN trained
with continuous features on 20-vertex cubic graphs. Test sets contain
1,000 graphs at each size. Points and error bars show mean ROC-AUC
$\pm$ SE across ten paired single-trial feature draws; squares
and dashed lines mark the continuous baseline. The axis break omits
51--86\% ROC-AUC. These error bars do not measure training-seed variability.}
\label{fig:triangle-inference-quantization}
\end{figure}

\clearpage
\tableofcontents

\end{document}